\definecolor{gray1}{rgb}{0.8,0.8,0.8}
\definecolor{gray2}{rgb}{0.95,0.95,0.95}
\newcommand{\RE}{\,{\rm Re}}
\newcommand{\argmin}{\mathop{\rm argmin}}
\newcommand{\Shrink}{\mathop{\rm Shrink}}
\newcommand{\CST}{\mathop{\rm CST}}
\newcommand{\ST}{\mathop{\rm ST}}
\newcommand{\cY}{{\mathcal Y}}
\newcommand{\cX}{{\mathcal X}}
\newcommand{\cP}{{\mathcal P}}
\newcommand{\cE}{{\mathcal E}}
\newcommand{\cA}{{\mathcal A}}
\newcommand{\cC}{{\mathcal C}}
\newcommand{\cD}{{\mathcal D}}
\newcommand{\cF}{{\mathcal F}}
\newcommand{\cL}{{\mathcal L}}
\newcommand{\cB}{{\mathcal B}}
\newcommand{\cI}{{\mathcal I}}
\newcommand{\cM}{{\mathcal M}}
\newcommand{\cN}{{\mathcal N}}
\newcommand{\BA}{{\mathbf A}}
\newcommand{\Bd}{{\mathbf d}}
\newcommand{\Bh}{{\mathbf h}}
\newcommand{\By}{{\mathbf y}}
\newcommand{\Bf}{{\mathbf f}}
\newcommand{\Bu}{{\mathbf u}}
\newcommand{\Br}{{\mathbf r}}
\newcommand{\Bv}{{\mathbf v}}
\newcommand{\Be}{{\mathbf e}}
\newcommand{\Bw}{{\mathbf w}}
\newcommand{\Bg}{{\mathbf g}}
\newcommand{\Bt}{{\mathbf t}}
\newcommand{\Brho}{{\boldsymbol \rho}}
\newcommand{\Beps}{{\boldsymbol \epsilon}}
\newcommand{\Bome}{{\boldsymbol \omega}}
\newcommand{\BDo}{{\mathbf{D_1}}}
\newcommand{\BDoT}{{\mathbf{D}_{\mathbf 1}^{\text T}}}
\newcommand{\BDt}{{\mathbf{D_2}}}
\newcommand{\BDtT}{{\mathbf{D}_{\mathbf 2}^{\text T}}}
\newcommand{\Bz}{{\boldsymbol z}}
\newcommand{\Bk}{{\boldsymbol k}}
\newcommand{\Bm}{{\boldsymbol m}}
\newcommand{\Bn}{{\boldsymbol n}}
\newcommand{\Bx}{{\boldsymbol x}}
\newtheorem{thm}{Theorem}[section]
\newtheorem{prop}[thm]{Proposition}
\newtheorem{rem}[thm]{Remark}
\newtheorem{lem}[thm]{Lemma}
\begin{document}

\title{Directional Mean Curvature for Textured Image Demixing}

%

\author{Duy Hoang Thai \thanks{Department of Statistical Science, Duke University, Durham, USA.} \thanks{Corresponding Author:  Email: dht11@stat.duke.edu}
\ and 
David Banks
\thanks{Email: banks@stat.duke.edu}
}

\maketitle

\begin{abstract}

Approximation theory plays an important role in image processing,
especially image deconvolution and decomposition.
For piecewise smooth images, there are many methods that have been
developed over the past thirty years.
The goal of this study is to devise similar and practical methodology 
for handling textured images.
This problem is motivated by forensic imaging, since fingerprints,
shoeprints and bullet ballistic evidence are textured images.
In particular, it is known that texture information is almost 
destroyed by a blur operator, such as a blurred ballistic image 
captured from a low-cost microscope.
The contribution of this work is twofold: first, we propose a 
mathematical model for textured image deconvolution and decomposition 
into four meaningful components, using a high-order partial
differential equation approach 
based on the directional mean curvature.
Second, we uncover a link between functional analysis and multiscale 
sampling theory, e.g., harmonic analysis and filter banks. 
Both theoretical results and examples with natural images are
provided to illustrate the performance of the proposed model. 

\end{abstract}

\section*{Keywords}
Image Deblurring, Demixing Problem, Variational Calculus, Feature
Extraction, Harmonic Analsysis, Sampling Theory, Multiscale Analysis,
Mean Curvature, High-order PDE

\section{Introduction} \label{sec:Introduction}

Processing textured images is difficult, but such images are common in
many applications. 
For example, as part of the 2015-2016 forensic statistics program at the 
Statistical and Applied Mathematical Sciences Institute, much of the
research dealt with images of fingerprints, balistic striations on bullets,
and shoeprints, and all of these are typically textured.

There is an extensive literature on methods for processing piecewise
smooth images; e.g., image restoration by functional analysis
\cite{MumfordShah1989, Potts1952, RudinOsherFatemi1992,
  AubertVese1997, ChanMarquinaMullet2000, ZhuChan2012,
  PapafitsorosSchonlieb2014, GoldsteinOsher2009}, 
image representation by harmonic analysis \cite{CandesDonoho2004,
  CandesDemanetDonohoYing2006, StarckDonohoCandes2003, MaPlonka2010,
  KutyniokLabate2012, DoVetterli2005, UnserVandeville2010,
  GillesTranOsher2014} 
and image segmentation \cite{ChanVese2001,
  BressonEsedogluVandergheynstThiranOsher2007,
  ChanEsedogluNikolova2012, LieLysakerTai2006,
  KassWitkinTerzopoulos1988, ZhuTaiChan2013, BrownChanBresson2010,
  BrownChanBresson2012, GuWangTai2012, BrownChanBresson2009,
  ChanSandbergVese2000}.
Much less has been done with textured images.

Instead of being piecewise smooth, textured images have fine scale
discontinuities, so that neighboring pixels in the image may take
very different values in gray scale or color space.
Examples are a shoeprint in sand, where the grainy structure
of the sand provides the texture, or a fingerprint on leather, where
the leather's texture interacts with the signal, see
\cite{MaltoniMaioJainPrabhakar2009}. 

To demix (i.e., to simultaneously deblur and decompose) textured
images, we substantially 
extend the work in Thai and Gottschlich \cite{ThaiGottschlich2016DG3PD}.
That paper provided a directional decomposition of gray-scale
images $\Bf$ into three parts, consisting of piecewise
smooth structure $\Bu$, texture $\Bv$, and
fine scale residual structure $\Brho$. 
The solution required minimization of the directional total variation norm,
which entails the solution to a second-order partial differential
equation.

The new approach is based upon \cite{LingStrohmer2015},
and also focuses upon gray-scale images. 
But now it decomposes the image $\Bf$ into four parts, adding 
fine-scale noise structure $\Beps$ to the 
previous three distinctions.
It also models a blur operator $\Bh$ (for simplicity, $\Bh$
is assumed to be known, which is reasonable since in principle
it can be measured for any camera in a specific application).
Thus the demixing problem is formulated as
\begin{align*}
\Bf = \Bh \ast \left( \Bu + \Bv + \Brho \right) + \Beps \,,
\end{align*}
where $\ast$ is a convolution operator.

The solution minimizes a function of several well-chosen norms
and uses directional mean curvature rather than a directional decomposition.
This entails solution of a fourth-order partial differential equation
(cf.\ Zhu and Chan \cite{ZhuChan2012, ZhuTaiChan2013}) 
which addresses the ``staircase effect'' in the TV-L2 model 
\cite{RudinOsherFatemi1992}. 
Other high-order partial differential equation (PDE) approaches 
for image reconstruction are
given in \cite{LysakerLundervoldTai2003, RahmanTaiOsher2007,
  HahnWuTai2012, TaiHahnChung2011, CalatroniDuringSchonlieb2014}.
Importantly, this work finds a new connection between the minimization
problem in demixing and multiscale harmonic analysis.
This connection enables a more general theory and a new solution strategy.

Our model realistically reflects the mechanism of image capture. 
The true image is a combination of both smooth regions, texture, and
fine scale structure.
Blurring is inevitable, and the additive noise term accounts for 
other distortions (e.g., threshold variation in the CMOS or CCD light
sensors, edge effects between pixels). 
The reconstruction error (the pixel-wise difference between the
true image and the estimated image) is especially useful since it permits
a quantitative basis for comparing the quality of demixing 
algorithms.
When two algorithms make comparable assumptions about smoothness
and the blur operator,
then any structure that persists after demixing appears in the
reconstruction error.  
Its mean squared error, or the eigenvalues of the estimated covariance
matrix, enable one to determine which algorithm has successfully 
extracted more signal.

Despite the ability of the Euler-Lagrange equations
associated with the variational model to achieve advanced
performance in image analysis, numerical solution of this high order 
PDE is difficult.
Following \cite{ZhuTaiChan2013, TaiHahnChung2011}, a numerical
augmented Lagrangian method (ALM) is applied to split the
directional mean curvature norm into several $\ell_2$- and
$\ell_1$-norms, which are solvable by iterative
shrinkage/thresholding (IST) algorithms
\cite{DaubechiesDefriseMol2004, BeckTeboulle2009,
  DiasFigueiredo2007}. 
Wu et al. \cite{WuTai2010} proved the equivalence between
ALM, dual methods (e.g., Chambolle's
projection \cite{Chambolle2004}), and the splitting Bregman method
\cite{GoldsteinOsher2009} for solving the convex optimization. 
(Note that ALM or the splitting Bregman method can be employed in the
Douglas-Rachford splitting scheme \cite{Setzer2011}.)  

We focus on understanding the advantages of the fourth-order PDE
approach to reconstruct a smooth image with sharp edges, and
make two contributions:
\begin{enumerate}
\item we provide a general solution to a mathematical model for 
textured image deconvolution and decomposition with
four meaningful components, and
\item we find a link between functional analysis and multiscale
  sampling theory in harmonic analysis and filter banks.  
\end{enumerate} 
This employs a novel Directional Mean Curvature Demixing (DMCD) method 
for textured images corrupted by i.i.d.\ noise with a given blur
kernel. 

To develop these ideas, section 2 introduces the mathematical 
framework, and section 3 describes the use of directional mean
curvature for demixing and
the algorithm needed to fit the model.
Section 4 makes a quantitative comparison between the demixing 
results from the proposed algorithm
and results from a standard alternative algorithm.
Section 5 describes the new model's correspondence with 
multiscale harmonic analysis.  
Section 6 summarizes our conclusions.
For readability, mathematical proofs are provided in the Appendix and we also refer the reader to \cite{ThaiMentch2016, ThaiGottschlich2016DG3PD, ThaiGottschlich2016inpainting, ThaiGottschlich2016G3PD} for more detailed notation and literature review.

\section{Notation and Mathematical Preliminaries}

In this section we define the image and present the directional
forward/backward difference operators.  
We also specify the directional gradient and divergence operators,
the directional Laplacian operator, and the discrete directional $\text{G}_S$-norm. 
Using the curvelet transform and pointwise shrinkage operators,
these enable the mathematical derivation
of the new demixing algorithm.

Given a discrete grayscale image $f[\Bk] :
\Omega \rightarrow \mathbb R_+$ of size $d_1 \times d_2$, with
the lattice
$$
\Omega = \Big\{ \Bk = [k_1, k_2] \in [0 \,, d_1-1] \times [0 \,, d_2-1] \subset \mathbb Z^2 \Big\} \,,
$$
let $X$ be the Euclidean space whose dimension is given by the size of
the lattice $\Omega$; i.e., $X = \mathbb R^{\abs{\Omega}}$. 
The 2D discrete Fourier transform $\mathcal F$ acting on $f[\Bk]$ is
\begin{equation*}
 f[\Bk] ~\stackrel{\mathcal F}{\longleftrightarrow}~ F(e^{j \Bome}) 
 = \sum_{\Bk \in \Omega} f[\Bk] \cdot e^{-j \langle \Bk \,, \Bome \rangle_{\ell_2}},
\end{equation*}
where the Fourier coordinates $\Bome \in [-\pi \,, \pi]^2$ are defined 
on the lattice as
\begin{equation*}
 \cI = \Big\{ \Bome = [\omega_1 \,, \omega_2] = \left( \frac{2 \pi d_2'}{d_2} \,, \frac{2 \pi d_1'}{d_1} \right) 
 \mid (d_2' \,, d_1') \in \left[ - \frac{d_2}{2} \,, \frac{d_2}{2} \right) \times \left[ - \frac{d_1}{2} \,, \frac{d_1}{2} \right) \subset \mathbb Z^2
 \Big\} \,.
\end{equation*}
We let $\Bz = [z_1, z_2] = \big[ e^{j\omega_1} \,, e^{j\omega_2}
\big]$ denote the discrete coordinates of the Fourier transform. 

Given the matrix
\begin{equation*}
 \BDo = \begin{pmatrix}
         -1            & 1             &0            &\hdots           &0              \\
          0            &-1             &1            &\hdots           &0              \\
          \vdots       &\vdots         &\vdots       &\ddots           &\vdots         \\
          0            &0              &0            &\hdots           &1              \\
          1            &0              &0            &\hdots            &-1            \\             
        \end{pmatrix}
  \in \mathbb R^{d_1 \times d_1},
\end{equation*}
the directional forward and backward difference operators (with periodic boundary condition
and direction $l = 0, \ldots, L-1$)
in a matrix form, i.e. $\partial_l^\pm \Bf = \Big[ \partial_l^\pm f[\Bk] \Big]_{\Bk \in \Omega} \in X$, 
are
\begin{align*}
 &\partial_l^+ \Bf = \cos(\frac{\pi l}{L}) \Bf \BDtT + \sin(\frac{\pi l}{L}) \BDo \Bf
 ~~~~~~~\stackrel{\cF}{\longleftrightarrow}~ 
 \left[ \cos\left(\frac{\pi l}{L}\right) (z_2 - 1) + \sin\left(\frac{\pi l}{L}\right) (z_1 - 1) \right] F(\Bz) \,,
 \\
 &\partial_l^- \Bf = - \left[ \cos\left(\frac{\pi l}{L}\right) \Bf \BDt + \sin\left(\frac{\pi l}{L}\right) \BDoT \Bf \right]
 ~~\stackrel{\cF}{\longleftrightarrow}~   
 -\left[ \cos\left(\frac{\pi l}{L}\right) (z_2^{-1} - 1) + \sin\left(\frac{\pi l}{L}\right) (z_1^{-1}-1) \right] F(\Bz) \,.
\end{align*}
The transposed matrices of $\BDt$ and $\BDo$ are $\BDtT$ and $\BDoT$, respectively.
Their adjoint operators are $\big( \partial_l^\pm \big)^* = -\partial_l^\mp$.

The directional gradient and divergence operators are, respectively, 
\begin{align*}
 \nabla_L^\pm \Bf &= \Big[ \partial_l^\pm \Bf \Big]_{l=0}^{L-1} \in X^L 
 ~~\text{and}~~
 \text{div}_L^\pm \vec{\Bg} = \sum_{l=0}^{L-1} \partial_l^\pm \Bg_l \in X.
\end{align*}

Note that the adjoint operator of $\nabla_L^\pm$ is $\big( \nabla_L^\pm \big)^* = - \text{div}_L^\mp$, i.e., 
\begin{equation*}
 \Big\langle \nabla_L^\pm \Bf \,, \vec{\Bg} \Big\rangle_{\ell_2} = 
 \Big\langle \Bf \,, \underbrace{ - \text{div}_L^\mp }_{=
   (\nabla_L^\pm)^* } \vec{\Bg} \Big\rangle_{\ell_2} \,.  
\end{equation*}

The directional Laplacian operator is 
\begin{align*}
 \Delta_{\text{d}L} \Bf = \text{div}^-_L \nabla^+_L \Bf =
 \sum_{l=0}^{L-1} \partial_l^- \partial_l^+ \Bf 
 ~\stackrel{\cF}{\longleftrightarrow}~
 - \sum_{l=0}^{L-1} \abs{ \cos\left(\frac{\pi l}{L}\right)(z_2 - 1) +
   \sin\left(\frac{\pi l}{L}\right)(z_1 - 1) }^2 F(\Bz). 
\end{align*}

Remind that given $\Bx \in \mathbb R^2$,
the impulse responses of directional derivative and directional Laplacian operators in a continuous setting are
\begin{align*}
 \partial_l \delta(\Bx) &= \left[ \cos\left( \frac{\pi l}{L} \right) \partial_x + \sin\left( \frac{\pi l}{L} \right) \partial_y \right] \delta(\Bx)
 ~\stackrel{\cF}{\longleftrightarrow}~
 \cos\left( \frac{\pi l}{L} \right) j \omega_x + \sin\left( \frac{\pi l}{L} \right) j \omega_y \,,
 \\
 \Delta_L \delta(\Bx) &= \sum_{l=0}^{L-1} \partial_l^2 \delta(\Bx) 
 ~\stackrel{\cF}{\longleftrightarrow}~
 -\sum_{l=0}^{L-1} \left[ \cos \left( \frac{\pi l}{L} \right) \omega_x + \sin \left( \frac{\pi l}{L} \right) \omega_y \right]^2 \,,
\end{align*}
with $(\omega_x \,, \omega_y)$ are continuous version of $(\omega_2 \,, \omega_1)$ which is numerically used instead.

Extending \cite{Meyer2001, AujolChambolle2005, VeseOsher2003}, and
due to the discrete nature of images,  
Thai and Gottschlich \cite{ThaiGottschlich2016DG3PD} defines 
the discrete directional $\text{G}_S$-norm with $S \in \mathbb N_+$ in the
anisotropic version as 
\begin{align} \label{eq:directionalGnorm}
 \norm{\Bv}_{\text{G}_S} = \inf \Big\{ \norm{\vec{\Bg}}_{\ell_1} = \sum_{s=0}^{S-1} \norm{ \Bg_s }_{\ell_1} \,,~ \Bv = \text{div}^-_S \vec{\Bg} \,,~ \vec{\Bg} = \big[ \Bg_s \big]_{s=0}^{S-1} \in X^S \Big\} \,.
\end{align}
As stated in \cite{Meyer2001} and \cite[p. 87]{AujolChambolle2005}, 
the space of bounded variation $BV$ is suitable for the piecewise smooth
image component $\Bu$, the  
$\text{G}$-space is suitable for the texture component $\Bv$, and the dual 
Besov space $\dot{\text{B}}^{-1}_{\infty, \infty}$ is suitable for the noise 
component $\Beps$, where 
\begin{align*}
 \dot{\text{B}}^{1}_{1,1} \subset \dot{\text{BV}} \subset \text{L}_2 \subset \text{G} \subset
 \dot{\text{B}}^{-1}_{\infty, \infty} \,.  
\end{align*}

Since natural images are better described in terms of 
multi-scale and multi-direction,
we apply the curvelet transform \cite{CandesDemanetDonohoYing2006,
  CandesDonoho2004, MaPlonka2010} instead of the 
wavelet transform in the dual Besov space, see
\cite{ThaiGottschlich2016G3PD, ThaiGottschlich2016DG3PD}. 
Motivated by the Dantzig selector \cite{CandesTao2007}, Aujol and Chambolle
\cite{AujolChambolle2005} 
and Thai and Gottschlich \cite{ThaiGottschlich2016DG3PD}, find that
the residual $\Beps$ is better captured by $\norm{\cC \{ \cdot
  \}}_{\ell_\infty}$ (bounded by a constant $\nu$) in the curvelet domain,
which can be represented by an indicator function on a feasible convex set 
$\mathscr G(\nu) = \big\{ \Beps \in X ~:~ \norm{ \cC\{ \Beps \}
}_{\ell_\infty} \leq \nu \big\}$ as 
\begin{align}  \label{eq:residual:curvelet}
 \mathscr G^* \left(\frac{\Beps}{\nu}\right) = 
 \begin{cases} 
  0 \,,~ & \Beps \in \mathscr G(\nu) \\ +\infty \,,~ & \text{else.}
 \end{cases} \,
\end{align}
This measure controls the maximum curvelet coefficient of the residual $\Beps$. 
Due to the curvlet transform, no assumption on the kind of noise is needed
(e.g. Gaussian, Laplacian or weakly correlated noise), see 
\cite{ThaiGottschlich2016inpainting, ThaiMentch2016}. 
Following \cite{HaltmeierMunk2014}, Thai and Gottschlich 
\cite{ThaiGottschlich2016G3PD} proposes a threshold based on extreme 
value theory for (\ref{eq:residual:curvelet}). 
Note that if $\Beps$ is normally distributed, the random variable 
$\norm{\cC \{ \Beps \}}_{\ell_\infty}$ has the Gumbel distribution 
(since the curvelet of a Gaussian process is weakly correlated), 
see \cite{HaltmeierMunk2014}. 
In general, for correlated noise, its distribution is a max-stable process \cite{Smith1990, Schlather2002}.

It is known that the solution of $\ell_1$ minimization is a shrinkage operator, 
see \cite{DaubechiesDefriseMol2004, BeckTeboulle2009, Thai2015PhD}. 
It can be defined in a matrix form as
\begin{align*}
 \Shrink(\Bf \,, \alpha) := \frac{\Bf}{\abs{\Bf}} \cdot^\times \max \big( \abs{\Bf} - \alpha \,, 0 \big) \,,
\end{align*}
with the point-wise operators. 
For example, a multiply pointwise operator of functions $\Bf \,, \Bd \in X$ is
$\Bf \cdot^\times \Bd = \big[ f[\Bk] d[\Bk] \big]_{\Bk \in \Omega}$.
The discrete convolution of two functions $\Bf \,, \Bg \in X$ is defined in a matrix form as
\begin{align*}
 \Bf \ast \Bd = \left[ (f \ast d)[\Bk] \right]_{\Bk \in \Omega} 
 ~~\text{and}~~
 (f \ast d)[\Bk] = \sum_{\Bn \in \Omega} f[\Bn] d[\Bk - \Bn] \,.
\end{align*}
For simplicity, given $\vec{\Bt} \,, \vec{\Br} \in X^{L+1}$, we denote 
\begin{align*}
 \left\langle \vec{\Bt} \,, \vec{\Br} \right\rangle_X = \sum_{l=0}^{L} \Bt_l \cdot^\times \Br_l \in X
 \quad \text{and} \quad 
 \Bf \cdot^\times \vec{\Br} = \left[ \Bf \cdot^\times \Br_l \right]_{l=0}^L \in X^{L+1} \,.
\end{align*}

Given a curvelet transform $\cC$ and its inverse version $\cC^*$ \cite{CandesDemanetDonohoYing2006} and a function $\Bf$,
the curvelet shrinkage thresholding operator $\text{CST}(\cdot \,, \cdot)$ is defined as 
\begin{align*}
 \text{CST}(\Bf \,, \nu) = \cC^* \Big\{ \Shrink \big( \cC \{ \Bf \} \,, \alpha \big) \Big\} \,.
\end{align*}
Note that one can easily apply other kinds of harmonic analysis than
the curvelet, e.g., the  shearlet \cite{YiLabateEasleyKrim2009},
steerable wavelet \cite{UnserVandeville2010, UnserChenouardVandeville2011, UnserSageVandeville2009},
contourlet \cite{DoVetterli2005, CunhaZhouDo2006} and dual-tree complex wavelet \cite{SelesnickBaraniukKingsbury2005}.
Given a function $\Bf \in X$, its time reversed function $\check{\Bf}
= \big[ \check{f}[\Bk] \big]_{\Bk \in \Omega} \in X$ is defined as  
\begin{align*}
 \check{f}[\Bk] = f[-\Bk] ~\stackrel{\cF}{\longleftrightarrow}~ F(\Bz^{-1}) \,.
\end{align*} 
For more mathematical background and notation, we refer the reader to
\cite{Thai2015PhD, ThaiGottschlich2016G3PD,
  ThaiGottschlich2016DG3PD}. 


\section{Directional mean curvature for image demixing (DMCD)}

We assume that the original image $\Bf$, which consists of piecewise smooth regions $\Bu$, texture $\Bv$ 
and fine scale structure $\Brho$,
is blurred by the operator $h$ and corrupted by noise $\Beps$ as
\begin{align*}
 f[\Bk] = \big[ h \ast (u + v + \rho) \big] [\Bk] + \epsilon[\Bk] \,,~ \Bk \in \Omega \,.
\end{align*}
Instead of the total variation norm \cite{RudinOsherFatemi1992}, and
following the higher order approaches \cite{ChanMarquinaMulet2000, LysakerLundervoldTai2003, ZhuChan2012, ZhuTaiChan2013, ZhuTaiChan2013ALM},
we propose a discrete directional mean curvature (DMC) norm to reconstruct the piecewise-smooth image $\Bu$ as
\begin{align*}
 \norm{\kappa_L^\text{d} \big\{ \Bu \big\}}_{\ell_1} = \sum_{\Bk \in \Omega} \abs{ \kappa_L^\text{d} \big\{ \Bu \big\} [\Bk] } 
 ~~\text{and}~~
 \kappa^\text{d}_L \{\Bu\} = \text{div}^-_L \Bigg\{ \frac{\nabla^+_L \Bu}{ \sqrt{\boldsymbol 1 + \abs{\nabla^+_L \Bu}^{\cdot 2}} } \Bigg\}
\end{align*}
with the matrix of ones $\boldsymbol 1$ (size $d_1 \times d_2$).
According to \cite{ZhuTaiChan2013ALM}, a discrete DMC is rewritten as
\begin{align*}
 \kappa^\text{d}_L(\Bu) = \text{div}^-_L \Bigg\{ \frac{\big[ \nabla^+_L \Bu \,, \boldsymbol 1 \big]}{ \abs{ \big[ \nabla^+_L \Bu \,, \boldsymbol 1 \big] } } \Bigg\} \,.
\end{align*}

As in the DG3PD model \cite{ThaiGottschlich2016DG3PD}, and following the image generation 
mechanism shown in Fig.\ 1, we define 
the discrete DMCD model for image deconvolution and decomposition problem as
\begin{align} \label{eq:minimization:SDMCDD:1}
 \min_{(\Bu, \Bv, \Brho, \Beps) \in X^4} \bigg\{
 &\norm{\kappa_L^\text{d} \big\{ \Bu \big\}}_{\ell_1} + \mu_1 \norm{\Bv}_{\text{G}_S} + \mu_2 \norm{\Bv}_{\ell_1}   \notag
 \\
 &\text{s.t.}~~ \Bf = \Bh \ast \left( \Bu + \Bv + \Brho \right) + \Beps
 \,,~ \norm{\cC\{ \Brho \}}_{\ell_\infty} \leq \nu_\rho
 \,,~ \norm{\cC\{ \Beps \}}_{\ell_\infty} \leq \nu_\epsilon
 \bigg\} \,.
\end{align}
Following \cite{ThaiGottschlich2016G3PD, VeseOsher2003,
  AujolChambolle2005, ThaiGottschlich2016DG3PD, Gilles2012, Meyer2001,
  CandesDonoho2004},  
the discrete directional $\text{G}_S$-norm measures texture in
several directions,  
and the dual of a generalized Besov space in the curvelet domain $\cC$
captures the residual structure $\Brho$ and noise $\Beps$. 
Note that the $\ell_\infty$-norm of the curvelet transform is a good
measure for fine scale oscillating patterns (i.e., residual structure and noise),
which can be either independent or ``weakly'' correlated and need not
follow a Gaussian distribution. The $\text{G}_S$-norm for texture is
handled by the approach of Vese and Osher \cite{VeseOsher2003}. 
According to Meyer \cite{Meyer2001} (or \cite{GarnettJonesLeVese2011}), 
the oscillating components do not have small $L_2$ or $L_1$-norm.

From the definition of the directional $\text{G}_S$-norm (\ref{eq:directionalGnorm}) 
and the curvelet space for noise measurement (\ref{eq:residual:curvelet}), 
the constrained minimization (\ref{eq:minimization:SDMCDD:1}) is rewritten as
\begin{align} \label{eq:minimization:SDMCDD:2}
 \min_{(\Bu, \Bv, \Brho, \Beps, \vec{\Bg}) \in X^{4+S}} \bigg\{
 &\norm{\kappa_L^\text{d} \big\{ \Bu \big\}}_{\ell_1} + \mu_1 \sum_{s=0}^{S-1} \norm{ \Bg_s }_{\ell_1}
 + \mu_2 \norm{\Bv}_{\ell_1} + \mathscr G^* \left(\frac{\Brho}{\nu_\rho}\right) + \mathscr G^*\left(\frac{\Beps}{\nu_\epsilon}\right)      \notag
 \\
 &\text{s.t.}~~ \Bf = \Bh \ast \Bu + \Bh \ast \Bv + \Bh \ast \Brho + \Beps \,,~ \Bv = \text{div}^-_S \vec{\Bg}
 \bigg\} \,.
\end{align}
\begin{figure}
\begin{center} 
 \includegraphics[width=0.75\textwidth]{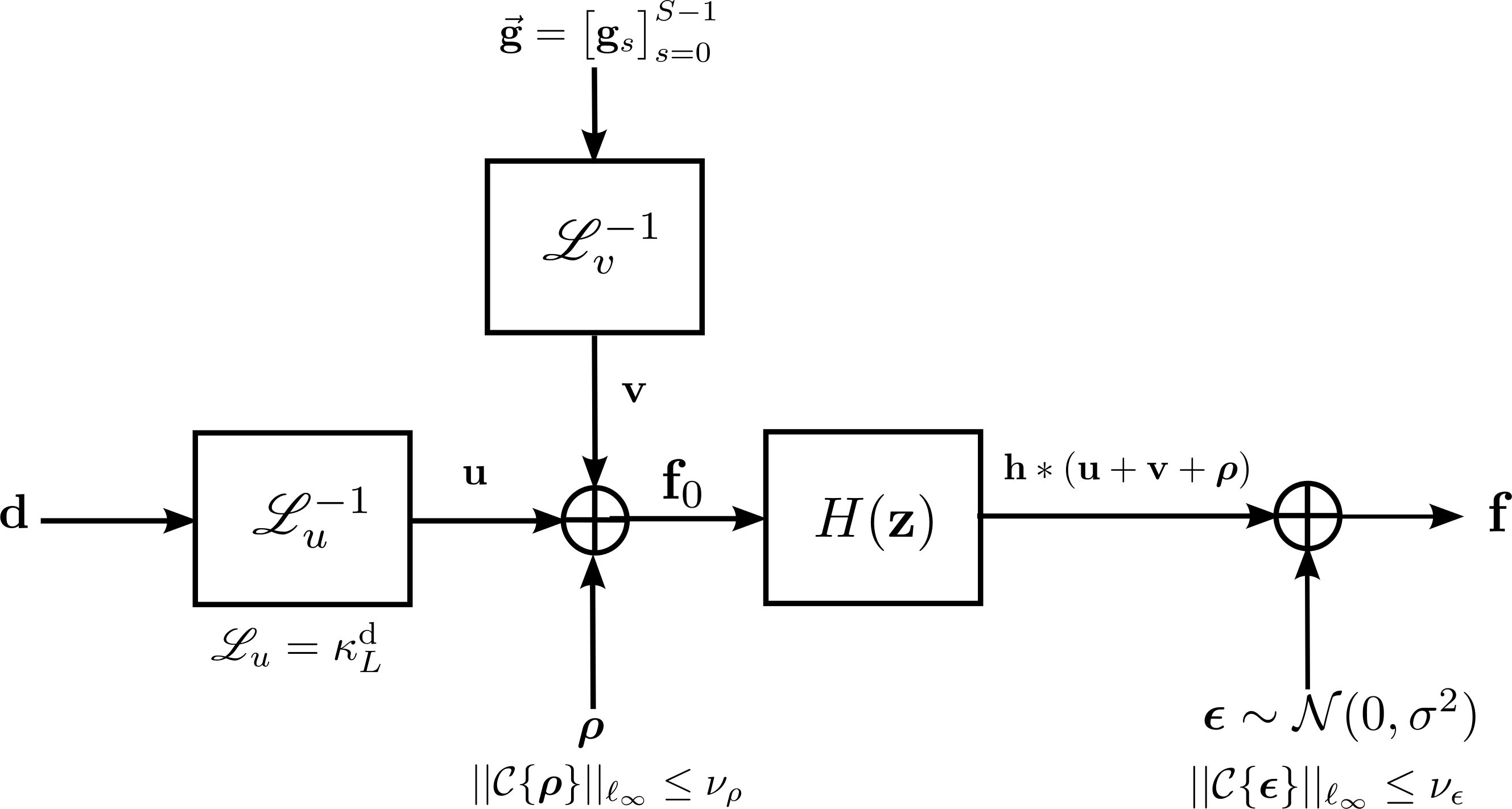}
 
 \caption{\label{fig:InnovationModel} Visualization of the innovation model for DMCD. The inverse operator $\mathscr L^{-1}_u$ does not exist in general, but the reconstructed image is obtained by the proposed minimization (\ref{eq:minimization:SDMCDD:2}).}
\end{center} 
\end{figure}



Similar to \cite{ThaiGottschlich2016DG3PD, ZhuTaiChan2013, ZhuTaiChan2013ALM},
in order to solve (\ref{eq:minimization:SDMCDD:2}) we introduce five new variables 
$\Bd \,, \vec{\Br} = \big[ \Br_l \big]_{l=0}^{L} \,, \vec{\Bt} = \big[ \Bt_l \big]_{l=0}^{L} \,, 
\vec{\By} = \big[ \By_l \big]_{l=0}^L \,, \vec{\Bw} = \big[ \Bw_s \big]_{s=0}^{S-1}$ where
\begin{align*}
\begin{cases}
 \abs{\vec{\Br}} - \big\langle \vec{\By} \,, \vec{\Br} \big\rangle_X = 0 \,, 
 \\
 \vec{\Br} = \big[ \nabla^+_L \Bu \,, 1 \big]       & \in X^{L+1} \,, 
 \\ 
 \Bd = \text{div}^-_L \vec{\Bt}      & \in X \,,  
 \\
 \vec{\Bt} = \vec{\By}      & \in X^{L+1} \,, 
 \\
 \vec{\By} = \big[ \By_l \big]_{l=0}^L    & \in \mathscr R \,,
 \\  
 \vec{\Bw} = \vec{\Bg}
 &\in X^S \,,
\end{cases} 
\end{align*}
with the indicator function on its feasible convex set
\begin{align*}
 \mathscr R^* (\vec{\By}) &= \begin{cases} 0 \,, & \vec{\By} \in \mathscr R \\ + \infty \,, & \text{else} \end{cases}   
 \,,~ \quad
 \mathscr R = \Big\{ \vec{\By} \in X^{L+1} ~:~ \abs{y_l[\Bk]} \leq 1 \,, l = 0, \ldots, L \,, \Bk \in \Omega \Big\} \,.
\end{align*}
The augmented Lagrangian method (ALM) is applied to (\ref{eq:minimization:SDMCDD:2}) by introducing Lagrange multipliers 
$\big( \boldsymbol{\lambda}_{\boldsymbol 1} \in X \,, \vec{\boldsymbol{\lambda}}_{\boldsymbol 2} = \big[ \boldsymbol{\lambda}_{\boldsymbol 2 l} \big]_{l=0}^L \in X^{L+1} \,,
\boldsymbol{\lambda}_{\boldsymbol 3} \in X \,, 
\vec{\boldsymbol{\lambda}}_{\boldsymbol 4} = \big[ \boldsymbol{\lambda}_{\boldsymbol 4 l} \big]_{l=0}^L \in X^{L+1} \,,
\boldsymbol{\lambda}_{\boldsymbol 5} \in X \,,
\vec{\boldsymbol{\lambda}}_{\boldsymbol 6} = \big[ \boldsymbol{\lambda}_{\boldsymbol 6 s} \big]_{s=0}^{S-1} \in X^S \,,
\boldsymbol{\lambda}_{\boldsymbol 7} \in X
\big)$
and positive parameters $\big[ \beta_i \big]_{i=1}^7 > 0$ as
\begin{align}  \label{eq:SimSegDeb:LagrangeFunc}
 &\min_{(\Bu, \Bv, \Brho, \Beps, \Bd, \vec{\Br}, \vec{\Bt}, \vec{\By}, \vec{\Bw}, \vec{\Bg}) \in X^{5+3(L+1)+2S}} 
 \cL (\Bu, \Bv, \Brho, \Beps, \Bd, \vec{\Br}, \vec{\Bt}, \vec{\By}, \vec{\Bw}, \vec{\Bg} \,;~ 
      \boldsymbol{\lambda}_{\boldsymbol 1}, \vec{\boldsymbol{\lambda}}_{\boldsymbol 2}, \boldsymbol{\lambda}_{\boldsymbol 3},
      \vec{\boldsymbol{\lambda}}_{\boldsymbol 4}, \boldsymbol{\lambda}_{\boldsymbol 5},
      \vec{\boldsymbol{\lambda}}_{\boldsymbol 6}, \boldsymbol{\lambda}_{\boldsymbol 7}) 
\end{align}
and the Lagrange function is
\begin{align*}
 &\cL (\cdot \,; \cdot) = 
 \norm{\Bd}_{\ell_1} + \mu_1 \sum_{s=0}^{S-1} \norm{ \Bw_s }_{\ell_1} + \mu_2 \norm{\Bv}_{\ell_1} 
 + \mathscr G^*\left(\frac{\Brho}{\nu_\rho}\right) + \mathscr G^*\left(\frac{\Beps}{\nu_\epsilon}\right) + \mathscr R^* (\vec{\By}) 
 + \Big\langle \boldsymbol{\lambda}_{\boldsymbol 1} + \beta_1 \,, \abs{\vec{\Br}} - \langle \vec{\By} \,, \vec{\Br} \rangle_X \Big\rangle_{\ell_2}
 \\&
 + \frac{\beta_2}{2} \sum_{l=0}^{L-1} \norm{ \Br_l - \partial^+_l \Bu + \frac{ \boldsymbol{\lambda}_{\boldsymbol 2 l} }{\beta_2} }^2_{\ell_2}
 + \frac{\beta_2}{2} \norm{ \Br_L - \boldsymbol 1 + \frac{\boldsymbol{\lambda}_{\boldsymbol 2 L}}{\beta_2} }^2_{\ell_2}
 + \frac{\beta_3}{2} \norm{ \Bd - \text{div}^-_L \vec{\Bt} + \frac{\boldsymbol{\lambda}_{\boldsymbol 3}}{\beta_3} }^2_{\ell_2}
 + \frac{\beta_4}{2} \norm{ \vec{\Bt} - \vec{\By} + \frac{\vec{\boldsymbol{\lambda}}_{\boldsymbol 4}}{\beta_4} }^2_{\ell_2}
 \\&
 + \frac{\beta_5}{2} \norm{ \Bf - \Bh \ast \Bu - \Bh \ast \Bv - \Bh \ast \Brho - \Beps + \frac{\boldsymbol{\lambda}_{\boldsymbol 5}}{\beta_5} }^2_{\ell_2}
 + \frac{\beta_6}{2} \sum_{s=0}^{S-1} \norm{\Bw_s - \Bg_s + \frac{\boldsymbol{\lambda}_{\boldsymbol 6 s}}{\beta_6}}^2_{\ell_2}
 + \frac{\beta_7}{2} \norm{ \Bv - \text{div}^-_S \vec{\Bg} + \frac{\boldsymbol{\lambda}_{\boldsymbol 7}}{\beta_7} }^2_{\ell_2} \,.
\end{align*}
Due to multi-variable minimization, the numerical solution of (\ref{eq:SimSegDeb:LagrangeFunc}) is obtained by 
applying the alternating directional method of multipliers through iteration $\tau = 1, 2, \ldots$ to find
\begin{align*} 
 & \Big( \Bu^{(\tau)}, \Bv^{(\tau)}, \Brho^{(\tau)}, \Beps^{(\tau)}, \Bd^{(\tau)}, \vec{\Br}^{(\tau)}, \vec{\Bt}^{(\tau)}, \vec{\By}^{(\tau)}, \vec{\Bw}^{(\tau)}, \vec{\Bg}^{(\tau)} \Big) =
 \\
 &\argmin ~
 \cL \Big(\Bu, \Bv, \Brho, \Beps, \Bd, \vec{\Br}, \vec{\Bt}, \vec{\By}, \vec{\Bw}, \vec{\Bg} \,;~ 
          \boldsymbol{\lambda}_{\boldsymbol 1}^{(\tau-1)}, \vec{\boldsymbol{\lambda}}_{\boldsymbol 2}^{(\tau-1)}, \boldsymbol{\lambda}_{\boldsymbol 3}^{(\tau-1)},
          \vec{\boldsymbol{\lambda}}_{\boldsymbol 4}^{(\tau-1)}, \boldsymbol{\lambda}_{\boldsymbol 5}^{(\tau-1)},
          \vec{\boldsymbol{\lambda}}_{\boldsymbol 6}^{(\tau-1)}, \boldsymbol{\lambda}_{\boldsymbol 7}^{(\tau-1)} \Big) \,.
\end{align*}
Given the initialization as 
$\Bu^{(0)} = \Bf \,, 
 \Bv^{(0)} = \Brho^{(0)} = \Beps^{(0)} = \Bd^{(0)} = \vec{\Br}^{(0)} = \vec{\Bt}^{(0)} = \vec{\By}^{(0)} = \vec{\Bw}^{(0)} = \vec{\Bg}^{(0)} 
 = \boldsymbol{\lambda}_{\boldsymbol 1}^{(0)} = \vec{\boldsymbol{\lambda}}_{\boldsymbol 2}^{(0)} 
 = \boldsymbol{\lambda}_{\boldsymbol 3}^{(0)} = \vec{\boldsymbol{\lambda}}_{\boldsymbol 4}^{(0)}
 = \boldsymbol{\lambda}_{\boldsymbol 5}^{(0)} = \vec{\boldsymbol{\lambda}}_{\boldsymbol 6}^{(0)}
 = \boldsymbol{\lambda}_{\boldsymbol 7}^{(0)} = \mathbf 0$,
we solve the following ten subproblems and then update the seven Lagrange multipliers in each iteration.


\noindent
{\bfseries The ``$\Bu$-problem'':} Fix 
$\Bv, \Brho, \Beps, \Bd, \vec{\Br}, \vec{\Bt}, \vec{\By}, \vec{\Bw}, \vec{\Bg}$ and then solve
\begin{align}  \label{eq:problem:u}
 \min_{\Bu \in X} \bigg\{  
 \frac{\beta_2}{2} \sum_{l=0}^{L-1} \norm{ \Br_l - \partial^+_l \Bu + \frac{ \boldsymbol{\lambda}_{\boldsymbol 2 l} }{\beta_2} }^2_{\ell_2}
 + \frac{\beta_5}{2} \norm{ \Bf - \Bh \ast \Bu - \Bh \ast \Bv - \Bh \ast \Brho - \Beps + \frac{\boldsymbol{\lambda}_{\boldsymbol 5}}{\beta_5} }^2_{\ell_2}
 \bigg\}.
\end{align}
Given $\Bk \in \Omega$, we denote the discrete Fourier transforms of 
$h[\Bk] \,, r_l[\Bk] \,, \lambda_{2l}[\Bk] \,, f[\Bk] \,, v[\Bk] \,, 
\rho[\Bk] \,, \epsilon[\Bk]$ and $\lambda_5[\Bk]$ by
$H(\Bz) \,, R_l(\Bz) \,, \Lambda_{2l}(\Bz) \,, F(\Bz) \,, V(\Bz) \,, P(\Bz) \,, \cE(\Bz)$
and $\Lambda_5(\Bz)$, respectively.
The minimizer of (\ref{eq:problem:u}) is solved in the Fourier domain as
\begin{equation*}
 \Bu^* = \RE \Big[ \cF^{-1} \Big\{ \frac{\cY(\Bz)}{\cX(\Bz)} \Big\} \Big] \,,
\end{equation*}
with
\begin{align*}
 \cY(\Bz) &= \beta_2 \sum_{l=0}^{L-1} \big[ \cos\left(\frac{\pi l}{L}\right)(z_2^{-1} - 1) + \sin\left(\frac{\pi l}{L}\right)(z_1^{-1} - 1) \big]
 \Big[ R_l(\Bz) + \frac{ \Lambda_{2 l}(\Bz) }{\beta_2} \Big]
 \\&
 + \beta_5 H(\Bz^{-1}) \Big[ F(\Bz) - H(\Bz) V(\Bz) - H(\Bz) P(\Bz) - \cE(\Bz) + \frac{\Lambda_5(\Bz)}{\beta_5} \Big] \,,
\end{align*}
and
\begin{align*}
 \cX(\Bz) &= \beta_2 \sum_{l=0}^{L-1} \abs{ \cos\left(\frac{\pi l}{L}\right)(z_2 - 1) + \sin\left(\frac{\pi l}{L}\right)(z_1 - 1) }^2 + \beta_5 \abs{H(\Bz)}^2 \,.
\end{align*}

\noindent
{\bfseries The ``$\Bv$-problem'':} Fix $\Bu, \Brho, \Beps, \Bd, \vec{\Br}, \vec{\Bt}, \vec{\By}, \vec{\Bw}, \vec{\Bg}$ and then solve
\begin{align}  \label{eq:problem:v}
 \min_{\Bv \in X} \bigg\{  
 \mu_2 \norm{\Bv}_{\ell_1} 
 + \frac{\beta_5}{2} \norm{ \Bh \ast \Bv - \Big( \Bf - \Bh \ast \Bu - \Bh \ast \Brho - \Beps + \frac{\boldsymbol{\lambda}_{\boldsymbol 5}}{\beta_5} \Big) }^2_{\ell_2}
 + \frac{\beta_7}{2} \norm{ \Bv - \Big( \text{div}^-_S \vec{\Bg} - \frac{\boldsymbol{\lambda}_{\boldsymbol 7}}{\beta_7} \Big) }^2_{\ell_2}
 \bigg\}.
\end{align}
Let $\boldmath{\delta}$ denote the matrix-valued Dirac delta function 
evaluated at $\boldmath{k} \in \Omega$. 
Then a solution of (\ref{eq:problem:v}) is
\begin{align*}  
 \Bv^{(\tau)} &= \Shrink \Big( \Bt_\Bv^{(\tau)} \,,~ \frac{\mu_2 \alpha^{(\tau)}}{\beta_5 + \alpha^{(\tau)} \beta_7} \Big) \,,~~ \tau = 1, \ldots 
\end{align*}
with
\begin{align*}
 \Bt_\Bv^{(\tau)} &= \frac{ \beta_5 }{ \beta_5 + \alpha^{(\tau)} \beta_7 }
 \Big( \Big[ \boldmath{\delta} - \alpha^{(\tau)} \check{\Bh} \ast \Bh \Big] \ast \Bv^{(\tau-1)}  
 + \alpha^{(\tau)} \check{\Bh} \ast \big( \Bf - \Bh \ast \Bu - \Bh \ast \Brho - \Beps + \frac{\boldsymbol{\lambda}_{\boldsymbol 5}}{\beta_5} \big) \Big) 
 \\
 &+ \frac{ \beta_7 \alpha^{(\tau)} }{ \beta_5 + \alpha^{(\tau)} \beta_7 }
 \Big( \underbrace{ -\sum_{s=0}^{S-1} \big[ \cos\left(\frac{\pi s}{S}\right) \Bg_s \BDt + \sin\left(\frac{\pi s}{S}\right) \BDoT \Bg_s \big] }_{= \text{div}^-_S \vec{\Bg} }
     - \frac{\boldsymbol{\lambda}_{\boldsymbol 7}}{\beta_7} \Big) \,.
\end{align*}
Using the inverse cumulative distribution function to get the quantile corresponding to probability  $\alpha$,
one can choose an adaptive $\mu_2$ at each iteration $\tau$ as
\begin{align*}
 \mu_2 = \frac{ \alpha_{\mu_2} \norm{\Bt_\Bv^{(\tau)}}_{\ell_\infty} (\beta_5 + \alpha^{(\tau)} \beta_7) }{ \alpha^{(\tau)} } \,.
\end{align*}

\noindent
{\bfseries The ``$\Bd$-problem'':} Fix $\Bu, \Bv, \Brho, \Beps, \vec{\Br}, \vec{\Bt}, \vec{\By}, \vec{\Bw}, \vec{\Bg}$ and then solve
\begin{align}  \label{eq:problem:d}
 \min_{\Bd \in X} \Big\{
 \norm{\Bd}_{\ell_1} 
 + \frac{\beta_3}{2} \norm{ \Bd - \Big( \text{div}^-_L \vec{\Bt} - \frac{\boldsymbol{\lambda}_{\boldsymbol 3}}{\beta_3} \Big) }^2_{\ell_2}
 \Big\}.
\end{align}
The solution of the $\ell_1$ minimization (\ref{eq:problem:d}) is obtained by the shrinkage operator as
\begin{align}  \label{eq:problem:d:solution}
 \Bd^* &= \Shrink \Big( \underbrace{ - \sum_{l=0}^L \big[ \cos\left(\frac{\pi l}{L}\right) \Bt_l \BDt + \sin\left(\frac{\pi l}{L}\right) \BDoT \Bt_l \big] }_{= \text{div}^-_L \vec{\Bt} }
 - \frac{\boldsymbol{\lambda}_{\boldsymbol 3}}{\beta_3} \,, \frac{1}{\beta_3} \Big) \,.
\end{align}

\noindent
{\bfseries The ``$\vec{\Br}$-problem'':} Fix $\Bu, \Bv, \Brho, \Beps, \Bd, \vec{\Bt}, \vec{\By}, \vec{\Bw}, \vec{\Bg}$ and solve
\begin{align}  \label{eq:problem:r}
 &\min_{\vec{\Br} \in X^{L+1}} \bigg\{  
 \Big\langle \boldsymbol{\lambda}_{\boldsymbol 1} + \beta_1 \,, \abs{\vec{\Br}} - \langle \vec{\By} \,, \vec{\Br} \rangle_X \Big\rangle_{\ell_2}
 + \frac{\beta_2}{2} \sum_{l=0}^{L-1} \norm{ \Br_l - \partial^+_l \Bu + \frac{ \boldsymbol{\lambda}_{\boldsymbol 2 l} }{\beta_2} }^2_{\ell_2}
 + \frac{\beta_2}{2} \norm{ \Br_L - \boldsymbol 1 + \frac{\boldsymbol{\lambda}_{\boldsymbol 2 L}}{\beta_2} }^2_{\ell_2}
 \bigg\}.
\end{align}
The minimizer of problem (\ref{eq:problem:r}) is obtained 
by the shrinkage operator as
\begin{align*}
 \Br_l^* &=
 \begin{cases}
  \displaystyle \Shrink \bigg( \Big[ \underbrace{ \cos\left(\frac{\pi l}{L}\right) \Bu \BDtT + \sin\left(\frac{\pi l}{L}\right) \BDo \Bu }_{= \partial_l^+ \Bu } \Big] 
        - \frac{ \boldsymbol{\lambda}_{\boldsymbol 2 l} }{ \beta_2 } + \frac{ \boldsymbol{\lambda_1} + \beta_1 }{ \beta_2 } \cdot^\times \By_l
  \,,~ \frac{ \boldsymbol{\lambda_1} + \beta_1}{\beta_2} \bigg)
  \,,~ & l = 0, \ldots, L-1
  \\
  \displaystyle \Shrink \left( \mathbf{1} - \frac{ \boldsymbol{\lambda}_{\boldsymbol 2 L} }{ \beta_2 } + \frac{ \boldsymbol{\lambda_1} + \beta_1 }{ \beta_2 } \cdot^\times \By_L
  \,,~ \frac{ \boldsymbol{\lambda_1} + \beta_1}{\beta_2}
  \right)
  \,,~ & l = L \,.
 \end{cases}
\end{align*}

\noindent
{\bfseries The ``$\vec{\Bt}$-problem'':} Fix $\Bu, \Bv, \Brho, \Beps, \Bd, \vec{\Br}, \vec{\By}, \vec{\Bw}, \vec{\Bg}$ and solve
\begin{align}  \label{eq:problem:t}
 \min_{\vec{\Bt} \in X^{L+1}} \bigg\{  
 \frac{\beta_3}{2} \norm{ \Bd - \text{div}^-_L \vec{\Bt} + \frac{\boldsymbol{\lambda}_{\boldsymbol 3}}{\beta_3} }^2_{\ell_2}
 + \frac{\beta_4}{2} \norm{ \vec{\Bt} - \vec{\By} + \frac{\vec{\boldsymbol{\lambda}}_{\boldsymbol 4}}{\beta_4} }^2_{\ell_2}
 \bigg\}.
\end{align}
Similar to the ``$\Bu$-problem'', we denote
$Y_l(\Bz) \,, \Lambda_{4l}(\Bz) \,, D(\Bz) \,, T_l(\Bz)$
and $\Lambda_3(\Bz)$ as the discrete Fourier transforms of 
$y_l[\Bk] \,, \lambda_{4l}[\Bk] \,, d[\Bk] \,, t_l[\Bk]$
and $\Lambda_3[\Bk]$ with $\Bk \in \Omega$, respectively.
The solution of this ``$\vec{\Bt}$-problem'' (\ref{eq:problem:t}) for each separable problem
$l = 0, \ldots, L$ is
\begin{align*}
 \begin{cases}
  \displaystyle
  \Bt_l = \RE \Big[ \cF^{-1} \Big\{ \frac{\cM_l(\Bz)}{\cN_l(\Bz)} \Big\} \Big]
  \,,~ l = 0, \ldots, L-1 \,,
  \\
  \displaystyle
  \Bt_L = \By_L - \frac{ \boldsymbol{\lambda}_{\boldsymbol 4L} }{ \beta_4 } \,,
 \end{cases}
\end{align*}
with
\begin{align*}
 \cN_l(\Bz) &= \beta_4 + \beta_3 \abs{ \cos\left(\frac{\pi l}{L}\right) (z_2 - 1) + \sin\left(\frac{\pi l}{L}\right) (z_1 - 1) }^2 \,,
 \\ 
 \cM_l(\Bz) &= \beta_4 \left[ Y_l(\Bz) - \frac{ \Lambda_{4l}(\Bz) }{ \beta_4 } \right]
 -\beta_3 \left[ \cos\left(\frac{\pi l}{L}\right) (z_2 - 1) + \sin\left(\frac{\pi l}{L}\right) (z_1 - 1) \right] \times
 \\ & \qquad ~
 \left[ D(\Bz) + \sum_{l'=[0, L-1] \backslash \{l\}} \left[ \cos\left(\frac{\pi l'}{L}\right) (z_2^{-1} - 1) + \sin\left(\frac{\pi l'}{L}\right) (z_1^{-1} - 1) \right] T_{l'}(\Bz) + \frac{ \Lambda_3(\Bz) }{ \beta_3 } \right] \,.
\end{align*}
Note that $\cD(\Bz)$ is similar to the auto-correlation function in the Riesz basis \cite{Unser2000}. 

\noindent
{\bfseries The ``$\vec{\By}$-problem'':} Fix $\Bu, \Bv, \Brho, \Beps, \Bd, \vec{\Br}, \vec{\Bt}, \vec{\Bw}, \vec{\Bg}$ and solve
\begin{align}  \label{eq:problem:y}
 \min_{\vec{\By} \in X^{L+1}} \bigg\{  
 \mathscr R^* (\vec{\By}) 
 + \Big\langle \boldsymbol{\lambda}_{\boldsymbol 1} + \beta_1 \,, \abs{\vec{\Br}} - \langle \vec{\By} \,, \vec{\Br} \rangle_X \Big\rangle_{\ell_2}
 + \frac{\beta_4}{2} \norm{ \vec{\Bt} - \vec{\By} + \frac{\vec{\boldsymbol{\lambda}}_{\boldsymbol 4}}{\beta_4} }^2_{\ell_2}
 \bigg\}.
\end{align}
Due to its separability, we consider the problem at $l = 0, \ldots, L-1$ and
the solution of (\ref{eq:problem:y}) is
\begin{align*}
 \By_l^* &= \begin{cases}
             \By'_l \,, & \abs{\vec{\By'}} \leq 1  \\
             \frac{ \By'_l }{ \abs{\vec{\By'}} } \,, & \abs{\vec{\By'}} > 1
            \end{cases}
            \,,~~ l = 0, \ldots, L
 \,, \quad \text{and} \quad
  \begin{cases}
   \By'_l &= \Bt_l + \frac{ \boldsymbol{\lambda}_{\boldsymbol 4 l} }{\beta_4} + \Br_l \cdot^\times \frac{\boldsymbol{\lambda_1} + \beta_1}{\beta_4} \,,
   \\
   \\
   \abs{\vec{\By'}} &= \sqrt{ \sum_{l=0}^L \Big[ \Bt_l + \frac{ \boldsymbol{\lambda}_{\boldsymbol 4 l} }{\beta_4} + \Br_l \cdot^\times \frac{\boldsymbol{\lambda_1} + \beta_1}{\beta_4} \Big]^{\cdot 2} } \,.
  \end{cases}  
\end{align*}

\noindent
{\bfseries The ``$\vec{\Bw}$-problem'':} Fix $\Bu, \Bv, \Brho, \Beps, \Bd, \vec{\Br}, \vec{\Bt}, \vec{\By}, \vec{\Bg}$ and solve
\begin{align}  \label{eq:problem:w}
 \min_{\vec{\Bw} \in X^S} \bigg\{ \mu_1 \sum_{s=0}^{S-1} \norm{ \Bw_s }_{\ell_1} 
 + \frac{\beta_6}{2} \sum_{s=0}^{S-1} \norm{\Bw_s - \Bg_s + \frac{\boldsymbol{\lambda}_{\boldsymbol 6 s}}{\beta_6}}^2_{\ell_2}
 \bigg\}.
\end{align}
Due to the $\ell_1$-minimization, a solution of (\ref{eq:problem:w}) for each separable problem $s = 0, \ldots, S-1$ is
\begin{align*}
 \Bw_s^* = \Shrink \Big( \Bg_s - \frac{\boldsymbol{\lambda}_{\boldsymbol 6 s}}{\beta_6} \,,~ \frac{\mu_1}{\beta_6} \Big) \,.
\end{align*}
Note that $\mu_1$ can be adaptively chosen as
$\mu_1 = \beta_6 \alpha_{\mu_1} \norm{ \Bg_s - \frac{\boldsymbol{\lambda}_{\boldsymbol 6 s}}{\beta_6} }_{\ell_\infty}$.

\noindent
{\bfseries The ``$\vec{\Bg}$-problem'':} Fix $\Bu, \Bv, \Brho, \Beps, \Bd, \vec{\Br}, \vec{\Bt}, \vec{\By}, \vec{\Bw}$ and solve
\begin{align}  \label{eq:problem:g}
 \min_{\vec{\Bg} \in X^S} \bigg\{ 
 \frac{\beta_6}{2} \sum_{s=0}^{S-1} \norm{\Bw_s - \Bg_s + \frac{\boldsymbol{\lambda}_{\boldsymbol 6 s}}{\beta_6}}^2_{\ell_2}
 + \frac{\beta_7}{2} \norm{ \Bv - \text{div}^-_S \vec{\Bg} + \frac{\boldsymbol{\lambda}_{\boldsymbol 7}}{\beta_7} }^2_{\ell_2} 
 \bigg\}.
\end{align}
Due to the higher order partial differential equation (in a discrete setting) of the Euler-Lagrange equation for (\ref{eq:problem:g}),
the solution is obtained in the Fourier domain, which is similar to the ``$\Bu$-problem'' and the ``$\vec{\Bt}$-problem''.
We denote
$W_s(\Bz) \,, \Lambda_{6s}(\Bz) \,, G_s(\Bz)$
and $\Lambda_7(\Bz)$ as the discrete Fourier transform of 
$w_s[\Bk] \,, \lambda_{6s}[\Bk] \,, g_s[\Bk]$
and $\lambda_7[\Bk]$ with $\Bk \in \Omega$, respectively.
The solution of this ``$\vec{\Bg}$-problem'' (\ref{eq:problem:g}) for each separable problem
$s = 0, \ldots, S-1$ is
\begin{align*}
G_s(\Bz) = \frac{ \cB_s(\Bz) }{ \cA_s(\Bz) } \,,
\end{align*}
with
\begin{align*}
 &\cA_s(\Bz) = \beta_6 + \beta_7 \abs{ \cos\left(\frac{\pi s}{S}\right)(z_2 - 1) + \sin\left(\frac{\pi s}{S}\right)(z_1 - 1) }^2 \,,
 \\
 &\cB_s(\Bz) = \beta_6 \left[ W_s(\Bz) + \frac{\Lambda_{6 s}(\Bz)}{\beta_6} \right]
 -\beta_7 \left[ \cos\left(\frac{\pi s}{S}\right)(z_2 - 1) + \sin\left(\frac{\pi s}{S}\right) (z_1 - 1) \right] \times
 \\&
 \left[ V(\Bz) + \sum_{s'=[0, S-1] \backslash \{s\}} \left[ \cos\left(\frac{\pi s'}{S}\right)(z_2^{-1} - 1) + \sin\left(\frac{\pi s'}{S}\right) (z_1^{-1} - 1) \right] G_{s'}(\Bz) + \frac{\Lambda_7(\Bz)}{\beta_7} \right] \,.
\end{align*}

\noindent
{\bfseries The ``$\Brho$-problem'':} Fix $\Bu, \Bv, \Beps, \Bd, \vec{\Br}, \vec{\Bt}, \vec{\By}, \vec{\Bw}, \vec{\Bg}$ and solve
\begin{align}  \label{eq:problem:rho}
 \min_{\Brho \in X} \Big\{
 \mathscr G^*\left(\frac{\Brho}{\nu_\rho}\right) + 
 \frac{\beta_5}{2} \norm{ \Bh \ast \Brho - \Big[ \Bf - \Bh \ast \Bu - \Bh \ast \Bv - \Beps + \frac{\boldsymbol{\lambda}_{\boldsymbol 5}}{\beta_5} \Big] }^2_{\ell_2} 
 \Big\}.
\end{align}
Minimization of (\ref{eq:problem:rho}) is approximated by the first-order Taylor expansion as
\begin{align*}
 \min_{\Brho \in X} \Big\{
 \mathscr G^*(\frac{\Brho}{\nu_\rho}) + 
 \frac{\beta_5}{2 \alpha^{(\tau)}} \Big|\Big| \Brho - \Big[ \underbrace{ \big( \delta - \alpha^{(\tau)} \check \Bh \ast \Bh \big) \ast \Brho^{(\tau-1)} + \alpha^{(\tau)} \check \Bh \ast \big( \Bf - \Bh \ast \Bu - \Bh \ast \Bv - \Beps + \frac{\boldsymbol{\lambda}_{\boldsymbol 5}}{\beta_5} \big) }_{\displaystyle = \tilde{\Brho}} \Big] \Big|\Big|^2_{\ell_2} 
 \Big\} \,.
\end{align*}
Its solution is 
\begin{align*}
 \Brho &= \tilde{\Brho} - \CST \Big[ \tilde{\Brho} \,,~ \nu_\rho \Big] \,,
\end{align*}
where $\nu_\rho$ can be selected by the $\alpha$-quantile as
$\nu_\rho = \alpha_\rho \norm{ \cC \{ \tilde{\Brho} \} }_{\ell_\infty}$.

Note that without component $\Brho$ in minimization (\ref{eq:minimization:SDMCDD:2}),
it is difficult to separate texture $\Bv$ and some very fine scale structures in an original image because of a blurring kernel. 

\noindent
{\bfseries The ``$\Beps$-problem'':} Fix $\Bu, \Bv, \Brho, \Bd, \vec{\Br}, \vec{\Bt}, \vec{\By}, \vec{\Bw}, \vec{\Bg}$ and solve
\begin{align}  \label{eq:problem:epsilon}
 \min_{\Beps \in X} \Big\{
 \mathscr G^*(\frac{\Beps}{\nu_\epsilon}) + 
 \frac{\beta_5}{2} \big|\big| \Beps - \Big[ \underbrace{ \Bf - \Bh \ast \Bu - \Bh \ast \Bv - \Bh \ast \Brho + \frac{\boldsymbol{\lambda}_{\boldsymbol 5}}{\beta_5} }_{\displaystyle = \tilde{\Beps}} \Big] \big|\big|^2_{\ell_2} 
 \Big\}.
\end{align}
Similar to the ``$\Brho$-problem'' (\ref{eq:problem:rho}) and given a possible choice $\nu_\epsilon = \alpha_\epsilon \norm{ \cC \{ \tilde{\Beps} \} }_{\ell_\infty}$,
the solution of (\ref{eq:problem:epsilon}) is
\begin{align*}
 \Beps^* = \tilde{\Beps} - \CST \Big[ \tilde{\Beps} \,,~ \nu_\epsilon \Big] \,. 
\end{align*}


The final solution is found by iteratively updating the Lagrange multipliers
 $\big( \boldsymbol{\lambda}_{\boldsymbol 1} \,, \vec{\boldsymbol{\lambda}}_{\boldsymbol 2} \,, \boldsymbol{\lambda}_{\boldsymbol 3}
  \,, \vec{\boldsymbol{\lambda}}_{\boldsymbol 4} \,, \boldsymbol{\lambda}_{\boldsymbol 5} \,, \vec{\boldsymbol{\lambda}}_{\boldsymbol 6} \,, \boldsymbol{\lambda}_{\boldsymbol 7} \big)
  \in X^{4+2(L+1)+S
  }
 $
 
\begin{align*}
 \boldsymbol{\lambda}_{\boldsymbol 1}^{(\tau)} &= \boldsymbol{\lambda}_{\boldsymbol 1}^{(\tau-1)} 
 + \beta_1 \Big[ \abs{\vec{\Br}} - \langle \vec{\By} \,, \vec{\Br} \rangle_X \Big]  
 \,,~~ \abs{\vec{\Br}} = \sqrt{ \sum_{l=0}^L \Br_l^{\cdot 2} } \,,~~ 
 \langle \vec{\By} \,, \vec{\Br} \rangle_X = \sum_{l=0}^L \By_l \cdot^\times \Br_l 
 \\
 \boldsymbol{\lambda}_{\boldsymbol 2 l}^{(\tau)} &= 
 \begin{cases}  
  \boldsymbol{\lambda}_{\boldsymbol 2 l}^{(\tau-1)} + \beta_2 \big[ \Br_l - \cos\left(\frac{\pi l}{L}\right) \Bu \BDtT - \sin\left(\frac{\pi l}{L}\right) \BDo \Bu \big] \,, & l = 0, \ldots, L-1 \\
  \boldsymbol{\lambda}_{\boldsymbol 2 l}^{(\tau-1)} + \beta_2 \big[ \Br_l -  \boldsymbol 1 \big] \,, & l = L
 \end{cases}
 \\
 \boldsymbol{\lambda}_{\boldsymbol 3}^{(\tau)} &= \boldsymbol{\lambda}_{\boldsymbol 3}^{(\tau-1)}
 + \beta_3 \Big[ \Bd + \sum_{l=0}^{L-1} \big[ \cos\left(\frac{\pi l}{L}\right) \Bt_l \BDt + \sin\left(\frac{\pi l}{L}\right) \BDoT \Bt_l \big] \Big]
 \\
 \boldsymbol{\lambda}_{\boldsymbol 4 l}^{(\tau)} &= \boldsymbol{\lambda}_{\boldsymbol 4 l}^{(\tau-1)}
 + \beta_4 \Big[ \Bt_l - \By_l \Big] \,,~~ l = 0, \ldots, L
 \\
 \boldsymbol{\lambda}_{\boldsymbol 5}^{(\tau)} &= \boldsymbol{\lambda}_{\boldsymbol 5}^{(\tau-1)}
 + \beta_5 \Big[ \Bf - \Bh \ast \Bu - \Bh \ast \Bv - \Bh \ast \Brho - \Beps \Big]  
 \\
 \boldsymbol{\lambda}_{\boldsymbol 6 s}^{(\tau)} &= \boldsymbol{\lambda}_{\boldsymbol 6 s}^{(\tau-1)}
 + \beta_6 \Big[ \Bw_s - \Bg_s \Big]     
 \,,~ s = 0, \ldots, S-1 
 \\
 \boldsymbol{\lambda}_{\boldsymbol 7}^{(\tau)} &= \boldsymbol{\lambda}_{\boldsymbol 7}^{(\tau-1)}
 + \beta_7 \Big[ \Bv + \underbrace{ \sum_{s=0}^{S-1} \left[ \cos\left(\frac{\pi s}{S}\right) \Bg_s \BDt + \sin\left(\frac{\pi s}{S}\right) \BDoT \Bg_s \right] }_{= -\text{div}^-_S \vec{\Bg}} \Big] \,.     
\end{align*}
As in \cite{ThaiGottschlich2016DG3PD, ThaiGottschlich2016G3PD}, 
relative error on the log scale shows the convergence of the algorithm as
\begin{align} \label{eq:relativeError:v}
 \text{Err}_{\Bv}(\tau) = \log \frac{\norm{\Bv^{(\tau)} - \Bv^{(\tau-1)}}_{\ell_2}}{\norm{\Bv^{(\tau-1)}}_{\ell_2}} \,,~ \tau = 1, \ldots \,.
\end{align}
Note that the convergence can be performed by other criteria, see \cite{ZhuTaiChan2013, ZhuTaiChan2013ALM, TaiHahnChung2011}.
We use this criterion because the problem is convex in $\boldmath{v}$,
and because our method emphasizes texture recovery.  
Figure \ref{fig:deblurdecomp:barbara:nonoise} illustrates a performance of our demixing model in terms of simultaneously decomposing and deblurring 
through directional mean curvature.
This result will be clearly explained in later sections after we set up 
the link between the DMCD model and filter banks in harmonic analysis. 
The Algorithms 1-4 in Appendix C summarize a numerical solution of the DMCD model 
(\ref{eq:minimization:SDMCDD:1}).


\section{Variational Analysis and Filter Banks}   \label{sec:VariationalAnalysisandFilterBanks}

In this section, we establish a deep connection between the proposed demixing 
model and multiscale harmonic analysis.
Specifically, we analyze filter banks generated by the DMCD model at 
iteration $\tau$ in the ordering for the 
$(\Bu \,, \Bv \,, \Brho \,, \Beps)$-problems according to the Algorithms 1-4
in Appendix C.
These filter banks are similar to a wavelet-like operator \cite{KhalidovUnser2006, UnserVandeville2010, UnserSageVandeville2009}.
We then generalize the concept of filter banks in the $\Bu$-problem and the 
$\vec \Bg$-problem to continuous and discrete multiscale sampling versions.
The mathematical proofs are described in proposition 
\ref{prop:VCPyramid:problem:u:spatial}-\ref{prop:DMCDDMultiscaleSamplingTheory:g-problem}
in Appendix A.
To summarize the filter banks of these solutions, we refer the reader to 
Algorithms 5-7 in Appendix C 
and Figure \ref{fig:MDCD_let:FilterBanks:c0_1}, 
\ref{fig:MDCD_let:FilterBanks:c10}, \ref{fig:MDCD_let:FilterBanks:3D} (for a version of sampling theory) and Figure \ref{fig:MDCD_let:u-g-problems} (for multiscale version).
Although the concept of filter banks and scaling/wavelet functions are 
different in harmonic analysis, we shall combine these two concepts in this section.

\subsection{The ``$\Bu$-problem''}  \label{subsection:uproblem:multiscale}

Consider the sampling theory form for the $\Bu$-problem (see proposition \ref{prop:VCPyramid:problem:u:spatial} in Appendix A).
Given $\Bk \in \Omega$ and taking $\beta_2 = c_{25} \beta_5 \,, \beta_3 = c_{34} 
\beta_4$, a solution of the $\Bu$-problem (\ref{eq:problem:u}) at iteration 
$\tau$ can be rewritten in a form of sampling theory as
\begin{align} \label{eq:VCPyramid:problem:u:spatial:1}
 &u^{(\tau)}[\Bk] = \left( \phi^{L, c_{25}} \ast \check{h} \ast \left( f - h \ast v^{(\tau-1)} - h \ast \rho^{(\tau-1)} - \epsilon^{(\tau-1)} + \frac{\lambda_5^{(\tau-1)}}{\beta_5} \right) \right)[\Bk]   \notag
 \\&
 + \sum_{l=0}^{L-1} \left( \check{\tilde{\psi}}^{L, c_{25}}_l \ast 
 \left[ \Shrink \left( \psi^L_l \ast u^{(\tau-1)} - \frac{ \lambda_{2 l}^{(\tau-1)} }{ \beta_2 } + \frac{ \lambda_1^{(\tau-1)} + \beta_1 }{ \beta_2 } y_l^{(\tau-1)} 
                     \,,~ \frac{ \lambda_1^{(\tau-1)} + \beta_1}{\beta_2} \right)       
       + \frac{ \lambda_{2 l}^{(\tau-1)} }{\beta_2} \right]
 \right) [\Bk]
\end{align}
with three definitions.

\noindent
{\bfseries Definition the of $\boldsymbol{y_l^{(\tau-1)} [\Bk]}$-component.} 
At a direction $l = 0, \ldots, L-1$, we have 
$y_l^{(\tau-1)} [\Bk] = \text{Proj}_{[-1,1]} \left[ \vec{y}\prime^{(\tau-1)}[\Bk] \right]$
with $\vec{y}^{\prime^(\tau-1)}[\Bk] = \left[ y_l\prime^{(\tau-1)}[\Bk] \right]_{l=0}^{L-1}$ and
\begin{align} \label{eq:VCPyramid:problem:y:1}
 y_l^{\prime^(\tau-1)}[\Bk] 
 &= \left( \xi^{L, c_{34}}_l \ast \left[ y_l^{(\tau-2)} - \frac{ \lambda_{4l}^{(\tau-2)} }{ \beta_4 } \right] \right) [\Bk]
 + \frac{ \lambda_{4l}^{(\tau-2)}[\Bk] }{\beta_4}              \notag              
 \\&
 + \left( \theta^{L, c_{34}}_l \ast \left[  \text{Shrink} \left( \sum_{l'=0}^{L-1} \check{\tilde \theta}^L_{l'} \ast t_{l'}^{(\tau-2)} - \frac{\lambda_3^{(\tau-2)}}{\beta_3} \,,~ \frac{1}{\beta_3} \right) + \frac{ \lambda_3^{(\tau-2)} }{ \beta_3 }
         - \sum_{l'=[0, L-1] \backslash \{l\}} \check{\tilde \theta}^L_{l'} \ast  t_{l'}^{(\tau-2)} \right]
   \right) [\Bk]                                                               \notag                                                    
 \\& 
 + \frac{\lambda_1^{(\tau-2)}[\Bk] + \beta_1}{\beta_4} 
   \Shrink \left( \partial_l^+ u^{(\tau-2)} - \frac{ \lambda_{2 l}^{(\tau-2)} }{ \beta_2 } 
                  + \frac{ \lambda_1^{(\tau-2)} + \beta_1 }{ \beta_2 } y_l^{(\tau-2)}
                  \,, \frac{ \lambda_1^{(\tau-2)} + \beta_1}{\beta_2} \right) [\Bk].    
\end{align}
Note that due to the high order PDE behind directional mean curvature, 
$u^{(\tau)}$ in (\ref{eq:VCPyramid:problem:u:spatial:1}) is updated from 
$u^{(\tau-1)}$ and $u^{(\tau-2)}$ at every iteration $\tau$ and a thresholding value adaptively depends on 
Lagrange multiplier $\lambda_1^{(\tau-1)}$ and $\lambda_1^{(\tau-2)}$.

\noindent
{\bfseries Definition of frames in (\ref{eq:VCPyramid:problem:u:spatial:1}) at direction $\boldsymbol{l = 0, \ldots, L-1}$.}
To simply the notation, we use a parameter $c > 0$ instead of $c_{25}$ to define frames as
\begin{align}  \label{eq:VCPyramid:problem:frameElements:u:scalingfunc:1}
 \phi^{L, c}[\Bk] 
 &~\stackrel{\cF}{\longleftrightarrow}~
 \Phi^{L, c}(\Bz) = \frac{1}{\displaystyle \abs{H(\Bz)}^2 + c \sum_{l'=0}^{L-1} \abs{ \cos\left(\frac{\pi l'}{L}\right)(z_2 - 1) + \sin\left(\frac{\pi l'}{L}\right)(z_1 - 1) }^2 } \,,
 \\  \label{eq:VCPyramid:problem:frameElements:u:dualwavelet:1}
 \check{\tilde \psi}^{L, c}_l[\Bk] = - c \partial_l^- \phi^{L, c}[\Bk]
 &~\stackrel{\cF}{\longleftrightarrow}~ 
 \tilde{\Psi}^{L, c}_l(\Bz^{-1}) 
 = \frac{\displaystyle c \left[ \cos\left(\frac{\pi l}{L}\right)(z_2^{-1} - 1) + \sin\left(\frac{\pi l}{L}\right)(z_1^{-1} - 1) \right] }
         {\displaystyle c \sum_{l'=0}^{L-1} \abs{ \cos\left(\frac{\pi l'}{L}\right)(z_2 - 1) + \sin\left(\frac{\pi l'}{L}\right)(z_1 - 1) }^2 + \abs{H(\Bz)}^2 } \,,
 \\  \label{eq:VCPyramid:problem:frameElements:u:wavelet:1}
 \psi^L_l[\Bk] = \partial_l^+ \delta[\Bk]
 &~\stackrel{\cF}{\longleftrightarrow}~
 \Psi^L_l(\Bz) = \cos\left(\frac{\pi l}{L}\right)(z_2 - 1) + \sin\left(\frac{\pi l}{L}\right)(z_1 - 1) \,
\end{align}
where $\delta[\Bk]$ denotes the Dirac delta function evaluated at 
position $\Bk$. 
Note that given the spectrum of an impulse response for a blur operator $H(\Bz)$ with $\abs{H(e^{j \mathbf 0})}^2 > 0$ (usually $H(e^{j \mathbf 0} = 1$) due to its lowpass-like kernel),
these bounded frames satisfy the unity condition:
\begin{align*}
 & \abs{H(\Bz)}^2 \Phi^{L, c}(\Bz) + \sum_{l=0}^{L-1} \tilde \Psi^{L, c}_l(\Bz^{-1}) \Psi^L_l(\Bz) = 1 \,,
 \\
 &\Phi^{L, c}(e^{j \mathbf 0}) = \frac{1}{ \abs{H(e^{j \mathbf 0})}^2 }
 \quad \text{and} \quad
 \tilde{\Psi}^{L, c}_l(e^{j \mathbf 0}) = \Psi^L_l(e^{j \mathbf 0}) = 0 \,.
\end{align*}

\noindent
{\bfseries Definition of frames in (\ref{eq:VCPyramid:problem:y:1}) at 
direction $\boldsymbol{l = 0, \ldots, L-1}$.} We also simplify the notation by using $c>0$ instead of $c_{34}$, so
\begin{align} \label{eq:FilterBanks:XiThetaTheta_tilde:1:1}
 \xi^{L, c}_l[\Bk] = \Big[ 1 - c \partial_l^- \partial_l^+ \Big]^{-1} \delta[\Bk]
 &~\stackrel{\cF}{\longleftrightarrow}~ 
 \Xi^{L, c}_l(\Bz) = \frac{ 1 }{ 1 + c \abs{ \cos\left(\frac{\pi l}{L}\right) (z_2 - 1) + \sin\left(\frac{\pi l}{L}\right) (z_1 - 1) }^2 }  
 \,,
 \\  \label{eq:FilterBanks:XiThetaTheta_tilde:2:1}
 \theta^{L, c}_l[\Bk] = -c \partial_l^+ \xi^{L, c}_l[\Bk]
 &~\stackrel{\cF}{\longleftrightarrow}~   
 \Theta^{L, c}_l(\Bz) = 
 \frac{ -c \Big[ \cos\left(\frac{\pi l}{L}\right) (z_2 - 1) + \sin\left(\frac{\pi l}{L}\right) (z_1 - 1) \Big] }
        { 1 + c \abs{ \cos\left(\frac{\pi l}{L}\right) (z_2 - 1) + \sin\left(\frac{\pi l}{L}\right) (z_1 - 1) }^2 } \,,
 \\  \label{eq:FilterBanks:XiThetaTheta_tilde:3:1}
 \check{\tilde \theta}^L_l[\Bk] = \partial_l^- \delta[\Bk]
 &~\stackrel{\cF}{\longleftrightarrow}~  
 \tilde \Theta^L_l(\Bz^{-1}) = - \left[ \cos\left(\frac{\pi l}{L}\right) (z_2^{-1} - 1) + \sin\left(\frac{\pi l}{L}\right) (z_1^{-1} - 1) \right] \,.
\end{align}
Due to the splitting method for minimization in 
(\ref{eq:minimization:SDMCDD:2}), 
there is no effect from a blur operator $H(\Bz)$ on these frames.
Moreover, it is easy to see that these frames are bounded above from zero for $\Bome \in [-\pi \,, \pi]^2$ and also satisfy the unity condition in the 
Fourier domain as
\begin{align} \label{eq:FilterBanks:XiThetaTheta_tilde:unitycondition:1}
 &\Xi^{L, c}_l(\Bz) + \Theta^{L, c}_l(\Bz) \tilde \Theta^L_l(\Bz^{-1}) = 1 \,,
 \\
 & \Xi^{L, c}_l(e^{j\mathbf 0}) = 1 
 \text{ and }
 \Theta^{L, c}_l(e^{j\mathbf 0}) = \tilde \Theta^L_l(e^{j\mathbf 0}) = 0 \,.   \notag
\end{align}

Figures \ref{fig:MDCD_let:FilterBanks:c0_1} and \ref{fig:MDCD_let:FilterBanks:c10}
depict the spectrum of these filter banks with $c = 0.1$ and $c = 10$, respectively.
Figure \ref{fig:MDCD_let:FilterBanks:3D} illustrates a 3-dimensional version of $\Phi^{L,c}(\Bz)$ 
and $\Phi^{L,c}(\Bz)$ without the blurring effect; i.e., $H(\Bz) = 1$.


Now consider the multiscale sampling version of the $\Bu$-problem (see proposition \ref{prop:DMCDDMultiscaleSamplingTheory:u-problem}).
Since a solution of the $\Bu$-problem can be described in a form of 
the sampling theory in harmonic analysis, we generalize this form to its 
(continuous and discrete) multi-scale version (with some simplified notation 
as in the proof of proposition \ref{prop:DMCDDMultiscaleSamplingTheory:u-problem} in Appendix A).

For the discrete case, given a function $f \in \ell_2(\mathbb R^2)$, a constant $a>0$ 
and $\Bk \in \Omega$, the discrete multiscale sampling theory at scale $I$ 
and direction $L$ is 
 \begin{align} \label{label:multiScaleSamp:discreteprojection:prop}
 f[\Bk] = (f \ast \phi){[\Bk]}
 + \sum_{i=0}^{I-1} \sum_{l=0}^{L-1} ( f \ast \check{ \tilde{\psi} }_{il} \ast \psi_{il} ){[\Bk]}
 ~\stackrel{\cF}{\longleftrightarrow}~           
 F(\Bz) = F(\Bz) \Phi(\Bz)
 + \sum_{i=0}^{I-1} \sum_{l=0}^{L-1} F(\Bz) \tilde \Psi_{il}(\Bz^{-1}) \Psi_{il}(\Bz) \,.
\end{align}
Their frames are defined in the Fourier domain (see Figure \ref{fig:MDCD_let:u-g-problems}(b) for their spectra) as
\begin{align*}
 \phi[\Bk] 
 &~\stackrel{\cF}{\longleftrightarrow}~ 
 \Phi(\Bz) = I^{-1} \sum_{i=0}^{I-1} \Phi_\text{int}(\Bz^{a^i}) \,,
 \\
 \psi_{il}[\Bk] 
 &~\stackrel{\cF}{\longleftrightarrow}~ 
 \Psi_{il}(\Bz) = I^{-\frac{1}{2}} \Psi_l(\Bz^{a^i}) \,,
 \\
 \check{\tilde \psi}_{il}[\Bk] 
 &~\stackrel{\cF}{\longleftrightarrow}~ 
 \tilde \Psi_{il}(\Bz^{-1}) = I^{-\frac{1}{2}} \tilde \Psi_l(\Bz^{-a^i}) \,,
\end{align*}
with the discrete version of the interpolant $\phi_\text{int}(\cdot)$ and the directional mother dual/primal wavelet $\tilde \psi_l(\cdot)$ (with $l = 0, \ldots, L-1$) as
\begin{align*}
 \phi_\text{int}[\Bk] &= \left[ c (-\Delta_{\text{d}L}) + 1 \right]^{-1} \delta[\Bk]
 ~\stackrel{\cF}{\longleftrightarrow}~
 \Phi_\text{int}(\Bz) = \frac{1}{\displaystyle 1 + c \sum_{l'=0}^{L-1} \abs{ \sin\left(\frac{\pi l'}{L}\right)(z_1 - 1) + \cos\left(\frac{\pi l'}{L}\right)(z_2 - 1) }^2} \,,
 \\
 \check{\tilde \psi}_l[\Bk] &= \underbrace{ - c \left[ c (-\Delta_{\text{d}L}) + 1 \right]^{-1} \partial_l^- \delta[\Bk] }
                                 _{\displaystyle = -c \partial_l^- \phi_\text{int}[\Bk] }
 ~\stackrel{\cF}{\longleftrightarrow}~
 \tilde \Psi_l(\Bz^{-1}) = \frac{\displaystyle c \left[ \sin\left(\frac{\pi l}{L}\right) (z_1^{-1} - 1) + \cos\left(\frac{\pi l}{L}\right) (z_2^{-1} - 1) \right] }
                           {\displaystyle 1 + c \sum_{l'=0}^{L-1} \abs{ \sin\left(\frac{\pi l'}{L}\right)(z_1 - 1) + \cos\left(\frac{\pi l'}{L}\right)(z_2 - 1) }^2 } \,,
 \\
 \psi_l[\Bk] &= \partial^+_l \delta[\Bk] 
 ~\stackrel{\cF}{\longleftrightarrow}~
 \Psi_l(\Bz) = \sin\left(\frac{\pi l}{L}\right) (z_1 - 1) + \cos\left(\frac{\pi l}{L}\right) (z_2 - 1) \,.
\end{align*}
Note that these discrete frames are bounded and also satisfy the unity condition, since
\begin{align*}
 \Phi(\Bz) + \sum_{i=0}^{I-1} \sum_{l=0}^{L-1} \tilde \Psi_{il}(\Bz^{-1}) \Psi_{il}(\Bz) = 1 \,, \; 
 \Phi(e^{j 0}) = 1 \,, \Psi_{il}(e^{j 0}) = \tilde \Psi_{il}(e^{j 0}) = 0 \,.
\end{align*}

For the continuous case,
given a constant $a>0$, $\Bk \in \Omega$ and $u \in \ell_2(\Omega)$ whose discrete Fourier transform is $F(e^{j\Bome})$ or $F(\Bz)$,
the multiscale sampling theory at scale $I$ and direction $L$ is  
  \begin{align}  \label{label:multiScaleSamp:continuousprojection:prop}
  f[\Bk] &= (f \ast \phi)[\Bk] + \sum_{i=0}^{I-1} \sum_{l=0}^{L-1} ( f \ast \check{ \tilde{\psi} }_{il} \ast \psi_{il} )[\Bk] 
  ~\stackrel{\cF}{\longleftrightarrow}~
  F(e^{j\Bome}) = F(e^{j\Bome}) \widehat \phi(\Bome) + \sum_{i=0}^{I-1} \sum_{l=0}^{L-1} F(e^{j\Bome}) \widehat{\tilde{\psi}^*_{il}}(\Bome) \widehat{\psi_{il}} (\Bome) \,
 \end{align}
and their frames are defined in the Fourier domain with $\Bx \in \mathbb R^2$ (see Figure \ref{fig:MDCD_let:u-g-problems}(a) for their spectra) as 
 \begin{align*}
  \phi(\Bx) = I^{-1} \sum_{i=0}^{I-1} a^{-i} \phi_\text{int} (a^{-1} \Bx)
  &~\stackrel{\cF}{\longleftrightarrow}~ 
  \widehat{\phi}(\Bome) = I^{-1} \sum_{i=0}^{I-1} \widehat{\phi_\text{int}}(a^i \Bome) \,,
  \\
  \psi_{il}(\Bx) = I^{-\frac{1}{2}} a^{-i} \psi_l(a^{-i} \Bx)
  &~\stackrel{\cF}{\longleftrightarrow}~ 
  \widehat{\psi_{il}}(\Bome) = I^{-\frac{1}{2}} \widehat{\psi_l}(a^i\Bome) \,,
  \\
  \check{\tilde \psi}_{il}(\Bx) = I^{-\frac{1}{2}} a^{-i} \check{\tilde\psi}_l(a^{-i} \Bx) 
  &~\stackrel{\cF}{\longleftrightarrow}~ 
  \widehat{\tilde \psi^*_{il}}(\Bome) = I^{-\frac{1}{2}} \widehat{\tilde \psi_l^*}(a^i\Bome) \,,
 \end{align*} 
with the interpolant $\phi_\text{int}(\cdot)$ in the continuous setting and, 
for $l = 0, \ldots, L-1$, 
the directional dual/primal wavelet $\tilde \psi_l(\cdot)$ and  $\psi_l(\cdot)$, so 
 \begin{align*}
  \phi_\text{int}(\Bx) = \left[ c (-\Delta_{L}) + 1 \right]^{-1} \delta(\Bx)
  &~\stackrel{\cF}{\longleftrightarrow}~
  \widehat{\phi_\text{int}}(\Bome) = \frac{1}{\displaystyle 1 + c \sum_{l'=0}^{L-1} \left[ \cos(\frac{\pi l'}{L}) \omega_2 + \sin(\frac{\pi l'}{L}) \omega_1 \right]^2} \,,
  \\
  \check{\tilde \psi}_l(\Bx) = -c \partial_l \phi_\text{int}(\Bx)
  &~\stackrel{\cF}{\longleftrightarrow}~
  \widehat{\tilde \psi^*_l}(\Bome) = \frac{\displaystyle -c \left[ \cos\left(\frac{\pi l}{L}\right) j \omega_2 + \sin\left(\frac{\pi l}{L}\right) j \omega_1 \right] }
                            {\displaystyle 1 + c \sum_{l'=0}^{L-1} \left[ \cos\left(\frac{\pi l'}{L}\right) \omega_2 + \sin\left(\frac{\pi l'}{L}\right) \omega_1 \right]^2} \,,
  \\
  \psi_l(\Bx) = \partial_l \delta(\Bx)
  &~\stackrel{\cF}{\longleftrightarrow}~
  \widehat{\psi_l}(\Bome) = \cos\left(\frac{\pi l}{L}\right) j\omega_2 + \sin\left(\frac{\pi l}{L}\right) j\omega_1 \,.
 \end{align*} 
 Similar to its discrete version, these bounded frames in a continuous setting also satisfy the unity condition as
 \begin{align*}
  \widehat{\phi}(\Bome) + \sum_{i=0}^{I-1} \sum_{l=0}^{L-1} \widehat{\tilde \psi^*_{il}}(\Bome) \widehat{\psi_{il}}(\Bome) = 1 \,,
  \widehat \phi(0) = 1 \,, \widehat{\psi_{il}}(0) = \widehat{\tilde \psi_{il}}(0) = 0 \,.
 \end{align*} 
Also note that, unlike the continuous filter banks in Figure \ref{fig:MDCD_let:u-g-problems}(a), there are aliasing effects in the discrete version in Figure \ref{fig:MDCD_let:u-g-problems}(b) because of the exponential operator in the complex domain; i.e., $\Bz = e^{j \Bome}$.


\subsection{The ``$\Bv$-problem''}   \label{subsection:vproblem:multiscale}

The solution of the $\Bv$-problem (\ref{eq:problem:v}) is rewritten in a 
sampling theory form with two shrinkage operators (due to $\norm{\Bv}_{\ell_1}$ and $\norm{ \Bg_s }_{\ell_1}$ in the DMCD model (\ref{eq:minimization:SDMCDD:2})), the
Lagrange multipliers $(\boldsymbol{\lambda_5} \,, \boldsymbol{\lambda_6} \,, \boldsymbol{\lambda_7})$
and the blur kernel $\Bh$ as
\begin{align}  \label{eq:VCPyramid:problem:v:spatial:1}
 \Bv^{(\tau)} &= \Shrink \left( \Bt_\Bv^{(\tau)} \,,~ \frac{\mu_2 \alpha^{(\tau)}}{\beta_5 + \alpha^{(\tau)} \beta_7} \right) 
\end{align}
with $\Bt_\Bv^{(\tau)} = \left[ t_v^{(\tau)} [\Bk] \right]_{\Bk \in \Omega}$ and
\begin{align*}
 t_v^{(\tau)}[\Bk] &= \frac{ \beta_5 }{ \beta_5 + \alpha^{(\tau)} \beta_7 }
 \left( \left( \delta - \alpha^{(\tau)} \check{h} \ast h \right) \ast v^{(\tau-1)}  
 + \alpha^{(\tau)} \check{h} \ast \left( f - h \ast u^{(\tau)} - h \ast \rho^{(\tau-1)} - \epsilon^{(\tau-1)} + \frac{\lambda_{5}^{(\tau-1)}}{\beta_5} \right) \right) [\Bk]
 \\
 &+ \frac{ \beta_7 \alpha^{(\tau)} }{ \beta_5 + \alpha^{(\tau)} \beta_7 }
 \left( \sum_{s=0}^{S-1} \check{\tilde \theta}^{S}_s \ast g_s^{(\tau)} - \frac{\lambda_7^{(\tau-1)}}{\beta_7} \right) [\Bk].
\end{align*}
By choosing $\beta_7 = c_{67} \beta_6$, the solution of the $\Bg$-problem (\ref{eq:problem:g}) at iteration $\tau$ is 
\begin{align} \label{eq:FilterBanks:problem:g:spatial:1}
 g_s^{(\tau)}[\Bk] &= \left( \xi^{S, c_{67}}_s \ast \left[ \Shrink \left( g_s^{(\tau-1)} - \frac{\lambda_{6 s}^{(\tau-1)}}{\beta_6} \,,~ \frac{\mu_1}{\beta_6} \right) + \frac{\lambda_{6 s}^{(\tau-1)}}{\beta_6} \right] \right) [\Bk]
 + \left( \theta^{S, c_{67}}_s \ast \left[ \check{\tilde \theta}^{S}_s \ast g_s^{(\tau-1)} + \frac{\lambda_7^{(\tau-1)}}{\beta_7} \right] \right) [\Bk] \,.
\end{align}
Note that the sampling theory form for (\ref{eq:FilterBanks:problem:g:spatial:1}) is more obvious if we simplify the equation by removing the shrinkage operator and the Lagrange multipliers, so
\begin{align*} 
 g_s^{(\tau)}[\Bk] &= \left( \xi^{S, c_{67}}_s \ast g_s^{(\tau-1)} \right) [\Bk]
 + \Big( \theta^{S, c_{67}}_s \ast \check{\tilde \theta}^{S}_s \ast g_s^{(\tau-1)} \Big)[\Bk] \,.         
\end{align*}
Frames  
$\xi^{S, c_{67}}_s(\cdot) \,, \theta^{S, c_{67}}_s(\cdot)$ and $\tilde \theta^S_s(\cdot)$ are well defined in
(\ref{eq:FilterBanks:XiThetaTheta_tilde:1:1})-(\ref{eq:FilterBanks:XiThetaTheta_tilde:unitycondition:1}), see proposition 
\ref{prop:VCPyramid:problem:v:spatial:1} in Appendix A.

The multiscale sampling version of the $\vec{\Bg}$-problem is similar to 
the $\Bu$-problem; see proposition \ref{prop:DMCDDMultiscaleSamplingTheory:g-problem} in Appendix A. 
Given a discrete function (data) $\Bf \in X$, constant $a > 0$ and $\Bk \in \Omega$, 
the discrete multiscale sampling theory at scale $I$ and direction $L$ is
\begin{align}  \label{label:multiScaleSamp:discreteprojection:gproblem:prop}
 f[\Bk] = (f \ast \xi)[\Bk]
 + \sum_{i=0}^{I-1} \sum_{s=0}^{S-1} ( f \ast \check{\tilde \theta}_{si} \ast \theta_{si} )[\Bk]
 ~\stackrel{\cF}{\longleftrightarrow}~
 F(\Bz) = 
 F(\Bz) \Xi(\Bz) +  
 \sum_{i=0}^{I-1} \sum_{s=0}^{S-1} F(\Bz) \tilde\Theta_{si}(\Bz^{-1}) \Theta_{si}(\Bz)
\end{align}
and their frames are defined in the Fourier domain as 
\begin{align*}
 \xi[\Bk] \stackrel{\cF}{\longleftrightarrow}
 \Xi(\Bz) = \frac{1}{SI} \sum_{i=0}^{I-1} \sum_{s=0}^{S-1} \Xi_s(\Bz^{a^i}) \,,~
 \theta_{si}[\Bk] \stackrel{\cF}{\longleftrightarrow} 
 \Theta_{si}(\Bz) = \frac{1}{\sqrt{SI}} \Theta_s(\Bz^{a^i})    \text{ and }
 \tilde\theta_{si}[\Bk] \stackrel{\cF}{\longleftrightarrow}
 \tilde\Theta_{si}(\Bz) = \frac{1}{\sqrt{SI}} \tilde\Theta_s(\Bz^{a^i}) \,
\end{align*}
(see Figure \ref{fig:MDCD_let:u-g-problems}(d) for their spectra).
The $\Xi_s(\Bz) = \Xi_s^{S, c_{67}}(\Bz) \,, \Theta_s(\Bz) = \Theta_s^{S, c_{67}}(\Bz)$ and $\tilde\Theta_s(\Bz) = \tilde \Theta_s^S(\Bz)$ are defined in (\ref{eq:FilterBanks:XiThetaTheta_tilde:1:1})-(\ref{eq:FilterBanks:XiThetaTheta_tilde:3:1}).
These multiscale frames also satisfy the unity condition in the Fourier domain as
\begin{align*}
 \Xi(\Bz) + \sum_{i=0}^{I-1} \sum_{s=0}^{S-1} \tilde\Theta_{si}(\Bz^{-1}) \Theta_{si}(\Bz) = 1 \,,~
 \Xi(e^{j\mathbf 0}) = 1 
 ~~\text{and}~~
 \Theta_{si}(e^{j\mathbf 0}) = \tilde\Theta_{si}(e^{j\mathbf 0}) = 0 \,.
\end{align*}
Given a constant $a > 0 \,, \Bk \in \Omega$ and a discrete function $f[\Bk] \stackrel{\cF}{\longleftrightarrow} F(e^{j\Bome})$, the continuous multiscale sampling theory form at scale $I$ and direction $L$ is
\begin{align}  \label{label:multiScaleSamp:continuousprojection:gproblem:prop}
 f[\Bk] = (f \ast \xi)[\Bk] + \sum_{i=0}^{I-1} \sum_{s=0}^{S-1} (f \ast \check{\tilde \theta}_{si} \ast \theta_{si})[\Bk] 
 ~\stackrel{\cF}{\longleftrightarrow}~
 F(e^{j\Bome}) = F(e^{j\Bome}) \widehat \xi(\Bome) + \sum_{i=0}^{I-1} \sum_{s=0}^{S-1} F(e^{j\Bome}) \widehat{\tilde \theta_{si}^*} (\Bome) \widehat{\theta_{si}} (\Bome) 
\end{align}
with 
\begin{align*}
 \xi(\Bx) = \frac{1}{SI} \sum_{i=0}^{I-1} \sum_{s=0}^{S-1} a^{-i} \xi_s(a^{-i} \Bx)
 &~\stackrel{\cF}{\longleftrightarrow}~
 \widehat \xi(\Bome) = \frac{1}{SI} \sum_{i=0}^{I-1} \sum_{s=0}^{S-1} \widehat \xi_s(a^i \Bome)
 \\
 \theta_{si}(\Bx) = \frac{1}{\sqrt{SI}} a^{-i} \theta_s(a^{-i} \Bx)
 &~\stackrel{\cF}{\longleftrightarrow}~
 \widehat \theta_{si}(\Bome) = \frac{1}{\sqrt{SI}} \widehat \theta_s(a^i \Bome)
 \\
 \tilde \theta_{si}(\Bx) = \frac{1}{\sqrt{SI}} a^{-i} \tilde \theta_s(a^{-i} \Bx)
 &~\stackrel{\cF}{\longleftrightarrow}~
 \widehat{\tilde \theta}_{si}(\Bome) = \frac{1}{\sqrt{SI}} \widehat{\tilde \theta}_s(a^i \Bome) \,
\end{align*}
and frames 
\begin{align*} 
 \xi_s(\Bx) = \Big[ 1 - c \partial_s^2 \Big]^{-1} \delta(\Bx) 
 &~\stackrel{\cF}{\longleftrightarrow}~  
 \widehat{\xi_s}(\Bome) = \frac{ 1 }{ 1 + c \left[ \cos \left( \frac{\pi s}{S} \right) \omega_2 + \sin \left( \frac{\pi s}{S} \right) \omega_1 \right]^2 }  \,,
 \\ 
 \theta_s(\Bx) = -c \partial_s \xi_s(\Bx) 
 &~\stackrel{\cF}{\longleftrightarrow}~  
 \widehat{\theta_s}(\Bome) = 
 \frac{ -c \Big[ \cos(\frac{\pi s}{S}) j \omega_2 + \sin(\frac{\pi s}{S}) j \omega_1 \Big] }
        { 1 + c \left[ \cos \left( \frac{\pi s}{S} \right) \omega_2 + \sin \left( \frac{\pi s}{S} \right) \omega_1 \right]^2 } \,,
 \\ 
 \check{\tilde \theta}_s(\Bx) = \partial_s \delta(\Bx)
 &~\stackrel{\cF}{\longleftrightarrow}~
 \widehat{\tilde \theta_s^*}(\Bome) = \cos \left( \frac{\pi s}{S} \right) j \omega_2 + \sin \left( \frac{\pi s}{S} \right) j \omega_1 \,
\end{align*}
(see Figure \ref{fig:MDCD_let:u-g-problems}(d) for their spectra).
These bounded frames satisfy the unity condition in the Fourier 
domain since 
\begin{align*}
 \widehat \xi(\Bome) + \sum_{i=0}^{I-1} \sum_{s=0}^{S-1} \widehat{\tilde \theta_{si}^*} (\Bome) \widehat{\theta_{si}} (\Bome) = 1 \,, 
 \widehat \xi(0) = 1 \,, \; \widehat{\tilde \theta_{si}}(0) = \widehat{\theta_{si}}(0) = 0 \,.
\end{align*}

Figure \ref{fig:MDCD_let:u-g-problems}(c, d) illustrates these multiscale filter banks for 
continuous and discrete settings.


\subsection{The ``$\Brho$ and $\Beps$-problems''}

Multi-scale and multi-direction analysis in the curvelet domain is 
especially useful for demixing the noise and residuals.
We minimize (\ref{eq:minimization:SDMCDD:2}) in terms of the supremum 
norms of the curvlet coefficients corresponding to the indicator
functions for the residual structure and noise terms, $\mathscr G^*(\frac{\Brho}{\nu_\rho})$ 
and $\mathscr G^*(\frac{\Beps}{\nu_\epsilon})$, respectively.

There are two terms in every solution of the $\Brho$-problem 
(\ref{eq:problem:rho}) or the $\Beps$-problems (\ref{eq:problem:epsilon}), 
namely an updated remainder and its curvelet smoothing term 
$\CST \left[ \cdot \,, \cdot \right]$ determined by the soft-thresholding 
operator. 
We describe the case of the $\Beps$-problem, but the explanation is the 
same for the $\Brho$-problem.
A solution of (\ref{eq:problem:epsilon}) at iteration $\tau$ is 
\begin{align}  \label{eq:epsilonproblem:filterbanks}
 \Beps^{(\tau)} &= \tilde{\Beps}^{(\tau)} - \CST \Big[ \tilde{\Beps}^{(\tau)} \,,~ \nu_\epsilon \Big] 
\end{align}
where $\tilde{\Beps}^{(\tau)}$, the remainder at iteration $\tau$, can
be approximated as
\begin{align*}
 \tilde{\Beps}^{(\tau)} &= \Bf - \Bh \ast \Bu^{(\tau)} - \Bh \ast \Bv^{(\tau)} - \Bh \ast \Brho^{(\tau)} + \frac{\boldsymbol{\lambda}_{\boldsymbol 5}^{(\tau-1)}}{\beta_5} 
 \approx \Beps^{(\tau)} \,.
\end{align*}
In (\ref{eq:epsilonproblem:filterbanks}), we call 
$\tilde{\Beps}^{(\tau)} \approx \Beps^{(\tau)}$ the updated term and 
$\CST \Big[ \tilde{\Beps}^{(\tau)} \,,~ \nu_\epsilon \Big]$ the smoothing 
term found by curvelet soft-thresholding $\CST \left[ \cdot \,, \cdot \right]$ 
(up to a level $\nu_\epsilon$).
If, at iteration $\tau$, the updated term $\tilde{\Beps}^{(\tau)}$ 
still contains both signal and noise, then the noise in 
$\tilde{\Beps}^{(\tau)}$ is removed by the $\CST$-operator. 
Finally, by subtraction, $\Beps^{(\tau)}$ contains almost pure noise 
(up to a degree $\nu_\epsilon$). 
This ``smoothing and subtraction'' procedure results from the constraint 
$\norm{\cC \{ \Beps \}}_{\ell_\infty} \leq \nu_\epsilon$, which takes  
advantage of the sparsity assumption in the curvelet transform; i.e.,
the  multi-scale and multi-directional partition of the Fourier domain.


\section{Experimental Results and Comparison}

To quantify the improvement obtained by the proposed algorithm,
we perform our demixing algorithm upon two images:  
the ``Barbara'' image Figure \ref{fig:deblurdecomp:barbara:nonoise}(a),
and a fingerprint image used in the SAMSI program on forensic statistics.
We use Barbara image because it contains many key apects of 
image analysis, i.e., homogeneous areas and texture at different scales 
(on the scarf, trouser and tablecloth).
We compare our proposed algorithm to the TV-$\ell_2$ deblurring 
algorithm \cite{Getreuer2012} and to the blind deblurring by the 
Matlab function deconvblind.m.
The criterion for comparing peformance is the mean squared error (MSE) in
the pixel-wise difference between the true image $\Bf_0$ and a combination of
the signal components obtained from the demixing algorithms $\Bf_\text{re} = \Bu + \Bv + \Brho$:
\begin{align*}
 \text{MSE} = \frac{ \norm{ \Bf_0 - \Bf_\text{re} }^2_{\ell_2} }{ d_1 d_2 } \,.
\end{align*}
In order to evaluate the performance of the algorithms, besides 
simple visual comparison and the calculation of the
mean square error, 
we also use the eigenvalues of the estimated covariance matrix under the 
assumption that any structure which persists after demixing appears in 
the reconstruction error and the eigenvalues of the estimated 
covariance matrix enable one to determine which algorithm 
successfully extracted more signal.
In the same vein, we denote 
$\left\{ \Be_l \right\}_{l \geq 1}$ as a vectorized sample of $10 \times 10$ non-overlapping blocks of the error ($\Bf_0 - \Bf_\text{re}$), i.e. $\Be_l \in \mathbb R^{100}$.
The maximum eigenvalue of the estimated covariance matrix (MEC) for sample $\left\{ \Be_l \right\}_{l \geq 1}$
is defined as
\begin{align*}
 \text{MEC} = \max_{s = 1, \ldots, 100} \left( \text{eig} \left\{ \Sigma \right\} \right)_{[s]}
\end{align*}
with sample mean and sample covariance, respectively, as
\begin{align*}
 \bar{\Be} = \frac{1}{\# \left( \left\{ \Be_l \right\}_{l \geq 1} \right)} \sum_{l \geq 1} \Be_l 
 \quad \text{and} \quad
 \Sigma = \frac{1}{\# \left( \left\{ \Be_l \right\}_{l \geq 1} \right)} \sum_{l \geq 1} (\Be_l - \bar{\Be}) (\Be_l - \bar{\Be})^\text{T} \,.
\end{align*}
Figure \ref{fig:deblurdecomp:barbara:nonoise} illustrates the result of 
the proposed demixing DMCD model on the Barbara image. 
A blurred image $\Bf$ (b) (a convolution of original image $\Bf_0$ (a) and a known smoothing kernel $h[\Bk] \stackrel{\cF}{\longleftrightarrow} H(\Bz)$ in (c))
is simultaneously reconstructed and decomposed into the piecewise smooth image $\Bu$ (d), the sparse texture $\Bv$ (e) and its binarized version (j) and 
the residual structure $\Brho$ (f).
The convergence of the algorithm is measured by the relative error of 
texture $\Bv$ in log scale (k).
It shows that the proposed DMCD model can reconstruct a blurred image and simultaneously decompose it into different components, including sparse texture $\Bv$ and piecewise smooth $\Bu$ with sharp edges, while preserving contrast. 
So, the reconstructed image with $\Bf_\text{re} = \Bu + \Bv + \Brho$ 
in Figure \ref{fig:deblurdecomp:barbara:nonoise}(h) can preserve contrast and texture with small mean squared error in comparison with the original image Figure \ref{fig:deblurdecomp:barbara:nonoise}(a).

Figure \ref{fig:Barbara:compare:2-multidir} illustrates the benefit of directional mean curvature (DMC) over mean curvature (MC) $(L = 2)$.
Although a reconstructed image (a) with mean curvature $(L = 2)$ is good, texture still remains in the piecewise smooth component $\Bu$ (b), see Figure \ref{fig:deblurdecomp:barbara:nonoise}(d-g) for a comparison with DMC $(L = 10)$. 
This artifact is due to the large bandwidth of a lowpass $\Phi(\Bz)$ which covers texture information, see (m).
For the homogeneous areas, a comparison of a reconstructed image between $L = 2$ and $L = 10$ directions is depicted in (e-l).
We see that the reconstructed components $\Bu$ by DMC (h,l) are smoother 
in approximating the function $\Bf$ (f, j),
and MC produces the ``stair-case'' effect, see (u, k).
An explanation for this benefit of DMC is that increasing $L$ makes 
the bandwidth of the lowpass $\Phi(\Bz)$ smaller (see (m) and (o) for $L=2$ and $L=10$, respectively) while small wavelet coefficients are eliminated in 
different directions, see equation (\ref{eq:VCPyramid:problem:u:spatial:1}).  
This effect makes a cartoon $\Bu$ smoother and removes oscillating patterns, 
such as texture and noise.
The highpass $\Psi(\Bz)$ are depicted in (n) and (p) for $L=2$ and $L=10$, respectively.
The stair case effect is due to the assumption of sparse signal under 
the gradient operator, and the directional version of the total variation norm \cite{ThaiGottschlich2016DG3PD, ThaiGottschlich2016inpainting, 
ThaiMentch2016} is known to handle this limitation.
The proposed directional mean curvature norm benefits from the
advantages of both the high-order PDE approach and the ability of 
directional methods to enhance sparse signal while preserve sharp edges in 
the restored image. 

Figure \ref{fig:deblurdecomp:barbara:compare} compares our demixing method
to TV-$\ell_2$ deblurring and the blind deblurring by the Matlab function 
deconvblind.m.
We see that TV-$\ell_2$ (c, f) can recover very sharp edges, but it also eliminates texture. 
The blind deblurring can recover texture, but it also produces 
``ringing'' effects (i.e., the larger a kernel size is, the more artifacts there
are in the reconstructed image), see (b, e). 
We observed that a kernel of size 7 is the best choice in terms of 
minimizing ringing.
The Matlab function can directly estimate an unknown blur kernel, which
our method does not, but it cannot decompose an image into different 
components while deblurring, and its performance on deblurring still
has problematic ringing.

Besides the Barbara image, we also demix a fingerprint image which 
contains small scale objects (noise) together with fingerprint 
patterns (texture), see Figure \ref{fig:deblurdecomp:fingerprint:nonoise}.
DMC removes the texture component in the piecewise smooth component
$\Bu$ (d) while preserving sharp edges, and the texture component 
$\Bv$ (e, j) is sparse. 
Also, small scale structure is separated in $\Brho$. 
Finally, the reconstructed by DMCD (h) achieves good performance in 
terms of mean squared error and visualization.

Figure \ref{fig:deblurdecomp:fingerprint:noise} illustrates the 
performance of our method when signal is corrupted by noise. 
We add an i.i.d.\ Gaussian noise $\mathcal N(0 \,, \sigma^2)$ with 
$\sigma = 10$ to a blurred signal (b).
By choosing a threshold $\nu_\epsilon = 6.5$, the noise component can 
be separated by the DMCD model---see its QQ-plot (d). 
And a reconstructed image (i) still preserves texture.
Note that mathematically selecting an optimal threshold for this 
Gaussian noise is beyond this paper. 
We use the QQ-plot to evaluate this threshold instead.
The DMCD model for other textured images are depicted in Figure \ref{fig:deblurdecomp:ballistic}-\ref{fig:deblurdecomp:tiger} in Appendix B.

Figure \ref{fig:MDCD_let:fingerprint:u-g-problems} illustrates a multiscale 
decomposition of the fingerprint image by filter banks in an harmonic analysis 
obtained from the $\Bu$-problem and the $\vec{\Bg}$-problem 
(or the $\Bv$-problem) in subsection \ref{subsection:uproblem:multiscale} 
and \ref{subsection:vproblem:multiscale}, respectively.
The corresponding filter banks of 
Figure \ref{fig:MDCD_let:fingerprint:u-g-problems} are depicted in 
Figure \ref{fig:MDCD_let:u-g-problems}. 

\section{Conclusion}
\label{sec:conclusion}

We provide the DMCD method to demix a blurred image $\Bf$ (with a known blurred kernel $\Bh$) into four meaningful components: piecewise smooth $\Bu$, 
texture $\Bv$, fine scale residual structure $\Brho$, and noise $\Beps$, so 
$\Bf = \Bh \ast \left( \Bu + \Bv + \Brho \right) + \Beps$. 

A cornerstone of the DMCD analysis is the assumption that signal is sparse 
under some transformed domains.
Using novel norms as key ingredients, we address some transformed domains 
to enforce on these components:
\begin{itemize}
 \item The directional mean curvature (DMC) norm eliminates texture from a piecewise smooth $\Bu$ while keeping edges and preserving its contrast. This property is due to the multi-directional and high-order approach which enhances 
sparsity of the objects under DMC.
 
 \item The directional $\text{G}_S$-norm is applied to capture texture 
$\Bv$ and the $\ell_1$-norm $\norm{\Bv}_{\ell_1}$ obtains sparse coefficients 
which are mainly due to repeated pattern. 
 
 \item The fine scale residual structure $\Brho$ and noise $\Beps$ are 
measured in the $\ell_\infty$-norm of curvelet coefficients 
$\norm{\cC \{ \Beps \}}_{\ell_\infty}$. 
Since this $\ell_\infty$-norm takes the advantage of the multi-scale and 
multi-directional curvelet transform, oscillating components can be 
independent (e.g., white noise $\Beps$) or weakly correlated 
(fine scale residual structure $\Brho$).
\end{itemize}
We also apply our DMCD model to real images to find superior results 
compared to other state-of-the-art methods, as measured by mean squared error 
and the maximum eigenvalue of the estimated covariance matrix. 
Moreover, DMCD simultaneously solves the decomposition and deblurring 
problems. 
Finally, we uncover a link between functional analysis and multiscale 
sampling theory, e.g., between harmonic analysis and filter banks. 

Due to high-order PDE problem, following \cite{ZhuTaiChan2013ALM}, 
an augmented Lagrangian method is applied to split DMC into several 
$\ell_1$- and $\ell_2$-norms.  
The advantage of this splitting method is to approximate 
complicated norms, but it also introduces new parameters which are 
chosen beforehand to speed of the convergence of the algorithm.

\section*{Acknowledgements}
This material was based upon work partially supported by the National Science Foundation under Grant DMS-1127914 to the Statistical and Applied Mathematical Sciences Institute and department of Statistical Science at Duke university. 
Any opinions, findings, and conclusions or recommendations expressed in this material are those of the authors and do not necessarily reflect the views of the National Science Foundation.


\bibliographystyle{unsrt}


\setcounter{subfigure}{0}
\begin{figure}
\begin{center}

 \includegraphics[width=1\textwidth]{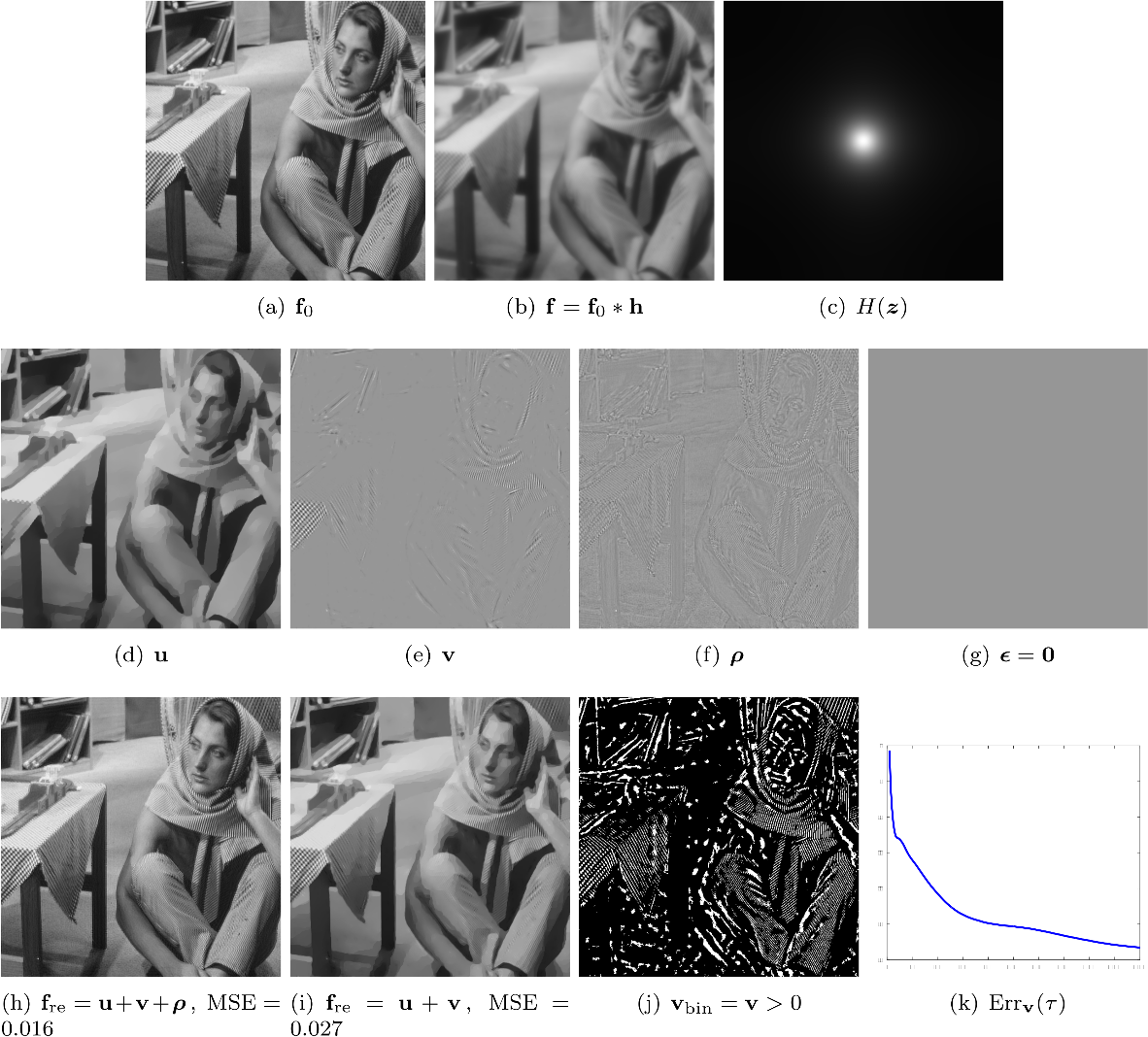}
 
 \caption{The original image $\Bf_0$ (a) is convolved by an operator $h[\Bk] \stackrel{\cF}{\longleftrightarrow} H(\Bz)$ in (c) to obtain a blurred version $\Bf$ (b).
          A reconstructed image (h) or (i) is obtained by applying the DMCD model to different reconstructed components of $\Bf$, 
          i.e. piecewise smooth image $\Bu$ (d), texture $\Bv$ (e, j) with $\text{sparsity} = \frac{ \# \{ v[\Bk] \neq 0 \,, \Bk \in \Omega \} }{ d_1 d_2 } 100 \% = 29.59\%$
          and residual $\Brho$ (f).
          By choosing parameters as $L_\text{blur} = 20 \,, \nu_\rho = 20 \,, \nu_\epsilon = 0 \,, \mu_2 = 4 \times 10^{10} \,, L = S = 10$,
          $\big[ \beta_i \big]_{i=1}^7 = \mu_1 = 10^{10} \text{ and } \alpha = 0.1$,
          the convergence of DMCD is illustrated by a relative error of $\Bv$ in a log scale (k),        
          see Figure \ref{fig:Barbara:compare:2-multidir} for a comparison of our directional mean curvature based approach with the original mean curvature one ($L = 2$). 
          \label{fig:deblurdecomp:barbara:nonoise}
         }
\end{center}
\end{figure}

\begin{figure}
 
  \centering
  \includegraphics[width=1\textwidth]{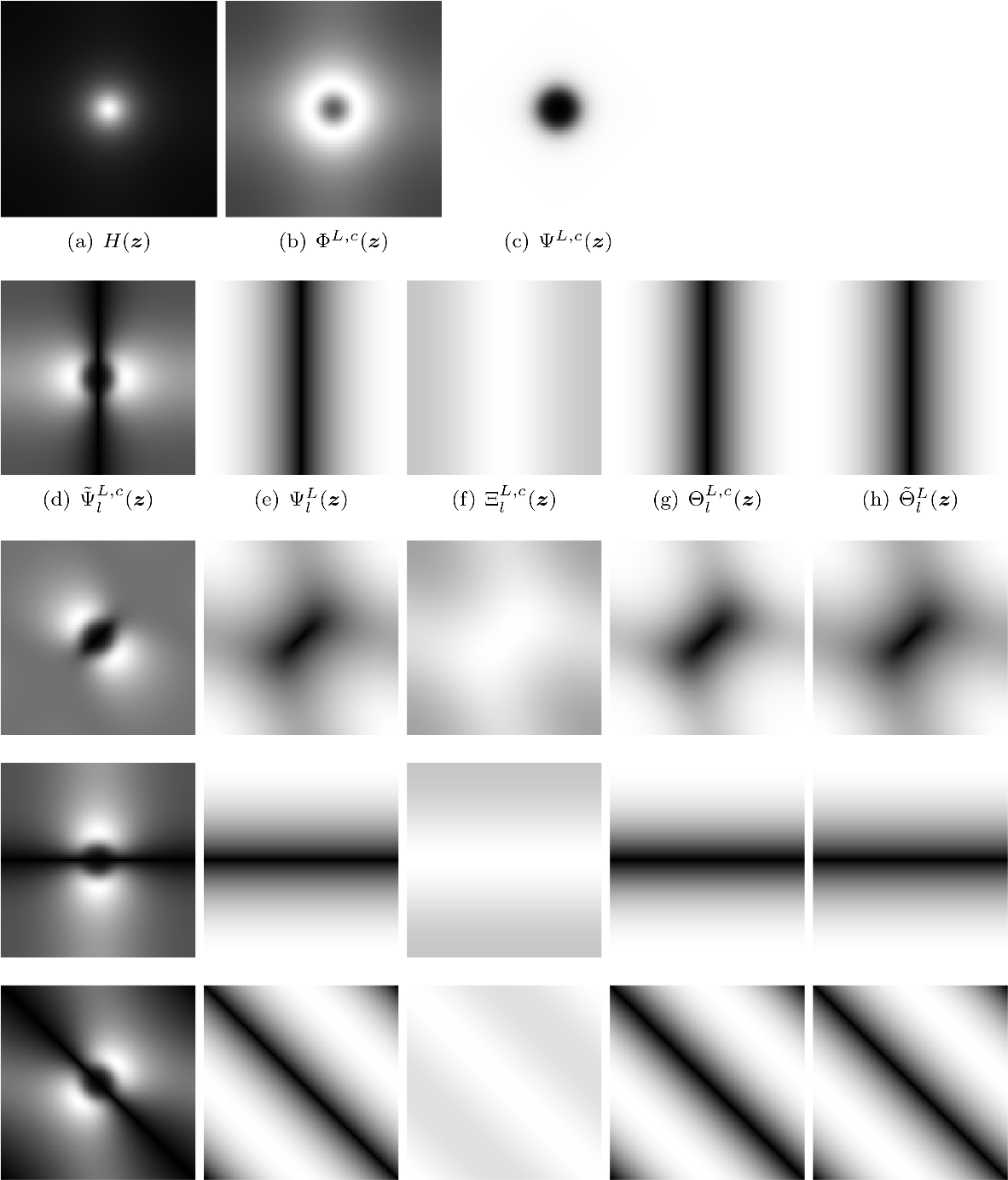}
 
 \caption{\label{fig:MDCD_let:FilterBanks:c0_1}
 This figure visualizes filter banks produced by the DMCD model (\ref{eq:VCPyramid:problem:frameElements:u:scalingfunc:1})-(\ref{eq:FilterBanks:XiThetaTheta_tilde:3:1}) with parameters $L_\text{blur} = 10 \,, L = S = 4 \,, c_{25} = c_{34} = c = 0.1$.
 A total wavelet function (c) is defined as $\displaystyle \Psi^{L,c}(\Bz) = \sum_{l=0}^{L-1} \tilde \Psi_l^{L,c}(\Bz^{-1}) \Psi_l^{L}(\Bz)$.
  }
\end{figure}

\setcounter{subfigure}{0}
\begin{figure}
\begin{center}

 \includegraphics[width=1\textwidth]{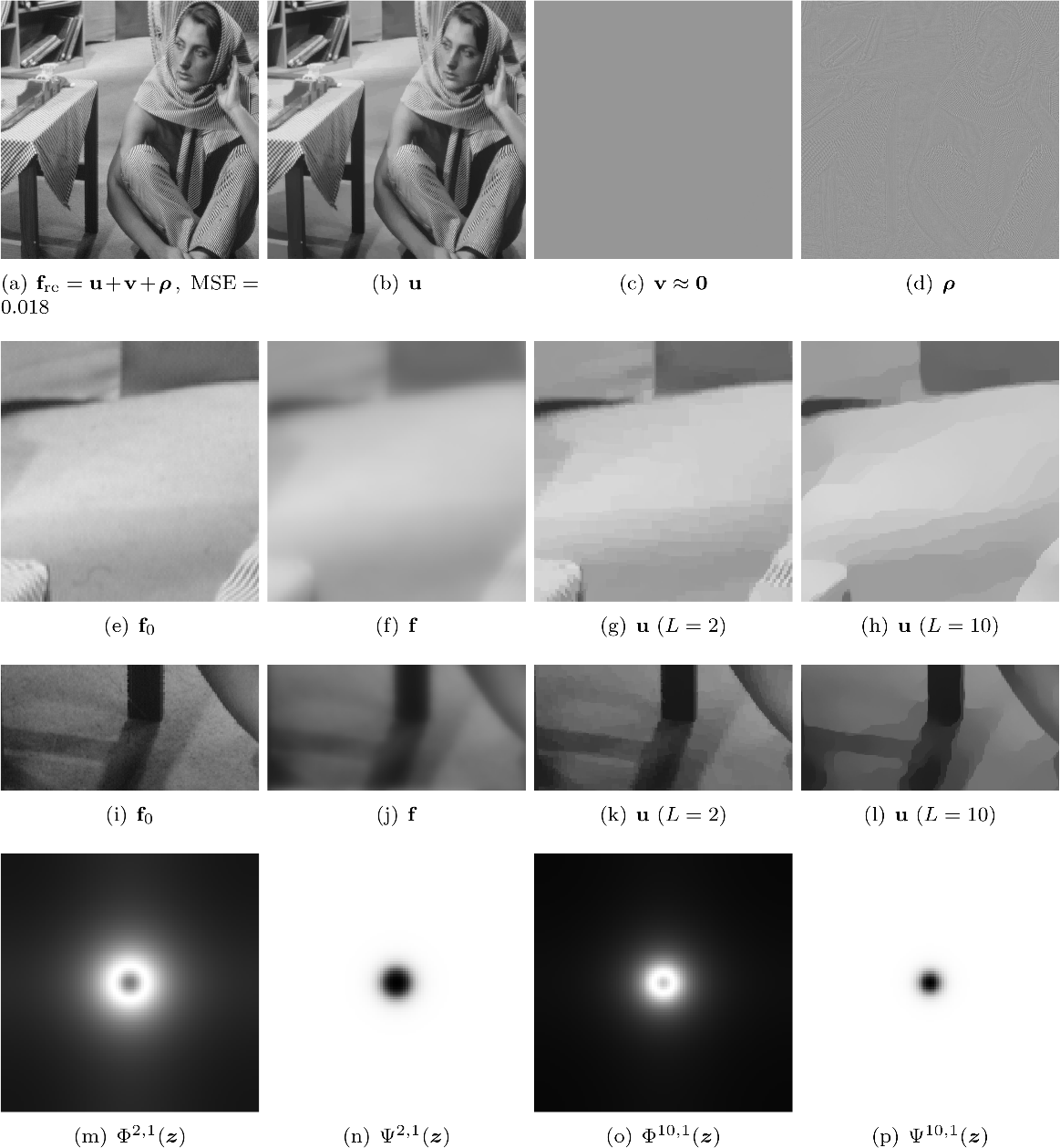} 
  
 \caption{\label{fig:Barbara:compare:2-multidir}
          This Figure illustrates a comparison of directional mean curvature and original mean curvature on 
          texture (a-d) and homogeneous areas (e-l).
          Visualization of the DMCD model with directions $L = S = 2$ and
          the other parameters are the same as in Figure \ref{fig:deblurdecomp:barbara:nonoise}.
          Although a reconstructed image $\Bf_\text{re}$ (a) has a good performance, 
          almost texture are still kept in $\Bu$. 
          This unsatisfied effect of a decomposition is due to a bandwidth of filters in the Fourier domain,
          see (m-p). 
          Plots (e-l) show the advantage of the multi-directional mean curvature
          for a reconstructed component $\Bu$ in terms of approximation and smoothness.         
          Increasing $L$ causes a shrinkage of bandwidth in a lowpass $\Phi(\Bz)$ (m) and (o)
          (for $L=2$ and $L=10$, respectively).
          This effect makes a cartoon $\Bu$ smoother and removes oscillating pattern, e.g. texture and noise.
          The highpass $\Psi(\Bz)$ are depicted in (n) and (p) for $L=2$ and $L=10$, respectively.
          The other parameters are the same as in Figure \ref{fig:deblurdecomp:barbara:nonoise}.
         }
\end{center}
\end{figure}

\setcounter{subfigure}{0}
\begin{figure}
\begin{center}
 
 \includegraphics[width=1\textwidth]{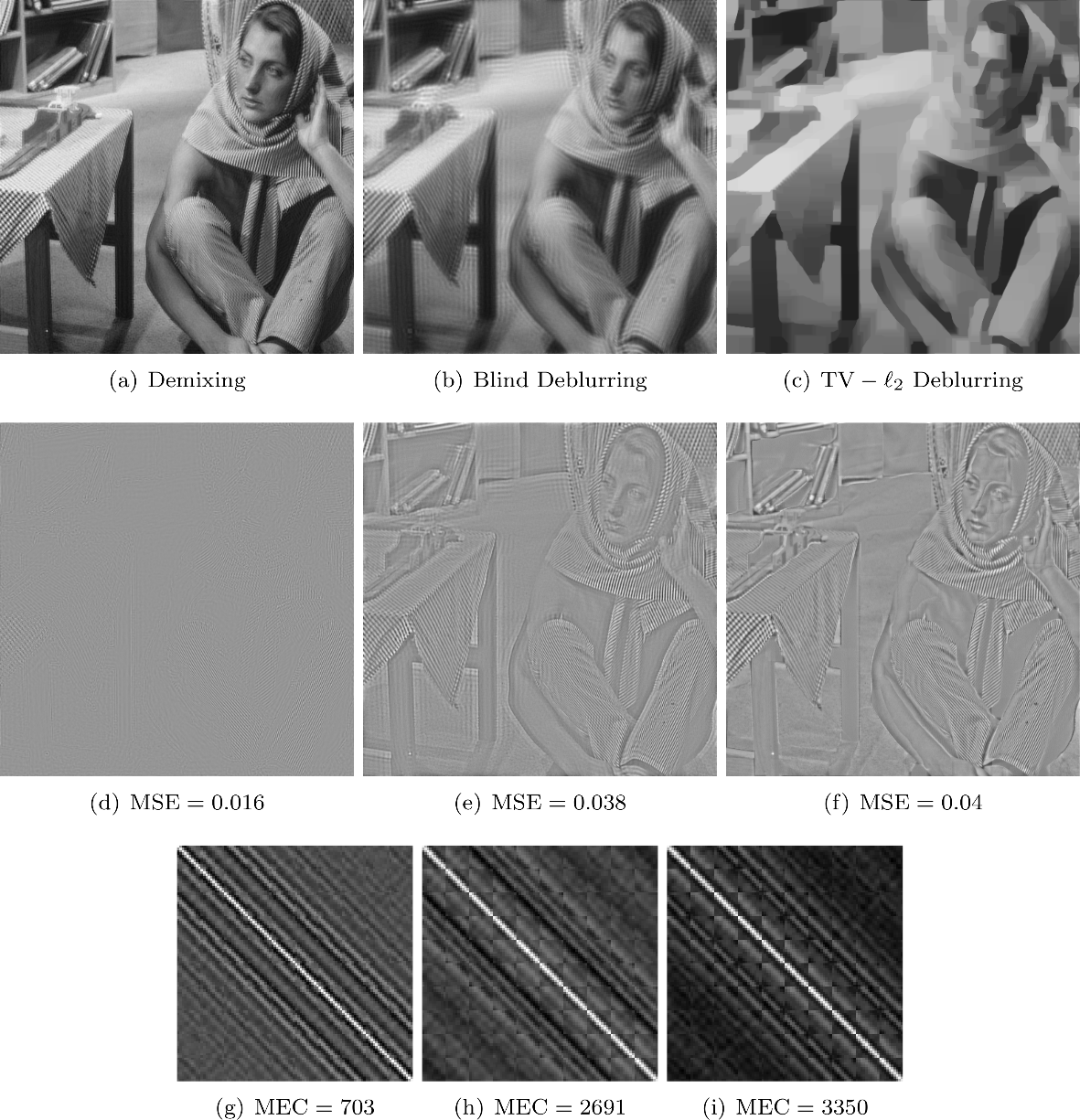}

 \caption{A comparison between the proposed DMCD (a, d) with Blind deblurring (b, e) and TV-$\ell_2$ (c, f) shows our better performance in terms of mean squared error (MSE)
 and the maximum eigenvalue of the estimated covariance matrix (MEC)
 and visualization.
 Reconstructed images are illustrated on the first row and their corresponding error images are on the second row.
 The value of all error images are added by 150 for visualization. 
 The third row shows sample covariance matrices of $10 \times 10$ non-overlapping blocks of the errors on the second row.
 We see that TV-$\ell_2$ can recover very sharp edges, but it also eliminates texture. 
 The blind deblurring (with Matlab function "deconvblind.m") can recover texture, but it also produces "ringing effects" (the larger a kernel size is, the more artifacts on its reconstructed image are). 
 We observed that a size of kernel 7 as its recommendation is the best choice in terms of "ringing effects" on its reconstructed image by visualization (although this Matlab function can estimate an unknown blur kernel itself which is more advantage than us, it cannot do demixing to decompose an image into different components while deblurring). 
          \label{fig:deblurdecomp:barbara:compare}
         }
\end{center}
\end{figure}

\setcounter{subfigure}{0}
\begin{figure}
\begin{center}
 
 \includegraphics[width=1\textwidth]{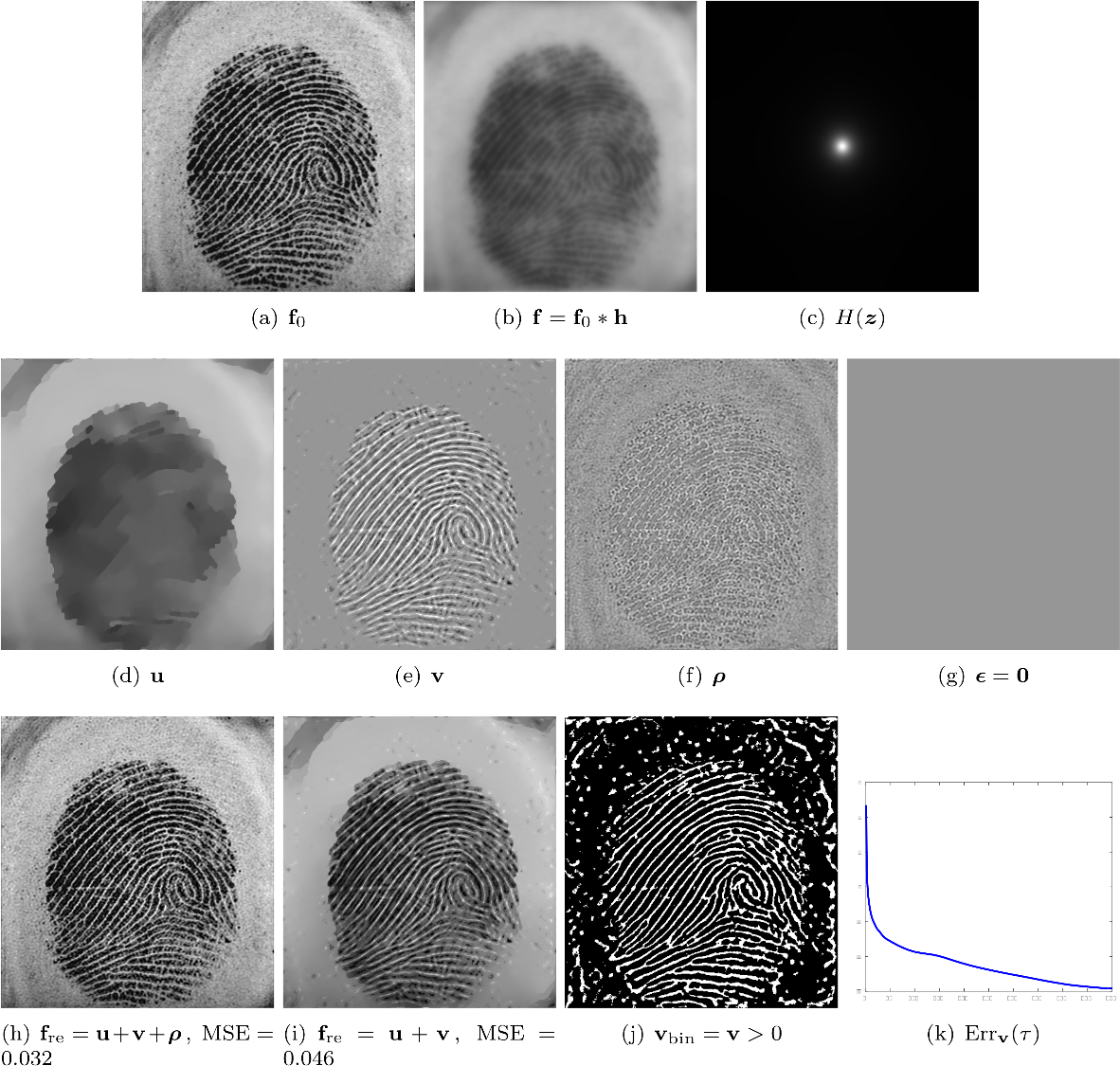} 
 
 \caption{Visualization of DMCD for fingerprint image with 
          $L_\text{blur} = 100 \,, \nu_\rho = 50 \,, \nu_\epsilon = 0 \,, \mu_2 = 3 \times 10^{10} \,, L = S = 10$  
          and the other parameters are the same as in Figure \ref{fig:deblurdecomp:barbara:nonoise}, 
          see Figure \ref{fig:deblurdecomp:fingerprint:noise} for its noisy version.
          \label{fig:deblurdecomp:fingerprint:nonoise}
         }
\end{center}
\end{figure}

\setcounter{subfigure}{0}
\begin{figure}
\begin{center}
 
 \includegraphics[width=1\textwidth]{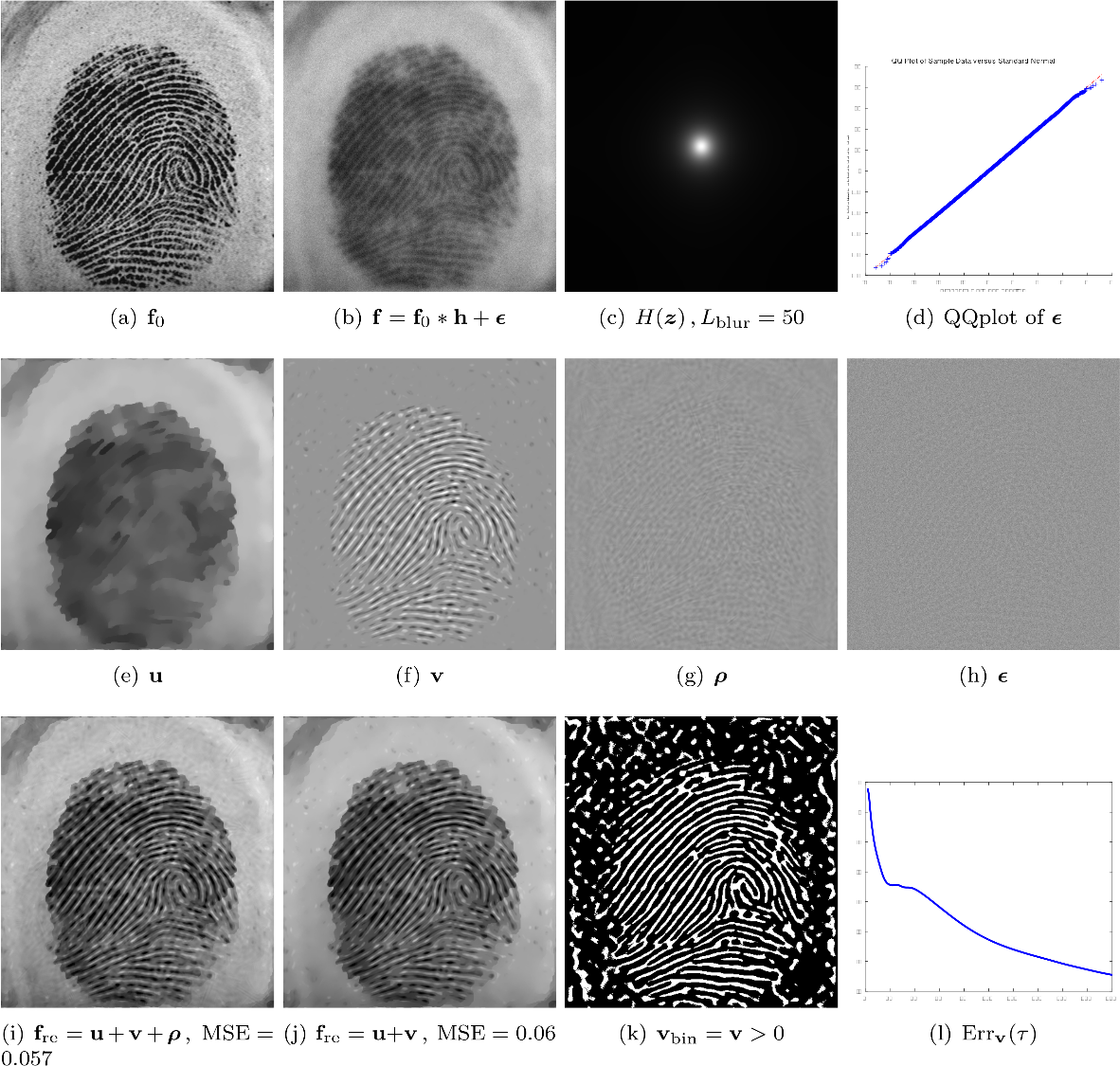} 
 
 \caption{Fingerprint image $\Bf_0$ (a) is corrupted by a blur operator (c) and an i.i.d. Gaussian noise $\cN(0, \sigma^2) \,, \sigma = 10$ to obtain (b).
          Choose parameters $L_\text{blur} = 50 \,, \nu_\rho = 15 \,, \nu_\epsilon = 6.5 \,, L = S = 10 \,, \mu_2 = 3 \times 10^{10}$ and
          the other parameters are as in Figure \ref{fig:deblurdecomp:barbara:nonoise}, 
          reconstructed images (i) and (j) have a good performance in terms of MSE and visualization.
          The QQplot of $\Beps$ (d) shows that an estimated $\Beps$ approaches the Gaussian assumption.
          The convergence of the algorithm (l) is computed in a log scale.
          \label{fig:deblurdecomp:fingerprint:noise}
         }
\end{center}
\end{figure}

\begin{figure*}
 \centering
  
 \includegraphics[width=1\textwidth]{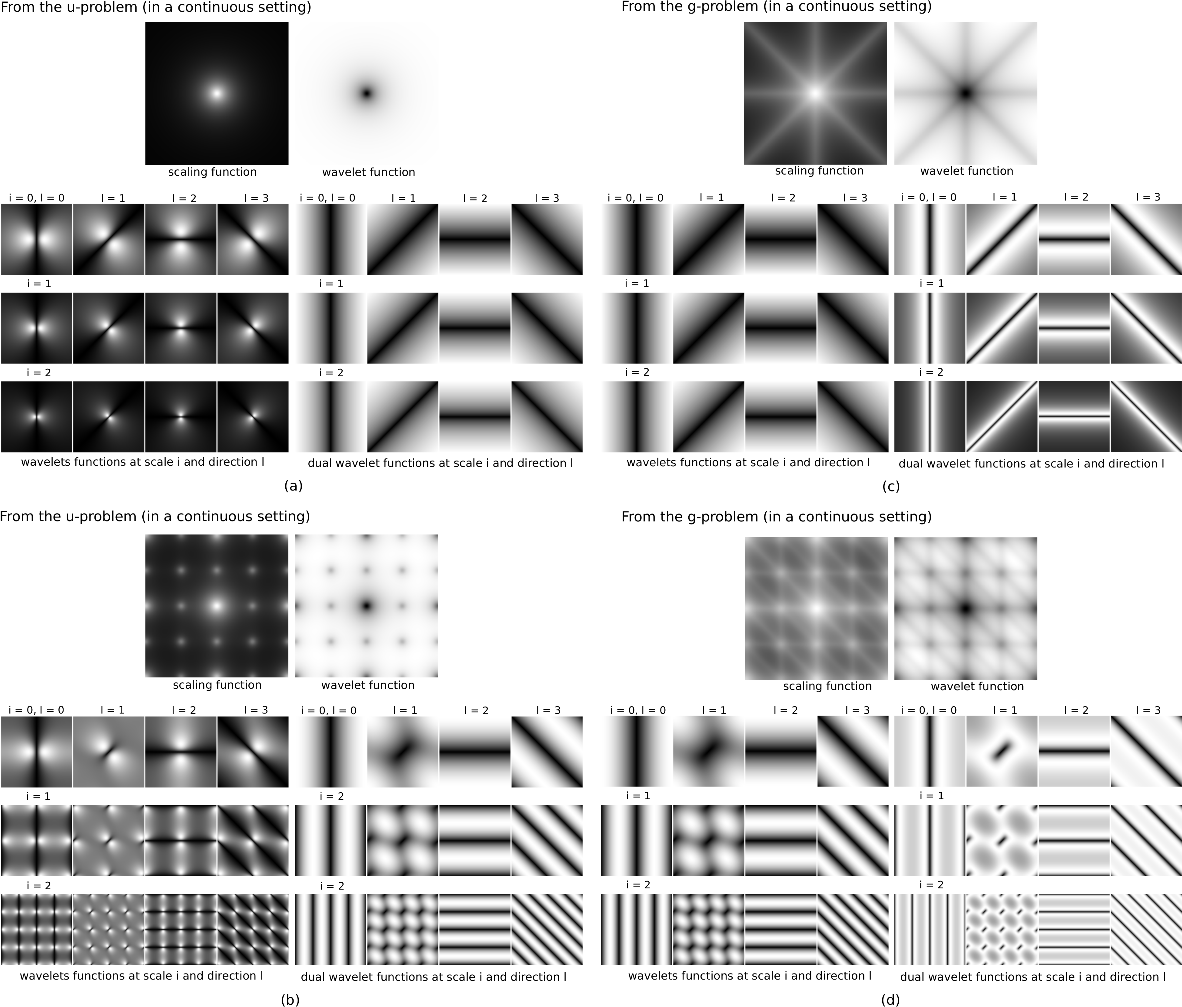}
 
 \caption{\label{fig:MDCD_let:u-g-problems} 
 Visualization of scaling and wavelet functions produced by the DMCD model at scale $I = 3$ and direction $L = 4$ in continuous and discrete settings from the "$\Bu$-problem" (a, b) and the "$\vec{\Bg}$-problem" (c, d) with $a = 2$ and $c = 1$. 
 The mathematical calculations for these frames are explained in proposition \ref{prop:DMCDDMultiscaleSamplingTheory:u-problem} 
 and \ref{prop:DMCDDMultiscaleSamplingTheory:g-problem} in Appendix A.
 Note that the aliasing effect in a discrete setting (b) and (d) is due to the exponential operator of the discrete Fourier transform: $\Bz = e^{j \Bome}$.
 Their scaling and wavelet coefficients of a fingerprint image are depicted in Figure \ref{fig:MDCD_let:fingerprint:u-g-problems}, respectively. 
 }
\end{figure*}

\begin{figure*}
 \centering
  
 \includegraphics[width=1\textwidth]{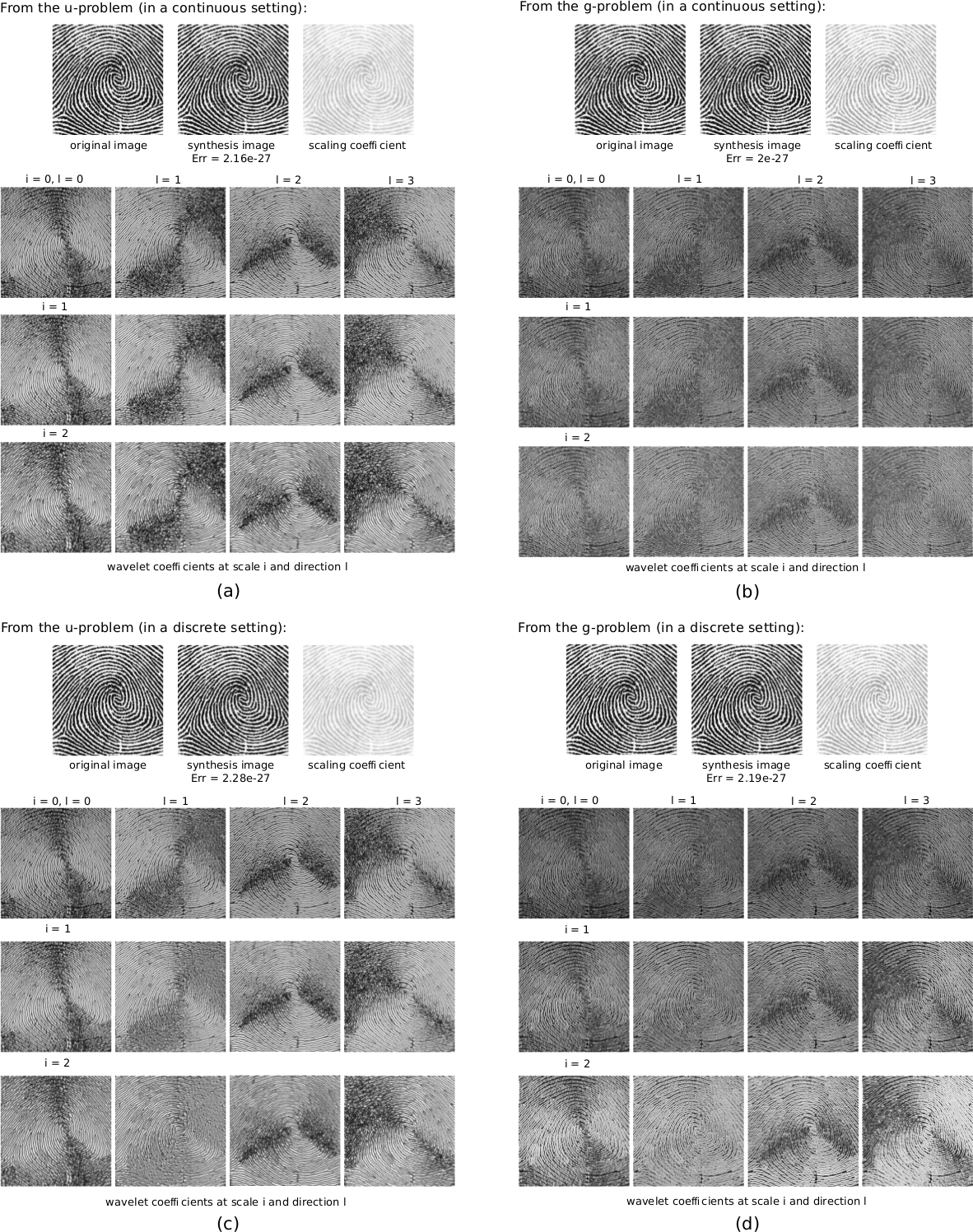}
 
 \caption{\label{fig:MDCD_let:fingerprint:u-g-problems} 
 Visualization of scaling, wavelet coefficients and its synthesis image by multiscale projection obtained from the "$\Bu$-problem" and the "$\vec{\Bg}$-problem" in continuous and discrete setting (a)-(d). For visualization, the scaling and wavelet coefficients are displayed in log scale.
 Their corresponding wavelet and scaling functions are illustrated in Figure \ref{fig:MDCD_let:u-g-problems} .
 }
\end{figure*}


\newpage 

\section*{Appendix A. Propositions and Proofs} \label{sec:appendixA:PropositionsProofs}


\begin{prop}  \label{prop:problem:t}
 The solution of the ``$\vec{\Bt}$-problem'' is
 \begin{align*}
  \vec{\Bt}^* &= \argmin_{ \vec{\Bt} \in X^{L+1}} \left\{ \mathscr F(\vec{\Bt}) = 
  \frac{\beta_3}{2} \norm{ \Bd - \text{div}^-_L \vec{\Bt} + \frac{\boldsymbol{\lambda_3}}{\beta_3} }^2_{\ell_2}
  + \frac{\beta_4}{2} \norm{ \vec{\Bt} - \vec{\By} + \frac{\vec{\boldsymbol{\lambda}}_{\boldsymbol 4}}{\beta_4} }^2_{\ell_2} 
  \right\}
  \\
  &=   
  \begin{cases}
   \displaystyle
   \RE \left[ \cF^{-1} \left\{ \frac{\cM_l(\Bz)}{\cN_l(\Bz)} \right\} \right] 
   \,,~ l = 0, \ldots, L-1 \,,
   \\
   \displaystyle
   \By_L - \frac{ \boldsymbol{\lambda}_{\boldsymbol 4L} }{ \beta_4 } \,.
  \end{cases}
 \end{align*} 
 with
 \begin{align*}
  \cN_l(\Bz) &= \beta_4 + \beta_3 \abs{ \cos\left(\frac{\pi l}{L}\right) (z_2 - 1) + \sin\left(\frac{\pi l}{L}\right) (z_1 - 1) }^2 \,,
  \\ 
  \cM_l(\Bz) &= \beta_4 \left[ Y_l(\Bz) - \frac{ \Lambda_{4l}(\Bz) }{ \beta_4 } \right]
  -\beta_3 \left[ \cos\left(\frac{\pi l}{L}\right) (z_2 - 1) + \sin\left(\frac{\pi l}{L}\right) (z_1 - 1) \right] \times
  \\ & \qquad ~
  \left[ D(\Bz) + \sum_{l'=[0, L-1] \backslash \{l\}} \left[ \cos\left(\frac{\pi l'}{L}\right) (z_2^{-1} - 1) + \sin\left(\frac{\pi l'}{L}\right) (z_1^{-1} - 1) \right] T_{l'}(\Bz) + \frac{ \Lambda_3(\Bz) }{ \beta_3 } \right]  \,.
 \end{align*}

\end{prop}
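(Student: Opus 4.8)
The plan is to treat the ``$\vec{\Bt}$-problem'' as an unconstrained quadratic minimization over the finite-dimensional space $X^{L+1}$, to recover $\vec{\Bt}^*$ from the associated normal (Euler--Lagrange) equations, and then to diagonalize those equations in the Fourier domain. First I would note that $\mathscr F$ is the sum of the convex quadratic $\frac{\beta_3}{2}\norm{\Bd - \text{div}^-_L\vec{\Bt} + \frac{\boldsymbol{\lambda_3}}{\beta_3}}^2_{\ell_2}$ and the strictly convex quadratic $\frac{\beta_4}{2}\norm{\vec{\Bt} - \vec{\By} + \frac{\vec{\boldsymbol{\lambda}}_{\boldsymbol 4}}{\beta_4}}^2_{\ell_2}$; hence $\mathscr F$ is strictly convex (its Hessian is bounded below by $\beta_4\BI \succ 0$) and admits a unique minimizer, characterized by the componentwise stationarity conditions $\partial\mathscr F/\partial\Bt_l = 0$ for $l = 0, \ldots, L$.

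Next I would compute these gradients using the adjoint identities $(\partial_l^\pm)^* = -\partial_l^\mp$ and $(\nabla^\pm_L)^* = -\text{div}^\mp_L$ recalled above. Because $\text{div}^-_L$ differentiates only the $L$ directional-derivative slots of the $(L+1)$-tuple (the last slot being the constant augmentation $\boldsymbol 1$ of the mean-curvature representation), the variable $\Bt_L$ enters $\mathscr F$ only through the $\beta_4$-penalty, and minimizing over it alone gives at once $\Bt_L = \By_L - \frac{\boldsymbol{\lambda}_{\boldsymbol 4 L}}{\beta_4}$. For $l = 0, \ldots, L-1$, differentiating the first term through $\text{div}^-_L$ produces the factor $(\partial_l^-)^* = -\partial_l^+$, so the Euler--Lagrange equation reads
\begin{align*}
 \beta_3\,\partial_l^+\!\left( \Bd - \text{div}^-_L\vec{\Bt} + \frac{\boldsymbol{\lambda_3}}{\beta_3} \right)
 + \beta_4\!\left( \Bt_l - \By_l + \frac{\boldsymbol{\lambda}_{\boldsymbol 4 l}}{\beta_4} \right) = 0 .
\end{align*}

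I would then apply $\cF$ to this identity. Writing $\sigma_l(\Bz) = \cos\!\left(\frac{\pi l}{L}\right)(z_2 - 1) + \sin\!\left(\frac{\pi l}{L}\right)(z_1 - 1)$ for the symbol of $\partial_l^+$, so that $\partial_l^-$ has symbol $-\sigma_l(\Bz^{-1})$ and $\text{div}^-_L\vec{\Bt}$ has symbol $-\sum_{l'=0}^{L-1}\sigma_{l'}(\Bz^{-1})T_{l'}(\Bz)$, the Euler--Lagrange equation becomes
\begin{align*}
 \beta_3\,\sigma_l(\Bz)\!\left[ D(\Bz) + \sum_{l'=0}^{L-1}\sigma_{l'}(\Bz^{-1})T_{l'}(\Bz) + \frac{\Lambda_3(\Bz)}{\beta_3} \right]
 + \beta_4\!\left[ T_l(\Bz) - Y_l(\Bz) + \frac{\Lambda_{4l}(\Bz)}{\beta_4} \right] = 0 .
\end{align*}
Using $\sigma_l(\Bz)\sigma_l(\Bz^{-1}) = \abs{\sigma_l(\Bz)}^2$ on the unit torus $\abs{z_1} = \abs{z_2} = 1$, I would split off the $l' = l$ contribution: its $T_l(\Bz)$-coefficient combines with $\beta_4$ to give $\cN_l(\Bz) = \beta_4 + \beta_3\abs{\sigma_l(\Bz)}^2 \geq \beta_4 > 0$, while the remaining terms assemble exactly into $\cM_l(\Bz)$. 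Hence $T_l(\Bz) = \cM_l(\Bz)/\cN_l(\Bz)$, and since $\Bt_l \in X$ is real I take $\Bt_l = \RE\big[\cF^{-1}\{\cM_l/\cN_l\}\big]$; strict convexity certifies this stationary point is the unique minimizer.

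I expect the main obstacle to be twofold. First, the sign and adjoint bookkeeping in the first variation of the $\beta_3$-term through $\text{div}^-_L$ must be carried out carefully, together with the observation that $\text{div}^-_L$ ignores the $(L+1)$-th slot of the tuple --- this is exactly what decouples $\Bt_L$ and makes its update explicit. Second, the relation $T_l(\Bz) = \cM_l(\Bz)/\cN_l(\Bz)$ is not a genuinely closed-form per-component solution: at a fixed frequency $\Bz$ the $L$ directional channels $T_0(\Bz), \ldots, T_{L-1}(\Bz)$ remain coupled through the cross terms $\sigma_l(\Bz)\sigma_{l'}(\Bz^{-1})$ ($l' \neq l$) still present inside $\cM_l(\Bz)$, so one really faces an $L \times L$ linear system at each $\Bz$ rather than $L$ scalar equations. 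I would therefore state the conclusion either as the unique solution (by strict convexity) of that $L \times L$ system per frequency, or --- matching the Gauss--Seidel-type update actually used in the algorithm --- as the fixed-point identity characterizing $\vec{\Bt}^*$.
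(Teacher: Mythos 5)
Your proposal is correct and follows essentially the same route as the paper's proof: Euler--Lagrange stationarity with the adjoint identity $(\text{div}^-_L)^* = -\nabla^+_L$, the observation that $\Bt_L$ decouples because $\text{div}^-_L$ acts only on the first $L$ slots, and Fourier diagonalization with the $l'=l$ cross term absorbed into $\cN_l(\Bz)$. Your closing remark is also accurate and worth keeping: since $\cM_l(\Bz)$ still contains $T_{l'}(\Bz)$ for $l'\neq l$, the displayed formula is a per-frequency fixed-point identity rather than a closed form, and the paper's Algorithm 2 indeed resolves this by substituting $T_{l'}^{(\tau-1)}$ from the previous iteration in a Gauss--Seidel fashion.
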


\begin{proof}
 
 The Euler-Lagrange equation is
 \begin{align*}
  0 &= \frac{\partial \mathscr F(\vec{\Bt})}{\partial \vec{\Bt}} = - \beta_3 \underbrace{ \frac{\partial \{ \text{div}^-_L \vec{\Bt} \}}{\partial \vec{\Bt}} }_{ ( \text{div}^-_L )^* \frac{\partial \vec{\Bt}}{\partial \vec{\Bt}} = -\nabla^+_L \delta }
  \left[ \Bd - \text{div}^-_L \vec{\Bt} + \frac{ \boldsymbol{\lambda_3} }{ \beta_3 } \right] 
  + \beta_4 \left[ \vec{\Bt} - \vec{\By} + \frac{ \vec{\boldsymbol{\lambda}_{\boldsymbol 4}} }{ \beta_4 } \right]
  \\
  &= \beta_3 \nabla^+_L \left[ \Bd - \text{div}^-_L \vec{\Bt} + \frac{ \boldsymbol{\lambda_3} }{ \beta_3 } \right] 
  + \beta_4 \left[ \vec{\Bt} - \vec{\By} + \frac{ \vec{\boldsymbol{\lambda}_{\boldsymbol 4}} }{ \beta_4 } \right]
 \end{align*}
 we have 
 $\nabla^+_L = \left[ \partial_l^+ \right]_{l=0}^{L-1} \,, \text{div}^-_L \vec{\Bt} = \sum_{l=0}^{L-1} \partial_l^- \Bt_l 
 \,, \vec{\Bt} = \left[ \Bt_l \right]_{l=0}^L \,, \vec{\By} = \left[ \vec{\By}_l \right]_{l=0}^L
 \,, \vec{\boldsymbol{\lambda_4}} = \left[ \boldsymbol{\lambda}_{\boldsymbol 4l} \right]_{l=0}^L
 $
 Thus,
 \begin{align*}
 &\begin{cases}
  \displaystyle
  \beta_3 \partial^+_l \left[ \Bd - \sum_{l'=0}^{L-1} \partial_{l'}^- \Bt_{l'} + \frac{ \boldsymbol{\lambda_3} }{ \beta_3 } \right] 
  + \beta_4 \left[ \Bt_l - \By_l + \frac{ \boldsymbol{\lambda}_{\boldsymbol 4a} }{ \beta_4 } \right] = 0
  \,,~ l = 0, \ldots, L-1.
  \\
  \displaystyle
  \beta_4 \left[ \Bt_L - \By_L + \frac{ \boldsymbol{\lambda}_{\boldsymbol 4L} }{ \beta_4 } \right] = 0
 \end{cases}
 \\ \Leftrightarrow~
 &\begin{cases}
  \displaystyle
  \left[ \beta_4 -\beta_3 \partial_l^+\partial_l^- \text{Id} \right] \Bt_l = 
  -\beta_3 \partial^+_l \left[ \Bd - \sum_{l'=[0, L-1] \backslash \{l\}} \partial_{l'}^- \Bt_{l'} + \frac{ \boldsymbol{\lambda_3} }{ \beta_3 } \right] 
  - \beta_4 \left[ - \By_l + \frac{ \boldsymbol{\lambda}_{\boldsymbol 4l} }{ \beta_4 } \right]
  \,,~ l = 0, \ldots, L-1.
  & (a)
  \\
  \displaystyle
  \Bt_L = \By_L - \frac{ \boldsymbol{\lambda}_{\boldsymbol 4L} }{ \beta_4 }
  & (b)
 \end{cases}
 \end{align*}
 The Fourier transform of equation (a) is
 \begin{align*}
  \Bt_l &= \left[ \beta_4 - \beta_3 \partial_l^+\partial_l^- \text{Id} \right]^{-1}
  \left[
  -\beta_3 \partial^+_l \left[ \Bd - \sum_{l'=[0, L-1] \backslash \{l\}} \partial_{l'}^- \Bt_{l'} + \frac{ \boldsymbol{\lambda_3} }{ \beta_3 } \right] 
  - \beta_4 \left[ - \By_l + \frac{ \boldsymbol{\lambda}_{\boldsymbol 4l} }{ \beta_4 } \right]
  \right]
  \\
  \stackrel{\cF}{\longleftrightarrow}~ 
  T_l(\Bz) &:= \frac{ \mathcal M_l(\Bz) }{ \mathcal N_l(\Bz) }
  = \left[ \beta_4 + \beta_3 \abs{ \cos\left(\frac{\pi l}{L}\right) (z_2 - 1) + \sin\left(\frac{\pi l}{L}\right) (z_1 - 1) }^2 \right]^{-1} \times
  \\ & \qquad \quad
  \Bigg[ - \beta_4 \left[ - Y_l(\Bz) + \frac{ \Lambda_{4l}(\Bz) }{ \beta_4 } \right]
  -\beta_3 \left[ \cos\left(\frac{\pi l}{L}\right) (z_2 - 1) + \sin\left(\frac{\pi l}{L}\right) (z_1 - 1) \right] \times
  \\ & \qquad \quad
  \left[ D(\Bz) + \sum_{l'=[0, L-1] \backslash \{l\}} \left[ \cos\left(\frac{\pi l'}{L}\right) (z_2^{-1} - 1) + \sin\left(\frac{\pi l'}{L}\right) (z_1^{-1} - 1) \right] T_{l'}(\Bz) + \frac{ \Lambda_3(\Bz) }{ \beta_3 } \right] 
  \Bigg] \,.
 \end{align*}
 
\end{proof}


\begin{prop}
 The solution of the ``$\vec{\Br} = \big[ \Br_l \big]_{l=0}^{L-1}$-problem'' is
 \begin{align*}
  \vec{\Br}^* &= \argmin_{\vec{\Br} \in X^{L+1}} \left\{ \mathscr F(\vec{\Br}) = 
  \left\langle \boldsymbol{\lambda_1} + \beta_1 \,, \abs{\vec{\Br}} - \langle \vec{\By} \,, \vec{\Br} \rangle_X \right\rangle_{\ell_2}
  + \frac{\beta_2}{2} \sum_{l=0}^{L-1} \norm{ \Br_l - \partial_l^+ \Bu + \frac{\boldsymbol{\lambda}_{\boldsymbol 2 l}}{\beta_2} }^2_{\ell_2}
  + \frac{\beta_2}{2} \norm{ \Br_L - \boldsymbol 1 + \frac{\boldsymbol{\lambda}_{\boldsymbol 2 L}}{\beta_2} }^2_{\ell_2}
  \right\}
  \\
  &= 
  \begin{cases}
   \displaystyle \Shrink \left( \partial_l^+ \Bu - \frac{ \boldsymbol{\lambda}_{\boldsymbol 2 l} }{ \beta_2 } + \frac{ \boldsymbol{\lambda_1} + \beta_1 }{ \beta_2 } \cdot^\times \By_l
   \,,~ \frac{ \boldsymbol{\lambda_1} + \beta_1}{\beta_2} \right)
   \,,~ & l = 0, \ldots, L-1 \,,
   \\
   \displaystyle \Shrink \left( \mathbf{1} - \frac{ \boldsymbol{\lambda}_{\boldsymbol 2 L} }{ \beta_2 } + \frac{ \boldsymbol{\lambda_1} + \beta_1 }{ \beta_2 } \cdot^\times \By_L
   \,,~ \frac{ \boldsymbol{\lambda_1} + \beta_1}{\beta_2}
   \right)
   \,,~ & l = L \,.
  \end{cases}  
 \end{align*} 
\end{prop}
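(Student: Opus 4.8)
The plan is to reduce this vector minimization to a family of scalar, one-dimensional problems by exploiting separability, and then to recognize each of them as an $\ell_1$-regularized least-squares problem whose minimizer is the soft-thresholding operator $\Shrink(\cdot\,,\cdot)$, exactly as in the $\Bd$-problem~(\ref{eq:problem:d}) and the $\vec{\Bw}$-problem~(\ref{eq:problem:w}).

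First I would record that $\mathscr F$ is strictly convex in $\vec{\Br}$: the two quadratic penalties are strictly convex in $\vec{\Br}$, the term $\vec{\Br}\mapsto-\langle\vec{\By},\vec{\Br}\rangle_X$ is linear, $\vec{\Br}\mapsto\abs{\vec{\Br}}$ is convex, and (with $\boldsymbol{\lambda_1}\ge\mathbf 0$ along the iteration, so that $\boldsymbol{\lambda_1}+\beta_1>0$ entrywise) the pairing $\bigl\langle\boldsymbol{\lambda_1}+\beta_1,\,\abs{\vec{\Br}}-\langle\vec{\By},\vec{\Br}\rangle_X\bigr\rangle_{\ell_2}$ is a nonnegative combination of convex functions. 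Hence a unique minimizer exists and it suffices to exhibit it.

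Next I would separate the problem. All three terms of $\mathscr F$ are built from $\ell_2$-inner products and squared $\ell_2$-norms over $X$, hence are sums over pixels $\Bk\in\Omega$; the quadratic part is $\sum_{l=0}^{L}$ of decoupled squares, and the pairing contributes $\sum_{l=0}^{L}$ of per-coordinate terms once $\abs{\vec{\Br}}$ is read entrywise-in-direction as $\bigl[\sum_{l=0}^{L}\abs{r_l[\Bk]}\bigr]_{\Bk}$ (the anisotropic convention used throughout, e.g.\ for $\norm{\cdot}_{\text{G}_S}$). So $\mathscr F$ decouples into independent scalar minimizations, one per $(\Bk,l)$. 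Fixing $\Bk$ and $l$, set $a=\lambda_1[\Bk]+\beta_1$, $y=y_l[\Bk]$, and $b=\partial_l^+u[\Bk]-\lambda_{2l}[\Bk]/\beta_2$ when $l<L$ (respectively $b=1-\lambda_{2L}[\Bk]/\beta_2$ when $l=L$); the part of $\mathscr F$ depending on $r:=r_l[\Bk]$ is $a\abs{r}-a\,y\,r+\tfrac{\beta_2}{2}(r-b)^2$, which after completing the square equals $a\abs{r}+\tfrac{\beta_2}{2}\bigl(r-b-\tfrac{a}{\beta_2}y\bigr)^2$ up to an additive constant. This is $\min_r\bigl\{\alpha\abs{r}+\tfrac{\beta_2}{2}(r-c)^2\bigr\}$ with $\alpha=a$ and $c=b+\tfrac{a}{\beta_2}y$, whose unique solution is $\Shrink(c,\alpha/\beta_2)=\sign(c)\max(\abs{c}-\alpha/\beta_2,0)$. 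Reassembling over all $(\Bk,l)$ and returning to matrix form gives the two cases of the statement, with threshold $\tfrac{\boldsymbol{\lambda_1}+\beta_1}{\beta_2}$ and argument $\partial_l^+\Bu-\tfrac{\boldsymbol{\lambda}_{\boldsymbol 2 l}}{\beta_2}+\tfrac{\boldsymbol{\lambda_1}+\beta_1}{\beta_2}\cdot^\times\By_l$ (with $\mathbf 1$ replacing $\partial_l^+\Bu$ when $l=L$).

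I expect the delicate point to be this separation step, specifically the reading of $\abs{\vec{\Br}}$: the coordinatewise $\Shrink$ is correct only because the pairing term decouples in $l$, which needs $\abs{\vec{\Br}}$ to be the direction-wise $\ell_1$ quantity rather than the Euclidean norm $\sqrt{\sum_{l=0}^{L}\Br_l^{\cdot 2}}$ appearing in the $\boldsymbol{\lambda_1}$-update (the Euclidean reading would instead yield a coupled block-shrinkage over $l$). A secondary technicality is the nondifferentiability of $\abs{r}$ at $r=0$: instead of an Euler-Lagrange equation one should use the optimality inclusion $0\in a\,\partial\abs{r}-a\,y+\beta_2(r-b)$ and verify directly that $r=\Shrink(c,\alpha/\beta_2)$ solves it, via a routine case split on the sign and magnitude of $c$.
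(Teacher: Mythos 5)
Your completion of the square is exactly the paper's: both you and the paper absorb the bilinear term $-\langle\vec{\By},\vec{\Br}\rangle_X$ paired with $\boldsymbol{\lambda_1}+\beta_1$ into the quadratic, shift the center by $\frac{\boldsymbol{\lambda_1}+\beta_1}{\beta_2}\cdot^\times\By_l$, and arrive at the same shrinkage argument and the same threshold $\frac{\boldsymbol{\lambda_1}+\beta_1}{\beta_2}$. The gap is precisely at the point you flagged as delicate: you resolve $\abs{\vec{\Br}}$ as the direction-wise $\ell_1$ quantity $\sum_{l=0}^{L}\abs{r_l[\Bk]}$ and consequently obtain $L+1$ decoupled scalar soft-thresholdings, but the paper's convention is unambiguously the Euclidean one, $\abs{\vec{\Br}}=\sqrt{\sum_{l=0}^{L}\Br_l^{\cdot 2}}$ pointwise in $\Bk$. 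This is forced by the model itself (the directional mean curvature normalizes $[\nabla_L^+\Bu,\boldsymbol 1]$ by $\sqrt{\boldsymbol 1+\abs{\nabla_L^+\Bu}^{\cdot 2}}$, and the constraint $\abs{\vec{\Br}}=\langle\vec{\By},\vec{\Br}\rangle_X$ is the usual device for linearizing that Euclidean magnitude), by the $\boldsymbol{\lambda}_{\boldsymbol 1}$-update which states $\abs{\vec{\Br}}=\sqrt{\sum_{l=0}^L\Br_l^{\cdot 2}}$ explicitly, and by Lemma \ref{lem:L1-projection}, where $\Shrink(\vec{\By'},\mu)$ acting on a vector field is defined with $\abs{\vec{\By'}}=\sqrt{\sum_{l=0}^L[\By'_l]^{\cdot 2}}$.

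With the Euclidean reading the problem does \emph{not} decouple in $l$: at each pixel $\Bm$ one must solve the $\mathbb R^{L+1}$-valued problem $\min_{\vec r}\bigl\{\abs{\vec r}+\tfrac{1}{2}\tfrac{\beta_2}{\lambda_1[\Bm]+\beta_1}\abs{\vec r-\vec c\,[\Bm]}^2\bigr\}$, whose minimizer is the coupled block shrinkage $\frac{\vec c}{\abs{\vec c}}\max\bigl(\abs{\vec c}-\tfrac{\lambda_1[\Bm]+\beta_1}{\beta_2},0\bigr)$, with the magnitude and the thresholding test taken jointly over all $L+1$ components; this is what the paper proves (via the vector-valued subdifferential argument of Lemma \ref{lem:L1-projection}) and what the case-split in the proposition abbreviates componentwise. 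Your scalar shrinkage gives a genuinely different operator — it can zero out one direction while leaving another nonzero at the same pixel, which the group version cannot — so the anisotropic route proves a different statement. The fix is small: keep your completion of the square, separate only over $\Bk$ (not over $l$), and invoke the vector soft-thresholding lemma in place of the scalar one.
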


\begin{proof}

 \begin{rem}
  Given $\Bf \,, \Bd \,, \Bu \in X$ and $\vec{\Bt} = \left[ \Bt_l \right]^L_{l=0} \in X^{L+1}$, we have  
 \begin{align*}
  \norm{\vec{\Bt}}^2_{\ell_2} = \sum_{l=0}^L \norm{\Bt_l}^2_{\ell_2} \,,~~
  \left \langle \Bf \,, \sum_{l=0}^L \Bt_l \right \rangle_{\ell_2} = \sum_{l=0}^L \left \langle \Bf \,, \Bt_l \right \rangle_{\ell_2}~~ \text{and} ~
  \left \langle \Bf \,, \Bd \cdot^\times \Bu \right \rangle_{\ell_2} = \left \langle \Bd \cdot^\times \Bf \,, \Bu \right \rangle_{\ell_2} \,.
 \end{align*}
   
  {\bfseries The proofs for this remark are}
  \begin{align*}
   \norm{ \vec{\Bt} }^2_{\ell_2} &= \sum_{l=0}^L \underbrace{ \sum_{\Bk \in \Omega} t^2_l[\Bk] }_{ = \norm{ \Bt_l }^2_{\ell_2} } \,,\qquad
  \left \langle \Bf \,, \sum_{l=0}^L \Bt_l \right \rangle_{\ell_2} 
  = \sum_{\Bk \in \Omega} f[\Bk] \left[ \sum_{l=0}^L \Bt_l[\Bk] \right]
  = \sum_{l=0}^L \underbrace{ \sum_{\Bk \in \Omega} f[\Bk] t_l[\Bk] }_{ = \left \langle \Bf \,, \Bt_l \right \rangle_{\ell_2} }   
  \quad \text{and}
  \\
  \left \langle \Bf \,, \Bd \cdot^\times \Bu \right \rangle_{\ell_2} &= \sum_{\Bk \in \Omega} f[\Bk] \left( \Bd \cdot^\times \Bu \right)[\Bk]
  = \sum_{\Bk \in \Omega} \bigg( \underbrace{ f[\Bk] \Bd[\Bk] }_{= (\Bf \cdot^\times \Bd)[\Bk]} \bigg) \Bu[\Bk] \,.
 \end{align*}
  
 \end{rem}


 Let $\vec{A} = \left[ \nabla_L^+ \Bu \,, 1 \right] \in X^L$, the objective function is rewritten as
 \begin{align*}
  \mathscr F(\vec{\Br}) &= 
  \left\langle \boldsymbol{\lambda_1} + \beta_1 \,, \abs{\vec{\Br}} \right\rangle_{\ell_2}
  + \frac{\beta_2}{2} \sum_{l=0}^{L} \norm{ \Br_l - A_l + \frac{\boldsymbol{\lambda}_{\boldsymbol 2 l}}{\beta_2} }^2_{\ell_2}
  - \left\langle \boldsymbol{\lambda_1} + \beta_1 \,, \sum_{l=0}^L \By_l \cdot^\times \Br_l \right\rangle_{\ell_2}  
  \\
  &= \left\langle \boldsymbol{\lambda_1} + \beta_1 \,, \abs{\vec{\Br}} \right\rangle_{\ell_2}
  + \frac{\beta_2}{2} \sum_{l=0}^L \left[ \norm{ \Br_l - A_l + \frac{\boldsymbol{\lambda}_{\boldsymbol 2 l}}{\beta_2} }^2_{\ell_2}
  - 2 \left\langle \frac{ \boldsymbol{\lambda_1} + \beta_1 }{\beta_2} \cdot^\times \By_l \,, \Br_l \right\rangle_{\ell_2} \right] 
  \\
  &= \left\langle \boldsymbol{\lambda_1} + \beta_1 \,, \abs{\vec{\Br}} \right\rangle_{\ell_2}
  + \frac{\beta_2}{2} \sum_{l=0}^L \bigg[ 
  \norm{ \Br_l }^2_{\ell_2} + 2 \left\langle \Br_l \,, - A_l + \frac{\boldsymbol{\lambda}_{\boldsymbol 2 l}}{\beta_2} - \frac{ \boldsymbol{\lambda_1} + \beta_1 }{\beta_2} \cdot^\times \By_l \right\rangle_{\ell_2}
  + \norm{ - A_l + \frac{\boldsymbol{\lambda}_{\boldsymbol 2 l}}{\beta_2} - \frac{ \boldsymbol{\lambda_1} + \beta_1 }{\beta_2} \cdot^\times \By_l }^2_{\ell_2}
  \\
  &+ \norm{ - A_l + \frac{\boldsymbol{\lambda}_{\boldsymbol 2 l}}{\beta_2} }^2_{\ell_2}
  - \norm{ - A_l + \frac{\boldsymbol{\lambda}_{\boldsymbol 2 l}}{\beta_2} - \frac{ \boldsymbol{\lambda_1} + \beta_1 }{\beta_2} \cdot^\times \By_l }^2_{\ell_2}
\bigg]   
\\
  &= \left\langle \boldsymbol{\lambda_1} + \beta_1 \,, \abs{\vec{\Br}} \right\rangle_{\ell_2}
  + 
  \frac{\beta_2}{2} \norm{ \vec \Br - \vec A + \frac{\vec{\boldsymbol{\lambda}}_{\boldsymbol 2}}{\beta_2} - \frac{ \boldsymbol{\lambda_1} + \beta_1 }{\beta_2} \cdot^\times \vec \By }^2_{\ell_2} 
  + 
  \underbrace{
  \frac{\beta_2}{2} \left[ 
  \norm{ - \vec A + \frac{\vec{\boldsymbol{\lambda}}_{\boldsymbol 2}}{\beta_2} }^2_{\ell_2}
  - \norm{ - \vec A + \frac{\vec{\boldsymbol{\lambda}}_{\boldsymbol 2 l}}{\beta_2} -  \frac{ \boldsymbol{\lambda_1} + \beta_1 }{\beta_2} \cdot^\times \vec\By }^2_{\ell_2}
\right]   
}_{ =~ \text{constant}}
 \end{align*}
 Under a minimization, a constant term can be removed and the "$\vec{\Br}$-problem" is rewritten as
 \begin{align*}
  \min_{\vec{\Br} \in X^{L+1}} \left\{ \mathscr F(\vec{\Br}) =
  \sum_{\Bk \in \Omega} \left[
  (\lambda_1[\Bk] + \beta_1) \abs{\vec{r}[\Bk]}
  + \frac{\beta_2}{2} \abs{ \vec{r}[\Bk] - \vec{A}[\Bk] + \frac{ \vec{\lambda_2}[\Bk] }{ \beta_2 } - \frac{ \lambda_1[\Bk] + \beta_1 }{ \beta_2 } \vec{y}[\Bk] }^2
  \right]
  \right\}
 \end{align*}
 Due to separability, consider at $\Bk = \Bm \in \Omega$, we have
 \begin{align*}
  \vec{r}^*[\Bm] 
  &= \argmin_{\vec{r}[\Bm] \in \mathbb R^{L+1}} \left\{ \mathscr F(\vec{r}[\Bm]) =
  \abs{\vec{r}[\Bm]}
  + \frac{1}{2} \frac{\beta_2}{\lambda_1[\Bm] + \beta_1} \abs{ \vec{r}[\Bm] - \vec{A}[\Bm] + \frac{ \vec{\lambda}_2[\Bm] }{ \beta_2 } - \frac{ \lambda_1[\Bm] + \beta_1 }{ \beta_2 } \vec{y}[\Bm] }^2
  \right\}
  \\
  &= \Shrink \left( \vec{A}[\Bm] - \frac{ \vec{\lambda}_2[\Bm] }{ \beta_2 } + \frac{ \lambda_1[\Bm] + \beta_1 }{ \beta_2 } \vec{y}[\Bm]
  \,,~ \frac{\lambda_1[\Bm] + \beta_1}{\beta_2} \right) \,.
 \end{align*}
 We have 
 $$
 \vec{\Br} = \left[ \Br_l \right]^L_{l=0} \in X^{L+1} \,, \vec{A} = \left[ \left[ \partial^+_l \Bu \right]_{l=0}^{L-1} \,, 1 \right] \in X^{L+1}
 \,, \vec{\By} = \left[ \By_l \right]_{l=0}^L \in X^{L+1} \,, \boldsymbol{\lambda_1} \in X 
 \,, \vec{\boldsymbol{\lambda_2}} = \left[ \boldsymbol{\lambda}_{\boldsymbol 2 l} \right]_{l=0}^L \in X^{L+1}.
 $$
 The matrix form is
 \begin{align*}
  \vec{\Br}^* &= \Shrink \left( \vec{\BA} - \frac{ \vec{\boldsymbol{\lambda_2}} }{ \beta_2 } + \frac{ \boldsymbol{\lambda_1} + \beta_1 }{ \beta_2 } \cdot^\times \vec{\By}
  \,,~ \frac{ \boldsymbol{\lambda_1} + \beta_1}{\beta_2} \right)
  \\
  &=
  \begin{cases}
   \displaystyle \Shrink \left( \partial_l^+ \Bu - \frac{ \boldsymbol{\lambda}_{\boldsymbol 2 l} }{ \beta_2 } + \frac{ \boldsymbol{\lambda_1} + \beta_1 }{ \beta_2 } \cdot^\times \By_l
   \,,~ \frac{ \boldsymbol{\lambda_1} + \beta_1}{\beta_2} \right)
   \,,~ & l = 0, \ldots, L-1 \,,
   \\
   \displaystyle \Shrink \left( \mathbf{1} - \frac{ \boldsymbol{\lambda}_{\boldsymbol 2 L} }{ \beta_2 } + \frac{ \boldsymbol{\lambda_1} + \beta_1 }{ \beta_2 } \cdot^\times \By_L
   \,,~ \frac{ \boldsymbol{\lambda_1} + \beta_1}{\beta_2}
   \right)
   \,,~ & l = L \,.
  \end{cases}
 \end{align*}
 
\end{proof}


\begin{lem} \label{lem:L1-projection} 
 Given $\vec{\By'} \in X^{L+1}$ and $\mu > 0$, a solution of $\ell_1$-minimization (a primal problem) in a matrix form is
 \begin{align}  \label{eq:primal:vectorL1:y}
  \vec{\By}_\text{p}^* &= \argmin_{  \vec{\By}_\text{p} \in X^{L+1} } \left\{ \mathscr F(\vec{\By}_\text{p}) :=  
  \mu \norm{ \vec{\By}_\text{p} }_{\ell_1} + \frac{1}{2} \norm{ \vec{\By}_\text{p} - \vec{\By'} }^2_{\ell_2}
  \right\}
  \\ \label{eq:primal:vectorL1:y:solution}
  &= \frac{ \vec{\By'} }{ \abs{\vec{\By'}} } \cdot^\times \max \left( \abs{\vec{\By'}} - \mu \,,~ 0 \right)
  := \Shrink \left( \vec{\By'} \,, \mu \right) \,,    
 \end{align} 
 with $\abs{\vec{\By'}} = \sqrt{ \sum_{l=0}^L \left[ \By'_l \right]^{\cdot 2} }$. 
 Denote $\vec{\By}_\text{d}$ as a dual variable of $\vec{\By}_\text{p}$,
 the Legendre-Fenchel transform of $\mathscr J (\vec{\By}_\text{p}) = \mu \norm{ \vec{\By}_\text{p} }_{\ell_1}$ on a convex set is
 \begin{align} \label{eq:LegendreFenchel:vectorL1:y}
  \mathscr R^* \left( \frac{ \vec{\By}_\text{d} }{ \mu } \right) = \begin{cases} 0 \,, & \vec{\By}_\text{d} \in \mathscr R(\mu) \\ +\infty \,, & \text{else} \end{cases}
  \,,~~ 
  \mathscr R(\mu) &= \left\{ \vec{\By}_{\text{d}} \in X^{L+1} ~:~ 
  \norm{ \vec{\By}_{\text{d}} }_{\ell_\infty} := \max_{\Bk \in \Omega} \sqrt{ \sum_{l=0}^L y^2_{\text{d} l}[\Bk]} \leq \mu \right\} \,.
 \end{align}
 A dual problem of (\ref{eq:primal:vectorL1:y}) is 
 \begin{align} \label{eq:dual:vectorL1:y}
  \vec{\By}^*_\text{d} &= \argmin_{ \vec{\By}_\text{d} \in X^{L+1} } \left\{  
  \mathscr R^* \left( \frac{ \vec{\By}_\text{d} }{ \mu } \right) + \frac{1}{2} \norm{ \vec{\By}_\text{d} - \vec{\By'} }^2_{\ell_2}
  \right\}
  = \text{Proj}_{\mathscr R(\mu)} \left( \vec{\By'} \right)
  \\ \label{eq:dual:vectorL1:y:solutionL1Projection}
  &= \begin{cases} \vec{\By'} \,, & \abs{ \vec{\By'} } \leq \mu 
  \\ 
  \mu \frac{ \vec{\By'} }{ \abs{\vec{\By'}} } \,, & \text{else} \end{cases}
  = \frac{ \mu \vec{\By'} }{ \max \left( \mu \,,~ \abs{\vec{\By'}} \right) } 
  \qquad \text{(by $\ell_1$-projection)} 
  \\ \label{eq:dual:vectorL1:y:solutionPrimalDual}
  &= \vec{\By'} - \underbrace{ \Shrink \left( \vec{\By'} \,, \mu \right) }_{= \vec{\By}_\text{p}^*} 
  ~\quad \qquad \qquad \qquad \qquad \text{(by primal-dual relation)} \,.
 \end{align}
 
\end{lem}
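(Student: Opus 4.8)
The plan is to establish the three assertions --- the closed form \eqref{eq:primal:vectorL1:y:solution} for the primal minimizer, the identification \eqref{eq:LegendreFenchel:vectorL1:y} of the Legendre--Fenchel transform, and the three equivalent expressions \eqref{eq:dual:vectorL1:y}--\eqref{eq:dual:vectorL1:y:solutionPrimalDual} for the dual minimizer --- in that order. The structural fact driving everything is that the mixed $\ell_1$ norm $\norm{\cdot}_{\ell_1}$, the $\ell_2$ norm, and the $\ell_2$ pairing on $X^{L+1}$ all decompose as sums over the pixels $\Bk\in\Omega$, so every minimization or supremum over $X^{L+1}$ reduces to the analogous problem posed for a single vector $w\in\mathbb R^{L+1}$ (with $\abs{\cdot}$ the Euclidean norm there).

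For the primal problem \eqref{eq:primal:vectorL1:y}, $\mathscr F$ is strictly convex and coercive (the quadratic term dominates), hence has a unique minimizer, obtained pixelwise by minimizing $w\mapsto\mu\abs{w}+\tfrac12\abs{w-\vec{\By'}[\Bk]}^2$ over $w\in\mathbb R^{L+1}$. I would dispatch this scalar-over-$\mathbb R^{L+1}$ problem either through the optimality condition $\vec{\By'}[\Bk]\in w+\mu\,\partial\abs{\cdot}(w)$ --- the subdifferential of the Euclidean norm being $\{w/\abs{w}\}$ for $w\neq\mathbf 0$ and the closed unit ball at $\mathbf 0$, so that $w\neq\mathbf 0$ forces $\vec{\By'}[\Bk]$ to be a positive multiple of $w$ with $\abs{\vec{\By'}[\Bk]}=\abs{w}+\mu$ (possible exactly when $\abs{\vec{\By'}[\Bk]}>\mu$), while $w=\mathbf 0$ is optimal precisely when $\abs{\vec{\By'}[\Bk]}\le\mu$ --- or, equivalently, by noting that for fixed $\abs{w}=t$ the quadratic term is smallest at $w=t\,\vec{\By'}[\Bk]/\abs{\vec{\By'}[\Bk]}$ and then optimizing $t\mapsto\mu t+\tfrac12(t-\abs{\vec{\By'}[\Bk]})^2$ over $t\ge0$. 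Either route yields $w=\max(\abs{\vec{\By'}[\Bk]}-\mu,0)\,\vec{\By'}[\Bk]/\abs{\vec{\By'}[\Bk]}$, which is \eqref{eq:primal:vectorL1:y:solution}; the degenerate case $\vec{\By'}[\Bk]=\mathbf 0$ gives $w=\mathbf 0$ and is consistent.

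For \eqref{eq:LegendreFenchel:vectorL1:y}, I would compute $\mathscr J^*(\vec{\By}_\text{d})=\sup_{\vec{\By}_\text{p}\in X^{L+1}}\big(\langle\vec{\By}_\text{d},\vec{\By}_\text{p}\rangle_{\ell_2}-\mu\norm{\vec{\By}_\text{p}}_{\ell_1}\big)$ straight from the definition. By the pixelwise splitting this equals $\sum_{\Bk\in\Omega}\sup_{w\in\mathbb R^{L+1}}\big(\langle\vec{\By}_\text{d}[\Bk],w\rangle-\mu\abs{w}\big)$, and each inner supremum is $0$ when $\abs{\vec{\By}_\text{d}[\Bk]}\le\mu$ (Cauchy--Schwarz, attained on the ray through $\vec{\By}_\text{d}[\Bk]$) and $+\infty$ otherwise (let $\abs{w}\to\infty$ along that ray) --- this is exactly the self-duality of the Euclidean norm on $\mathbb R^{L+1}$. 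Hence $\mathscr J^*$ vanishes precisely when $\abs{\vec{\By}_\text{d}[\Bk]}\le\mu$ for every $\Bk$, i.e.\ when $\norm{\vec{\By}_\text{d}}_{\ell_\infty}\le\mu$, that is $\vec{\By}_\text{d}\in\mathscr R(\mu)$, and is $+\infty$ otherwise; this is precisely $\mathscr R^*(\vec{\By}_\text{d}/\mu)$.

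For the dual problem \eqref{eq:dual:vectorL1:y}, its objective is the proximal problem for the indicator $\mathscr J^*=\mathscr R^*(\cdot/\mu)$ of the nonempty closed convex set $\mathscr R(\mu)$, and the proximal operator of such an indicator is the Euclidean projection onto the set, so $\vec{\By}_\text{d}^*=\text{Proj}_{\mathscr R(\mu)}(\vec{\By'})$. Since $\mathscr R(\mu)$ is the Cartesian product over pixels of the Euclidean balls $\{w\in\mathbb R^{L+1}:\abs{w}\le\mu\}$, this projection acts pixelwise as clipping to the ball, which is precisely \eqref{eq:dual:vectorL1:y:solutionL1Projection} together with its compact rewriting $\mu\vec{\By'}/\max(\mu,\abs{\vec{\By'}})$. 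Finally, \eqref{eq:dual:vectorL1:y:solutionPrimalDual} is Moreau's decomposition $\text{prox}_{\mathscr J}(\vec{\By'})+\text{prox}_{\mathscr J^*}(\vec{\By'})=\vec{\By'}$ applied to the conjugate pair identified above --- with $\text{prox}_{\mathscr J}=\Shrink(\cdot,\mu)$ from the first step and $\text{prox}_{\mathscr J^*}=\text{Proj}_{\mathscr R(\mu)}$ --- and it can also be verified directly by adding the two pixelwise closed forms in the regimes $\abs{\vec{\By'}[\Bk]}\le\mu$ and $\abs{\vec{\By'}[\Bk]}>\mu$. There is no substantial obstacle here: after the pixelwise reduction everything is one-dimensional calculus, and the only points deserving care are the conjugacy computation (the self-duality of the Euclidean norm, and the scaling by $\mu$ built into the definition of $\mathscr R^*$) and the clean appeal to Moreau's identity for the last equality.
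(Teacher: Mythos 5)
Your proposal is correct and follows essentially the same route as the paper's proof: pixelwise reduction of all three assertions to a problem over a single vector in $\mathbb R^{L+1}$, the subdifferential of the Euclidean norm (with the ball at the origin handling the degenerate case) for the primal shrinkage formula, the self-duality of the Euclidean norm for the conjugate computation, and projection onto the product of Euclidean balls for the dual. The only cosmetic difference is that you invoke Moreau's decomposition as a known identity for the final equality, whereas the paper re-derives it from the biconjugation and subdifferential optimality conditions; the substance is the same.
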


\begin{proof}

 \noindent
 {\bfseries Proof for a solution of $\ell_1$-minimization (\ref{eq:primal:vectorL1:y}) with a vector-valued form:}

 \noindent
 To be self-contained, a proof of (\ref{eq:primal:vectorL1:y:solution}) is provided for vector-valued data from \cite{GoldsteinOsher2009} and \cite{WangYinZhang2007}. 
 An objective function in (\ref{eq:primal:vectorL1:y}) is rewritten as
 \begin{align*}
  \mathscr F(\vec{\By}_\text{p}) = \sum _{\Bk \in \Omega} \left[ \mu \sqrt{ \sum_{l=0}^L y^2_{\text{p} l}[\Bk] } 
  + \frac{1}{2} \sum_{l=0}^L \left( y_{\text{p} l}[\Bk] - y'_{l}[\Bk] \right)^2
  \right] \,.
 \end{align*} 
 The optimal condition of $\mathscr F(\vec{\By}_\text{p})$ at $\Bk' \in \Omega$ and $l' \in \left\{ 0, \ldots, L \right\}$ is
 \begin{align*}
  0 = \frac{ \partial \mathscr F(\vec{\By}_\text{p}) }{ \partial y_{\text{p} l'}[\Bk'] }
  = \mu \frac{ y_{ \text{p} l' }[\Bk'] }{ \abs{\vec{y}_{ \text{p}}[\Bk']} } + y_{\text{p} l'}[\Bk'] - y'_{l'}[\Bk']
  \,,~~ 
  \abs{\vec{y}_{ \text{p}}[\Bk']} = \sqrt{ \sum_{l=0}^L y^2_{\text{p} l}[\Bk'] } \in \mathbb R
 \end{align*}
 If $\vec{\By}_{ \text{p}} = 0$, then by sub-differential we choose $\vec{c}[\Bk'] = \frac{ \vec{y}_{ \text{p}}[\Bk'] }{ \abs{\vec{y}_{ \text{p}}[\Bk']} } \in \mathbb R^{L+1}$ such that 
 $
  \abs{\vec{c}[\Bk']} = \sqrt{ \sum_{l=0}^{L} c^2_l[\Bk'] } \leq 1 
 $
 and 
 $\vec{y'}[\Bk'] = \mu \vec{c}[\Bk']$  
 with
 $\abs{\vec{y'}[\Bk']} = \mu \abs{\vec{c}[\Bk']} \leq \mu$.

 \noindent
 If $\vec{\By}_{ \text{p}} \neq 0$, then we have
 \begin{align*}
  \vec{y'}[\Bk'] = \left[ \frac{ \mu }{ \abs{\vec{y}_{ \text{p}}[\Bk']} } + 1 \right] \vec{y}_{ \text{p}}[\Bk']
  ~\Leftrightarrow~
  \abs{\vec{y'}[\Bk']} = \left[ \frac{ \mu }{ \abs{\vec{y}_{ \text{p}}[\Bk']} } + 1 \right] \abs{\vec{y}_{ \text{p}}[\Bk']}
  = \mu + \abs{\vec{y}_{ \text{p}}[\Bk']} \,.
 \end{align*} 
 Thus, we have 
 \begin{align*}
  \vec{y}_{ \text{p}}[\Bk'] = \frac{ \vec{y'}[\Bk'] }{ \abs{\vec{y'}[\Bk']} } \left[ \abs{\vec{y'}[\Bk']} - \mu \right] \,.
 \end{align*}
 Finally, we have a solution of (\ref{eq:primal:vectorL1:y}) with $\Bk' \in \Omega$ as
 \begin{align*}
  \vec{y}^*_{ \text{p}}[\Bk'] = \begin{cases}
   0 \,, & \abs{\vec{y'}[\Bk']} \leq \mu   
   \\
   \frac{ \vec{y'}[\Bk'] }{ \abs{\vec{y'}[\Bk']} } \left[ \abs{\vec{y'}[\Bk']} - \mu \right] \,,& \text{else}
  \end{cases}
  = \frac{ \vec{y'}[\Bk'] }{ \abs{\vec{y'}[\Bk']} } \max \left( \abs{\vec{y'}[\Bk']} - \mu \,, 0 \right) \,,
 \end{align*}
 or its matrix form is
 \begin{align*}
  \vec{\By}^*_{ \text{p}} = \frac{ \vec{\By'} }{ \abs{\vec{\By'}} } \cdot^\times \max \left( \abs{\vec{\By'}} - \mu \,, 0 \right) 
  \text{  with  }
  \abs{\vec{\By'}} = \sqrt{ \sum_{l=0}^L \left[ \By'_l \right]^{\cdot 2} } \,.
 \end{align*}
 
 \noindent
 {\bfseries Proof for a Lengendre-Fenchel transform of $\mathscr J (\vec{\By}_\text{p}) = \mu \norm{ \vec{\By}_\text{p} }_{\ell_1}$ in (\ref{eq:LegendreFenchel:vectorL1:y}):}
 
 \noindent
 The epigraph of a convex function $\mathscr J (\vec{\By}_\text{p})$ is a convex set as
 \begin{align*}
  \text{epi}(\mathscr J) = \left\{ \left( \vec{\By}_\text{p} \,, \nu \right) \in X^{L+1} \times \mathbb R ~:~ \mathscr J(\vec{\By}_\text{p}) \leq \nu \,, \forall \, \vec{\By}_\text{p} \right\} \,.
 \end{align*}
 A hyperplane (defined by $(\vec{\By}_\text{d} \,, \nu) \in X^{L+1} \times \mathbb R$) lying below $\mathscr J(\cdot)$, i.e. 
 $\left\langle \vec{\By}_\text{d} \,, \vec{\By}_\text{p} \right\rangle_{\ell_2} - \nu \leq \mathscr J(\vec{\By}_\text{p}) ~(\forall \, \vec{\By}_\text{p} \in X^{L+1})$,
 results in the Legendre-Fenchel transform of $\mathscr J (\vec{\By}_\text{p})$ as a convex function
 \begin{align} \label{eq:LegendreFenchel:proof}
  \mathscr J^*(\vec{\By}_\text{d}) &= \sup_{\vec{\By}_\text{p} \in X^{L+1}}
  \left\{
  \left\langle \vec{\By}_\text{d} \,, \vec{\By}_\text{p} \right\rangle_{\ell_2} - \mathscr J(\vec{\By}_\text{p})
  \right\}  \notag
  \\
  &= \mu \sup_{\vec{\By}_\text{p} \in X^{L+1}} \left\{
  \mathscr H(\vec{\By}_\text{p}) := \left\langle \frac{ \vec{\By}_\text{d} }{\mu} \,, \vec{\By}_\text{p} \right\rangle_{\ell_2}
  - \norm{ \vec{\By}_\text{p} }_{\ell_1}
  \right\}  \notag
  \\
  &:= \mu \mathscr R^* \left( \frac{ \vec{\By}_\text{d} }{\mu} \right) ~\leq~ \nu \,.
 \end{align}
 The optimal condition of its objective function
 \begin{align*}
  \mathscr H(\vec{\By}_\text{p}) = 
  \sum_{\Bk \in \Omega} \left[
  \frac{1}{\mu}\sum_{l=0}^L y_{\text{d} l}[\Bk] y_{\text{p} l}[\Bk]
  - \sqrt{ \sum_{l=0}^L y_{\text{p} l}^2[\Bk] }
  \right]
 \end{align*}
 at $\Bk' \in \Omega$ and $l' \in \{ 0, \ldots, L \}$ is
 \begin{align*}
  0 &= \frac{ \partial \mathscr H(\vec{\By}_\text{p}) }{ \partial y_{\text{p} l'}[\Bk'] }
  = \frac{1}{\mu} y_{\text{d} l'}[\Bk'] - \frac{ y_{\text{p} l'}[\Bk'] }{ \abs{ \vec{y}_\text{p} [\Bk']} }
  \\ \Leftrightarrow~
  y_{\text{d} l'}[\Bk'] &= \mu \frac{ y_{\text{p} l'}[\Bk'] }{ \abs{ \vec{y}_\text{p} [\Bk']} }
  \,,~ \text{  with  } \abs{ \vec{y}_\text{p} [\Bk']} = \sqrt{\sum_{l=0}^L y^2_{\text{p} l} [\Bk']} \,.
 \end{align*}
 If $\vec{\By}_{\text{p}} = 0$, then by sub-differential we choose $\vec{c}[\Bk'] = \frac{ \vec{y}_{\text{p}}[\Bk'] }{ \abs{ \vec{y}_\text{p} [\Bk']} } \in \mathbb R^{L+1}$
 such that $\abs{\vec{c}[\Bk']} \leq 1$ 
 , so $\mathscr H(\vec{\By}_\text{p}) = 0$ and $\vec{y}_{\text{d}}[\Bk'] = \mu \vec{c}[\Bk']$ with
 $\abs{\vec{y}_{\text{d}}[\Bk']} = \mu \abs{\vec{c}[\Bk']} \leq \mu \,,~ \forall \Bk' \in \Omega$
 which is equivalent to
 \begin{align*}
 \norm{ \vec{\By}_{\text{d}} }_{\ell_\infty} &:= \max_{\Bk' \in \Omega} \abs{\vec{y}_{\text{d}}[\Bk']} \leq \mu
 \text{   and   }
 \abs{\vec{y}_{\text{d}}[\Bk']} = \sqrt{ \sum_{l=0}^L y^2_{\text{d} l}[\Bk']} \,.
 \end{align*}
 Thus, 
 \begin{align*}
  \mathscr R^* \left( \frac{ \vec{\By}_\text{d} }{\mu} \right) = 0 
  \text{   with   } \norm{ \vec{y}_{\text{d}} }_{\ell_\infty} \leq \mu \,.
 \end{align*}
 If $\vec{\By}_{\text{p}} \neq 0$, we have
  \begin{align*}
  \mathscr H(\vec{\By}_\text{p}) &= 
  \sum_{\Bk \in \Omega} \left[
  \frac{1}{\mu}\sum_{l=0}^L y_{\text{d} l}[\Bk] y_{\text{p} l}[\Bk]
  - \sqrt{ \sum_{l=0}^L y_{\text{p} l}^2[\Bk] }
  \right]
  \geq 
  \sum_{\Bk \in \Omega} \sum_{l=0}^L \left[
  \frac{1}{\mu} y_{\text{d} l}[\Bk] y_{\text{p} l}[\Bk]
  - \abs{ y_{\text{p} l}[\Bk] }
  \right]  
  \\
  &= \begin{cases} \displaystyle
   \sum_{\Bk \in \Omega} \sum_{l=0}^L \left[ \frac{1}{\mu} y_{\text{d} l}[\Bk] - 1 \right] y_{\text{p} l}[\Bk] 
   \,, & y_{\text{p} l}[\Bk] \geq 0 \,, \forall l \,, \Bk \,,
   \\ \displaystyle
   \sum_{\Bk \in \Omega} \sum_{l=0}^L \left[ \frac{1}{\mu} y_{\text{d} l}[\Bk] + 1 \right] y_{\text{p} l}[\Bk]
   \,, & y_{\text{p} l}[\Bk] < 0 \,, \forall l \,, \Bk \,.
  \end{cases}
 \end{align*}
 We observe that given $\abs{\vec{y}_{\text{d}}[\Bk]} > \mu$, 
 $\sup_{\vec{\By}_\text{p} \in X^{L+1}} \mathscr H(\vec{\By}_\text{p}) = +\infty$
 when $\vec{\By}_\text{p} \rightarrow \pm \infty$.
 Thus,
 \begin{align*}
  \mathscr R^* \left( \frac{ \vec{\By}_\text{d} }{\mu} \right) = +\infty 
  \text{   with   } \abs{\vec{y}_{\text{d}}[\Bk]} > \mu \,, \forall \Bk \in \Omega \,.
 \end{align*}
 So, we have the Legendre-Fenchel transform of $\mathscr J (\vec{\By}_\text{p}) = \mu \norm{ \vec{\By}_\text{p} }_{\ell_1}$ on a convex set is
 \begin{align*}
  \mathscr J^* (\vec{\By}_\text{d}) &= \mathscr R^* \left( \frac{ \vec{\By}_\text{d} }{\mu} \right) =
  \begin{cases}
   0 \,,~ & \vec{\By}_{\text{d}} \in \mathscr R(\mu) 
   \\ 
   +\infty \,,~ & \text{else} 
  \end{cases} \,,
  \\ 
  \mathscr R(\mu) &= \left\{ \vec{\By}_{\text{d}} \in X^{L+1} ~:~ 
  \norm{ \vec{\By}_{\text{d}} }_{\ell_\infty} := \max_{\Bk \in \Omega} \sqrt{ \sum_{l=0}^L y^2_{\text{d} l}[\Bk]} \leq \mu \right\} \,.
 \end{align*}
 
 \noindent
 {\bfseries Proof for a solution of dual problem of (\ref{eq:primal:vectorL1:y}), Eq. (\ref{eq:dual:vectorL1:y})-(\ref{eq:dual:vectorL1:y:solutionPrimalDual}):}
 
 \noindent
 We define epigraph of the Legendre-Fenchel transform $\mathscr J^* (\vec{\By}_\text{d})$ in (\ref{eq:LegendreFenchel:proof}) as a set of hyperplane lying below a convex function $\mathscr J(\vec{\By}_\text{p})$ as
 \begin{align*}
  \text{epi}(\mathscr J^*) = \left\{  
  \left( \vec{\By}_\text{d} \,, \nu \right) \in X^{L+1} \times \mathbb R ~:~ \mathscr J^*(\vec{\By}_\text{d}) \leq \nu \,, \forall \, \vec{\By}_\text{d}
  \right\} \,.
 \end{align*}
 By bi-conjugate of a convex function, we have
 \begin{align*}
  &\mathscr J^{**}(\vec{\By}_\text{p}) = \sup_{\vec{\By}_\text{d} \in X^{L+1}} \left\{
  \left\langle \vec{\By}_\text{d} \,, \vec{\By}_\text{p} \right\rangle_{\ell_2} - \mathscr J^*(\vec{\By}_\text{d})
  \right\} = \mathscr J(\vec{\By}_\text{p})
  \\
  \Leftrightarrow~
  & \mathscr J^*(\vec{\By}_\text{d}) + \mathscr J(\vec{\By}_\text{p}) \geq \left\langle \vec{\By}_\text{d} \,, \vec{\By}_\text{p} \right\rangle_{\ell_2}
  \,,~ \forall \left( \vec{\By}_\text{d} \,, \vec{\By}_\text{p} \right) \in X^{L+1} \times X^{L+1} \,.
 \end{align*}
 The equality happens, i.e. $\mathscr J^*(\vec{\By}_\text{d}) + \mathscr J(\vec{\By}_\text{p}) = \left\langle \vec{\By}_\text{d} \,, \vec{\By}_\text{p} \right\rangle_{\ell_2}$, if $\vec{\By}_\text{p}$ is the sub-differential of $\mathscr J^*(\vec{\By}_\text{d})$ or
 $\vec{\By}_\text{d}$ is the sub-differential of $\mathscr J(\vec{\By}_\text{p})$ as
 \begin{align} \label{eq:subdifferential}
  \frac{\partial \mathscr J(\vec{\By}_\text{p})}{\partial \vec{\By}_\text{p}} = \vec{\By}_\text{d}
  ~\Leftrightarrow~ 
  \frac{\partial \mathscr J^*(\vec{\By}_\text{d})}{\partial \vec{\By}_\text{d}} = \vec{\By}_\text{p} \,.
 \end{align}
 Equivalently, the objective function in (\ref{eq:primal:vectorL1:y}) is rewritten as
 \begin{align*}
  \mathscr F(\vec{\By}_\text{p}) =  
  \mathscr J (\vec{\By}_\text{p}) + \frac{1}{2} \norm{ \vec{\By}_\text{p} - \vec{\By'} }^2_{\ell_2}
  = \left\langle \vec{\By}_\text{d} \,, \vec{\By}_\text{p} \right\rangle_{\ell_2} - \mathscr J^*(\vec{\By}_\text{d})
  + \frac{1}{2} \norm{ \vec{\By}_\text{p} - \vec{\By'} }^2_{\ell_2} \,.
 \end{align*}

 \noindent
 Since Legendre-Fenchel transform of $\mathscr J (\vec{\By}_\text{p})$ is $\mathscr J^* (\vec{\By}_\text{d}) = \mathscr R^* \left( \frac{\vec{\By}_\text{d}}{\mu} \right)$, 
 we build a dual relation of a non-smooth minimization (\ref{eq:primal:vectorL1:y}) by considering the optimal condition of an objective function $\mathscr F(\cdot)$
  at $(\vec{\By}^*_\text{p} \,, \vec{\By}^*_\text{d})$ with the sub-differential \cite{AujolChambolle2005, EkelandTeman1999} as 
 \begin{align*}
  \vec{\By'} = \vec{\By}^*_\text{d}   + \vec{\By}^*_\text{p} 
  \text{  and  }
  0 \in \vec{\By}^*_\text{p} - \frac{ \partial \mathscr J^*(\vec{\By}_\text{d}) }{ \partial \vec{\By}_\text{d} } \bigg |_{\vec{\By}_\text{d} = \vec{\By}^*_\text{d} } 
 \end{align*}
 Thus, we have 
 \begin{align*}
  0 \in \frac{ \partial \mathscr J^*(\vec{\By}_\text{d}) }{ \partial \vec{\By}_\text{d} } \bigg |_{\vec{\By}_\text{d} = \vec{\By}^*_\text{d} } 
  + \vec{\By}^*_\text{d} - \vec{\By'} \,,
 \end{align*}
 where $\vec{\By}^*_\text{d}$ is a solution of a dual problem of (\ref{eq:primal:vectorL1:y}) by a primal/dual relation as
 \begin{align} \label{eq:dual:vectorL1:y:proof}
  \vec{\By}^*_\text{d} &= \argmin_{\vec{\By}_\text{d} \in X^{L+1}} \left\{
  \mathscr R^* \left( \frac{\vec{\By}_\text{d}}{\mu} \right)
  + \frac{1}{2} \norm{ \vec{\By}_\text{d} - \vec{\By'} }^2_{\ell_2} 
  \right\}
  \\
  &= \vec{\By'} - \Shrink \left( \vec{\By'} \,, \mu \right) \,.   \notag
 \end{align}
 Note that a convex set $\mathscr R(\mu)$ is equivalent to $\ell_1$-ball as
 \begin{align*}
  \mathscr R(\mu) &= \left\{ \vec{\By}_{\text{d}} \in X^{L+1} ~:~ 
  \norm{ \vec{\By}_{\text{d}} }_{\ell_\infty} := \max_{\Bk \in \Omega} \abs{ \vec{y}_\text{d}[\Bk] } \leq \mu 
  ~\Leftrightarrow~ 
  \abs{ \vec{y}_\text{d}[\Bk] } \leq \mu \,,~ \forall \Bk \in \Omega
  \right\} 
  \,.
 \end{align*} 
 Thus, a solution of a dual problem (\ref{eq:dual:vectorL1:y:proof}) is a projection of $\vec{\By'}$ onto 
 an $\ell_1$-ball $\mathscr R(\mu)$ as
 \begin{align*} 
  \vec{\By}^*_\text{d} &= \argmin_{\vec{\By}_\text{d} \in \mathscr R(\mu)} 
  \norm{ \vec{\By}_\text{d} - \vec{\By'} }^2_{\ell_2} 
  ~=~ \text{Proj}_{\mathscr R(\mu)} \left( \vec{\By'} \right)
  ~=~ \begin{cases} \vec{\By'} \,, & \abs{ \vec{\By'} } \leq \mu 
  \\ 
  \mu \frac{ \vec{\By'} }{ \abs{\vec{\By'}} } \,, & \text{else} \end{cases}
  ~=~ \frac{ \mu \vec{\By'} }{ \max \left( \mu \,,~ \abs{\vec{\By'}} \right) } \,.
 \end{align*} 
\end{proof}


\begin{prop}
 The solution of the ``$\vec{\By} = \left[ \By_l \right]_{l=0}^{L-1}$-problem'' is
 \begin{align*}
  \vec{\By}^* &= \argmin_{\vec{\By} \in X^{L+1}} \left\{ \mathscr F(\vec{\By}) = 
  \mathscr R^* (\vec{\By}) +
  \big\langle \boldsymbol{\lambda_1} + \beta_1 \,, \abs{\vec{\Br}} - \langle \vec{\By} \,, \vec{\Br} \rangle_X \big\rangle_{\ell_2}
  + \frac{\beta_4}{2} \sum_{l=0}^L \norm{ \Bt_l - \By_l + \frac{\boldsymbol{\lambda}_{\boldsymbol 4l}}{\beta_4} }^2_{\ell_2}
  \right\}
  \\
  &= 
  \begin{cases}
   \By'_l \,, & \abs{\vec{\By'}} \leq 1  \\
   \frac{ \By'_l }{ \abs{\vec{\By'}} } \,, & \abs{\vec{\By'}} > 1
  \end{cases}
  \,,~~ l = 0, \ldots, L
  \quad \text{and} \quad
  \begin{cases}
   \By'_l &= \Bt_l + \frac{ \boldsymbol{\lambda}_{\boldsymbol 4 l} }{\beta_4} + \Br_l \cdot^\times \frac{\boldsymbol{\lambda_1} + \beta_1}{\beta_4} \,,
   \\
   \\
   \abs{\vec{\By'}} &= \sqrt{ \sum_{l=0}^L \left[ \Bt_l + \frac{ \boldsymbol{\lambda}_{\boldsymbol 4 l} }{\beta_4} + \Br_l \cdot^\times \frac{\boldsymbol{\lambda_1} + \beta_1}{\beta_4} \right]^{\cdot 2} } \,.
  \end{cases}  
 \end{align*}
 
\end{prop}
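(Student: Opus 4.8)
The plan is to turn the $\vec{\By}$-problem into a Euclidean projection onto the convex set $\mathscr R$ and then quote Lemma~\ref{lem:L1-projection}. First I would drop the term $\big\langle \boldsymbol{\lambda_1} + \beta_1 \,, \abs{\vec{\Br}}\big\rangle_{\ell_2}$ from $\mathscr F(\vec{\By})$ since it does not depend on $\vec{\By}$, and rewrite the bilinear coupling term by means of the pointwise-product identity recorded in the remark inside the proof of the $\vec{\Br}$-problem, namely
\begin{align*}
 \Big\langle \boldsymbol{\lambda_1} + \beta_1 \,, \langle \vec{\By} \,, \vec{\Br} \rangle_X \Big\rangle_{\ell_2}
 = \sum_{l=0}^L \Big\langle (\boldsymbol{\lambda_1} + \beta_1) \cdot^\times \Br_l \,, \By_l \Big\rangle_{\ell_2} \,.
\end{align*}

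Next I would complete the square in each $\By_l$. Absorbing the linear term $-\big\langle (\boldsymbol{\lambda_1}+\beta_1)\cdot^\times \Br_l \,, \By_l\big\rangle_{\ell_2}$ into the quadratic term $\tfrac{\beta_4}{2}\norm{\Bt_l - \By_l + \tfrac{\boldsymbol{\lambda}_{\boldsymbol 4 l}}{\beta_4}}^2_{\ell_2}$ yields, up to an additive constant independent of $\vec{\By}$, the single square $\tfrac{\beta_4}{2}\norm{\By_l - \By'_l}^2_{\ell_2}$ with $\By'_l = \Bt_l + \tfrac{\boldsymbol{\lambda}_{\boldsymbol 4 l}}{\beta_4} + \Br_l \cdot^\times \tfrac{\boldsymbol{\lambda_1}+\beta_1}{\beta_4}$, using commutativity of $\cdot^\times$. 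Summing over $l = 0,\ldots,L$ and discarding the constants, the $\vec{\By}$-problem reduces to
\begin{align*}
 \vec{\By}^* = \argmin_{\vec{\By} \in X^{L+1}} \Big\{ \mathscr R^*(\vec{\By}) + \tfrac{\beta_4}{2}\norm{\vec{\By} - \vec{\By'}}^2_{\ell_2} \Big\} = \text{Proj}_{\mathscr R}(\vec{\By'}) \,,
\end{align*}
the metric projection of $\vec{\By'}$ onto $\mathscr R$, which exists and is unique because $\mathscr R$ is a nonempty closed convex set and the prefactor $\beta_4/2 > 0$ is immaterial for the argmin.

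Finally, since $\mathscr R^*$ is the indicator of the ball $\mathscr R = \mathscr R(1)$ (as in Lemma~\ref{lem:L1-projection}, consistently with the Legendre--Fenchel computation there), and since $\mathscr R$ decouples over $\Bk \in \Omega$, I would apply (\ref{eq:dual:vectorL1:y:solutionL1Projection}) pointwise with $\mu = 1$: at each $\Bk$ one rescales the vector $\big(\By'_l[\Bk]\big)_{l=0}^L$ back to the unit ball, giving $\By_l^* = \By'_l$ when $\abs{\vec{\By'}} \le 1$ and $\By_l^* = \By'_l/\abs{\vec{\By'}}$ otherwise, with $\abs{\vec{\By'}} = \sqrt{\sum_{l=0}^L [\By'_l]^{\cdot 2}}$ — exactly the claimed matrix-form solution.

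The completion of the square is routine; the only point that needs care is the bookkeeping with the pointwise product $\cdot^\times$ (its adjoint-type identity and commutativity) and verifying that every term not containing $\vec{\By}$ is genuinely constant, so that the reduced problem is precisely a projection and Lemma~\ref{lem:L1-projection} applies verbatim.
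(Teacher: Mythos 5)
Your proposal is correct and follows essentially the same route as the paper: complete the square in $\vec{\By}$ by absorbing the bilinear coupling term $-\langle (\boldsymbol{\lambda_1}+\beta_1)\cdot^\times \Br_l, \By_l\rangle_{\ell_2}$ into the quadratic, reduce to a projection onto $\mathscr R$, and invoke Lemma~\ref{lem:L1-projection} with $\mu = 1$. Your added observation that the prefactor $\beta_4/2$ is immaterial because $\mathscr R^*$ is an indicator function is a correct and worthwhile clarification, but the argument is otherwise the paper's own.
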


\begin{proof}
Under a minimization, the objective function is rewritten as
\begin{align*}
 \mathscr F(\vec{\By}) &= \mathscr R^*(\vec{\By}) - \sum_{l=0}^L \left\langle \boldsymbol{\lambda_1} + \beta_1 \,, \By_l \cdot^\times \Br_l \right\rangle_{\ell_2}
 + \frac{\beta_4}{2} \sum_{l=0}^L \norm{ \By_l - \left( \Bt_l + \frac{\boldsymbol \lambda_{\boldsymbol 4l}}{\beta_4} \right) }^2_{\ell_2}
 \\
 &= \mathscr R^*(\vec{\By}) + \frac{\beta_4}{2} \sum_{l=0}^L \left[  
 \norm{ \By_l }^2_{\ell_2} - 2 \left\langle \By_l \,, \Bt_l + \frac{\boldsymbol{\lambda_{4l}}}{\beta_4} + \Br_l \cdot^\times \frac{\boldsymbol{\lambda_1} + \beta_1}{\beta_4}
 \right\rangle_{\ell_2} + \norm{ \Bt_l + \frac{\boldsymbol{\lambda_{4l}}}{\beta_4} + \Br_l \cdot^\times \frac{\boldsymbol \lambda_1 + \beta_1}{\beta_4} }^2_{\ell_2} \right]
 \\&
 - \frac{\beta_4}{2} \sum_{l=0}^L \norm{ \Bt_l + \frac{\boldsymbol{\lambda_{4l}}}{\beta_4} + \Br_l \cdot^\times \frac{\boldsymbol{\lambda_1} + \beta_1}{\beta_4} }^2_{\ell_2}
 + \frac{\beta_4}{2} \sum_{l=0}^L \norm{ \Bt_l + \frac{\boldsymbol{\lambda_{4l}}}{\beta_4} }^2_{\ell_2}
 \\
 &= \mathscr R^*(\vec{\By}) + \frac{\beta_4}{2} \norm{ \vec{\By} - \left[ \vec{\Bt} + \frac{ \vec{\boldsymbol{\lambda}}_{\boldsymbol 4} }{\beta_4} + \vec{\Br} \cdot^\times \frac{\boldsymbol{\lambda_1} + \beta_1}{\beta_4} \right] }^2_{\ell_2}
 + \frac{\beta_4}{2} \left[ \norm{\vec{\Bt} + \frac{\vec{\boldsymbol{\lambda}}_{\boldsymbol 4}}{\beta_4} }^2_{\ell_2}
 - \norm{ \vec{\Bt} + \frac{\vec{\boldsymbol \lambda}_{\boldsymbol 4}}{\beta_4} + \vec{\Br} \cdot^\times \frac{\boldsymbol{\lambda_1} + \beta_1}{\beta_4}
 }^2_{\ell_2}
 \right] \,.
\end{align*}
According to Lemma \ref{lem:L1-projection}, a solution of a minimization of the "$\vec{\By}$-problem" is
\begin{align*}
 \vec{\By}^* = \argmin_{\vec{\By} \in X^{L+1}} \left\{
 \mathscr R^*(\vec{\By}) 
 + \frac{\beta_4}{2} \bigg|\bigg| \vec{\By} 
 - \bigg[ 
 \underbrace{ 
 \vec{\Bt} + \frac{ \vec{\boldsymbol{\lambda}}_{\boldsymbol 4} }{\beta_4} + \vec{\Br} \cdot^\times \frac{\boldsymbol{\lambda_1} + \beta_1}{\beta_4} 
 }_{= \vec{\By'}}
 \bigg] 
 \bigg|\bigg|^2_{\ell_2}
 \right\}
 = \begin{cases}
     \vec{\By'} \,, & \abs{\vec{\By'}} \leq 1  \\
     \frac{ \vec{\By'} }{ \abs{\vec{\By'}} } \,, & \text{else}
    \end{cases}
\end{align*}
with 
\begin{align*}
 \vec{\By'} &= \left[ \By'_l \right]_{l=0}^L \in X^{L+1} 
 \,,~~
 \By'_l = \Bt_l + \frac{ \boldsymbol{\lambda}_{\boldsymbol 4 l} }{\beta_4} + \Br_l \cdot^\times \frac{\boldsymbol{\lambda_1} + \beta_1}{\beta_4}
 ~~ \text{and}~~
 \abs{\vec{\By'}} = \sqrt{ \sum_{l=0}^L \left[ \Bt_l + \frac{ \boldsymbol{\lambda}_{\boldsymbol 4 l} }{\beta_4} + \Br_l \cdot^\times \frac{\boldsymbol{\lambda_1} + \beta_1}{\beta_4} \right]^{\cdot 2} } \,.
\end{align*}

\end{proof}


\begin{prop}  \label{prop:problem:u}

The solution of the ``$\Bu$-problem'' is as follows 
\begin{align*}  
 \Bu^* &= \min_{\Bu \in X} \left\{ \mathscr F(\Bu) :=  
 \frac{\beta_2}{2} \sum_{l=0}^{L-1} \norm{ \Br_l - \partial^+_l \Bu + \frac{ \boldsymbol{\lambda}_{\boldsymbol 2 l} }{\beta_2} }^2_{\ell_2}
 + \frac{\beta_5}{2} \norm{ \Bf - \Bh \ast \Bu - \Bh \ast \Bv - \Bh \ast \Brho - \Beps + \frac{\boldsymbol{\lambda}_{\boldsymbol 5}}{\beta_5} }^2_{\ell_2}
 \right\}
 \\
 &= \RE \left[ \cF^{-1} \left\{ \frac{\cY(\Bz)}{\cX(\Bz)} \right\} \right] \,.
\end{align*}
with
\begin{align*}
 \cY(\Bz) &= \beta_2 \sum_{l=0}^{L-1} \left[ \cos\left(\frac{\pi l}{L}\right)(z_2^{-1} - 1) + \sin\left(\frac{\pi l}{L}\right)(z_1^{-1} - 1) \right]
 \left[ R_l(\Bz) + \frac{ \Lambda_{2 l}(\Bz) }{\beta_2} \right]
 \\
 &+ \beta_5 H(\Bz^{-1}) \left[ F(\Bz) - H(\Bz) V(\Bz) - H(\Bz) P(\Bz) - \cE(\Bz) + \frac{\Lambda_5(\Bz)}{\beta_5} \right] \,,
 \\
 \cX(\Bz) &= \beta_2 \sum_{l=0}^{L-1} \abs{ \cos\left(\frac{\pi l}{L}\right)(z_2 - 1) + \sin\left(\frac{\pi l}{L}\right)(z_1 - 1) }^2 + \beta_5 \abs{H(\Bz)}^2 \,.
\end{align*}

\end{prop}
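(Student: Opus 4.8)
The plan is to treat the ``$\Bu$-problem'' exactly as the ``$\vec{\Bt}$-problem'' of Proposition~\ref{prop:problem:t}. Since $\mathscr F(\Bu)$ is a sum of squared $\ell_2$-norms of expressions that are affine in $\Bu$, it is a strictly convex quadratic functional, so its unique minimiser is characterised by the Euler--Lagrange equation $\partial\mathscr F/\partial\Bu = 0$. First I would differentiate term by term, using the adjoint relations $(\partial_l^+)^* = -\partial_l^-$ recalled in Section~2 together with $(\Bh\ast\,\cdot\,)^* = \check\Bh\ast\,\cdot\,$ (which follows from $\check h[\Bk] = h[-\Bk] \stackrel{\cF}{\longleftrightarrow} H(\Bz^{-1})$). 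This gives the gradient of the first sum as $\beta_2\sum_{l=0}^{L-1}\partial_l^-\!\big(\Br_l - \partial_l^+\Bu + \tfrac{\boldsymbol{\lambda}_{\boldsymbol 2 l}}{\beta_2}\big)$ and the gradient of the fidelity term as $-\beta_5\,\check\Bh\ast\big(\Bf - \Bh\ast\Bu - \Bh\ast\Bv - \Bh\ast\Brho - \Beps + \tfrac{\boldsymbol{\lambda}_{\boldsymbol 5}}{\beta_5}\big)$.

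Next I would collect the terms containing $\Bu$ on the left. Since $\sum_{l=0}^{L-1}\partial_l^-\partial_l^+ = \Delta_{\text{d}L}$, the optimality condition reads
\begin{align*}
 \big(-\beta_2\Delta_{\text{d}L} + \beta_5\,\check\Bh\ast\Bh\ast\big)\Bu
 = -\beta_2\sum_{l=0}^{L-1}\partial_l^-\!\Big(\Br_l + \tfrac{\boldsymbol{\lambda}_{\boldsymbol 2 l}}{\beta_2}\Big)
 + \beta_5\,\check\Bh\ast\Big(\Bf - \Bh\ast\Bv - \Bh\ast\Brho - \Beps + \tfrac{\boldsymbol{\lambda}_{\boldsymbol 5}}{\beta_5}\Big).
\end{align*}
This is a discrete, linear, constant-coefficient equation with periodic boundary conditions, hence diagonalised by $\cF$. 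Applying the discrete Fourier transform and the symbols $\partial_l^+ \leftrightarrow \cos(\tfrac{\pi l}{L})(z_2-1)+\sin(\tfrac{\pi l}{L})(z_1-1)$ and $\partial_l^- \leftrightarrow -[\cos(\tfrac{\pi l}{L})(z_2^{-1}-1)+\sin(\tfrac{\pi l}{L})(z_1^{-1}-1)]$ from Section~2, together with the fact that on the unit torus $z_j^{-1} = \overline{z_j}$, so that $\partial_l^-\partial_l^+ \leftrightarrow -\abs{\cos(\tfrac{\pi l}{L})(z_2-1)+\sin(\tfrac{\pi l}{L})(z_1-1)}^2$ and $\check\Bh\ast\Bh \leftrightarrow \abs{H(\Bz)}^2$, the left-hand side becomes $\cX(\Bz)\,U(\Bz)$ and the right-hand side becomes exactly $\cY(\Bz)$ as stated.

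Finally I would check that the quotient is well defined, i.e.\ that $\cX(\Bz) > 0$ for every $\Bz$ on the torus: $\beta_5 > 0$ and $\abs{H(e^{j\mathbf 0})}^2 > 0$ handle the single point $z_1 = z_2 = 1$ at which all the difference symbols vanish, while for $L \geq 2$ and $\Bz$ away from that point at least one of the symbols $\cos(\tfrac{\pi l}{L})(z_2-1)+\sin(\tfrac{\pi l}{L})(z_1-1)$ is nonzero (comparing $l=0$, which would force $z_2 = 1$ if it vanished, with any $l$ for which $\sin(\tfrac{\pi l}{L}) \neq 0$). Inverting the transform gives $\Bu = \cF^{-1}\{\cY(\Bz)/\cX(\Bz)\}$; the true solution is real because the data and the operator are real (equivalently, $\cX$ and $\cY$ are conjugate-symmetric), so $\RE[\cdot]$ merely discards the numerically spurious imaginary part. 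I do not expect a genuine obstacle here: the only delicate step is the sign/conjugation bookkeeping between the forward symbol and its conjugate $z^{-1}$-version, so that $H(\Bz^{-1})$ — not $H(\Bz)$ — multiplies the residual in $\cY(\Bz)$, exactly as in the proof of Proposition~\ref{prop:problem:t}.
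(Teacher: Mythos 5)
Your proposal is correct and follows essentially the same route as the paper's proof: derive the Euler--Lagrange equation using $(\partial_l^+)^* = -\partial_l^-$ and $(\Bh\ast\cdot)^* = \check\Bh\ast\cdot$, collect the $\Bu$-terms into $-\beta_2\Delta_{\text{d}L} + \beta_5\,\check\Bh\ast\Bh$, and diagonalise by the discrete Fourier transform to read off $\cX(\Bz)U(\Bz)=\cY(\Bz)$. Your added verification that $\cX(\Bz)>0$ on the torus (using $\abs{H(e^{j\mathbf 0})}^2>0$ at the origin) is a harmless refinement of the non-singularity check the paper performs in Proposition \ref{prop:VCPyramid:problem:u:spatial}.
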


\begin{proof}
 The Euler-Lagrange equation is
 \begin{align*}
  &0 = \frac{\partial \mathscr F(\Bu)}{\partial \Bu} = 
  -\beta_2 \sum_{l=0}^{L-1} \underbrace{ \frac{\partial \left\{ \partial_l^+ \Bu \right\} }{\partial \Bu} }_{ = (\partial_l^+)^* \delta[\Bk] = -\partial_l^- \delta[\Bk] }
  \left[ \Br_l - \partial^+_l \Bu + \frac{ \boldsymbol{\lambda}_{\boldsymbol 2 l} }{\beta_2} \right]
  - \beta_5 \check{\Bh} \ast \left[ \Bf - \Bh \ast \Bu - \Bh \ast \Bv - \Bh \ast \Brho - \Beps + \frac{\boldsymbol{\lambda}_{\boldsymbol 5}}{\beta_5} \right]
  \\ \Leftrightarrow~ &
  \left[ -\beta_2 \sum_{l=0}^{L-1} \partial_l^- \partial^+_l \delta + \beta_5 \check{\Bh} \ast \Bh \right] \ast \Bu
  =
  - \beta_2 \sum_{l=0}^{L-1} \partial_l^- \left[ \Br_l + \frac{ \boldsymbol{\lambda}_{\boldsymbol 2 l} }{\beta_2} \right]
  + \beta_5 \check{\Bh} \ast \left[ \Bf - \Bh \ast \Bv - \Bh \ast \Brho - \Beps + \frac{\boldsymbol{\lambda}_{\boldsymbol 5}}{\beta_5} \right] 
  \\
  \stackrel{\cF}{\longleftrightarrow}~ 
  &\left[ \beta_2 \sum_{l=0}^{L-1} \abs{ \cos(\frac{\pi l}{L})(z_2 - 1) + \sin(\frac{\pi l}{L})(z_1 - 1) }^2 
  + \beta_5 \abs{H(\Bz)}^2 \right] U(\Bz)
  \\&
  = \beta_2 \sum_{l=0}^{L-1} \left[ \cos\left(\frac{\pi l}{L}\right)(z_2^{-1} - 1) + \sin\left(\frac{\pi l}{L}\right)(z_1^{-1} - 1) \right]
  \left[ R_l(\Bz) + \frac{ \Lambda_{2 l}(\Bz) }{\beta_2} \right]
  \\&
  + \beta_5 H(\Bz^{-1}) \left[ F(\Bz) - H(\Bz) V(\Bz) - H(\Bz) P(\Bz) - \cE(\Bz) + \frac{\Lambda_5(\Bz)}{\beta_5} \right] 
 \end{align*}
 By applying the Fourier transform, a solution of the ``$\Bu$-problem'' is
 \begin{align*}
  U(\Bz) := \frac{\cY(\Bz)}{\cX(\Bz)} = 
  &\left[ \beta_2 \sum_{l=0}^{L-1} \abs{ \cos(\frac{\pi l}{L})(z_2 - 1) + \sin(\frac{\pi l}{L})(z_1 - 1) }^2 + \beta_5 \abs{H(\Bz)}^2 \right]^{-1}
  \\&
  \bigg[ \beta_2 \sum_{l=0}^{L-1} \left[ \cos(\frac{\pi l}{L})(z_2^{-1} - 1) + \sin(\frac{\pi l}{L})(z_1^{-1} - 1) \right]
  \left[ R_l(\Bz) + \frac{ \Lambda_{2 l}(\Bz) }{\beta_2} \right]
  \\&
  + \beta_5 H(\Bz^{-1}) \left[ F(\Bz) - H(\Bz) V(\Bz) - H(\Bz) P(\Bz) - \cE(\Bz) + \frac{\Lambda_5(\Bz)}{\beta_5} \right] \bigg] \,.
 \end{align*}
  
\end{proof}


\begin{prop} 
 The sum of two $\ell_2$ functions is rewritten as
 \begin{align*}
  \mathscr F(\Bv) & := \beta_1 \norm{\Bv - \mathbf{f_1}}^2_{\ell_2} ~+~ \beta_2 \norm{\Bv - \mathbf{f_2}}^2_{\ell_2}
  \\
  &= (\beta_1 + \beta_2) \norm{ \Bv - \frac{\beta_1}{\beta_1 + \beta_2} \mathbf{f_1} - \frac{\beta_2}{\beta_1 + \beta_2} \mathbf{f_2} }^2_{\ell_2}
  ~+~ c_{f_1 f_2}
 \end{align*}
\end{prop}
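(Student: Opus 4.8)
The plan is to prove this by a straightforward completion-of-squares computation, expanding both squared norms, collecting the quadratic and linear terms in $\Bv$, and recognizing the result as a single squared norm plus a constant independent of $\Bv$.

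First I would expand each $\ell_2$-norm using $\norm{\Bv - \mathbf{f}_i}^2_{\ell_2} = \norm{\Bv}^2_{\ell_2} - 2\langle \Bv, \mathbf{f}_i \rangle_{\ell_2} + \norm{\mathbf{f}_i}^2_{\ell_2}$, so that
\begin{align*}
 \mathscr F(\Bv) = (\beta_1 + \beta_2)\norm{\Bv}^2_{\ell_2} - 2\big\langle \Bv, \beta_1 \mathbf{f}_1 + \beta_2 \mathbf{f}_2 \big\rangle_{\ell_2} + \beta_1 \norm{\mathbf{f}_1}^2_{\ell_2} + \beta_2 \norm{\mathbf{f}_2}^2_{\ell_2}.
\end{align*}
Next I would factor out $\beta_1 + \beta_2$ from the first two terms, writing the linear term as $-2(\beta_1+\beta_2)\big\langle \Bv, \tfrac{\beta_1}{\beta_1+\beta_2}\mathbf{f}_1 + \tfrac{\beta_2}{\beta_1+\beta_2}\mathbf{f}_2 \big\rangle_{\ell_2}$, and complete the square by adding and subtracting $(\beta_1+\beta_2)\big\Vert \tfrac{\beta_1}{\beta_1+\beta_2}\mathbf{f}_1 + \tfrac{\beta_2}{\beta_1+\beta_2}\mathbf{f}_2 \big\Vert^2_{\ell_2}$. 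This produces exactly the claimed squared norm $(\beta_1+\beta_2)\big\Vert \Bv - \tfrac{\beta_1}{\beta_1+\beta_2}\mathbf{f}_1 - \tfrac{\beta_2}{\beta_1+\beta_2}\mathbf{f}_2 \big\Vert^2_{\ell_2}$, and I would define $c_{f_1 f_2}$ to be the collection of all the remaining terms, namely
\begin{align*}
 c_{f_1 f_2} = \beta_1 \norm{\mathbf{f}_1}^2_{\ell_2} + \beta_2 \norm{\mathbf{f}_2}^2_{\ell_2} - (\beta_1+\beta_2)\Big\Vert \tfrac{\beta_1}{\beta_1+\beta_2}\mathbf{f}_1 + \tfrac{\beta_2}{\beta_1+\beta_2}\mathbf{f}_2 \Big\Vert^2_{\ell_2},
\end{align*}
which is manifestly independent of $\Bv$ (and one can simplify it further to $\tfrac{\beta_1\beta_2}{\beta_1+\beta_2}\norm{\mathbf{f}_1 - \mathbf{f}_2}^2_{\ell_2}$, though this is not needed for the statement).

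There is no real obstacle here — the only thing to be careful about is the bookkeeping of the cross term $\langle \mathbf{f}_1, \mathbf{f}_2\rangle_{\ell_2}$ when expanding $\Vert \tfrac{\beta_1}{\beta_1+\beta_2}\mathbf{f}_1 + \tfrac{\beta_2}{\beta_1+\beta_2}\mathbf{f}_2 \Vert^2_{\ell_2}$, but since we lump everything constant into $c_{f_1 f_2}$ without needing its explicit form, even that can be stated symbolically. The argument is purely algebraic in the real inner product space $X$ and uses nothing beyond bilinearity of $\langle \cdot, \cdot\rangle_{\ell_2}$ and positivity of $\beta_1, \beta_2$ (to guarantee $\beta_1 + \beta_2 \neq 0$ so the division is legitimate).
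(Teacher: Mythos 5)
Your proposal is correct and is essentially identical to the paper's own proof: both expand the two squared norms, collect the quadratic and linear terms in $\Bv$, and complete the square, absorbing the leftover $\Bv$-independent terms into $c_{f_1 f_2}$. Your extra simplification $c_{f_1 f_2} = \tfrac{\beta_1\beta_2}{\beta_1+\beta_2}\norm{\mathbf{f}_1 - \mathbf{f}_2}^2_{\ell_2}$ is a correct (and slightly cleaner) form of the constant that the paper leaves unsimplified.
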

 
\begin{proof}
 \begin{align*}
  \mathscr F(\Bv) &= (\beta_1 + \beta_2) \norm{\Bv}^2_{\ell_2} ~-~ 2\langle \Bv \,, \beta_1 \mathbf{f_1} ~+~ \beta_2 \mathbf{f_2} \rangle_{\ell_2} 
  ~+~ \beta_1 \norm{\mathbf{f_1}}^2_{\ell_2} ~+~ \beta_2 \norm{\mathbf{f_2}}^2_{\ell_2}
  \\
  &= (\beta_1 + \beta_2) \norm{ \Bv - \frac{\beta_1}{\beta_1 + \beta_2} \mathbf{f_1} - \frac{\beta_2}{\beta_1 + \beta_2} \mathbf{f_2} }^2_{\ell_2}
  \\
  &+ \underbrace{
  (\beta_1 + \beta_2) \left[ - \norm{\frac{\beta_1}{\beta_1 + \beta_2}\mathbf{f_1} + \frac{\beta_2}{\beta_1 + \beta_2}\mathbf{f_2} }^2_{\ell_2} 
  ~+~ \frac{\beta_1}{\beta_1 + \beta_2} \norm{\mathbf{f_1}}^2_{\ell_2} ~+~ \frac{\beta_2}{\beta_1 + \beta_2} \norm{\mathbf{f_2}}^2_{\ell_2} \right]
  }_{:=~ c_{f_1 f_2}}
 \end{align*}
\end{proof} 
 
\begin{lem} \label{lem:linearizeConvexFunc}
 Given a non-smooth convex function $f_1$, e.g. $f_1(\cdot) = \norm{\cdot}_{\ell_1}$ and
 a smooth convex function $f_2$, e.g. $f_2(\cdot) = \norm{\cdot}_{\ell_2}^2$, we linearize a convex minimization as
 \begin{align*}  
  \Bv^* &= \argmin_{\Bv \in X} \left\{ \mu f_1(\Bv) + f_2(\Bv) \right\}
  \\
  \Leftrightarrow~ 
  \Bv^{(\tau)} &= \argmin_{\Bv \in X} \left\{ 
  \mu f_1(\Bv) + \frac{1}{2 \alpha^{(\tau)}} \norm{ \Bv - \left( \Bv^{(\tau-1)} - \alpha^{(\tau)} \nabla_{\Bv} f_2( \Bv^{(\tau-1)} ) \right) }^2_{\ell_2}
  \right\} \,,~~ \tau = 1, \ldots
 \end{align*}
\end{lem}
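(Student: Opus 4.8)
The plan is to establish the equivalence by showing that the proximal-linearization step produces a fixed-point iteration whose limit (if it converges) satisfies the optimality condition of the original problem $\Bv^* = \argmin_{\Bv \in X} \{ \mu f_1(\Bv) + f_2(\Bv) \}$. The key observation is that the right-hand side is nothing but a proximal-gradient (forward-backward splitting) step: the smooth term $f_2$ is replaced by its first-order Taylor expansion around the current iterate $\Bv^{(\tau-1)}$, augmented by a quadratic proximity penalty with weight $\tfrac{1}{2\alpha^{(\tau)}}$. First I would expand the quadratic term in the linearized objective, discarding the terms that are constant in $\Bv$, to rewrite the $\tau$-th subproblem as
\begin{align*}
 \Bv^{(\tau)} = \argmin_{\Bv \in X} \Big\{ \mu f_1(\Bv) + \langle \Bv \,, \nabla_{\Bv} f_2(\Bv^{(\tau-1)}) \rangle_{\ell_2} + \frac{1}{2\alpha^{(\tau)}} \norm{\Bv - \Bv^{(\tau-1)}}^2_{\ell_2} \Big\} \,,
\end{align*}
which exhibits the structure of a proximal map of $\alpha^{(\tau)} \mu f_1$ applied to the gradient-descent point $\Bv^{(\tau-1)} - \alpha^{(\tau)} \nabla_{\Bv} f_2(\Bv^{(\tau-1)})$.

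Next I would write down the first-order optimality (subdifferential) condition for this subproblem, using that $f_1$ is convex but non-smooth so that $\partial f_1$ is a set-valued monotone operator, and $f_2$ is convex and differentiable: $\mathbf 0 \in \mu \,\partial f_1(\Bv^{(\tau)}) + \nabla_{\Bv} f_2(\Bv^{(\tau-1)}) + \tfrac{1}{\alpha^{(\tau)}}(\Bv^{(\tau)} - \Bv^{(\tau-1)})$. At a fixed point $\Bv^{(\tau)} = \Bv^{(\tau-1)} = \Bv^*$ the last term vanishes and this collapses to $\mathbf 0 \in \mu \,\partial f_1(\Bv^*) + \nabla_{\Bv} f_2(\Bv^*)$, which is exactly the optimality condition for $\argmin_{\Bv} \{ \mu f_1(\Bv) + f_2(\Bv) \}$. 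Since the original objective is convex (sum of convex functions), this necessary condition is also sufficient, so the two problems have the same solution set. For the specific instances cited in the statement, $f_1(\cdot) = \norm{\cdot}_{\ell_1}$ and $f_2(\cdot) = \norm{\cdot}^2_{\ell_2}$, I would note that $\partial f_1$ is the componentwise sign multifunction and $\nabla_{\Bv} f_2$ is linear, so each subproblem is solved explicitly by the soft-thresholding (shrinkage) operator, consistent with the way Lemma \ref{lem:L1-projection} is used elsewhere in the paper.

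The main obstacle — and the point that deserves care rather than a one-line dismissal — is that the displayed "$\Leftrightarrow$" is really a statement about the \emph{iteration scheme} converging to the minimizer, not a literal equivalence of two static optimization problems; strictly, one must invoke convergence of forward-backward splitting, which requires a step-size restriction such as $0 < \alpha^{(\tau)} \le 1/\mathrm{Lip}(\nabla f_2)$ (here $\mathrm{Lip}(\nabla f_2) = 2$ for $f_2 = \norm{\cdot}^2_{\ell_2}$) to guarantee that the majorization $f_2(\Bv) \le f_2(\Bv^{(\tau-1)}) + \langle \nabla f_2(\Bv^{(\tau-1)}), \Bv - \Bv^{(\tau-1)}\rangle_{\ell_2} + \tfrac{1}{2\alpha^{(\tau)}}\norm{\Bv - \Bv^{(\tau-1)}}^2_{\ell_2}$ holds, so that the surrogate objective genuinely upper-bounds the true one and the iteration is monotone decreasing. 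I would therefore state the lemma's equivalence in the asymptotic sense (the fixed points coincide with the minimizers) and cite the standard ISTA/proximal-gradient convergence results already referenced in the paper, namely \cite{DaubechiesDefriseMol2004, BeckTeboulle2009}, to close the argument without reproducing the full convergence proof.
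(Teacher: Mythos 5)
Your core derivation is the same one the paper uses: replace the smooth term $f_2$ by its first-order Taylor surrogate at $\Bv^{(\tau-1)}$ augmented with the proximity penalty $\tfrac{1}{2\alpha^{(\tau)}}\norm{\Bv-\Bv^{(\tau-1)}}^2_{\ell_2}$, then complete the square to expose the proximal-gradient subproblem $\argmin_{\Bv}\{\mu f_1(\Bv)+\tfrac{1}{2\alpha^{(\tau)}}\norm{\Bv-(\Bv^{(\tau-1)}-\alpha^{(\tau)}\nabla_{\Bv}f_2(\Bv^{(\tau-1)}))}^2_{\ell_2}\}$; the paper's proof stops exactly there, citing ISTA/FISTA. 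Where you genuinely go further is in justifying the displayed ``$\Leftrightarrow$'': the paper treats it as a formal rewriting, whereas you make explicit (i) the subdifferential optimality condition $\mathbf 0\in\mu\,\partial f_1(\Bv^{(\tau)})+\nabla_{\Bv}f_2(\Bv^{(\tau-1)})+\tfrac{1}{\alpha^{(\tau)}}(\Bv^{(\tau)}-\Bv^{(\tau-1)})$, (ii) that fixed points of the iteration coincide with minimizers of the original convex objective, and (iii) that the surrogate majorizes the true objective only under the step-size restriction $0<\alpha^{(\tau)}\le 1/\mathrm{Lip}(\nabla f_2)$, which is what licenses reading the lemma as a convergence statement rather than an identity of two static problems. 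This added material is correct and closes a gap the paper leaves implicit; the only caveat is that your reading of the lemma is slightly stronger than what the paper actually proves, so if you wanted to match the paper exactly you could present the completing-the-square computation as the whole proof and relegate the fixed-point and step-size discussion to a remark.
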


\begin{proof}
 
\begin{rem}
 1-dimensional Taylor expansion of a function $f(x)$ at $a$ is
 \begin{align*}
  f(x) \mid_{x = a} ~\approx~ \frac{f^{(1)}(a)}{1!} (x-a) + \frac{f^{(2)}(a)}{2!} (x-a)^2 + \frac{f^{(3)}(a)}{3!} (x-a)^3 + \cdots
 \end{align*}
 with $f^{(i)}(a)$ is a $i$-th order derivative of $f(x)$ at $x = a$, i.e. 
 $f^{(i)}(a) = \frac{\text{d}^i f(x)}{\text{d} x^i} \Big |_{x = a} $.
 
 The 1st order of the 2-dimensional Taylor expansion of a function $f(\Bx)$ at $\Bx^{(\tau-1)}$ with $\Bx = (x \,, y)$ is
 \begin{align}  \label{eq:Taylorexpansion}
  f(\Bx) \mid_{\Bx = \Bx^{(\tau-1)}} \approx 
  f(\Bx^{(\tau-1)}) + \left\langle \Bx - \Bx^{(\tau-1)} \,, \nabla_{\Bx} f(\Bx^{(\tau-1)}) \right\rangle_{\ell_2}  
  + \frac{1}{2 \alpha^{(\tau)}} \norm{ \Bx - \Bx^{(\tau-1)} }^2_{\ell_2}
 \end{align}
 
\end{rem}
As in ISTA \cite{DaubechiesDefriseMol2004} or FISTA \cite{BeckTeboulle2009}, given a convex minimization
\begin{align}  \label{eq:cvxapproximation:ISTA}
 \min_{\Bv \in X} \left\{ \mu f_1(\Bv) + f_2(\Bv) \right\}
\end{align}
$f_1$ is a non-smooth (non-differentiable) convex function, e.g. $f_1(\cdot) = \norm{\cdot}_{\ell_1}$ and
$f_2$ is a smooth (differentiable) convex function, e.g. $f_2(\cdot) = \norm{\cdot}_{\ell_2}^2$.
By the 1-st order Taylor expansion of a smooth function $f_2(\Bv)$ at $\Bv^{(\tau-1)}$ in (\ref{eq:Taylorexpansion}),
minimization (\ref{eq:cvxapproximation:ISTA}) is rewritten as
\begin{align}  \label{eq:cvxapproximation:ISTA:Taylor}
 \Bv^{(\tau)} &= \argmin_{\Bv \in X} 
 \Big\{ \mu f_1(\Bv) + 
 \overbrace{ f_2(\Bv^{(\tau-1)}) + 
             \underbrace{ \Big\langle \Bv - \Bv^{(\tau-1)} \,, \nabla_{\Bv} f_2(\Bv^{(\tau-1)}) \Big\rangle_{\ell_2}  
                          + \frac{1}{2 \alpha^{(\tau)}} \norm{ \Bv - \Bv^{(\tau-1)} }^2_{\ell_2} 
                        }_{\displaystyle = \frac{1}{2 \alpha^{(\tau)}} \norm{ \Bv - \Big( \Bv^{(\tau-1)} - \alpha^{(\tau)} \nabla_\Bv f_2( \Bv^{(\tau-1)} ) \Big) }^2_{\ell_2} + \text{const} }
           }
          ^{\displaystyle \approx f_2(\Bv) \mid_{\Bv = \Bv^{(\tau-1)}} 
 } \Big\}   \notag
 \\
 &= \argmin_{\Bv \in X} \left\{ 
 \mu f_1(\Bv) + \frac{1}{2 \alpha^{(\tau)}} \norm{ \Bv - \left( \Bv^{(\tau-1)} - \alpha^{(\tau)} \nabla_{\Bv} f_2( \Bv^{(\tau-1)} ) \right) }^2_{\ell_2}
 \right\} \,.
\end{align}

\end{proof}


\begin{prop}  \label{prop:problem:v}

The solution of the ``$\Bv$-problem'' is as follows 
\begin{align*}
 \Bv^* &= \argmin_{\Bv \in X} \left\{  
 \mu_2 \norm{\Bv}_{\ell_1} 
 + \frac{\beta_5}{2} \norm{ \Bh \ast \Bv - \Big( \Bf - \Bh \ast \Bu - \Bh \ast \Brho - \Beps + \frac{\boldsymbol{\lambda}_{\boldsymbol 5}}{\beta_5} \Big) }^2_{\ell_2}
 + \frac{\beta_7}{2} \norm{ \Bv - \left( \text{div}^-_S \vec{\Bg} - \frac{\boldsymbol{\lambda}_{\boldsymbol 7}}{\beta_7} \right) }^2_{\ell_2}
 \right\}
 \\ \Leftrightarrow~
 \Bv^{(\tau)} &= \Shrink \left( \Bt_\Bv^{(\tau)} \,,~ \frac{\mu_2 \alpha^{(\tau)}}{\beta_5 + \alpha^{(\tau)} \beta_7} \right) \,,~~ \tau = 1, \ldots 
\end{align*}
with
\begin{align*}
 \Bt_\Bv^{(\tau)} &= \frac{ \beta_5 }{ \beta_5 + \alpha^{(\tau)} \beta_7 }
 \left( \left[ \delta - \alpha^{(\tau)} \check{\Bh} \ast \Bh \right] \ast \Bv^{(\tau-1)}     
 + \alpha^{(\tau)} \check{\Bh} \ast \left( \Bf - \Bh \ast \Bu - \Bh \ast \Brho - \Beps + \frac{\boldsymbol{\lambda}_{\boldsymbol 5}}{\beta_5} \right) \right) 
 \\
 &+ \frac{ \beta_7 \alpha^{(\tau)} }{ \beta_5 + \alpha^{(\tau)} \beta_7 }
 \Big( \underbrace{ -\sum_{s=0}^{S-1} \big[ \cos(\frac{\pi s}{S}) \Bg_s \BDt + \sin(\frac{\pi s}{S}) \BDoT \Bg_s \big] }_{= \text{div}^-_S \vec{\Bg} }
     - \frac{\boldsymbol{\lambda}_{\boldsymbol 7}}{\beta_7} \Big) \,.
\end{align*}

\end{prop}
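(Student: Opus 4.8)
The plan is to read this subproblem as one proximal--gradient (ISTA-type) iteration and then reduce it to a single soft-threshold. First I would separate the objective into the non-smooth term $\mu_2\norm{\Bv}_{\ell_1}$ and the smooth part $\frac{\beta_5}{2}\norm{\Bh\ast\Bv-\Bb}^2_{\ell_2}+\frac{\beta_7}{2}\norm{\Bv-\Bc}^2_{\ell_2}$, where $\Bb=\Bf-\Bh\ast\Bu-\Bh\ast\Brho-\Beps+\frac{\boldsymbol{\lambda}_{\boldsymbol 5}}{\beta_5}$ and $\Bc=\text{div}^-_S\vec{\Bg}-\frac{\boldsymbol{\lambda}_{\boldsymbol 7}}{\beta_7}$. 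The term $\frac{\beta_7}{2}\norm{\Bv-\Bc}^2_{\ell_2}$ is already an exact proximal term and is left untouched; the term $\frac{\beta_5}{2}\norm{\Bh\ast\Bv-\Bb}^2_{\ell_2}$ involves the non-trivial convolution operator $\Bh\ast(\cdot)$, so it is the one that gets linearized.

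Next I would apply Lemma \ref{lem:linearizeConvexFunc} to the normalized quadratic $\tilde g(\Bv)=\frac12\norm{\Bh\ast\Bv-\Bb}^2_{\ell_2}$, keeping the factor $\beta_5$ outside. Using that the adjoint of convolution by $\Bh$ is convolution by the time-reversed kernel $\check{\Bh}$ (the time reversal $\check h[\Bk]=h[-\Bk]$ has Fourier transform $H(\Bz^{-1})$), one has $\nabla_{\Bv}\tilde g(\Bv^{(\tau-1)})=\check{\Bh}\ast(\Bh\ast\Bv^{(\tau-1)}-\Bb)$, so the linearized surrogate of $\tilde g$ is, up to an additive constant, $\frac{1}{2\alpha^{(\tau)}}\norm{\Bv-\Ba}^2_{\ell_2}$ with $\Ba=\bigl(\delta-\alpha^{(\tau)}\check{\Bh}\ast\Bh\bigr)\ast\Bv^{(\tau-1)}+\alpha^{(\tau)}\check{\Bh}\ast\Bb$. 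The $\tau$-th subproblem thus becomes $\argmin_{\Bv}\bigl\{\mu_2\norm{\Bv}_{\ell_1}+\frac{\beta_5}{2\alpha^{(\tau)}}\norm{\Bv-\Ba}^2_{\ell_2}+\frac{\beta_7}{2}\norm{\Bv-\Bc}^2_{\ell_2}\bigr\}$.

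Then I would merge the two quadratics using the proposition on the sum of two $\ell_2$ functions proved above, with weights $\frac{\beta_5}{2\alpha^{(\tau)}}$ and $\frac{\beta_7}{2}$: the sum equals $\frac{\beta_5+\alpha^{(\tau)}\beta_7}{2\alpha^{(\tau)}}\norm{\Bv-\Bt_\Bv^{(\tau)}}^2_{\ell_2}$ plus a constant, where $\Bt_\Bv^{(\tau)}=\frac{\beta_5\Ba+\alpha^{(\tau)}\beta_7\Bc}{\beta_5+\alpha^{(\tau)}\beta_7}$. Substituting the expressions for $\Ba$, $\Bb$ and $\Bc$ (and writing $\text{div}^-_S\vec{\Bg}$ in its matrix form in terms of $\BDt$ and $\BDoT$) reproduces exactly the $\Bt_\Bv^{(\tau)}$ stated in the proposition. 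Dropping the constant and factoring out $\frac{\beta_5+\alpha^{(\tau)}\beta_7}{\alpha^{(\tau)}}$ turns the subproblem into $\frac{\mu_2\alpha^{(\tau)}}{\beta_5+\alpha^{(\tau)}\beta_7}\norm{\Bv}_{\ell_1}+\frac12\norm{\Bv-\Bt_\Bv^{(\tau)}}^2_{\ell_2}$ up to a positive scalar, whose minimizer is, by the scalar soft-thresholding formula recalled earlier (cf. also Lemma \ref{lem:L1-projection}), $\Bv^{(\tau)}=\Shrink\bigl(\Bt_\Bv^{(\tau)},\,\frac{\mu_2\alpha^{(\tau)}}{\beta_5+\alpha^{(\tau)}\beta_7}\bigr)$, as claimed.

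The only real obstacle is bookkeeping: one must carry the step size $\alpha^{(\tau)}$ consistently through both the linearization and the quadratic merge so that the coefficients in $\Bt_\Bv^{(\tau)}$ come out as $\beta_5$ and $\alpha^{(\tau)}\beta_7$ rather than $\beta_5$ and $\beta_7$, and one must keep straight that the adjoint of $\Bh\ast(\cdot)$ is $\check{\Bh}\ast(\cdot)$; beyond this there is no analytic difficulty once the proximal--gradient reading is adopted.
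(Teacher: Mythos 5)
Your proposal is correct and follows essentially the same route as the paper's own proof: both linearize only the convolution quadratic $\frac{\beta_5}{2}\norm{\Bh\ast\Bv-\Bb}^2_{\ell_2}$ via Lemma \ref{lem:linearizeConvexFunc} (with gradient $\check{\Bh}\ast(\Bh\ast\Bv^{(\tau-1)}-\Bb)$), keep the $\beta_7$ term exact, merge the two quadratics into a single proximal term with center $\Bt_\Bv^{(\tau)}$, and conclude by soft-thresholding with level $\frac{\mu_2\alpha^{(\tau)}}{\beta_5+\alpha^{(\tau)}\beta_7}$. The bookkeeping of $\alpha^{(\tau)}$ that you flag as the only delicate point is handled identically in the paper.
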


\begin{proof}
 
According to Lemma \ref{lem:linearizeConvexFunc}, a linearized version of the ``$\Bv$-problem'' is
\begin{align*}
 \Bv^* &= \argmin_{\Bv \in X} \bigg\{  
 \frac{\mu_2}{\beta_5} \norm{\Bv}_{\ell_1} 
 + \underbrace{ \frac{1}{2} \norm{ \Bh \ast \Bv - \Big( \Bf - \Bh \ast \Bu - \Bh \ast \Brho - \Beps + \frac{\boldsymbol{\lambda}_{\boldsymbol 5}}{\beta_5} \Big) }^2_{\ell_2} }_{\displaystyle = q(\Bv)}
 + \frac{\beta_7}{2\beta_5} \norm{ \Bv - \Big( \text{div}^-_S \vec{\Bg} - \frac{\boldsymbol{\lambda}_{\boldsymbol 7}}{\beta_7} \Big) }^2_{\ell_2}
 \bigg\} 
 \\
 \Leftrightarrow~ 
 \Bv^{(\tau)} &= \argmin_{\Bv \in X} \Bigg\{ 
 \frac{\mu_2}{\beta_5} \norm{\Bv}_{\ell_1}  
 + \frac{1}{2 \alpha^{(\tau)}} \bigg|\bigg| \Bv - \Big( \Bv^{(\tau-1)} 
 - \alpha^{(\tau)} \overbrace{ \check{\Bh} \ast \Big[ \Bh \ast \Bv^{(\tau-1)} - \big( \Bf - \Bh \ast \Bu - \Bh \ast \Brho - \Beps + \frac{\boldsymbol{\lambda}_{\boldsymbol 5}}{\beta_5} \big) \Big] }
                            ^{= \frac{\partial q(\Bv)}{\partial \Bv} \mid_{\Bv = \Bv^{(\tau-1)}} }
 \Big) 
 \bigg |\bigg | ^2_{\ell_2}
 \\& \qquad \qquad \quad
 + \frac{\beta_7}{2\beta_5} \norm{ \Bv - \Big( \text{div}^-_S \vec{\Bg} - \frac{\boldsymbol{\lambda}_{\boldsymbol 7}}{\beta_7} \Big) }^2_{\ell_2}
 \Bigg\} \,,~~ \tau = 1, \ldots 
 \\&
 = \argmin_{\Bv \in X} \Bigg\{ 
 \norm{\Bv}_{\ell_1}  
 + \frac{\beta_5 + \alpha^{(\tau)} \beta_7}{2 \mu_2 \alpha^{(\tau)}} \norm{ \Bv - \Bt_\Bv }^2_{\ell_2}
 \Bigg\} 
\end{align*}
with
\begin{align*}
 \Bt_\Bv = \frac{ \beta_5 }{ \beta_5 + \alpha^{(\tau)} \beta_7 }
 \Big( \Bv^{(\tau-1)} - \alpha^{(\tau)} \check{\Bh} \ast \Big[ \Bh \ast \Bv^{(\tau-1)} - \big( \Bf - \Bh \ast \Bu - \Bh \ast \Brho - \Beps + \frac{\boldsymbol{\lambda}_{\boldsymbol 5}}{\beta_5} \big) \Big] \Big) 
 + \frac{ \beta_7 \alpha^{(\tau)} }{ \beta_5 + \alpha^{(\tau)} \beta_7 }
 \Big( \text{div}^-_S \vec{\Bg} - \frac{\boldsymbol{\lambda}_{\boldsymbol 7}}{\beta_7} \Big) \,.
\end{align*}
Thus, a solution of the $\Bv$-problem is defined at iteration $\tau$ as
\begin{align*}
 \Bv^{(\tau)} = \Shrink \Big( \Bt_\Bv^{(\tau)} \,,~ \frac{\mu_2 \alpha^{(\tau)}}{\beta_5 + \alpha^{(\tau)} \beta_7} \Big) \,,~~ \tau = 1, \ldots 
\end{align*}

\end{proof}

\begin{prop}  \label{prop:problem:g}

The ``$\vec \Bg$-problem'' 
\begin{align*}
 \min_{\vec{\Bg} \in X^S} \bigg\{ \mathscr F(\vec{\Bg}) := 
 \frac{\beta_6}{2} \norm{ \vec{\Bw} - \vec{\Bg} + \frac{\vec{\boldsymbol{\lambda}}_{\boldsymbol 6}}{\beta_6}}^2_{\ell_2}
 + \frac{\beta_7}{2} \norm{ \Bv - \text{div}^-_S \vec{\Bg} + \frac{\boldsymbol{\lambda}_{\boldsymbol 7}}{\beta_7} }^2_{\ell_2} 
 \bigg\}
\end{align*}
has a minimizer as
\begin{align*}
G_s(\Bz) = \frac{ \cB_s(\Bz) }{ \cA_s(\Bz) } \,,~~ s = 0, \ldots, S-1 \,.
\end{align*}
with
\begin{align*}
 &\cA_s(\Bz) = \beta_6 + \beta_7 \abs{ \cos\left(\frac{\pi s}{S}\right)(z_2 - 1) + \sin\left(\frac{\pi s}{S}\right)(z_1 - 1) }^2 \,,
 \\
 &\cB_s(\Bz) = \beta_6 \left[ W_s(\Bz) + \frac{\Lambda_{6 s}(\Bz)}{\beta_6} \right]
 -\beta_7 \left[ \cos\left(\frac{\pi s}{S}\right)(z_2 - 1) + \sin\left(\frac{\pi s}{S}\right) (z_1 - 1) \right] \times
 \\&
 \left[ V(\Bz) + \sum_{s'=[0, S-1] \backslash \{s\}} \left[ \cos\left(\frac{\pi s'}{S}\right)(z_2^{-1} - 1) + \sin\left(\frac{\pi s'}{S}\right) (z_1^{-1} - 1) \right] G_{s'}(\Bz) + \frac{\Lambda_7(\Bz)}{\beta_7} \right] \,.
\end{align*}

\end{prop}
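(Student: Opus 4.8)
The plan is to follow the proof of Proposition~\ref{prop:problem:t} for the ``$\vec{\Bt}$-problem'' essentially verbatim in structure, since $\mathscr F(\vec{\Bg})$ is of the same type: a quadratic term $\frac{\beta_6}{2}\norm{\vec{\Bw}-\vec{\Bg}+\vec{\boldsymbol{\lambda}}_{\boldsymbol 6}/\beta_6}^2_{\ell_2}$ that is diagonal in $\vec{\Bg}$, plus a quadratic term $\frac{\beta_7}{2}\norm{\Bv-\text{div}^-_S\vec{\Bg}+\boldsymbol{\lambda}_{\boldsymbol 7}/\beta_7}^2_{\ell_2}$ that couples the components of $\vec{\Bg}$ through the directional divergence. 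First I would write the Euler--Lagrange equation $\partial\mathscr F(\vec{\Bg})/\partial\vec{\Bg}=0$, using that the adjoint of $\text{div}^-_S$ is $-\nabla^+_S=-[\partial_s^+]_{s=0}^{S-1}$ (Section~2), which gives, for each direction $s=0,\ldots,S-1$,
\begin{align*}
 \beta_6\Big[\Bg_s-\Bw_s-\frac{\boldsymbol{\lambda}_{\boldsymbol 6 s}}{\beta_6}\Big]
 +\beta_7\,\partial_s^+\Big[\Bv-\text{div}^-_S\vec{\Bg}+\frac{\boldsymbol{\lambda}_{\boldsymbol 7}}{\beta_7}\Big]=0\,.
\end{align*}

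Next I would isolate the self-coupling operator by splitting $\text{div}^-_S\vec{\Bg}=\partial_s^-\Bg_s+\sum_{s'\neq s}\partial_{s'}^-\Bg_{s'}$, turning the above into the linear operator equation
\begin{align*}
 \big[\beta_6-\beta_7\,\partial_s^+\partial_s^-\big]\,\Bg_s
 =\beta_6\Big[\Bw_s+\frac{\boldsymbol{\lambda}_{\boldsymbol 6 s}}{\beta_6}\Big]
 -\beta_7\,\partial_s^+\Big[\Bv-\sum_{s'\neq s}\partial_{s'}^-\Bg_{s'}+\frac{\boldsymbol{\lambda}_{\boldsymbol 7}}{\beta_7}\Big]\,.
\end{align*}
Applying the discrete Fourier transform with the symbols $\partial_s^+\leftrightarrow\cos(\frac{\pi s}{S})(z_2-1)+\sin(\frac{\pi s}{S})(z_1-1)$ and $\partial_s^-\leftrightarrow-[\cos(\frac{\pi s}{S})(z_2^{-1}-1)+\sin(\frac{\pi s}{S})(z_1^{-1}-1)]$, together with $\abs{z_i-1}^2=(z_i-1)(z_i^{-1}-1)$, the operator $\beta_6-\beta_7\partial_s^+\partial_s^-$ becomes the strictly positive multiplier $\cA_s(\Bz)=\beta_6+\beta_7\abs{\cos(\frac{\pi s}{S})(z_2-1)+\sin(\frac{\pi s}{S})(z_1-1)}^2$ and the right-hand side becomes exactly $\cB_s(\Bz)$ as in the statement; dividing yields $G_s(\Bz)=\cB_s(\Bz)/\cA_s(\Bz)$. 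Since $\mathscr F$ is strictly convex and $\cA_s(\Bz)>0$ for every $\Bome\in[-\pi,\pi]^2$, this minimizer exists and is unique, and $\Bg_s=\RE[\cF^{-1}\{\cB_s/\cA_s\}]$ recovers the real spatial-domain solution.

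The only (mild) obstacle is the bookkeeping of the implicit coupling: $G_s(\Bz)$ still appears on the right-hand side through the term $\sum_{s'\neq s}\partial_{s'}^-\Bg_{s'}$, so the formula is not closed but is a Gauss--Seidel-type sweep over $s=0,\ldots,S-1$ (precisely the role played by the $T_{l'}(\Bz)$ terms in Proposition~\ref{prop:problem:t}). One must also be careful that the adjoint $\partial_{s'}^-$ carries the factors $z_1^{-1},z_2^{-1}$, which is what produces the $(z_2^{-1}-1)$ and $(z_1^{-1}-1)$ appearing inside $\cB_s(\Bz)$. No ideas beyond those already used for the ``$\Bu$-problem'' and the ``$\vec{\Bt}$-problem'' are needed.
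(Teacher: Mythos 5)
Your proposal is correct and follows essentially the same route as the paper's proof: write the Euler--Lagrange equation using $(\text{div}^-_S)^* = -\nabla^+_S$, isolate the diagonal operator $\beta_6 - \beta_7\partial_s^+\partial_s^-$ by splitting off the $s'\neq s$ coupling terms, and invert in the Fourier domain where the symbol becomes $\cA_s(\Bz)$. Your added remarks on strict convexity and the Gauss--Seidel character of the cross-coupling are consistent with (and slightly more explicit than) what the paper does.
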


\begin{proof}

Given
$\vec{\Bg} = \big[ \Bg_s \big]_{s=0}^{S-1} \,,
\vec{\Bw} = \big[ \Bw_s \big]_{s=0}^{S-1} \,, \vec{\boldsymbol{\lambda}}_{\boldsymbol 6} = \big[ \boldsymbol{\lambda}_{\boldsymbol 6s} \big]_{s=0}^{S-1}$
and $\nabla_S^+ = \big[ \partial_s^+ \big]_{s=0}^{S-1}$, 
the Euler-Lagrange equation is
\begin{align*}
 &0 = \frac{\partial {\mathscr F}(\vec{\Bg})}{\partial \vec{\Bg}} = 
 - \beta_6 \Big[ \vec{\Bw} - \vec{\Bg} + \frac{ \vec{\boldsymbol{\lambda}}_{\boldsymbol 6} }{ \beta_6 } \Big]
 - \beta_7 \underbrace{ \frac{ \partial \Big\{ \text{div}^-_S \vec{\Bg} \Big\} }{ \partial \vec{\Bg} } }
                     _{ = \big( \text{div}^-_S \big)^* \delta = - \nabla_S^+ \delta }
 \Big[ \Bv - \text{div}^-_S \vec{\Bg} +\frac{\boldsymbol{\lambda_7}}{\beta_7} \Big]
\end{align*}
Given $s = 1, \ldots, S-1$, we have
\begin{align*}
 0 &= -\beta_6 \Big[ \Bw_s - \Bg_s + \frac{ \boldsymbol{\lambda}_{\boldsymbol 6 s} }{ \beta_6 } \Big]
 + \beta_7 \partial_s^+ \Big[ \Bv - \underbrace{ \sum_{s'=0}^{S-1} \partial^-_{s'} \Bg_{s'} }_{= \text{div}^-_S \vec{\Bg} } +\frac{\boldsymbol{\lambda_7}}{\beta_7} \Big] 
 \\
 \Leftrightarrow 
 \Bg_s &= \Big[ \beta_6 - \beta_7 \partial^+_s \partial^-_s \text{Id} \Big]^{-1}
         \bigg[ -\beta_7 \partial_s^+ \Big[ \Bv - \sum_{s'=[0, S-1] \backslash \{s\}} \partial^-_s \Bg_s + \frac{\boldsymbol{\lambda_7}}{\beta_7} \Big]
                + \beta_6 \Big[ \Bw_s + \frac{\boldsymbol{\lambda}_{\boldsymbol 6 s}}{\beta_6} \Big]
         \bigg]
\end{align*}
Thus, a solution of the "$\vec{\Bg}$-problem" is obtained in the Fourier domain.

\end{proof}

\begin{prop}  \label{prop:VCPyramid:problem:u:spatial}
 A solution of the "$\Bu$-problem" (\ref{eq:problem:u}) can be rewritten in a form of sampling theory as in equation (\ref{eq:VCPyramid:problem:u:spatial:1}).
\end{prop}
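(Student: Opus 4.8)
The idea is to start from the solution of the ``$\Bu$-problem'' already obtained in Proposition~\ref{prop:problem:u}, namely $U(\Bz) = \cY(\Bz)/\cX(\Bz)$ with the explicit $\cX, \cY$, and to reorganize the right-hand side into the frame-based form~(\ref{eq:VCPyramid:problem:u:spatial:1}) by imposing the parameter coupling $\beta_2 = c_{25}\beta_5$ and $\beta_3 = c_{34}\beta_4$. First I would substitute, at iteration $\tau$, the values $\Br_l = \Br_l^{(\tau-1)}$ (given by the $\vec\Br$-problem, i.e. the shrinkage expression with threshold $(\lambda_1+\beta_1)/\beta_2$), $\Bv = \Bv^{(\tau-1)}$, $\Brho = \Brho^{(\tau-1)}$, $\Beps = \Beps^{(\tau-1)}$ and $\boldsymbol\lambda$'s at step $\tau-1$ into $\cY(\Bz)$. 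Dividing through by $\beta_5$ and using $\beta_2/\beta_5 = c_{25}$, the denominator becomes $\beta_5\big(\abs{H(\Bz)}^2 + c_{25}\sum_{l'}\abs{\Psi^L_{l'}(\Bz)}^2\big)$, so that $1/\cX(\Bz) = \beta_5^{-1}\Phi^{L,c_{25}}(\Bz)$ with $\Phi^{L,c_{25}}$ exactly the scaling function defined in~(\ref{eq:VCPyramid:problem:frameElements:u:scalingfunc:1}).

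Next I would split $\cY(\Bz)/\beta_5$ into its two natural pieces. The $\beta_5 H(\Bz^{-1})[\,\cdot\,]$ term, once multiplied by $\Phi^{L,c_{25}}(\Bz)$, becomes $\Phi^{L,c_{25}}(\Bz)\,\check H(\Bz)\big[F - HV^{(\tau-1)} - HP^{(\tau-1)} - \cE^{(\tau-1)} + \Lambda_5^{(\tau-1)}/\beta_5\big]$, which inverse-transforms to the first line of~(\ref{eq:VCPyramid:problem:u:spatial:1}) (the convolution $\phi^{L,c_{25}} \ast \check h \ast(\cdots)$). For the $\beta_2\sum_l[\cos(\cdot)(z_2^{-1}-1)+\sin(\cdot)(z_1^{-1}-1)][R_l + \Lambda_{2l}/\beta_2]$ term, I note that the bracket $[\cos(\cdot)(z_2^{-1}-1)+\sin(\cdot)(z_1^{-1}-1)]$ is the Fourier symbol of $-\partial_l^-\delta$, and that $c_{25}\cdot(-\partial_l^-)\phi^{L,c_{25}}$ has symbol $\tilde\Psi^{L,c_{25}}_l(\Bz^{-1})$ by~(\ref{eq:VCPyramid:problem:frameElements:u:dualwavelet:1}); dividing $\beta_2\cdot(\text{symbol of }\partial_l^-)$ by $\cX = \beta_5/\Phi^{L,c_{25}}$ and using $\beta_2/\beta_5 = c_{25}$ yields precisely $\tilde\Psi^{L,c_{25}}_l(\Bz^{-1})$ times $[R_l + \Lambda_{2l}/\beta_2]$. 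Inverse-transforming and recognizing $R_l^{(\tau-1)} = \Br_l^{(\tau-1)}$ as the shrinkage output with argument $\partial_l^+ u^{(\tau-1)} - \lambda_{2l}^{(\tau-1)}/\beta_2 + \frac{\lambda_1^{(\tau-1)}+\beta_1}{\beta_2}y_l^{(\tau-1)}$ (whose symbol involves $\Psi^L_l(\Bz)$ acting on $U^{(\tau-1)}$), one gets the sum $\sum_{l=0}^{L-1}\big(\check{\tilde\psi}^{L,c_{25}}_l \ast[\Shrink(\psi^L_l \ast u^{(\tau-1)} - \cdots) + \lambda_{2l}^{(\tau-1)}/\beta_2]\big)$, the second line of~(\ref{eq:VCPyramid:problem:u:spatial:1}). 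Finally I would take $\RE[\cF^{-1}\{\cdot\}]$ of the whole expression, as in Proposition~\ref{prop:problem:u}, and verify the $y_l^{(\tau-1)}$ identity~(\ref{eq:VCPyramid:problem:y:1}) by applying the same bookkeeping to the $\vec\By$-problem and the $\vec\Bt$-problem solutions, where the $\xi^{L,c_{34}}_l, \theta^{L,c_{34}}_l, \check{\tilde\theta}^L_l$ frames of~(\ref{eq:FilterBanks:XiThetaTheta_tilde:1:1})--(\ref{eq:FilterBanks:XiThetaTheta_tilde:3:1}) appear by the identical algebraic mechanism (here $\beta_3 = c_{34}\beta_4$ plays the role $\beta_2 = c_{25}\beta_5$ played above, and there is no blur since $H$ does not enter the $\vec\Bt$-problem denominator).

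The step I expect to be the main obstacle is the careful tracking of the Lagrange-multiplier and shrinkage terms through the substitutions: the $\vec\Br$-problem solution is itself a shrinkage of an expression containing $\partial_l^+\Bu^{(\tau-1)}$ and $\By_l^{(\tau-1)}$, and $\By_l^{(\tau-1)}$ is in turn a projection of $\vec\By'^{(\tau-1)}$, which depends on $\Bt^{(\tau-2)}$, hence on $u^{(\tau-2)}$ — so one must be scrupulous about which iteration index each quantity carries, and about the fact that the effective threshold is $(\lambda_1^{(\tau-1)}+\beta_1)/\beta_2$ (not a fixed constant), which is why the statement emphasizes that $u^{(\tau)}$ depends on both $u^{(\tau-1)}$ and $u^{(\tau-2)}$. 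The frame algebra itself (symbols multiplying to give $\Phi, \tilde\Psi_l, \Psi_l$) is a routine consequence of the definitions and the unity conditions already recorded, so the real content is the orderly assembly, which is what I would present in detail.
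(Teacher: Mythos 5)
Your proposal is correct and follows essentially the same route as the paper's proof: couple the parameters ($\beta_2 = c_{25}\beta_5$, $\beta_3 = c_{34}\beta_4$), identify $\cX(\Bz)^{-1}$ with $\beta_5^{-1}\Phi^{L,c_{25}}(\Bz)$ after checking $\cX$ is bounded away from zero, split $\cY(\Bz)$ into the scaling part and the $\sum_l \tilde\Psi_l^{L,c_{25}}(\Bz^{-1})[R_l + \Lambda_{2l}/\beta_2]$ part, and back-substitute the shrinkage/projection solutions of the $\vec\Br$-, $\vec\By$- and $\vec\Bt$-problems with the correct iteration indices. The only cosmetic difference is ordering — the paper works through the $\vec\Bt$- and $\vec\By$-problems first and the $\Bu$-problem last, while you invert that order — which does not change the argument.
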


\begin{proof}

we analyze filter banks generated by the DMCD model at iteration $\tau$ in the listed order of Algorithm 1-4 in Appendix C.

{\bfseries a. The ``$\vec{\Bt}$-problem'':}

From (\ref{eq:problem:d:solution}) and a solution of the ``$\vec{\Bt}$-problem'' (\ref{eq:problem:t}), we have
{\normalsize 
\begin{align*} 
 \cN_l(\Bz) &\stackrel{\cF^{-1}}{\longleftrightarrow}
 \beta_4 \delta[\Bk] - \beta_3 \partial_l^- \partial_l^+ \delta[\Bk]
 \\
 \cM_l^{(\tau)}(\Bz) &\stackrel{\cF^{-1}}{\longleftrightarrow}
 \beta_4 \Big[ y_l^{(\tau-1)} - \frac{ \lambda_{4l}^{(\tau-1)} }{ \beta_4 } \Big] [\Bk]
  \\&
 -\beta_3 \partial_l^+ 
 \Big[ \underbrace{ \text{Shrink} \Big( \sum_{l'=0}^{L-1} \partial_{l'}^- t_{l'}^{(\tau-1)} - \frac{\lambda_3^{(\tau-1)}}{\beta_3}, \frac{1}{\beta_3} \Big) }
                 _{ = d^{(\tau)}[\Bk] }
       + \frac{ \lambda_3^{(\tau-1)} }{ \beta_3 } 
       - \sum_{l'=[0, L-1] \backslash \{l\}} \partial_{l'}^- t_{l'}^{(\tau-1)} \Big] [\Bk] 
\end{align*}
}
Due to
$
 0 < \beta_4 \leq \cN_l(e^{j\Bome}) < +\infty \,,~ \forall \Bome \in [-\pi, \pi]^2 \,,
$
and choose $\beta_3 = c_{34} \beta_4$,
a solution of the ``$\vec{\Bt}$-problem'' at a direction $l = 0, \ldots, L-1$ is rewritten as
\begin{align} \label{eq:VCPyramid:problem:t:1}
 T_l^{(\tau)}(\Bz) &= \Xi^{L, c_{34}}_l(\Bz) \Big[ Y_l^{(\tau-1)}(\Bz) - \frac{ \Lambda_{4l}^{(\tau-1)}(\Bz) }{ \beta_4 } \Big]   \notag
 \\&
 + \Theta^{L, c_{34}}_l(\Bz)
 \Big[ D^{(\tau)}(\Bz) + \sum_{l'=[0, L-1] \backslash \{l\}} \big[ \cos\left(\frac{\pi l'}{L}\right) (z_2^{-1} - 1) + \sin\left(\frac{\pi l'}{L}\right) (z_1^{-1} - 1) \big] T_{l'}^{(\tau-1)}(\Bz) + \frac{ \Lambda_3^{(\tau-1)}(\Bz) }{ \beta_3 } \Big] \,.
\end{align}
Given $\Bk \in \Omega$, the inverse Fourier transform of (\ref{eq:VCPyramid:problem:t:1}) at a direction $l = 0, \ldots, L-1$ is
\begin{align} \label{eq:VCPyramid:problem:t:2}
 t_l^{(\tau)}[\Bk] &= \Big( \xi^{L, c_{34}}_l \ast \big[ y_l^{(\tau-1)} - \frac{ \lambda_{4l}^{(\tau-1)} }{ \beta_4 } \big] \Big) [\Bk]   \notag
 \\&
 + \Big( \theta^{L, c_{34}}_l \ast \big[ \underbrace{ \text{Shrink} \Big( \sum_{l'=0}^{L-1} \check{\tilde \theta}^L_{l'} \ast t_{l'}^{(\tau-1)} - \frac{\lambda_3^{(\tau-1)}}{\beta_3} \,,~ \frac{1}{\beta_3} \Big) + \frac{ \lambda_3^{(\tau-1)} }{ \beta_3 } }
                                       _{\displaystyle = \ST \big( \sum_{l'=0}^{L-1} \check{\tilde \theta}^L_{l'} \ast t_{l'}^{(\tau-1)} \,, \frac{\lambda_3^{(\tau-1)}}{\beta_3} \,, \frac{ 1 }{ \beta_3 } \big) }
         - \sum_{l'=[0, L-1] \backslash \{l\}} \check{\tilde \theta}^L_{l'} \ast  t_{l'}^{(\tau-1)} \big]
   \Big) [\Bk] \,.
\end{align}
Frames at a direction $l = 0, \ldots, L-1$ are 
\begin{align} \label{eq:FilterBanks:XiThetaTheta_tilde:1}
 \Xi^{L, c_{34}}_l(\Bz) &= \frac{ 1 }{ 1 + c_{34} \abs{ \cos\left(\frac{\pi l}{L}\right) (z_2 - 1) + \sin\left(\frac{\pi l}{L}\right) (z_1 - 1) }^2 }  
 ~\stackrel{\cF^{-1}}{\longleftrightarrow}~  
 \xi^{L, c_{34}}_l[\Bk] = \left[ 1 - c_{34} \partial_l^- \partial_l^+ \right]^{-1} \delta[\Bk] \,,
 \\  \label{eq:FilterBanks:XiThetaTheta_tilde:2}
 \Theta^{L, c_{34}}_l(\Bz) &= 
 \frac{ -c_{34} \left[ \cos\left(\frac{\pi l}{L}\right) (z_2 - 1) + \sin\left(\frac{\pi l}{L}\right) (z_1 - 1) \right] }
      { 1 + c_{34} \abs{ \cos\left(\frac{\pi l}{L}\right) (z_2 - 1) + \sin\left(\frac{\pi l}{L}\right) (z_1 - 1) }^2 }
 ~\stackrel{\cF^{-1}}{\longleftrightarrow}~  
 \theta^{L, c_{34}}_l[\Bk] = -c_{34} \partial_l^+ \xi^{L, c_{34}}_l[\Bk] \,,
 \\  \label{eq:FilterBanks:XiThetaTheta_tilde:3}
 \tilde \Theta^L_l(\Bz^{-1}) &= - \left[ \cos\left(\frac{\pi l}{L}\right) (z_2^{-1} - 1) + \sin\left(\frac{\pi l}{L}\right) (z_1^{-1} - 1) \right]
 ~\stackrel{\cF^{-1}}{\longleftrightarrow}~  
 \check{\tilde \theta}^L_l[\Bk] = \partial_l^- \delta[\Bk] \,.
\end{align}
Note that these frames satisfy the unity condition as
\begin{align} \label{eq:FilterBanks:XiThetaTheta_tilde:unitycondition}
 &\Xi^{L, c_{34}}_l(\Bz) + \Theta^{L, c_{34}}_l(\Bz) \tilde \Theta^L_l(\Bz^{-1}) = 1 \,,
 \\
 & \Xi^{L, c_{34}}_l(e^{j\mathbf 0}) = 1 
 \text{ and }
 \Theta^{L, c_{34}}_l(e^{j\mathbf 0}) = \tilde \Theta^L_l(e^{j\mathbf 0}) = 0 \,.   \notag
\end{align}
Figure \ref{fig:MDCD_let:FilterBanks:c0_1} and \ref{fig:MDCD_let:FilterBanks:c10} illustrate the spectrum of these filter banks.


{\bfseries b. The ``$\vec{\By}$-problem'':}

Given $\Bk \in \Omega$, a solution of the ``$\vec{\By}$-problem'' (\ref{eq:problem:y}) (at iteration $\tau$) is rewritten as
\begin{align*}
 \By_l^{(\tau)} &= 
 \begin{cases}
  \By_l^{\prime (\tau)} \,, & \abs{\vec{\By}^{\prime(\tau)}} \leq 1  \\
  \frac{ \By_l^{\prime(\tau)} }{ \abs{\vec{\By}^{\prime(\tau)}} } \,, & \abs{\vec{\By}^{\prime(\tau)}} > 1
 \end{cases}
 \,,~~ l = 0, \ldots, L
 \,, \quad \text{and} \quad
 \begin{cases} 
  \By_l^{\prime(\tau)} &= \Bt_l^{(\tau)} + \frac{ \boldsymbol{\lambda}_{\boldsymbol 4 l}^{(\tau-1)} }{\beta_4} + \frac{\beta_1 + \boldsymbol{\lambda}_{\boldsymbol 1}^{(\tau-1)}}{\beta_4} \cdot^\times \Br_l^{(\tau)} 
  \\
  \abs{\vec{\By'}} &= \sqrt{ \sum_{l=0}^L \Big[ \Bt_l^{(\tau)} + \frac{ \boldsymbol{\lambda}_{\boldsymbol 4 l}^{(\tau-1)} }{\beta_4} + \frac{\beta_1 + \boldsymbol{\lambda}_{\boldsymbol 1}^{(\tau-1)}}{\beta_4} \cdot^\times \Br_l^{(\tau)} \Big]^{\cdot 2} } 
 \end{cases} 
 \\
 \Leftrightarrow~ 
 \vec{y}^{(\tau)}[\Bk] &= \text{Proj}_{[-1,1]} 
 \bigg[ \underbrace{
         \vec{t}^{(\tau)}[\Bk] + \frac{ \vec{\lambda}_4^{(\tau-1)}[\Bk] }{\beta_4} 
         + \frac{\beta_1 + \lambda_1^{(\tau-1)}[\Bk]}{\beta_4} \vec{r}^{(\tau)}[\Bk] 
        }_{ \displaystyle = \vec{y}^{\prime(\tau)}[\Bk] = \Big[ y_l^{\prime(\tau)}[\Bk] \Big]_{l=0}^L }
 \bigg] \,.
\end{align*}
By denoting 
$\vec{y}^{(\tau)} [\Bk] = \Big[ y_l^{(\tau)} [\Bk] \Big]_{l=0}^{L} \,,~
\vec{t}^{(\tau)} [\Bk] = \Big[ t_l^{(\tau)} [\Bk] \Big]_{l=0}^{L} \,,~
\vec{r}^{(\tau)} [\Bk] = \Big[ r_l^{(\tau)} [\Bk] \Big]_{l=0}^{L} \,,
$ 
and from (\ref{eq:VCPyramid:problem:t:2}) and a solution of the ``$\vec{\Br}$-problem'' at direction $l=0, \ldots, L-1$, 
we rewrite $\vec{y}^{(\tau)}[\Bk] = \text{Proj}_{[-1,1]} \Big[ \vec{y}^{\prime(\tau)}[\Bk] \Big]$ with
\begin{align} \label{eq:VCPyramid:problem:y}
 y_l^{\prime(\tau)}[\Bk] &= 
 t_l^{(\tau)}[\Bk] + \frac{ \lambda_{4l}^{(\tau-1)}[\Bk] }{\beta_4} 
 + \frac{\beta_1 + \lambda_1^{(\tau-1)}[\Bk]}{\beta_4} r_l^{(\tau)}[\Bk]     \notag
 \\&
 = \left( \xi^{L, c_{34}}_l \ast \left[ y_l^{(\tau-1)} - \frac{ \lambda_{4l}^{(\tau-1)} }{ \beta_4 } \right] \right) [\Bk]
 + \frac{ \lambda_{4l}^{(\tau-1)}[\Bk] }{\beta_4}                            \notag
 \\&
 + \left( \theta^{L, c_{34}}_l \ast \left[  \text{Shrink} \left( \sum_{l'=0}^{L-1} \check{\tilde \theta}^L_{l'} \ast t_{l'}^{(\tau-1)} - \frac{\lambda_3^{(\tau-1)}}{\beta_3} \,,~ \frac{1}{\beta_3} \right) + \frac{ \lambda_3^{(\tau-1)} }{ \beta_3 }
         - \sum_{l'=[0, L-1] \backslash \{l\}} \check{\tilde \theta}^L_{l'} \ast  t_{l'}^{(\tau-1)} \right]
   \right) [\Bk]                                                               \notag
 \\& 
 + \frac{\lambda_1^{(\tau-1)}[\Bk] + \beta_1}{\beta_4} 
   \Shrink \left( \partial_l^+ u^{(\tau-1)}[\Bk] - \frac{ \lambda_{2 l}^{(\tau-1)}[\Bk] }{ \beta_2 } 
                  + \frac{ \lambda_1^{(\tau-1)}[\Bk] + \beta_1 }{ \beta_2 } y_l^{(\tau-1)}[\Bk]
                  \,, \frac{ \lambda_1^{(\tau-1)}[\Bk] + \beta_1}{\beta_2} \right) \,,
\end{align}
where the definition of $\xi^{L, c_{34}}_l(\cdot) \,, \theta^{L, c_{34}}_l(\cdot)$ and $\tilde\theta^{L, c_{34}}_l(\cdot)$
are well defined in (\ref{eq:FilterBanks:XiThetaTheta_tilde:1})-(\ref{eq:FilterBanks:XiThetaTheta_tilde:unitycondition}).


{\bfseries c. The ``$\Bu$-problem'':}

From a solution of the ``$\Bu$-problem (\ref{eq:problem:u}), we have
\begin{align*} 
 \cX(\Bz) &~\stackrel{\cF^{-1}}{\longleftrightarrow}~ 
 \beta_5 \big( h \ast \check{h} \big)[\Bk] -\beta_2 \sum_{l=0}^{L-1} \partial_l^- \partial_l^+ \delta[\Bk]
 \\
 \cY^{(\tau)}(\Bz) &~\stackrel{\cF^{-1}}{\longleftrightarrow}~  
 \beta_5 \Big( \check{h} \ast \Big[ f - h \ast v^{(\tau-1)} - h \ast \rho^{(\tau-1)} - \epsilon^{(\tau-1)} + \frac{\lambda_5^{(\tau-1)}}{\beta_5} \Big] \Big) [\Bk]
 \\&
 - \beta_2 \sum_{l=0}^{L-1} \partial_l^- 
 \Big[ \underbrace{ \Shrink \Big( \partial_l^+ u^{(\tau-1)}[\Bk] - \frac{ \lambda_{2 l}^{(\tau-1)} [\Bk] }{ \beta_2 } + \frac{ \lambda_1^{(\tau-1)}[\Bk] + \beta_1 }{ \beta_2 } y_l^{(\tau-1)} [\Bk]
                     \,,~ \frac{ \lambda_1^{(\tau-1)}[\Bk] + \beta_1}{\beta_2} \Big) 
                  }_{ = r_l^{(\tau)}[\Bk] }
       + \frac{ \lambda_{2 l}^{(\tau-1)}[\Bk] }{\beta_2} \Big]
\end{align*}
To avoid singularity, we check $\cX(\Bz)$ at $\Bome = \mathbf 0$ as
\begin{align*}
 \cX(e^{j\mathbf 0}) &= \beta_2 \sum_{l=0}^{L-1} \abs{ \cos\left(\frac{\pi l}{L}\right)(e^{j0} - 1) + \sin\left(\frac{\pi l}{L}\right)(e^{j0} - 1) }^2 + \beta_5 \abs{H(e^{j \mathbf 0})}^2
 = \beta_5 \abs{H(e^{j \mathbf 0})}^2 \,.
\end{align*}
If a blur operator has $\abs{H(e^{j \mathbf 0})}^2 > 0$ (usually $H(e^{j \mathbf 0}) = 1$ because $H(e^{j\Bome})$ often plays as a lowpass kernel in the Fourier domain),
then the function $\cX(\Bz)$ satisfies 
\begin{align*}
 0 < \beta_5 \abs{H(e^{j \mathbf 0})}^2 \leq \cX(\Bz) < +\infty \,,~ \forall \Bome \in [-\pi \,, \pi]^2 \,.
\end{align*}
Note that this is similar to a condition of Riesz basis.

\noindent
Thus, a solution of the ``$\Bu$-problem'' is rewritten as
\begin{align} \label{eq:VCPyramid:problem:u}
 &U^{(\tau)}(\Bz) = \cX^{-1}(\Bz) \cY^{(\tau)}(\Bz)    \notag
 \\&
 = \Phi^{L, c_{25}}(\Bz) H(\Bz^{-1}) \left[ F(\Bz) - H(\Bz) V^{(\tau-1)}(\Bz) - H(\Bz) P^{(\tau-1)}(\Bz) - \cE^{(\tau-1)}(\Bz) + \frac{\Lambda_5^{(\tau-1)}(\Bz)}{\beta_5} \right]   \notag
 \\& \quad
 + \sum_{l=0}^{L-1} \tilde\Psi_l^{L, c_{25}}(\Bz^{-1}) \left[ R_l^{(\tau)}(\Bz) + \frac{ \Lambda_{2 l}^{(\tau-1)}(\Bz) }{\beta_2} \right] \,.
\end{align}
Given $\Bk \in \Omega$, the inverse Fourier transform of (\ref{eq:VCPyramid:problem:u}) is
\begin{align} \label{eq:VCPyramid:problem:u:spatial}
 &u^{(\tau)}[\Bk] = \Big( \phi^{L, c_{25}} \ast \check{h} \ast \big( f - h \ast v^{(\tau-1)} - h \ast \rho^{(\tau-1)} - \epsilon^{(\tau-1)} + \frac{\lambda_5^{(\tau-1)}}{\beta_5} \big) \Big)[\Bk]   \notag
 \\&
 + \sum_{l=0}^{L-1} \check{\tilde{\psi}}^{L, c_{25}}_l [\Bk] \ast 
 \bigg[ \Shrink \Big( \big(\psi^L_l \ast u^{(\tau-1)}\big)[\Bk] - \frac{ \lambda_{2 l}^{(\tau-1)} [\Bk] }{ \beta_2 } + \frac{ \lambda_1^{(\tau-1)}[\Bk] + \beta_1 }{ \beta_2 } y_l^{(\tau-1)} [\Bk]
                     \,,~ \frac{ \lambda_1^{(\tau-1)}[\Bk] + \beta_1}{\beta_2} \Big)       
       + \frac{ \lambda_{2 l}^{(\tau-1)}[\Bk] }{\beta_2} \bigg] .
\end{align}
At a direction $l = 0, \ldots, L-1$, we have 
$y_l^{(\tau-1)} [\Bk] = \mathbb P_{[-1,1]} \Big[ \vec{y}\prime^{(\tau-1)}[\Bk] \Big]$
with $\vec{y}\prime^{(\tau-1)}[\Bk] = \Big[ y_l^{\prime(\tau-1)}[\Bk] \Big]_{l=0}^{L-1}$ 
as defined in (\ref{eq:VCPyramid:problem:y}) at iteration $(\tau -1)$.

\noindent
Note that $u^{(\tau)}$ in (\ref{eq:VCPyramid:problem:u:spatial}) is updated from 
$u^{(\tau-1)}$ and $u^{(\tau-2)}$ at every iteration $\tau$.
Frames (at direction $l = 0, \ldots, L-1$) are well defined in the Fourier domain
\begin{align}  \label{eq:VCPyramid:problem:frameElements:u:scalingfunc}
 \Phi^{L, c_{25}}(\Bz) &= \frac{1}{\displaystyle c_{25} \sum_{l=0}^{L-1} \abs{ \cos\left(\frac{\pi l}{L}\right)(z_2 - 1) + \sin\left(\frac{\pi l}{L}\right)(z_1 - 1) }^2 + \abs{H(\Bz)}^2}  
 ~\stackrel{\cF^{-1}}{\longleftrightarrow}~
 \phi^{L, c_{25}}[\Bk] \,,
 \\  \label{eq:VCPyramid:problem:frameElements:u:dualwavelet}
 \tilde{\Psi}^{L, c_{25}}_l(\Bz^{-1}) 
 &= \frac{\displaystyle c_{25} \left[ \cos\left(\frac{\pi l}{L}\right)(z_2^{-1} - 1) + \sin\left(\frac{\pi l}{L}\right)(z_1^{-1} - 1) \right] }
         {\displaystyle c_{25} \sum_{l'=0}^{L-1} \abs{ \cos\left(\frac{\pi l'}{L}\right)(z_2 - 1) + \sin\left(\frac{\pi l'}{L}\right)(z_1 - 1) }^2 + \abs{H(\Bz)}^2 }
 ~\stackrel{\cF^{-1}}{\longleftrightarrow}~ 
 \check{\tilde \psi}^{L, c_{25}}_l[\Bk] = - c_{25} \partial_l^- \phi^{L, c_{25}}[\Bk]  \,,
 \\  \label{eq:VCPyramid:problem:frameElements:u:wavelet}
 \Psi^L_l(\Bz) &= \cos\left(\frac{\pi l}{L}\right)(z_2 - 1) + \sin\left(\frac{\pi l}{L}\right)(z_1 - 1)
 ~\stackrel{\cF^{-1}}{\longleftrightarrow}~
 \psi^L_l[\Bk] = \partial_l^+ \delta[\Bk] \,.
\end{align}
Given an impulse response of a blur operator $H(\Bz)$,
these frames satisfy the unity conditions as
\begin{align*}
 & \abs{H(\Bz)}^2 \Phi^{L, c_{25}}(\Bz) + \sum_{l=0}^{L-1} \tilde \Psi^{L, c_{25}}_l(\Bz^{-1}) \Psi^L_l(\Bz) = 1 \,,
 \\
 &\Phi^{L, c_{25}}(e^{j \mathbf 0}) = \frac{1}{ \abs{H(e^{j \mathbf 0})}^2 }
 \quad \text{and} \quad
 \tilde{\Psi}^{L, c_{25}}_l(e^{j \mathbf 0}) = \Psi^L_l(e^{j \mathbf 0}) = 0 \,.
\end{align*}
The definition of $\Xi^{L, c_{34}}_l(\Bz) \,, \Theta^{L, c_{34}}_l(\Bz)$ and $\tilde\Theta^{L, c_{34}}_l(\Bz)$ (for variable $y_l^{(\tau - 1)}[\Bk]$ in (\ref{eq:VCPyramid:problem:u:spatial}))
are in (\ref{eq:FilterBanks:XiThetaTheta_tilde:1})-(\ref{eq:FilterBanks:XiThetaTheta_tilde:3}) which satisfy 
the unity condition (\ref{eq:FilterBanks:XiThetaTheta_tilde:unitycondition}) in the Fourier domain.
Figure \ref{fig:MDCD_let:FilterBanks:c0_1} and \ref{fig:MDCD_let:FilterBanks:c10} 
depict the spectra of these frames in the Fourier domain for different values of parameter
$c_{25} = c_{34} = 0.1$ and $10$, respectively.

\end{proof}

\begin{prop}  \label{prop:VCPyramid:problem:v:spatial:1}
 A solution of the "$\Bv$-problem" (\ref{eq:problem:v}) can be rewritten as in equation (\ref{eq:VCPyramid:problem:v:spatial:1}) with two shrinkage operators due to $\norm{\Bv}_{\ell_1}$ and $\norm{ \Bg_s }_{\ell_1}$ in a minimization problem (\ref{eq:minimization:SDMCDD:2}). 
\end{prop}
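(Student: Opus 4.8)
The plan is to derive the sampling-theory form (\ref{eq:VCPyramid:problem:v:spatial:1}) directly from the closed form of the $\Bv$-problem already established in Proposition \ref{prop:problem:v}, in the same spirit as the treatment of the $\Bu$-problem in Proposition \ref{prop:VCPyramid:problem:u:spatial}. By Proposition \ref{prop:problem:v}, at iteration $\tau$ the minimizer of (\ref{eq:problem:v}) is $\Bv^{(\tau)} = \Shrink\big(\Bt_\Bv^{(\tau)}, \mu_2\alpha^{(\tau)}/(\beta_5+\alpha^{(\tau)}\beta_7)\big)$, and $\Bt_\Bv^{(\tau)}$ is a convex combination of a deblurring/data-fidelity term and the texture-synthesis term $\text{div}^-_S \vec{\Bg}^{(\tau)} - \boldsymbol{\lambda}_{\boldsymbol 7}^{(\tau-1)}/\beta_7$. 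The first step is purely notational: rewrite $\text{div}^-_S \vec{\Bg} = \sum_{s=0}^{S-1}\partial_s^-\Bg_s = \sum_{s=0}^{S-1} \check{\tilde\theta}^S_s \ast \Bg_s$, where $\check{\tilde\theta}^S_s[\Bk] = \partial_s^-\delta[\Bk]$ is the analysis filter from (\ref{eq:FilterBanks:XiThetaTheta_tilde:3:1}) with $L,l$ replaced by $S,s$. This already casts the data-fidelity part of $\Bt_\Bv^{(\tau)}$ into the form displayed in (\ref{eq:VCPyramid:problem:v:spatial:1}); what remains is to expose the filter-bank structure inside $\Bg_s^{(\tau)}$ itself.

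Next I would process the $\vec{\Bg}$-problem. Setting $\beta_7 = c_{67}\beta_6$ and invoking Proposition \ref{prop:problem:g}, the Euler--Lagrange equation diagonalises in the Fourier domain to $G_s(\Bz) = \cB_s(\Bz)/\cA_s(\Bz)$ with $\cA_s(\Bz) = \beta_6 + \beta_7\abs{\cos(\pi s/S)(z_2-1) + \sin(\pi s/S)(z_1-1)}^2$. Since $0 < \beta_6 \le \cA_s(e^{j\Bome}) < +\infty$ for all $\Bome\in[-\pi,\pi]^2$ --- a Riesz-type bound identical in structure to the one used for $\cX(\Bz)$ and $\cN_l(\Bz)$ in the proofs above --- one may divide by $\cA_s$, factor out $\beta_6$, and read off
\begin{align*}
 G_s^{(\tau)}(\Bz) = \Xi^{S,c_{67}}_s(\Bz)\Big[ W_s^{(\tau)}(\Bz) + \tfrac{\Lambda_{6s}^{(\tau-1)}(\Bz)}{\beta_6} \Big] + \Theta^{S,c_{67}}_s(\Bz)\Big[ V^{(\tau)}(\Bz) + \textstyle\sum_{s' \neq s}\tilde\Theta^S_{s'}(\Bz^{-1}) G_{s'}(\Bz) + \tfrac{\Lambda_7^{(\tau-1)}(\Bz)}{\beta_7} \Big],
\end{align*}
where $\Xi^{S,c_{67}}_s, \Theta^{S,c_{67}}_s, \tilde\Theta^S_s$ are the frames (\ref{eq:FilterBanks:XiThetaTheta_tilde:1:1})--(\ref{eq:FilterBanks:XiThetaTheta_tilde:3:1}) with $L,l,c$ replaced by $S,s,c_{67}$ and hence satisfy the unity condition (\ref{eq:FilterBanks:XiThetaTheta_tilde:unitycondition:1}). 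Taking inverse Fourier transforms, substituting the $\vec{\Bw}$-problem solution $\Bw_s^{(\tau)} = \Shrink\big(\Bg_s^{(\tau-1)} - \boldsymbol{\lambda}_{\boldsymbol 6 s}^{(\tau-1)}/\beta_6, \mu_1/\beta_6\big)$ from (\ref{eq:problem:w}), and using the constraint $\Bv = \text{div}^-_S \vec{\Bg}$ to fold $V^{(\tau)} + \sum_{s'\neq s}\tilde\Theta^S_{s'}(\Bz^{-1})G_{s'}$ back into the single-direction contribution $\check{\tilde\theta}^S_s \ast \Bg_s^{(\tau-1)}$, yields exactly (\ref{eq:FilterBanks:problem:g:spatial:1}). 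Finally, inserting this expression for $\Bg_s^{(\tau)}$ into $\Bt_\Bv^{(\tau)}$ and retaining the outer shrinkage gives (\ref{eq:VCPyramid:problem:v:spatial:1}), with the two $\Shrink$ operators traced back to $\norm{\Bv}_{\ell_1}$ (outer, with threshold $\mu_2\alpha^{(\tau)}/(\beta_5+\alpha^{(\tau)}\beta_7)$) and to $\norm{\Bg_s}_{\ell_1}$, realised through $\norm{\Bw_s}_{\ell_1}$ (inner, with threshold $\mu_1/\beta_6$).

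I expect the main obstacle to be bookkeeping rather than analysis: one must keep three iteration levels straight (the $\tau$ of the $\Bv$-update, the $\tau-1$ that feeds $\Bg_s^{(\tau)}$ through the $\vec{\Bg}$- and $\vec{\Bw}$-subproblems, and the multipliers evaluated at $\tau-1$), and one must check that after the simplification noted in the main text --- ``removing the shrinkage operator and the Lagrange multipliers'' --- the Gauss--Seidel coupling $\sum_{s'\neq s}$ in Proposition \ref{prop:problem:g} together with $\Bv = \text{div}^-_S \vec{\Bg}$ collapses to the clean two-filter recursion $g_s^{(\tau)} = \xi^{S,c_{67}}_s \ast g_s^{(\tau-1)} + \theta^{S,c_{67}}_s \ast \check{\tilde\theta}^S_s \ast g_s^{(\tau-1)}$; verifying consistency of this collapse with the unity condition (\ref{eq:FilterBanks:XiThetaTheta_tilde:unitycondition:1}) is the only non-mechanical point. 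The boundedness of $\cA_s$, the Fourier inversions, and the final assembly all follow the templates of Propositions \ref{prop:problem:v}, \ref{prop:problem:g} and \ref{prop:VCPyramid:problem:u:spatial} essentially verbatim.
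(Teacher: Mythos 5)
Your proposal follows the paper's own proof essentially step for step: it rests on the same three ingredients — the bound $0<\beta_6\le\cA_s(e^{j\Bome})$ with the choice $\beta_7=c_{67}\beta_6$, the substitution of the $\vec{\Bw}$-shrinkage into $\cB_s^{(\tau)}$, and the collapse $v^{(\tau-1)}-\sum_{s'\neq s}\partial_{s'}^-g_{s'}^{(\tau-1)}=\partial_s^-g_s^{(\tau-1)}$ via the constraint $\Bv=\text{div}^-_S\vec{\Bg}$ — before inserting $\Bg_s^{(\tau)}$ into $\Bt_\Bv^{(\tau)}$. The only discrepancy is cosmetic (you write $V^{(\tau)}$ where the paper's Gauss--Seidel update uses $V^{(\tau-1)}$), which falls under the iteration-index bookkeeping you already flag.
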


\begin{proof}

From a solution of the ``$\vec{\Bg}$-problem'' (\ref{eq:problem:g}), we have
\begin{align*}
 \cA_s(\Bz) &~\stackrel{\cF^{-1}}{\longleftrightarrow}~
 \beta_6 \delta[\Bk] - \beta_7 \partial_s^- \partial_s^+ \delta[\Bk]
 \\
 \cB_s^{(\tau)}(\Bz) &~\stackrel{\cF^{-1}}{\longleftrightarrow}~
 \beta_6 \Big[ \underbrace{\Shrink \Big( g_s^{(\tau-1)}[\Bk] - \frac{\lambda_{6 s}^{(\tau-1)}[\Bk]}{\beta_6} \,,~ \frac{\mu_1}{\beta_6} \Big)}_{= w_s^{(\tau)}[\Bk]} 
 + \frac{\lambda_{6 s}^{(\tau-1)}[\Bk]}{\beta_6} \Big]
 \\& \qquad
 -\beta_7 \partial_s^+ \Big[ \underbrace{ v^{(\tau-1)}[\Bk] - \sum_{s'=[0, S-1] \backslash \{s\}} \partial_{s'}^- g_{s'}^{(\tau-1)}[\Bk] }_{= \partial_s^- g_s^{(\tau-1)}[\Bk]}
 + \frac{\lambda_7^{(\tau-1)}[\Bk]}{\beta_7} \Big]          
\end{align*}
Due to $\cA_s(e^{j\mathbf 0}) = \beta_6 > 0$ and by choosing $\beta_7 = c_{67} \beta_6$, a solution of the $\vec{\Bg}$-problem 
(at a direction $s = 0, \ldots, S-1$) is well defined in the Fourier domain as
\begin{align}  \label{eq:FilterBanks:problem:u:Fourier}
 G_s^{(\tau)}(\Bz) &= \cA_s^{-1}(\Bz) \cB_s^{(\tau)}(\Bz)
 = \Xi^{S, c_{67}}_s(\Bz) \Big[ W_s^{(\tau)}(\Bz) + \frac{\Lambda_{6 s}^{(\tau-1)}(\Bz)}{\beta_6} \Big]
 + \Theta^{S, c_{67}}_s(\Bz) \Big[ \tilde \Theta^{S}_s(\Bz^{-1}) G_s^{(\tau-1)}(\Bz) + \frac{\Lambda_7^{(\tau-1)}(\Bz)}{\beta_7} \Big] \,.         
\end{align}
Its inverse Fourier transform is
\begin{align}  \label{eq:FilterBanks:problem:g:spatial}
 g_s^{(\tau)}[\Bk] &= \xi^{S, c_{67}}_s[\Bk] \ast \Big[ \Shrink \Big( g_s^{(\tau-1)}[\Bk] - \frac{\lambda_{6 s}^{(\tau-1)}[\Bk]}{\beta_6} \,,~ \frac{\mu_1}{\beta_6} \Big) + \frac{\lambda_{6 s}^{(\tau-1)}[\Bk]}{\beta_6} \Big]
 + \theta^{S, c_{67}}_s[\Bk] \ast \Big[ \check{\tilde \theta}^{S}_s[\Bk] \ast g_s^{(\tau-1)}[\Bk] + \frac{\lambda_7^{(\tau-1)}[\Bk]}{\beta_7} \Big] \,.         
\end{align}
Definition of frames  
$\xi^{S, c_{67}}_s(\cdot) \,, \theta^{S, c_{67}}_s(\cdot)$ and $\tilde \theta^S_s(\cdot)$ are well defined in
(\ref{eq:FilterBanks:XiThetaTheta_tilde:1:1})-(\ref{eq:FilterBanks:XiThetaTheta_tilde:unitycondition:1}).
And we finalize this proof by applying $\vec{\Bg}^{(\tau)}$ in equation (\ref{eq:FilterBanks:problem:g:spatial}) to a solution of the $\Bv$-problem (\ref{eq:problem:v}).

\end{proof}


\begin{prop} \label{prop:DMCDDMultiscaleSamplingTheory:u-problem}
 The discrete and continuous versions of the multiscale sampling theory for the "$\Bu$-problem" are defined in (\ref{label:multiScaleSamp:discreteprojection:prop}) and (\ref{label:multiScaleSamp:continuousprojection:prop}), respectively. 
\end{prop}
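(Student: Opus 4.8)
The plan is to reduce both identities to the single-scale filter-bank form of the ``$\Bu$-problem'' already established in Proposition~\ref{prop:VCPyramid:problem:u:spatial}, combined with a dilation (\`a-trous) bookkeeping for the Fourier transform. I would treat the discrete case (\ref{label:multiScaleSamp:discreteprojection:prop}) first and then copy the argument verbatim for the continuous case (\ref{label:multiScaleSamp:continuousprojection:prop}), the only substitutions being that $z_k^{-1}=\overline{z_k}$ on the torus is replaced by $(j\,\cdot\,)(j\,\cdot\,)=-(\,\cdot\,)^2$ and the $z$-transform by $\widehat{\,\cdot\,}(\Bome)$.

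First I would observe that the interpolant frames $\phi_\text{int},\psi_l,\check{\tilde\psi}_l$ are exactly the $\Bu$-problem frames (\ref{eq:VCPyramid:problem:frameElements:u:scalingfunc:1})--(\ref{eq:VCPyramid:problem:frameElements:u:wavelet:1}) specialised to $H(\Bz)\equiv 1$, so the single-scale unity relation $\Phi_\text{int}(\Bz)+\sum_{l=0}^{L-1}\tilde\Psi_l(\Bz^{-1})\Psi_l(\Bz)=1$ holds pointwise on $[-\pi,\pi]^2$: since $\cos(\pi l/L),\sin(\pi l/L)$ are real and $z_k^{-1}=\overline{z_k}$ for $\Bz=[e^{j\omega_1},e^{j\omega_2}]$, one gets $\tilde\Psi_l(\Bz^{-1})\Psi_l(\Bz)=c\,\abs{\Psi_l(\Bz)}^2\big/(1+c\sum_{l'}\abs{\Psi_{l'}(\Bz)}^2)$, and summing over $l$ produces the identity, while $\Phi_\text{int}(e^{j\mathbf 0})=1$ and $\Psi_l(e^{j\mathbf 0})=\tilde\Psi_l(e^{j\mathbf 0})=0$ follow from $z_k-1\to 0$; boundedness of all frames is immediate because the denominator is bounded away from $0$ and $\infty$ on the compact frequency domain. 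Next I would record the dilation step: taking $a\in\mathbb N_+$ so that $\Bz\mapsto\Bz^{a^i}$ is well defined on the lattice $\cI$, upsampling by $a$ realises this substitution in the spatial domain, which (by the standard dilation property of $\cF$, with the normalisation used in the paper) yields $\Phi(\Bz)=I^{-1}\sum_i\Phi_\text{int}(\Bz^{a^i})$, $\Psi_{il}(\Bz)=I^{-1/2}\Psi_l(\Bz^{a^i})$ and $\tilde\Psi_{il}(\Bz^{-1})=I^{-1/2}\tilde\Psi_l(\Bz^{-a^i})$; the wrap-around of the torus under $\Bz\mapsto\Bz^{a^i}$ is exactly the aliasing flagged in the statement, and the continuous case is identical with dilations $\Bx\mapsto a^{-i}\Bx$.

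Plugging these dilated frames into $\Phi(\Bz)+\sum_{i}\sum_{l}\tilde\Psi_{il}(\Bz^{-1})\Psi_{il}(\Bz)$ and using $I^{-1/2}\cdot I^{-1/2}=I^{-1}$ collapses the double sum to $I^{-1}\sum_{i=0}^{I-1}\big[\Phi_\text{int}(\Bz^{a^i})+\sum_l\tilde\Psi_l(\Bz^{-a^i})\Psi_l(\Bz^{a^i})\big]=I^{-1}\sum_{i=0}^{I-1}1=1$, each bracket being the single-scale identity evaluated at $\Bz^{a^i}$; the same substitution gives $\Phi(e^{j\mathbf 0})=1$ and $\Psi_{il}(e^{j\mathbf 0})=\tilde\Psi_{il}(e^{j\mathbf 0})=0$. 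To finish, I would apply $\cF$ to the right-hand side of (\ref{label:multiScaleSamp:discreteprojection:prop}), use the convolution theorem together with the time-reversal identity $\check g[\Bk]\stackrel{\cF}{\longleftrightarrow}G(\Bz^{-1})$ from Section~2 to rewrite $(f\ast\phi)[\Bk]+\sum_{i,l}(f\ast\check{\tilde\psi}_{il}\ast\psi_{il})[\Bk]$ as $F(\Bz)\big(\Phi(\Bz)+\sum_{i,l}\tilde\Psi_{il}(\Bz^{-1})\Psi_{il}(\Bz)\big)$, invoke the unity relation to see that this equals $F(\Bz)$, and invert $\cF$ to recover $f[\Bk]$; the continuous reconstruction (\ref{label:multiScaleSamp:continuousprojection:prop}) follows by the identical manipulation with $\widehat{\,\cdot\,}(\Bome)$.

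The step I expect to be the main obstacle is the dilation bookkeeping: one must pin down the normalisation constants precisely --- the spatial prefactor versus no weight in the frequency expression, and $I^{-1/2}$ for the (dual) wavelets versus $I^{-1}$ for the scaling function --- so that the Fourier-domain frames emerge exactly in the stated form, and one must make rigorous that upsampling by $a$ implements $\Bz\mapsto\Bz^{a^i}$ on the discrete lattice and that the resulting aliasing does not spoil the algebraic unity condition (it cannot, since that condition is an identity between rational functions of $\Bz^{a^i}$). Once this is settled, the single-scale computation and the collapse over $i$ are short, and the reconstruction identity is a one-line consequence of the convolution theorem.
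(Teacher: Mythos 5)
Your proposal is correct and follows essentially the same route as the paper: both reduce to the single-scale unity condition $\Phi_\text{int}(\Bz)+\sum_{l}\tilde\Psi_l(\Bz^{-1})\Psi_l(\Bz)=1$ obtained from the $\Bu$-problem frames with $H(\Bz)\equiv 1$ (the paper reaches this via the fixed point $\Bu^{(\tau)}=\Bu^{(\tau-1)}$ of the simplified iteration, you by direct algebra on the frames, which is equivalent), and then both define the dilated frames with the $I^{-1}$ and $I^{-1/2}$ normalisations so that the double sum collapses scale by scale to give the multiscale unity condition and hence the reconstruction identity via the convolution theorem.
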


\begin{proof}
 
{\bfseries 1. Proof for the discrete multiscale sampling theory (\ref{label:multiScaleSamp:discreteprojection:prop}):}
 
In order to build a form of multiscale sampling version generated by a solution of the "$\Bu$-problem", for easy calculation we consider (\ref{eq:VCPyramid:problem:u:spatial:1}) without blur operator, i.e. $h(\cdot) = \delta(\cdot)$, and simplify its notation by
 \begin{itemize}
 \item replacing $\Bh \ast \Bu = \Bf - \Bh \ast \Bv - \Bh \ast \Brho - \Beps$ (due to a condition of a decomposition in (\ref{eq:minimization:SDMCDD:1}))
 \item removing shrinkage operator, the Lagrange multipliers $(\boldsymbol{\lambda_1} \,, \vec{\boldsymbol \lambda}_{\boldsymbol 2} \,, \boldsymbol{\lambda_5})$ and $\vec{\By}^{(\tau-1)}$ \,,
 \item denoting $c = c_{25} \,, \tilde \psi_l = \tilde \psi_l^{L,c} \,, \psi_l = \psi_l^L \text{ and interpolant } \phi_\text{int} = \phi^{L, c}$.
\end{itemize}
Thus, we rewrite (\ref{eq:VCPyramid:problem:u:spatial:1}) as
\begin{align} \label{eq:multiScaleSamp:noscale_blur}
 &u^{(\tau)}[\Bk] = \big( u^{(\tau-1)} \ast \phi_\text{int} \big)[\Bk]   
 + \sum_{l=0}^{L-1} \big( u^{(\tau-1)} \ast \check{\tilde \psi}_l \ast \psi_l \big)[\Bk] \,.
\end{align}
At a convergence, i.e. when iteration $\tau$ goes to infinity, we have $\Bu^{(\tau)} = \Bu^{(\tau-1)} = \Bu$ and 
(\ref{eq:multiScaleSamp:noscale_blur}) is rewritten as
\begin{align} \label{eq:multiScaleSamp:noscale}
 &u[\Bk] = \big( u \ast \phi_\text{int} \big)[\Bk] + \sum_{l=0}^{L-1} \big( u \ast \check{\tilde \psi}_l \ast \psi_l \big)[\Bk] 
 ~\stackrel{\cF}{\longleftrightarrow}~ 
 U(\Bz) = U(\Bz) \Phi_\text{int}(\Bz) + \sum_{l=0}^{L-1} U(\Bz) \tilde{\Psi}_l(\Bz^{-1}) \Psi_l(\Bz) \,. 
\end{align}
From (\ref{eq:VCPyramid:problem:frameElements:u:scalingfunc:1})-(\ref{eq:VCPyramid:problem:frameElements:u:wavelet:1}) without blur operator, i.e. $H(\Bz) = 1$,
and definition of the impulse response of a discrete directional Laplacian operator, 
the interpolant $\phi_\text{int}(\cdot)$ and directional primal/dual wavelet $\psi_l(\cdot) \,, \tilde \psi_l(\cdot)$ 
with $l = 0 \,, \ldots \,, L-1$ are
\begin{align}
 \label{eq:FilterBanks:XiThetaTheta_tilde:multiScaleSamp:1}
 \phi_\text{int}[\Bk] &= \left[ c (-\Delta_{\text{d}L}) + 1 \right]^{-1} \delta[\Bk]
 ~\stackrel{\cF}{\longleftrightarrow}~
 \Phi_\text{int}(\Bz) = \frac{1}{\displaystyle 1 + c \sum_{l=0}^{L-1} \abs{ \sin\left(\frac{\pi l}{L}\right)(z_1 - 1) + \cos\left(\frac{\pi l}{L}\right)(z_2 - 1) }^2} \,,
 \\
 \label{eq:FilterBanks:XiThetaTheta_tilde:multiScaleSamp:2}
 \check{\tilde \psi}_l[\Bk] &= \underbrace{ - c \Big[ c (-\Delta_{\text{d}L}) + 1 \Big]^{-1} \partial_l^- \delta[\Bk] }
                                 _{\displaystyle = -c \partial_l^- \phi_\text{int}[\Bk] }
 ~\stackrel{\cF}{\longleftrightarrow}~
 \tilde \Psi_l(\Bz^{-1}) = \frac{\displaystyle c \left[ \sin\left(\frac{\pi l}{L}\right) (z_1^{-1} - 1) + \cos\left(\frac{\pi l}{L}\right) (z_2^{-1} - 1) \right] }
                           {\displaystyle 1 + c \sum_{l=0}^{L-1} \abs{ \sin\left(\frac{\pi l}{L}\right)(z_1 - 1) + \cos\left(\frac{\pi l}{L}\right)(z_2 - 1) }^2 } \,,
 \\
 \label{eq:FilterBanks:XiThetaTheta_tilde:multiScaleSamp:3}
 \psi_l[\Bk] &= \partial^+_l \delta[\Bk] 
 ~\stackrel{\cF}{\longleftrightarrow}~
 \Psi_l(\Bz) = \sin\left(\frac{\pi l}{L}\right) (z_1 - 1) + \cos\left(\frac{\pi l}{L}\right) (z_2 - 1) \,.
\end{align}
Note that the directional mother dual/primal wavelet plays as an operator-like wavelet of interpolant as
\begin{align*}
 &\psi[\Bk] = c (-\Delta_{\text{d}L}) \phi_\text{int}[\Bk] 
 ~\stackrel{\cF}{\longleftrightarrow}~
 \Psi(\Bz) = \sum_{l=0}^{L-1} \Psi_l(\Bz) \tilde \Psi_l(\Bz^{-1})
\end{align*}
and these frame elements satisfy a perfect reconstruction scheme, i.e. 
\begin{align}  \label{eq:FilterBanks:XiThetaTheta_tilde:multiScaleSamp:unitycondition}
 \Phi_\text{int}(\Bz) + \sum_{l=0}^{L-1} \Psi_l(\Bz) \tilde \Psi_l(\Bz^{-1}) = 1 \,.
\end{align}
From (\ref{eq:multiScaleSamp:noscale}), we defined a multiscale projection of a function $\Bu \in X$ to scaling space and its orthogonal wavelet space, i.e.
$\mathcal V \left\{ \phi_\text{int} \right\} \perp \mathcal W_{il} \big\{ \psi_{il} \,, \tilde \psi_{il} \big\}$ with scale $i=0, \ldots I-1$ and direction $l = 0, \ldots, L-1$ 
as
\begin{align} \label{label:multiScaleSamp:projection}
 u[\Bk] &= \underbrace{ \sum_{\Bn \in \mathbb Z^2} u[\Bn] \phi[\Bk - \Bn] }_{= (u \ast \phi)[\Bk]}
 + \sum_{i=0}^{I-1} \sum_{l=0}^{L-1} 
 \underbrace{ \sum_{\Bn \in \mathbb Z^2} \big\langle u, \tilde \psi_{il}[\cdot - \Bn] \big\rangle_{\ell_2} \psi_{il}[\Bk - \Bn] }
           _{ = ( u \ast \check{ \tilde{\psi} }_{il} \ast \psi_{il} )[\Bk] }
 \\
 \stackrel{\cF}{\longleftrightarrow}~           
 U(\Bz) &= U(\Bz) \Phi(\Bz)
 + \sum_{i=0}^{I-1} \sum_{l=0}^{L-1} U(\Bz) \tilde \Psi_{il}(\Bz^{-1}) \Psi_{il}(\Bz)    \notag
\end{align}
where definition of frames on spaces $\mathcal V$ and $\mathcal W_{il}$ are defined in the Fourier domain
from (\ref{eq:FilterBanks:XiThetaTheta_tilde:multiScaleSamp:1})-(\ref{eq:FilterBanks:XiThetaTheta_tilde:multiScaleSamp:unitycondition}), 
 as (see Figure \ref{fig:MDCD_let:u-g-problems}(b) for their spectra)
\begin{align*}
 \phi[\Bk] 
 &~\stackrel{\cF}{\longleftrightarrow}~ 
 \Phi(\Bz) = I^{-1} \sum_{i=0}^{I-1} \Phi_\text{int}(\Bz^{a^i}) \,,
 \\
 \psi_{il}[\Bk] 
 &~\stackrel{\cF}{\longleftrightarrow}~ 
 \Psi_{il}(\Bz) = I^{-\frac{1}{2}} \Psi_l(\Bz^{a^i}) ~~ \text{and}
 \\
 \check{\tilde \psi}_{il}[\Bk] 
 &~\stackrel{\cF}{\longleftrightarrow}~ 
 \tilde \Psi_{il}(\Bz^{-1}) = I^{-\frac{1}{2}} \tilde \Psi_l(\Bz^{-a^i}) \,,
\end{align*}
which satisfy the unity condition as
\begin{align*}
 \Phi(\Bz) + \sum_{i=0}^{I-1} \sum_{l=0}^{L-1} \tilde \Psi_{il}(\Bz^{-1}) \Psi_{il}(\Bz) = 1 \,.  
\end{align*}
Given $\Bn \in \Omega$, wavelet coefficients in direction $l$ and scale $i$ is
\begin{align*}
 c_{il}[\Bn] = \big\langle u \,, \tilde \psi_{il}[\cdot - \Bn] \big\rangle_{\ell_2} 
 ~\stackrel{\cF}{\longleftrightarrow}~
 C_{il}(\Bz) = U(\Bz) \tilde \Psi_{il}(\Bz^{-1}) \,.
\end{align*}


{\bfseries 2. Proof for the continuous multiscale sampling theory (\ref{label:multiScaleSamp:continuousprojection:prop}):}

\noindent
 The impulse response of a continuous directional Laplacian operator is
 \begin{align*} 
  \Delta_L \delta(\Bx) = \sum_{l=0}^{L-1} \partial_l^2 \delta(\Bx)
  ~\stackrel{\cF}{\longleftrightarrow}~
  -\sum_{l=0}^{L-1} \left[ \cos\left( \frac{\pi l}{2} \right) \omega_2 + \sin \left( \frac{\pi l}{L} \right) \omega_1 \right]^2
 \end{align*}
 Note that the 1st order Maclaurin approximation is a link between a continuous and discrete version of the operator.
 Similar to a discrete domain (\ref{eq:FilterBanks:XiThetaTheta_tilde:multiScaleSamp:1})-(\ref{eq:FilterBanks:XiThetaTheta_tilde:multiScaleSamp:unitycondition}),  
 the continuous version of the interpolant $\phi_\text{int}(\cdot)$ and the directional dual/primal wavelet
 $\tilde \psi_l(\cdot)$ and $\psi_l(\cdot)$ (with $l = 0, \ldots, L-1$) are
 \begin{align*}
  \phi_\text{int}(\Bx) &= \left[ c (-\Delta_{L}) + 1 \right]^{-1} \delta(\Bx)
  ~\stackrel{\cF}{\longleftrightarrow}~
  \widehat{\phi}_\text{int}(\Bome) = \frac{1}{\displaystyle 1 + c \sum_{l=0}^{L-1} \left[ \cos\left(\frac{\pi l}{L}\right) \omega_2 + \sin\left(\frac{\pi l}{L}\right) \omega_1 \right]^2} \,,
  \\
  \check{\tilde \psi}_l(\Bx) &= -c \partial_l \phi_\text{int}(\Bx)
  ~\stackrel{\cF}{\longleftrightarrow}~
  \widehat{\tilde \psi^*_l}(\Bome) = \frac{\displaystyle -c \left[ \cos\left(\frac{\pi l}{L}\right) j \omega_2 + \sin\left(\frac{\pi l}{L}\right) j \omega_1 \right] }
                            {\displaystyle 1 + c \sum_{l=0}^{L-1} \left[ \cos\left(\frac{\pi l}{L}\right) \omega_2 + \sin\left(\frac{\pi l}{L}\right) \omega_1 \right]^2} \,,
  \\
  \psi_l(\Bx) &= \partial_l \delta(\Bx)
  ~\stackrel{\cF}{\longleftrightarrow}~
  \widehat{\psi}_l(\Bome) = \cos\left(\frac{\pi l}{L}\right) j\omega_2 + \sin\left(\frac{\pi l}{L}\right) j\omega_1 \,.
 \end{align*}
 Similarly, this directional wavelet also plays as an operator-like wavelet of the interpolant 
 \cite{UnserVandeville2010, UnserSageVandeville2009, KhalidovUnser2006} as
 \begin{align*}
  &\psi(\Bx) = c (-\Delta_{L}) \phi_\text{int}(\Bx)
  ~\stackrel{\cF}{\longleftrightarrow}~
  \widehat{\psi}(\Bome) = \sum_{l=0}^{L-1} \widehat{\psi}_l(\Bome) \widehat{\tilde \psi^*_l}(\Bome) \,.
 \end{align*}
 Note that we have a pair of the continuous Fourier transform as
$\abs{c}^{-1} \phi(c^{-1} \Bx) \stackrel{\cF}{\longleftrightarrow} \widehat \phi(c \, \Bome)$ with a constant $c$ and $\Bx \in \mathbb R^2$.
 Thus, definition of frames is (see Figure \ref{fig:MDCD_let:u-g-problems}(a) for their spectrum)
 \begin{align*}
  \phi(\Bx) &= I^{-1} \sum_{i=0}^{I-1} a^{-i} \phi_\text{int} (a^{-1} \Bx)
  ~\stackrel{\cF}{\longleftrightarrow}~ 
  \widehat{\phi}(\Bome) = I^{-1} \sum_{i=0}^{I-1} \widehat{\phi}_\text{int}(a^i \Bome) \,,
  \\
  \psi_{il}(\Bx) &= I^{-\frac{1}{2}} a^{-i} \psi_l(a^{-i} \Bx)
  ~\stackrel{\cF}{\longleftrightarrow}~ 
  \widehat{\psi}_{il}(\Bome) = I^{-\frac{1}{2}} \widehat{\psi}_l(a^i\Bome) ~~ \text{and}
  \\
  \check{\tilde \psi}_{il}(\Bx) &= I^{-\frac{1}{2}} a^{-i} \check{\tilde\psi}_l(a^{-i} \Bx) 
  ~\stackrel{\cF}{\longleftrightarrow}~ 
  \widehat{\tilde \psi^*_{il}}(\Bome) = I^{-\frac{1}{2}} \widehat{\tilde \psi^*_l}(a^i\Bome) \,,
 \end{align*}
 and they satisfy the unity condition
 \begin{align*}
  \widehat{\phi}(\Bome) + \sum_{i=0}^{I-1} \sum_{l=0}^{L-1} \widehat{\tilde \psi^*_{il}}(\Bome) \widehat{\psi}_{il}(\Bome) = 1 \,, \widehat{\phi}(0) = 1 \,, \widehat{\psi}_{il}(0) = \widehat{\tilde \psi}_{il}(0) = 0 \,.
 \end{align*}
 A multiscale decomposition in a continuous setting for $\Bu \in X$ is
 \begin{align*}
  u[\Bk] &= \underbrace{ \sum_{\Bn \in \mathbb Z^2} u[\Bn] \phi(\Bk - \Bn) }_{= \left( u \ast \phi \right)[\Bk]}
  + \sum_{i=0}^{I-1} \sum_{l=0}^{L-1} 
  \underbrace{ \sum_{\Bn \in \mathbb Z^2} \left\langle u \,, \tilde \psi_{il} (\cdot - \Bn) \right\rangle_{\ell_2} \psi_{il}(\Bk - \Bn) }_{= \left( u \ast \check{\tilde \psi}_{il} \ast \psi_{il} \right)[\Bk]}
  \\ \stackrel{\cF}{\longleftrightarrow}~
  U(e^{j\Bome}) &= U(e^{j\Bome}) \widehat{\phi}(\Bome) + \sum_{i=0}^{I-1} \sum_{l=0}^{L-1} U(e^{j\Bome}) \widehat{\tilde \psi}_{il}^* (\Bome) \widehat{\psi}_{il}(\Bome) \,.
 \end{align*}
 
\end{proof}


\begin{prop} \label{prop:DMCDDMultiscaleSamplingTheory:g-problem}
 The continuous and discrete versions of the multiscale sampling theory for the "$\vec{\Bg}$-problem" are defined in (\ref{label:multiScaleSamp:discreteprojection:gproblem:prop}) and 
(\ref{label:multiScaleSamp:continuousprojection:gproblem:prop}). 
\end{prop}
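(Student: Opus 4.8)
The plan is to mirror the argument of Proposition~\ref{prop:DMCDDMultiscaleSamplingTheory:u-problem}, now starting from the sampling-theory form of the $\vec{\Bg}$-problem established in Proposition~\ref{prop:VCPyramid:problem:v:spatial:1}, namely equation~(\ref{eq:FilterBanks:problem:g:spatial:1}). First I would strip that recursion of its shrinkage operator and of the Lagrange multipliers $(\boldsymbol{\lambda}_{\boldsymbol 6}, \boldsymbol{\lambda}_{\boldsymbol 7})$, leaving
\begin{align*}
 g_s^{(\tau)}[\Bk] = \big( \xi^{S, c_{67}}_s \ast g_s^{(\tau-1)} \big)[\Bk]
 + \big( \theta^{S, c_{67}}_s \ast \check{\tilde \theta}^{S}_s \ast g_s^{(\tau-1)} \big)[\Bk] \,,
\end{align*}
and then passing to the fixed point $g_s^{(\tau)} = g_s^{(\tau-1)} = g_s$, whose Fourier transform is exactly the single-scale, single-direction identity
$G_s(\Bz) = G_s(\Bz)\Xi_s(\Bz) + G_s(\Bz)\Theta_s(\Bz)\tilde\Theta_s(\Bz^{-1})$,
valid because of the unity condition~(\ref{eq:FilterBanks:XiThetaTheta_tilde:unitycondition:1}) already proved for the frames $\Xi^{S, c_{67}}_s$, $\Theta^{S, c_{67}}_s$, $\tilde\Theta^S_s$ defined in (\ref{eq:FilterBanks:XiThetaTheta_tilde:1:1})--(\ref{eq:FilterBanks:XiThetaTheta_tilde:3:1}). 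Note that, unlike the $\Bu$-problem, the blur symbol $H(\Bz)$ never enters these frames, because of how the constraints are split in (\ref{eq:minimization:SDMCDD:2}).

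Next, for the discrete claim~(\ref{label:multiScaleSamp:discreteprojection:gproblem:prop}), I would introduce the dilations $\Bz \mapsto \Bz^{a^i}$ and average the single-scale relation over $i = 0, \ldots, I-1$ and $s = 0, \ldots, S-1$: setting $\Xi(\Bz) = (SI)^{-1}\sum_{i,s}\Xi_s(\Bz^{a^i})$, $\Theta_{si}(\Bz) = (SI)^{-1/2}\Theta_s(\Bz^{a^i})$ and $\tilde\Theta_{si}(\Bz) = (SI)^{-1/2}\tilde\Theta_s(\Bz^{a^i})$, the combination
\begin{align*}
 \Xi(\Bz) + \sum_{i=0}^{I-1}\sum_{s=0}^{S-1}\tilde\Theta_{si}(\Bz^{-1})\Theta_{si}(\Bz)
 = \frac{1}{SI}\sum_{i=0}^{I-1}\sum_{s=0}^{S-1}\big[\Xi_s(\Bz^{a^i}) + \Theta_s(\Bz^{a^i})\tilde\Theta_s(\Bz^{-a^i})\big]
\end{align*}
collapses to $1$ termwise by the single-scale unity condition, and $\Xi(e^{j\mathbf 0}) = 1$, $\Theta_{si}(e^{j\mathbf 0}) = \tilde\Theta_{si}(e^{j\mathbf 0}) = 0$ follow from the corresponding single-scale values. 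Multiplying by $F(\Bz)$ and inverting gives the spatial identity $f[\Bk] = (f\ast\xi)[\Bk] + \sum_{i,s}(f\ast\check{\tilde\theta}_{si}\ast\theta_{si})[\Bk]$, i.e. (\ref{label:multiScaleSamp:discreteprojection:gproblem:prop}); the projection picture in (\ref{label:multiScaleSamp:projection}) justifies reading these sums as a decomposition onto mutually orthogonal scaling and directional wavelet subspaces.

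For the continuous claim~(\ref{label:multiScaleSamp:continuousprojection:gproblem:prop}), I would replace the one-dimensional discrete second difference $\partial_s^-\partial_s^+$ by the continuous $\partial_s^2$ — their symbols agreeing to first order in $\Bome$ by the Maclaurin expansion, exactly as in the $\Bu$-problem proof — which produces the interpolant $\xi_s(\Bx) = [1 - c\,\partial_s^2]^{-1}\delta(\Bx)$ with $\widehat{\xi_s}(\Bome) = (1 + c[\cos(\frac{\pi s}{S})\omega_2 + \sin(\frac{\pi s}{S})\omega_1]^2)^{-1}$, the primal wavelet $\check{\tilde\theta}_s(\Bx) = \partial_s\delta(\Bx)$, and the dual wavelet $\theta_s(\Bx) = -c\,\partial_s\xi_s(\Bx)$, for which $\widehat{\xi_s} + \widehat{\theta_s}\,\widehat{\tilde\theta_s^*} = 1$ holds pointwise. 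Using the scaling pair $|a|^{-1}g(a^{-1}\Bx) \stackrel{\cF}{\longleftrightarrow} \widehat g(a\Bome)$, I would dilate, average over $i$ and $s$ as above, verify $\widehat\xi(0) = 1$ and $\widehat{\theta_{si}}(0) = \widehat{\tilde\theta_{si}}(0) = 0$, and obtain the stated continuous formula. The main obstacle is not any single computation but the bookkeeping needed to confirm that the averaged frames still form a partition of unity and remain bounded and bounded away from zero on $[-\pi,\pi]^2$ (discrete) or on $\mathbb R^2$ (continuous), together with care about the aliasing introduced by $\Bz = e^{j\Bome}$ in the discrete dilations $\Bz^{a^i}$ — which is precisely why the discrete spectra in Figure~\ref{fig:MDCD_let:u-g-problems}(d) differ from the continuous ones in (c). Everything else is a direct transcription of the $\Bu$-problem argument with $(L, c_{25}, c_{34})$ replaced by $(S, c_{67})$ and with $H(\Bz)$ absent.
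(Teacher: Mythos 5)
Your proposal is correct and follows essentially the same route as the paper's proof: simplify the $\vec{\Bg}$-recursion by dropping the shrinkage operator and the multipliers $(\vec{\boldsymbol{\lambda}}_{\boldsymbol 6},\boldsymbol{\lambda}_{\boldsymbol 7})$, pass to the fixed point to get the single-scale identity $\Xi_s(\Bz)+\Theta_s(\Bz)\tilde\Theta_s(\Bz^{-1})=1$, then dilate by $\Bz\mapsto\Bz^{a^i}$ and average over $i$ and $s$ so the unity condition holds termwise, with the continuous case obtained by substituting $\partial_s^2$ for $\partial_s^-\partial_s^+$ and using the scaling pair $\abs{a}^{-1}g(a^{-1}\Bx)\stackrel{\cF}{\longleftrightarrow}\widehat g(a\Bome)$. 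The computations you outline match the paper's term by term, including the normalizations $(SI)^{-1}$ and $(SI)^{-1/2}$ and the observation that $H(\Bz)$ never enters these frames.
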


\begin{proof}

{\bfseries 1. Proof for the discrete multiscale sampling theory (\ref{label:multiScaleSamp:discreteprojection:gproblem:prop}):}

\noindent
To ease the calculation, we denote $c = c_{67} \,, \xi_s = \xi^{S, c_{67}}_s \,, \theta_s = \theta^{S, c_{67}}_s \,, \tilde \theta_s = \tilde \theta^{S}_s$.
Given $s = 0 \,, \ldots \,, S-1$, a simplified version of (\ref{eq:FilterBanks:problem:g:spatial:1}) in the $\vec{\Bg}$-problem is obtained by removing a shrinkage operator and Lagrange multipliers $(\vec{\boldsymbol \lambda}_{\boldsymbol 6} \,, \boldsymbol{\lambda_7})$ as
\begin{align*}  
 g_s^{(\tau)}[\Bk] &= \left( \xi_s \ast g_s^{(\tau-1)} \right) [\Bk]
 + \left( \theta_s \ast \check{\tilde \theta}_s \ast g_s^{(\tau-1)} \right) [\Bk] \,.
\end{align*}
Denote $\Xi_s(\Bz) \,, \Theta_s(\Bz)$ and $\tilde \Theta_s(\Bz)$ as discrete Fourier transform of $\xi_s[\Bk] \,, \theta_s[\Bk]$ and $\tilde \theta_s[\Bk]$, respectively.
At convergence when the iteration $\tau$ goes to infinity, i.e. $\Bg_s^{(\tau)} = \Bg_s^{(\tau-1)} = \Bg_s$, we have
\begin{align*}  
 g_s[\Bk] &= \left( \xi_s \ast g_s \right) [\Bk]
 + \left( \theta_s \ast \check{\tilde \theta}_s \ast g_s \right) [\Bk]
 ~\stackrel{\cF}{\longleftrightarrow}~
 G_s(\Bz) = \Xi_s(\Bz) G_s(\Bz) + \Theta_s(\Bz) \tilde \Theta_s(\Bz^{-1}) G_s(\Bz) 
\end{align*}
and the unity condition is
\begin{align*}
 \Xi_s(\Bz) + \Theta_s(\Bz) \tilde \Theta_s(\Bz^{-1}) = 1 \,.
\end{align*}
Given $\Bf \in X$ whose discrete Fourier transform is $F(\Bz)$,
a multiscale sampling theory in the Fourier domain is described as
\begin{align*}
 F(\Bz) &= 
 \frac{1}{SI} \sum_{i=0}^{I-1} \sum_{s=0}^{S-1} F(\Bz) \Xi_s(\Bz^{a^i}) +  
 \frac{1}{SI} \sum_{i=0}^{I-1} \sum_{s=0}^{S-1} F(\Bz) \tilde\Theta_s(\Bz^{-a^i}) \Theta_s(\Bz^{a^i})  
 \\
 &= 
 F(\Bz) \Xi(\Bz) +  
 \sum_{i=0}^{I-1} \sum_{s=0}^{S-1} F(\Bz) \tilde\Theta_{si}(\Bz^{-1}) \Theta_{si}(\Bz)
 \\
 \stackrel{\cF}{\longleftrightarrow}~
 f[\Bk] &= \underbrace{ \sum_{\boldsymbol n \in \mathbb Z^2} f[\boldsymbol n] \xi[\Bk - \boldsymbol n] }_{ = (\xi \ast f)[\Bk] }
 + \sum_{i=0}^{I-1} \sum_{s=0}^{S-1} 
 \underbrace{
 \sum_{\boldsymbol n \in \mathbb Z^2} \left\langle \Bf \,, \tilde{\theta}_{si}(\cdot - \boldsymbol n) \right\rangle_{\ell_2} 
 \theta_{si}[\Bk - \boldsymbol n]
 }_{ = ( \check{\tilde \theta}_{si} \ast \theta_{si} \ast f )[\Bk] } \,,
\end{align*}
where the frames (see Figure \ref{fig:MDCD_let:u-g-problems}(d) for their spectra)
\begin{align*}
 \xi[\Bk] \stackrel{\cF}{\longleftrightarrow}
 \Xi(\Bz) = \frac{1}{SI} \sum_{i=0}^{I-1} \sum_{s=0}^{S-1} \Xi_s(\Bz^{a^i}) \,,~
 \theta_{si}[\Bk] \stackrel{\cF}{\longleftrightarrow}
 \Theta_{si}(\Bz) = \frac{1}{\sqrt{SI}} \Theta_s(\Bz^{a^i})  \text{ and }
 \tilde\theta_{si}[\Bk] \stackrel{\cF}{\longleftrightarrow}
 \tilde\Theta_{si}(\Bz) = \frac{1}{\sqrt{SI}} \tilde\Theta_s(\Bz^{a^i}) \,.
\end{align*}
satisfy the unity condition
\begin{align*}
 \Xi(\Bz) + \sum_{i=0}^{I-1} \sum_{s=0}^{S-1} \tilde\Theta_{si}(\Bz^{-1}) \Theta_{si}(\Bz) = 1 \,,~
 \Xi(e^{j\mathbf 0}) = 1 
 ~~\text{and}~~
 \Theta_{si}(e^{j\mathbf 0}) = \tilde\Theta_{si}(e^{j\mathbf 0}) = 0 \,.
\end{align*}
and $\Xi_s(\Bz) = \Xi_s^{S, c_{67}}(\Bz) \,, \Theta_s(\Bz) = \Theta_s^{S, c_{67}}(\Bz)$ and $\tilde\Theta_s(\Bz) = \tilde \Theta_s^S(\Bz)$ are defined in (\ref{eq:FilterBanks:XiThetaTheta_tilde:1:1})-(\ref{eq:FilterBanks:XiThetaTheta_tilde:3:1}).

{\bfseries 2. Proof for the continuous multiscale sampling theory (\ref{label:multiScaleSamp:continuousprojection:gproblem:prop}):}

\noindent
Similar to the $\Bu$-problem and by some simplified notations as in the previous explanation, we derive a continuous version for multiscale sampling theory in the $\vec{\Bg}$-problem by considering continuous operators in (\ref{eq:FilterBanks:XiThetaTheta_tilde:1:1})-(\ref{eq:FilterBanks:XiThetaTheta_tilde:3:1}) as
\begin{align*} 
 \xi_s(\Bx) = \left[ 1 - c \partial_s^2 \right]^{-1} \delta(\Bx) 
 &~\stackrel{\cF}{\longleftrightarrow}~  
 \widehat{\xi_s}(\Bome) = \frac{ 1 }{ 1 + c \left[ \cos \left( \frac{\pi s}{S} \right) \omega_2 + \sin \left( \frac{\pi s}{S} \right) \omega_1 \right]^2 }  \,,
 \\ 
 \theta_s(\Bx) = -c \partial_s \xi_s(\Bx) 
 &~\stackrel{\cF}{\longleftrightarrow}~  
 \widehat{\theta_s}(\Bome) = 
 \frac{ -c \left[ \cos\left(\frac{\pi s}{S}\right) j \omega_2 + \sin\left(\frac{\pi s}{S}\right) j \omega_1 \right] }
        { 1 + c \left[ \cos \left( \frac{\pi s}{S} \right) \omega_2 + \sin \left( \frac{\pi s}{S} \right) \omega_1 \right]^2 } \,,
 \\ 
 \check{\tilde \theta}_s(\Bx) = \partial_s \delta(\Bx)
 &~\stackrel{\cF}{\longleftrightarrow}~
 \widehat{\tilde \theta_s^*}(\Bome) = \cos \left( \frac{\pi s}{S} \right) j \omega_2 + \sin \left( \frac{\pi s}{S} \right) j \omega_1 \,.
\end{align*}
These bounded frames satisfy the unity condition in the Fourier domain as ($s = 0 , \ldots, S-1$)
\begin{align*}
 \widehat{\xi_s}(\Bome) + \widehat{\theta_s}(\Bome) \widehat{\tilde \theta_s^*}(\Bome) = 1 \,,  
 \widehat{\xi_s}(0) = 1 \,, \widehat{\theta_s}(0) = \widehat{\tilde \theta_s}(0) = 0 \,.
\end{align*}
Given a constant $a > 0$ and a discrete function $f \in \ell_2 (\mathbb R^2)$ whose discrete Fourier transform is $F(e^{j\Bome})$, a multiscale sampling theory in the continuous setting is defined as 
\begin{align*}
 F(e^{j\Bome}) &= \frac{1}{SI} \sum_{i=0}^{I-1} \sum_{s=0}^{S-1} F(e^{j\Bome}) \widehat \xi_s(a^i \Bome) 
 + \frac{1}{SI} \sum_{i=0}^{I-1} \sum_{s=0}^{S-1} F(e^{j\Bome}) \widehat{\tilde \theta_s^*}(a^i \Bome) \widehat{\theta_s}(a^i \Bome) 
\end{align*}
Thus, we reformulate the above formulae as
\begin{align*}
 f[\Bk] &= \underbrace{ \sum_{\Bn \in \mathbb Z^2} f[\Bn] \xi(\Bk - \Bn) }_{ = (f \ast \xi)[\Bk] }
 + \sum_{i=0}^{I-1} \sum_{s=0}^{S-1} 
 \underbrace{
 \sum_{\Bn \in \mathbb Z^2}
 \left\langle f \,, \tilde \theta_{si}(\cdot - \Bn) \right\rangle_{\ell_2} \theta_{si}(\Bk - \Bn) 
 }_{= (f \ast \check{\tilde \theta}_{si} \ast \theta_{si})[\Bk]} \,.
 \\
 \stackrel{\cF}{\longleftrightarrow}~
 F(e^{j\Bome}) &= F(e^{j\Bome}) \widehat \xi(\Bome) + \sum_{i=0}^{I-1} \sum_{s=0}^{S-1} F(e^{j\Bome}) \widehat{\tilde \theta}_{si}^*(\Bome) \widehat{\theta}_{si}(\Bome) 
\end{align*}
with frames (see Figure \ref{fig:MDCD_let:u-g-problems}(c) for their spectra)
\begin{align*}
 \xi(\Bx) = \frac{1}{SI} \sum_{i=0}^{I-1} \sum_{s=0}^{S-1} a^{-i} \xi_s(a^{-i} \Bx)
 &~\stackrel{\cF}{\longleftrightarrow}~
 \widehat \xi(\Bome) = \frac{1}{SI} \sum_{i=0}^{I-1} \sum_{s=0}^{S-1} \widehat \xi_s(a^i \Bome)
 \\
 \theta_{si}(\Bx) = \frac{1}{\sqrt{SI}} a^{-i} \theta_s(a^{-i} \Bx)
 &~\stackrel{\cF}{\longleftrightarrow}~
 \widehat \theta_{si}(\Bome) = \frac{1}{\sqrt{SI}} \widehat \theta_s(a^i \Bome)
 \\
 \tilde \theta_{si}(\Bx) = \frac{1}{\sqrt{SI}} a^{-i} \tilde \theta_s(a^{-i} \Bx)
 &~\stackrel{\cF}{\longleftrightarrow}~
 \widehat{\tilde \theta}_{si}(\Bome) = \frac{1}{\sqrt{SI}} \widehat{\tilde \theta}_s(a^i \Bome) \,.
\end{align*}
Note that these multiscale frames are also bounded and satisfy the unity condition in the Fourier domain as
\begin{align*}
 \widehat \xi(\Bome) + \sum_{i=0}^{I-1} \sum_{s=0}^{S-1} \widehat{\tilde \theta_{si}^*} (\Bome) \widehat{\theta_{si}} (\Bome) = 1 \,, 
 \widehat \xi(0) = 1 
 ~~\text{and}~~
 \widehat{\tilde \theta}_{si}(0) = \widehat{\tilde \theta}_{si}(0) = 0 \,.
\end{align*}

\end{proof}


\section*{Appendix B. Figures} \label{sec:appendixB:Figures}

\setcounter{subfigure}{0}
\begin{figure*}
\begin{center}
 
 \includegraphics[width=1\textwidth]{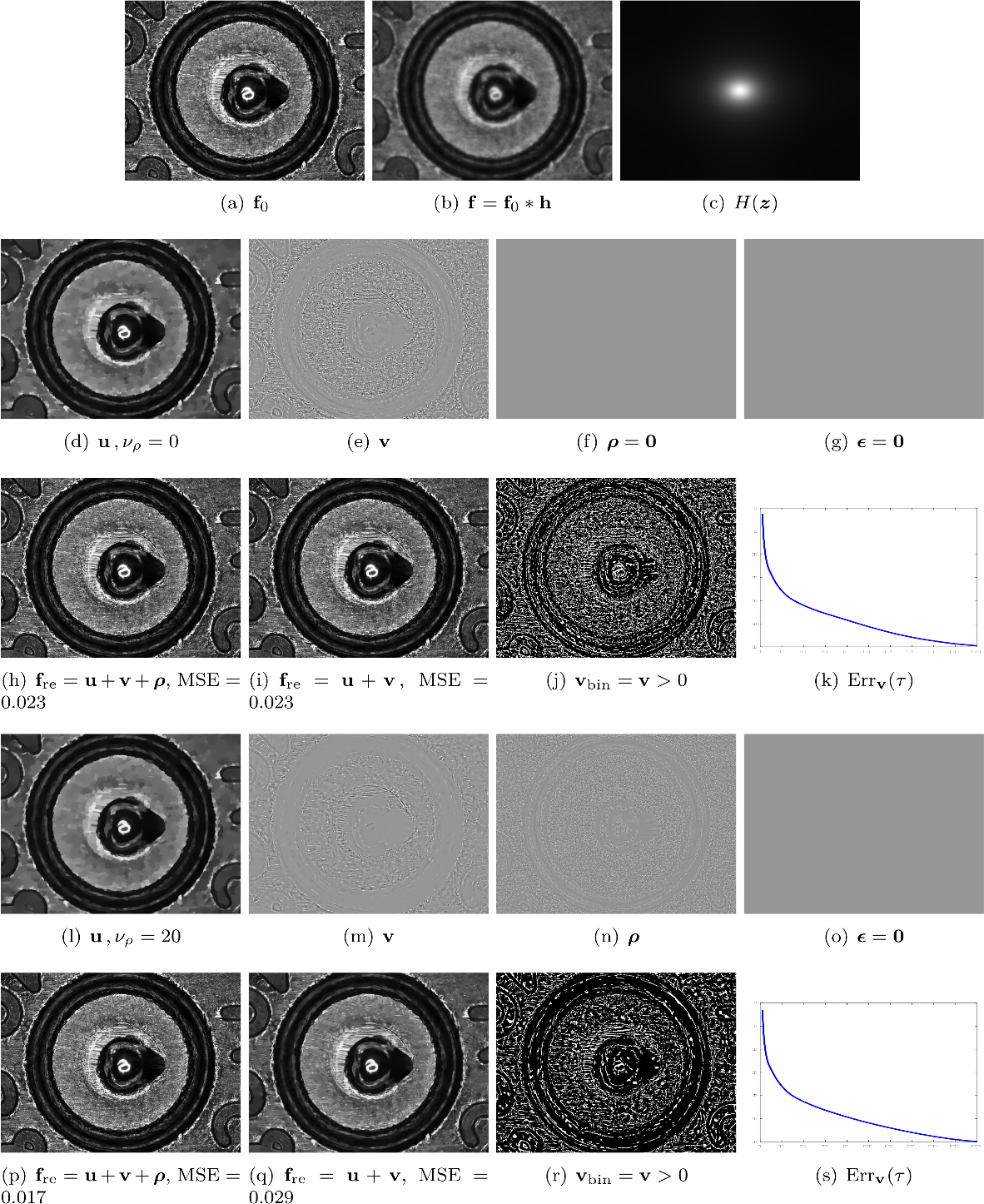}
 
 \caption{The (2nd-3rd) and (4th-5th) rows are reconstructed image of a ballistic image by the DMCD model
          with $\nu_\rho = 0$ and $\nu_\rho = 20$, respectively.
          The other parameters are the same as in Figure \ref{fig:deblurdecomp:barbara:nonoise}.
          The convergences of DMCD (k) and (s) are in log scale.
          A parameter for blur operator $h$ is $L_\text{blur} = 20$.
          \label{fig:deblurdecomp:ballistic}
         }
\end{center}
\end{figure*}

\setcounter{subfigure}{0}
\begin{figure*}
\begin{center}
 
 \includegraphics[width=1\textwidth]{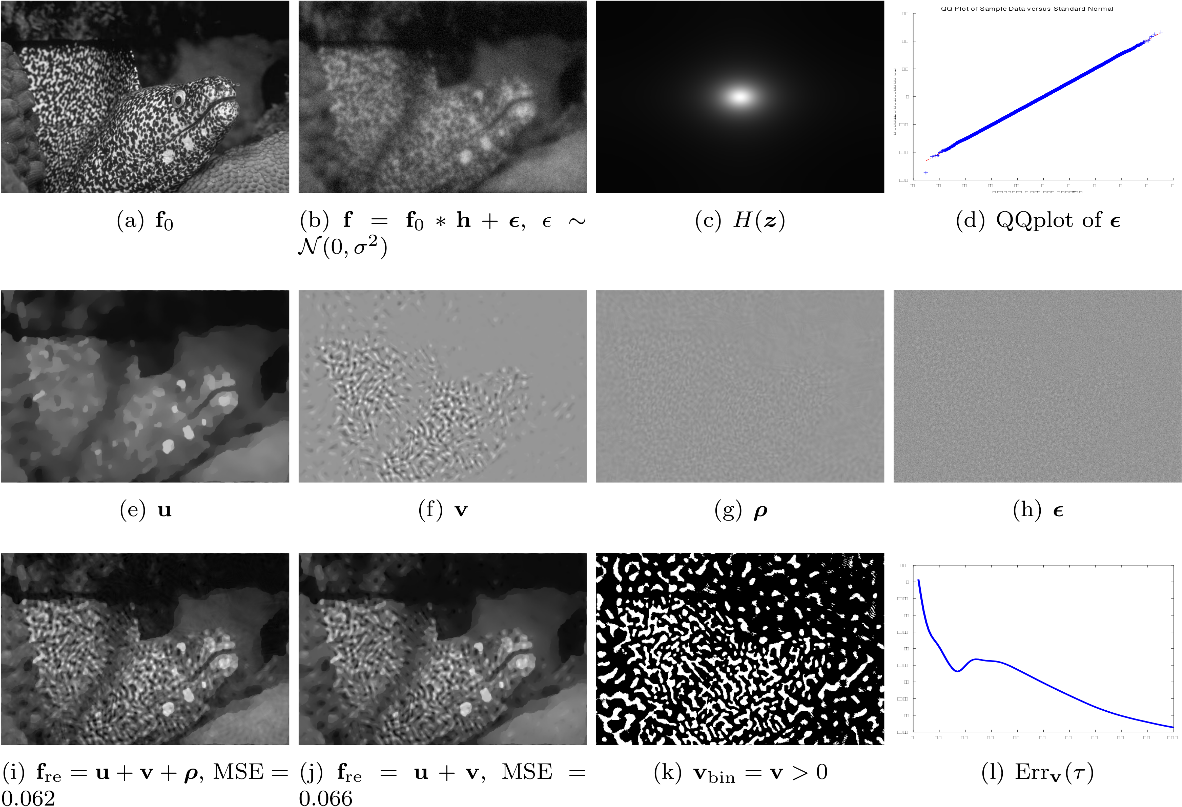} 
 
 \caption{Visualization of a reconstructed image by DMCD with 
          $\sigma = 10 \,, L_\text{blur} = 20 \,, \nu_\epsilon = 10 \,, \nu_\rho = 6.8 \,, L = S = 11$ and $\mu_2 = 2.5 \times 10^{10}$.
          The other parameters are the same as as in Figure \ref{fig:deblurdecomp:barbara:nonoise}.
          The convergence of DMCD is in log scale.
          \label{fig:deblurdecomp:fish}
         }
\end{center}
\end{figure*}

\setcounter{subfigure}{0}
\begin{figure*}
\begin{center}

 \includegraphics[width=1\textwidth]{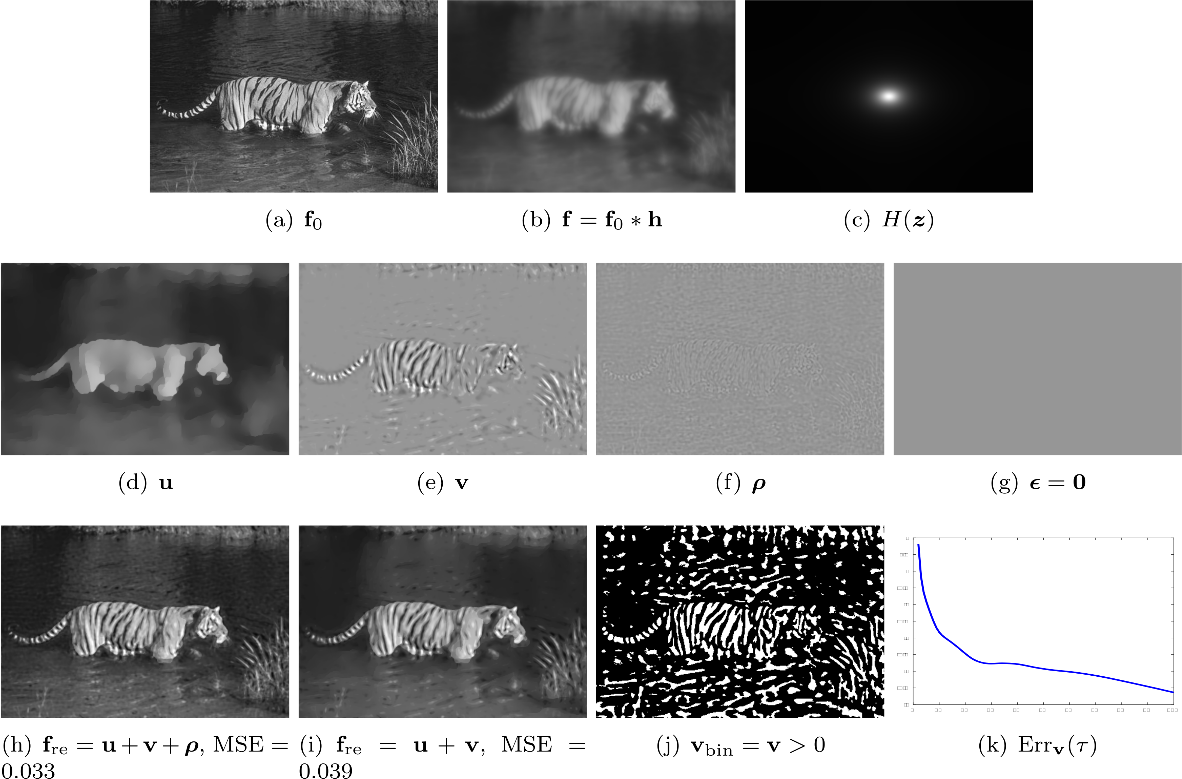}
 
 \caption{Visualization of a reconstructed image by DMCD with 
          $L_\text{blur} = 50 \,, \nu_\epsilon = 10 \,, \nu_\rho = 0 \,, L = S = 16$ and $\mu_2 = 4 \times 10^{10}$.
          The other parameters are the same as as in Figure \ref{fig:deblurdecomp:barbara:nonoise}.
          The convergence of DMCD is in log scale.
          \label{fig:deblurdecomp:tiger}
         }
\end{center}
\end{figure*}


\begin{figure*}

 \includegraphics[width=1\textwidth]{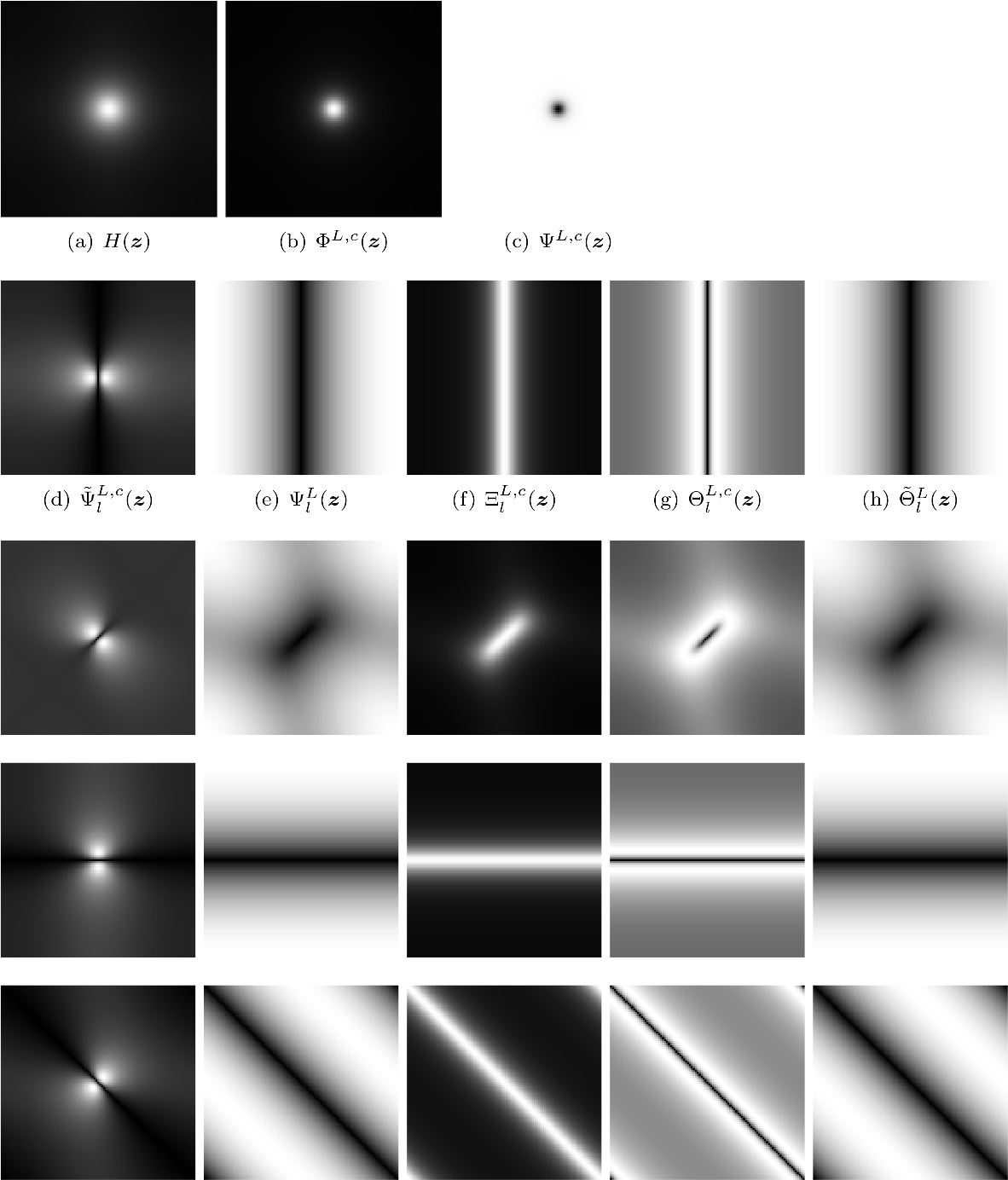}
  
 \caption{\label{fig:MDCD_let:FilterBanks:c10}
 This figure visualizes filter banks produced by the DMCD model (\ref{eq:VCPyramid:problem:frameElements:u:scalingfunc:1})-(\ref{eq:FilterBanks:XiThetaTheta_tilde:3:1}) with parameters $L_\text{blur} = 10 \,, L = S = 4 \,, c_{25} = c_{34} = c = 10$.
 A total wavelet function (c) is defined as $\displaystyle \Psi^{L,c}(\Bz) = \sum_{l=0}^{L-1} \tilde \Psi_l^{L,c}(\Bz^{-1}) \Psi_l^{L}(\Bz)$.
  }
 
\end{figure*}

\begin{figure*}    
 
 \includegraphics[width=1\textwidth]{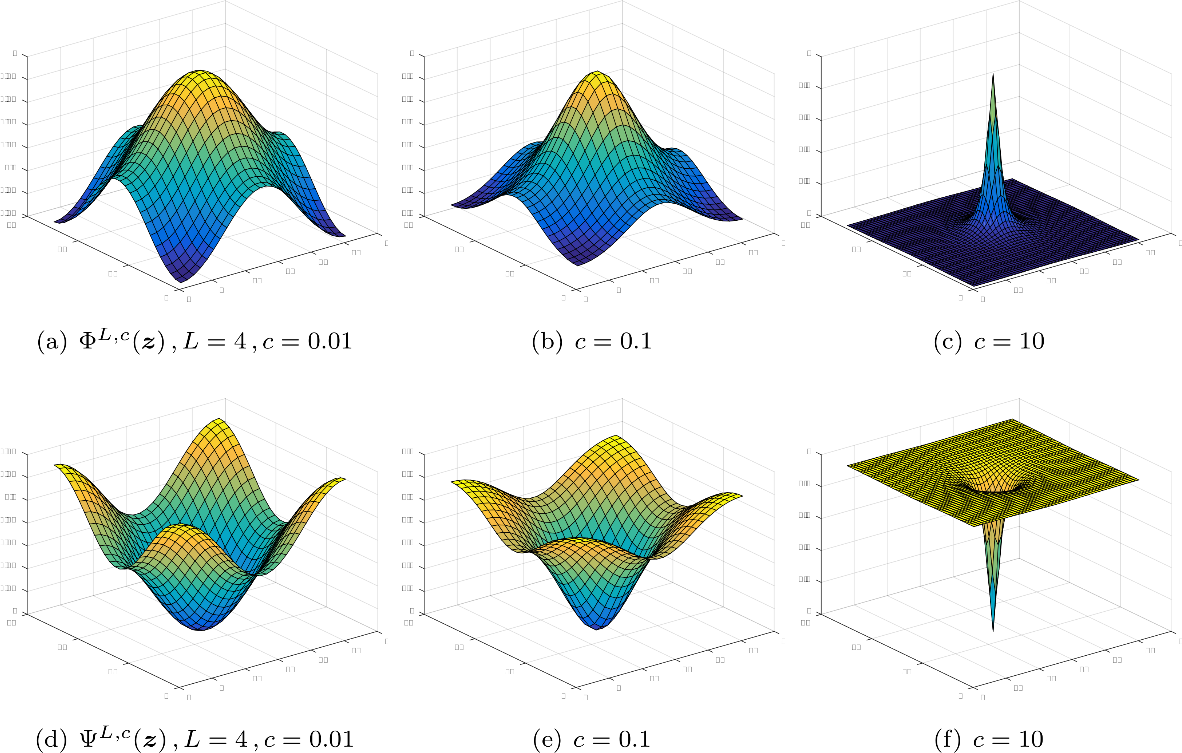}
 
 \caption{\label{fig:MDCD_let:FilterBanks:3D}
          This figure visualizes 3-dimensional version of scaling and wavelet functions produced by the DMCD model 
          without blurring effect, i.e. $H(\Bz) = 1$.
          Notice that bandwidths of these frames are controlled by parameter $c$ 
          with $\displaystyle \Psi^{L,c}(\Bz) = \sum_{l=0}^{L-1} \tilde \Psi_l^{L,c}(\Bz^{-1}) \Psi_l^{L}(\Bz)$.
          In particular, a larger value of $c$ results in smaller bandwidth of these functions.
         }
\end{figure*}  



\section*{Appendix C. Algorithms}  \label{sec:appendixC:Algorithms}

\begin{algorithm*} 
\label{alg:DMCDD:1}
\caption{Directional Mean Curvature for Textured Image Demixing (DMCD)}
\begin{algorithmic}
\small
 \STATE{
  {\bfseries Parameters:}
  $\gamma = \Big[ L, S, \mu_1, \mu_2, \nu_\epsilon, \nu_\rho, \alpha, \big[ \beta_i \big]_{i=1}^7 \Big]
  $ 
 } 
 \STATE{
  {\bfseries Initialization:}
  $\Bu^{(0)} = \Bf \,,  
   \Bv^{(0)} = \Beps^{(0)} = \Bd^{(0)} = \vec{\Br}^{(0)} = \vec{\Bt}^{(0)} = \vec{\By}^{(0)} = \vec{\Bw}^{(0)} = \vec{\Bg}^{(0)}
   = \boldsymbol{\lambda}_{\boldsymbol 1}^{(0)} = \vec{\boldsymbol{\lambda}}_{\boldsymbol 2}^{(0)}
   = \boldsymbol{\lambda}_{\boldsymbol 3}^{(0)} = \vec{\boldsymbol{\lambda}}_{\boldsymbol 4}^{(0)} 
   = \boldsymbol{\lambda}_{\boldsymbol 5}^{(0)} = \vec{\boldsymbol{\lambda}}_{\boldsymbol 6}^{(0)} 
   = \boldsymbol{\lambda}_{\boldsymbol 7}^{(0)} = \boldsymbol{\lambda}_{\boldsymbol 8}^{(0)} = \mathbf 0$ 
 } 
 \STATE{ } 
 \FOR{$\tau = 1 \,, \ldots \,,  $}
 \STATE
 {
  \% Compute
  $\Gamma = \Big[ \Bu, \Bv, \Beps, \Bd, \vec{\Br}, \vec{\Bt}, \vec{\By}, \vec{\Bw}, \vec{\Bg} \Big] 
   \in X^{4+3(L+1)+2S}$
  \begin{align*}
   \Gamma^{(\tau)} = \text{DMCD} \Big(\Gamma^{(\tau-1)} ~;~ \Bf \,, \gamma \Big) \,;
  \end{align*}

 }
\ENDFOR
\end{algorithmic}
\end{algorithm*}


\begin{algorithm*}   
\label{alg:DMCDD:2}
\caption{DMCD (I)} 
\begin{algorithmic}   
\small
 \STATE
 { 
   \STATE{ 
   \begin{align*}
    \Gamma^\text{(new)} = \text{DMCD} \Big(\Gamma^\text{(old)} ~;~ \Bf \,, \gamma \Big) \,,
    \quad \text{with }
    \Gamma = \Big[ \Bu, \Bv, \Beps, \Bd, \vec{\Br}, \vec{\Bt}, \vec{\By}, \vec{\Bw}, \vec{\Bg} \Big] 
   \end{align*}
   }
   \STATE{\bfseries I. Compute sub-problems:}
   \STATE{}
   \begin{align*}
    &\text{\bfseries 1.}~~ 
    \Bd^{(\tau)} = \Shrink \Big( \underbrace{ - \sum_{l=0}^L \big[ \cos\left(\frac{\pi l}{L}\right) \Bt_l^{(\tau-1)} \BDt + \sin\left(\frac{\pi l}{L}\right) \BDoT \Bt_l^{(\tau-1)} \big] }_{= \text{div}^-_L \vec{\Bt}^{(\tau-1)} }
    - \frac{\boldsymbol{\lambda}_{\boldsymbol 3}^{(\tau-1)}}{\beta_3} \,, \frac{1}{\beta_3} \Big)
    \\   
    &\text{\bfseries 2.}~~ 
    \Bt_l^{(\tau)} =
    \begin{cases}
     \displaystyle
     \RE \left[ \cF^{-1} \left\{ \frac{\cM_l^{(\tau)}(\Bz)}{\cN_l(\Bz)} \right\} \right]
     \,,~ & l = 0, \ldots, L-1 
     \\
     \displaystyle
     \By_L - \frac{ \boldsymbol{\lambda}_{\boldsymbol 4L} }{ \beta_4 } \,,~ & l = L
    \end{cases}    
    \\& \qquad
    \cN_l(\Bz) = \beta_4 + \beta_3 \abs{ \cos\left(\frac{\pi l}{L}\right) (z_2 - 1) + \sin\left(\frac{\pi l}{L}\right) (z_1 - 1) }^2 \,,
    \\& \qquad
    \cM_l^{(\tau)}(\Bz) = \beta_4 \left[ Y_l^{(\tau-1)}(\Bz) - \frac{ \Lambda_{4l}^{(\tau-1)}(\Bz) }{ \beta_4 } \right]
    -\beta_3 \left[ \cos\left(\frac{\pi l}{L}\right) (z_2 - 1) + \sin\left(\frac{\pi l}{L}\right) (z_1 - 1) \right] \times
    \\& \qquad \qquad \quad
    \left[ D^{(\tau)}(\Bz) + \sum_{l'=[0, L-1] \backslash \{l\}} \left[ \cos\left(\frac{\pi l'}{L}\right) (z_2^{-1} - 1) + \sin\left(\frac{\pi l'}{L}\right) (z_1^{-1} - 1) \right] T_{l'}^{(\tau-1)}(\Bz) + \frac{ \Lambda_3^{(\tau-1)}(\Bz) }{ \beta_3 } \right] \,.    
    \\   
    &\text{\bfseries 3.}~~ 
    \Br_l^{(\tau)} =
    \begin{cases}
     \displaystyle \Shrink \bigg( \big[ \underbrace{ \cos(\frac{\pi l}{L}) \Bu^{(\tau-1)} \BDtT + \sin(\frac{\pi l}{L}) \BDo \Bu^{(\tau-1)} }_{= \partial_l^+ \Bu } \big] 
                 - \frac{ \boldsymbol{\lambda}_{\boldsymbol 2 l}^{(\tau-1)} }{ \beta_2 } + \frac{ \boldsymbol{\lambda}_{\boldsymbol 1}^{(\tau-1)} + \beta_1 }{ \beta_2 } \cdot^\times \By_l^{(\tau-1)}
     \,, \frac{ \boldsymbol{\lambda}_{\boldsymbol 1}^{(\tau-1)} + \beta_1}{\beta_2} \bigg)
     , l = 0, \ldots, L-1
     \\
     \displaystyle \Shrink \bigg( \mathbf{1} - \frac{ \boldsymbol{\lambda}_{\boldsymbol 2 L}^{(\tau-1)} }{ \beta_2 } + \frac{ \boldsymbol{\lambda}_{\boldsymbol 1}^{(\tau-1)} + \beta_1 }{ \beta_2 } \cdot^\times \By_L^{(\tau-1)}
     \,,~ \frac{ \boldsymbol{\lambda}_{\boldsymbol 1}^{(\tau-1)} + \beta_1}{\beta_2}
     \bigg)
     , l = L 
    \end{cases}    
    \\
    &\text{\bfseries 4.}~~     
    \By_l^{(\tau)} = \begin{cases}
                \By_l^{\prime (\tau)} \,, & \abs{\vec{\By}^{\prime (\tau)}} \leq 1  \\
                \frac{ \By_l^{\prime(\tau)} }{ \abs{\vec{\By}^{\prime (\tau)}} } \,, & \abs{\vec{\By} ^{\prime (\tau)}} > 1
               \end{cases}
               \,,~~ l = 0, \ldots, L
    \,, \quad \text{and} \quad
    \begin{cases} 
     \By_l^{\prime (\tau)} &= \Bt_l^{(\tau)} + \frac{ \boldsymbol{\lambda}_{\boldsymbol 4 l}^{(\tau-1)} }{\beta_4} + \frac{\beta_1 + \boldsymbol{\lambda}_{\boldsymbol 1}^{(\tau-1)}}{\beta_4} \cdot^\times \Br_l^{(\tau)} \,,
     \\
     \abs{\vec{\By'}} &= \sqrt{ \sum_{l=0}^L \Big[ \Bt_l^{(\tau)} + \frac{ \boldsymbol{\lambda}_{\boldsymbol 4 l}^{(\tau-1)} }{\beta_4} + \frac{\beta_1 + \boldsymbol{\lambda}_{\boldsymbol 1}^{(\tau-1)}}{\beta_4} \cdot^\times \Br_l^{(\tau)} \Big]^{\cdot 2} } \,.
    \end{cases}
    \\   
    &\text{\bfseries 5.}~~
    \Bw_s^{(\tau)} = \Shrink \Big( \Bg_s^{(\tau-1)} - \frac{\boldsymbol{\lambda}_{\boldsymbol 6 s}^{(\tau-1)}}{\beta_6} \,,~ \frac{\mu_1}{\beta_6} \Big) \,,~
    s = 0, \ldots, S-1
   \end{align*} 
 }
\end{algorithmic}
\end{algorithm*}


\begin{algorithm*}
\label{alg:DMCDD:3}
\caption{DMCD (II)}
\begin{algorithmic}
\small
 \STATE
 {
   \begin{align*}
    &\text{\bfseries 6.}~~
    \Bg_s^{(\tau)} = \RE \left[ \cF^{-1} \left\{ \frac{ \cB_s^{(\tau)}(\Bz) }{ \cA_s(\Bz) } \right\} \right] \,,~ s = 0, \ldots, S-1
    \\& \qquad
    \cA_s(\Bz) = \beta_6 + \beta_7 \abs{ \cos\left(\frac{\pi s}{S}\right)(z_2 - 1) + \sin\left(\frac{\pi s}{S}\right)(z_1 - 1) }^2 
    \\& \qquad
    \cB_s^{(\tau)}(\Bz) = \beta_6 \left[ W_s^{(\tau)}(\Bz) + \frac{\Lambda_{6 s}^{(\tau-1)}(\Bz)}{\beta_6} \right]
    -\beta_7 \left[ \cos\left(\frac{\pi s}{S}\right)(z_2 - 1) + \sin\left(\frac{\pi s}{S}\right) (z_1 - 1) \right] \times
    \\& \qquad
    \left[ V^{(\tau-1)}(\Bz) + \sum_{s'=[0, S-1] \backslash \{s\}} \left[ \cos\left(\frac{\pi s'}{S}\right)(z_2^{-1} - 1) + \sin\left(\frac{\pi s'}{S}\right) (z_1^{-1} - 1) \right] G_{s'}^{(\tau-1)}(\Bz) + \frac{\Lambda_7^{(\tau-1)}(\Bz)}{\beta_7} \right]        
    \\   
    &\text{\bfseries 7.}~~ 
    \Bu^{(\tau)} = \RE \left[ \cF^{-1} \left\{ \frac{\cY^{(\tau)}(\Bz)}{\cX(\Bz)} \right\} \right]
    \\& \qquad
    \cY^{(\tau)}(\Bz) = \beta_2 \sum_{l=0}^{L-1} \left[ \cos\left(\frac{\pi l}{L}\right)(z_2^{-1} - 1) + \sin\left(\frac{\pi l}{L}\right)(z_1^{-1} - 1) \right]
    \left[ R_l^{(\tau)}(\Bz) + \frac{ \Lambda_{2 l}^{(\tau-1)}(\Bz) }{\beta_2} \right]
    \\& \qquad \qquad \qquad
    + \beta_5 H(\Bz^{-1}) \left[ F(\Bz) - H(\Bz) V^{(\tau-1)}(\Bz) - H(\Bz) \cP^{(\tau-1)}(\Bz) - \cE^{(\tau-1)}(\Bz) + \frac{\Lambda_5^{(\tau-1)}(\Bz)}{\beta_5} \right] 
    \\& \qquad
    \cX(\Bz) = \beta_2 \sum_{l=0}^{L-1} \abs{ \cos\left(\frac{\pi l}{L}\right)(z_2 - 1) + \sin\left(\frac{\pi l}{L}\right)(z_1 - 1) }^2 + \beta_5 \abs{H(\Bz)}^2    
    \\
    &\text{\bfseries 8.}~~
    \Bv^{(\tau)} = \Shrink \left( \Bt_\Bv^{(\tau)} \,,~ \frac{\mu_2 \alpha^{(\tau)}}{\beta_5 + \alpha^{(\tau)} \beta_7} \right) 
    \\& \qquad
    \Bt_\Bv^{(\tau)} = \frac{ \beta_5 }{ \beta_5 + \alpha^{(\tau)} \beta_7 }
    \left( \left[ \delta - \alpha^{(\tau)} \check{\Bh} \ast \Bh \right] \ast \Bv^{(\tau-1)}    
    + \alpha^{(\tau)} \check{\Bh} \ast \left[ \Bf - \Bh \ast \Bu^{(\tau)} - \Bh \ast \Brho^{(\tau-1)} - \Beps^{(\tau-1)} + \frac{\boldsymbol{\lambda}_{\boldsymbol 5}^{(\tau-1)}}{\beta_5} \right] \right) 
    \\& \qquad \qquad
    + \frac{ \beta_7 \alpha^{(\tau)} }{ \beta_5 + \alpha^{(\tau)} \beta_7 }
    \bigg( \underbrace{ -\sum_{s=0}^{S-1} \big[ \cos(\frac{\pi s}{S}) \Bg_s^{(\tau)} \BDt + \sin(\frac{\pi s}{S}) \BDoT \Bg_s^{(\tau)} \big] }_{= \text{div}^-_S \vec{\Bg} }
    - \frac{\boldsymbol{\lambda}_{\boldsymbol 7}^{(\tau-1)}}{\beta_7} \bigg)
    \\    
    &\text{\bfseries 9.}~~ 
    \Brho^{(\tau)} = \tilde{\Brho}^{(\tau)} - \CST \left[ \tilde{\Brho}^{(\tau)} \,,~ \nu_\rho^{(\tau)} \right] \,,
    \\& \qquad     
    \tilde{\Brho}^{(\tau)} = \left( \delta - \alpha^{(\tau)} \check \Bh \ast \Bh \right) \ast \Brho^{(\tau-1)} 
    + \alpha^{(\tau)} \check \Bh \ast \left( \Bf - \Bh \ast \Bu^{(\tau)} - \Bh \ast \Bv^{(\tau)} - \Beps^{(\tau-1)} + \frac{\boldsymbol{\lambda}_{\boldsymbol 5}^{(\tau-1)}}{\beta_5} \right) 
   \end{align*} 
 }
\end{algorithmic}
\end{algorithm*}

\begin{algorithm*}
\label{alg:DMCDD:4}
\caption{DMCD (III)}
\begin{algorithmic}
\small

\STATE{
\begin{align*}
  &\text{\bfseries 10.}~~ 
  \Beps^{(\tau)} = \tilde{\Beps}^{(\tau)} - \CST \left[ \tilde{\Beps}^{(\tau)} \,,~ \nu_\epsilon^{(\tau)} \right] 
  \,,~~~
  \tilde{\Beps}^{(\tau)} = \Bf - \Bh \ast \Bu^{(\tau)} - \Bh \ast \Bv^{(\tau)} - \Bh \ast \Brho^{(\tau)} + \frac{\boldsymbol{\lambda}_{\boldsymbol 5}^{(\tau-1)}}{\beta_5} 
\end{align*}
}

 \STATE{\bfseries II. Update Lagrange multipliers:}
 \STATE{
  \begin{align*}
  \boldsymbol{\lambda}_{\boldsymbol 1}^{(\tau)} &= \boldsymbol{\lambda}_{\boldsymbol 1}^{(\tau-1)} 
  + \beta_1 \left[ \abs{\vec{\Br}^{(\tau)}} - \langle \vec{\By}^{(\tau)} \,, \vec{\Br}^{(\tau)} \rangle_X \right]  
  \,,~~ \abs{\vec{\Br}} = \sqrt{ \sum_{l=0}^L \Br_l^{\cdot 2} } \,,~~ 
  \langle \vec{\By} \,, \vec{\Br} \rangle_X = \sum_{l=0}^L \By_l \cdot^\times \Br_l 
  \\
  \boldsymbol{\lambda}_{\boldsymbol 2 l}^{(\tau)} &= 
  \begin{cases}  
   \boldsymbol{\lambda}_{\boldsymbol 2 l}^{(\tau-1)} + \beta_2 \left[ \Br_l^{(\tau)} - \cos\left(\frac{\pi l}{L}\right) \Bu^{(\tau)} \BDtT - \sin\left(\frac{\pi l}{L}\right) \BDo \Bu^{(\tau)} \right] \,, & l = 0, \ldots, L-1 \\
   \boldsymbol{\lambda}_{\boldsymbol 2 l}^{(\tau-1)} + \beta_2 \left[ \Br_l^{(\tau)} - 1 \right] \,, & l = L
  \end{cases}
  \\
  \boldsymbol{\lambda}_{\boldsymbol 3}^{(\tau)} &= \boldsymbol{\lambda}_{\boldsymbol 3}^{(\tau-1)}
  + \beta_3 \left[ \Bd^{(\tau)} + \sum_{l=0}^{L-1} \left[ \cos\left(\frac{\pi l}{L}\right) \Bt_l^{(\tau)} \BDt + \sin\left(\frac{\pi l}{L}\right) \BDoT \Bt_l^{(\tau)} \right] \right]
  \\
  \boldsymbol{\lambda}_{\boldsymbol 4 l}^{(\tau)} &= \boldsymbol{\lambda}_{\boldsymbol 4 l}^{(\tau-1)}
  + \beta_4 \left[ \Bt_l^{(\tau)} - \By_l^{(\tau)} \right] \,,~~ l = 0, \ldots, L
  \\
  \boldsymbol{\lambda}_{\boldsymbol 5}^{(\tau)} &= \boldsymbol{\lambda}_{\boldsymbol 5}^{(\tau-1)}
  + \beta_5 \left[ \Bf - \Bh \ast \Bu^{(\tau)} - \Bh \ast \Bv^{(\tau)} - \Bh \ast \Brho^{(\tau)} - \Beps^{(\tau)} \right]  
  \\
  \boldsymbol{\lambda}_{\boldsymbol 6 s}^{(\tau)} &= \boldsymbol{\lambda}_{\boldsymbol 6 s}^{(\tau-1)}
  + \beta_6 \left[ \Bw_s^{(\tau)} - \Bg_s^{(\tau)} \right]     
  \,,~ s = 0, \ldots, S-1 
  \\
  \boldsymbol{\lambda}_{\boldsymbol 7}^{(\tau)} &= \boldsymbol{\lambda}_{\boldsymbol 7}^{(\tau-1)}
  + \beta_7 \bigg[ \Bv^{(\tau)} + \underbrace{ \sum_{s=0}^{S-1} \Big[ \cos\left(\frac{\pi s}{S}\right) \Bg_s^{(\tau)} \BDt + \sin\left(\frac{\pi s}{S}\right) \BDoT \Bg_s^{(\tau)} \Big] }_{= -\text{div}^-_S \vec{\Bg}^{(\tau)}} \bigg] 
 \end{align*}
 }
\end{algorithmic}
\end{algorithm*}


\begin{algorithm*} 
\label{alg:DMCDD:5}
\caption{Discrete multiscale projection from the DMCD model with $H(\Bz) = 1$ at $L$ directions, $I$ scale at constant $a > 0$}
\begin{algorithmic} 
\small

\STATE{
{\bfseries Two versions of a multiscale projection from the $\Bu$- and $\Bv$-problem:} 
\begin{align*} 
 f[\Bk] &= (f \ast \phi)[\Bk]
 + \sum_{i=0}^{I-1} \sum_{l=0}^{L-1} ( f \ast \check{ \tilde{\psi} }_{il} \ast \psi_{il} )[\Bk]
 ~\stackrel{\cF}{\longleftrightarrow}~           
 F(\Bz) = F(\Bz) \Phi(\Bz)
 + \sum_{i=0}^{I-1} \sum_{l=0}^{L-1} F(\Bz) \tilde \Psi_{il}(\Bz^{-1}) \Psi_{il}(\Bz) 
 \\
 f[\Bk] &= (f \ast \xi)[\Bk]
 + \sum_{i=0}^{I-1} \sum_{l=0}^{L-1} ( f \ast \check{\tilde \theta}_{il} \ast \theta_{il} )[\Bk]
 ~\stackrel{\cF}{\longleftrightarrow}~
 F(\Bz) = 
 F(\Bz) \Xi(\Bz) +  
 \sum_{i=0}^{I-1} \sum_{l=0}^{L-1} F(\Bz) \tilde\Theta_{il}(\Bz^{-1}) \Theta_{il}(\Bz) 
\end{align*}

{\bfseries 1. Frames:}
\begin{align*}
 \Phi(\Bz) &= I^{-1} \sum_{i=0}^{I-1} \Phi_\text{int}(\Bz^{a^i}) \,,
 \Psi_{il}(\Bz) = I^{-\frac{1}{2}} \Psi_l(\Bz^{a^i}) \,,
 \tilde \Psi_{il}(\Bz^{-1}) = I^{-\frac{1}{2}} \tilde \Psi_l(\Bz^{a^i}) \,,
 \\
 \Xi(\Bz) &= \frac{1}{IL} \sum_{i=0}^{I-1} \sum_{l=0}^{L-1} \Xi_l(\Bz^{a^i}) \,,
 \Theta_{il}(\Bz) = \frac{1}{\sqrt{IL}} \Theta_s(\Bz^{a^i}) \,, 
 \tilde\Theta_{il}(\Bz) = \frac{1}{\sqrt{IL}} \tilde\Theta_l(\Bz^{a^i}) \,. 
\end{align*}

{\bfseries 2. The unity conditions:} 
\begin{align*}
 &\Phi(\Bz) + \sum_{i=0}^{I-1} \sum_{l=0}^{L-1} \tilde \Psi_{il}(\Bz^{-1}) \Psi_{il}(\Bz) = 1 \,,
 \Phi(e^{j 0}) = 1 \,, \Psi_{il}(e^{j 0}) = \tilde \Psi_{il}(e^{j 0}) = 0 
 \\
 &\Xi(\Bz) + \sum_{i=0}^{I-1} \sum_{l=0}^{L-1} \tilde\Theta_{il}(\Bz^{-1}) \Theta_{il}(\Bz) = 1 \,,~
 \Xi(e^{j\mathbf 0}) = 1 \,,
 \Theta_{il}(e^{j\mathbf 0}) = \tilde\Theta_{il}(e^{j\mathbf 0}) = 0 \,.
\end{align*}
{\bfseries 3. Mother frames:} $l = 0, \ldots, L-1$
\begin{align*}
 \phi_\text{int}[\Bk] = \Big[ c (-\Delta_{dL}) + 1 \Big]^{-1} \delta[\Bk]
 &~\stackrel{\cF}{\longleftrightarrow}~
 \Phi_\text{int}(\Bz) = \frac{1}{\displaystyle 1 + c \sum_{l'=0}^{L-1} \abs{ \sin\left(\frac{\pi l'}{L}\right)(z_1 - 1) + \cos\left(\frac{\pi l'}{L}\right)(z_2 - 1) }^2} \,,
 \\
 \tilde \psi_l[\Bk] = -c \partial_l^- \phi_\text{int}[\Bk]
 &~\stackrel{\cF}{\longleftrightarrow}~
 \tilde \Psi_l(\Bz) = \frac{\displaystyle c \Big[ \sin\left(\frac{\pi l}{L}\right) (z_1^{-1} - 1) + \cos\left(\frac{\pi l}{L}\right) (z_2^{-1} - 1) \Big] }
                           {\displaystyle 1 + c \sum_{l'=0}^{L-1} \abs{ \sin\left(\frac{\pi l'}{L}\right)(z_1 - 1) + \cos\left(\frac{\pi l'}{L}\right)(z_2 - 1) }^2 } \,,
 \\
 \psi_l[\Bk] = \partial^+_l \delta[\Bk] 
 &~\stackrel{\cF}{\longleftrightarrow}~
 \Psi_l(\Bz) = \sin\left(\frac{\pi l}{L}\right) (z_1 - 1) + \cos\left(\frac{\pi l}{L}\right) (z_2 - 1) 
 \\
 \xi_l[\Bk] = \left[ 1 - c \partial_l^- \partial_l^+ \right]^{-1} \delta[\Bk]
 &~\stackrel{\cF}{\longleftrightarrow}~
 \Xi_l(\Bz) = \frac{ 1 }{ 1 + c \abs{ \cos\left(\frac{\pi l}{L}\right) (z_2 - 1) + \sin\left(\frac{\pi l}{L}\right) (z_1 - 1) }^2 } 
 \\  
 \theta_l[\Bk] = -c \partial_l^+ \xi_l[\Bk]
 &~\stackrel{\cF}{\longleftrightarrow}~
 \Theta_l(\Bz) = 
 \frac{ -c \left[ \cos\left(\frac{\pi l}{L}\right) (z_2 - 1) + \sin\left(\frac{\pi l}{L}\right) (z_1 - 1) \right] }
      { 1 + c \abs{ \cos\left(\frac{\pi l}{L}\right) (z_2 - 1) + \sin\left(\frac{\pi l}{L}\right) (z_1 - 1) }^2 } 
 \\  
 \tilde \theta_l[\Bk] = \partial_l^- \delta[\Bk]
 &~\stackrel{\cF}{\longleftrightarrow}~
 \tilde \Theta_l(\Bz) = - \left[ \cos\left(\frac{\pi l}{L}\right) (z_2^{-1} - 1) + \sin\left(\frac{\pi l}{L}\right) (z_1^{-1} - 1) \right] 
\end{align*}
}

\end{algorithmic}
\end{algorithm*}



\begin{algorithm*} 
\label{alg:DMCDD:5}
\caption{Continuous multiscale projection from the DMCD model with $H(\Bz) = 1$ (I)}
\begin{algorithmic} 
\small

\STATE{
{\bfseries Two versions of a multiscale projection from the $\Bu$- and $\Bv$-problem:} 
\begin{align*} 
  f[\Bk] &= (f \ast \phi)[\Bk] + \sum_{i=0}^{I-1} \sum_{l=0}^{L-1} ( f \ast \check{ \tilde{\psi} }_{il} \ast \psi_{il} )[\Bk]  
  ~\stackrel{\cF}{\longleftrightarrow}~
  F(e^{j\Bome}) = F(e^{j\Bome}) \widehat \phi(\Bome) + \sum_{i=0}^{I-1} \sum_{l=0}^{L-1} F(e^{j\Bome}) \widehat{\tilde{\psi}^*_{il}}(\Bome) \widehat{\psi_{il}} (\Bome)
 \\
 f[\Bk] &= (f \ast \xi)[\Bk] + \sum_{i=0}^{I-1} \sum_{l=0}^{L-1} (f \ast \check{\tilde \theta}_{il} \ast \theta_{il})[\Bk] 
 ~\stackrel{\cF}{\longleftrightarrow}~
 F(e^{j\Bome}) = F(e^{j\Bome}) \widehat \xi(\Bome) + \sum_{i=0}^{I-1} \sum_{l=0}^{L-1} F(e^{j\Bome}) \widehat{\tilde \theta_{il}^*} (\Bome) \widehat{\theta_{il}} (\Bome) 
\end{align*}

{\bfseries 1. Frames:}
\begin{align*}
 \phi(\Bx) = I^{-1} \sum_{i=0}^{I-1} a^{-i} \phi_\text{int} (a^{-1} \Bx)
 &~\stackrel{\cF}{\longleftrightarrow}~ 
 \widehat{\phi}(\Bome) = I^{-1} \sum_{i=0}^{I-1} \widehat{\phi}_\text{int}(a^i \Bome) 
 \\
 \psi_{il}(\Bx) = I^{-\frac{1}{2}} a^{-i} \psi_l(a^{-i} \Bx)
 &~\stackrel{\cF}{\longleftrightarrow}~ 
 \widehat{\psi}_{il}(\Bome) = I^{-\frac{1}{2}} \widehat{\psi}_l(a^i\Bome)
 \\
 \tilde \psi_{il}(\Bx) = I^{-\frac{1}{2}} a^{-i} \tilde\psi_l(a^{-i} \Bx) 
 &~\stackrel{\cF}{\longleftrightarrow}~ 
 \widehat{\tilde \psi_{il}}(\Bome) = I^{-\frac{1}{2}} \widehat{\tilde \psi_l}(a^i\Bome) 
 \\
 \xi(\Bx) = \frac{1}{IL} \sum_{i=0}^{I-1} \sum_{l=0}^{L-1} a^{-i} \xi_l(a^{-i} \Bx)
 &~\stackrel{\cF}{\longleftrightarrow}~
 \widehat \xi(\Bome) = \frac{1}{IL} \sum_{i=0}^{I-1} \sum_{l=0}^{L-1} \widehat \xi_l(a^i \Bome)
 \\
 \theta_{il}(\Bx) = \frac{1}{\sqrt{IL}} a^{-i} \theta_l(a^{-i} \Bx)
 &~\stackrel{\cF}{\longleftrightarrow}~
 \widehat \theta_{il}(\Bome) = \frac{1}{\sqrt{IL}} \widehat \theta_l(a^i \Bome)
 \\
 \tilde \theta_{il}(\Bx) = \frac{1}{\sqrt{IL}} a^{-i} \tilde \theta_l(a^{-i} \Bx)
 &~\stackrel{\cF}{\longleftrightarrow}~
 \widehat{\tilde \theta}_{il}(\Bome) = \frac{1}{\sqrt{IL}} \widehat{\tilde \theta}_l(a^i \Bome) 
\end{align*}

{\bfseries 2. The unity conditions:} 
\begin{align*}
 &\widehat{\phi}(\Bome) + \sum_{i=0}^{I-1} \sum_{l=0}^{L-1} \widehat{\tilde \psi^*_{il}}(\Bome) \widehat{\psi}_{il}(\Bome) = 1 \,,
 \widehat \phi(0) = 1 \,, \widehat \psi_{il}(0) = \widehat{\tilde \psi_{il}}(0) = 0 
 \\
 &\widehat \xi(\Bome) + \sum_{i=0}^{I-1} \sum_{l=0}^{L-1} \widehat{\tilde \theta_{il}^*} (\Bome) \widehat{\theta_{il}} (\Bome) = 1 \,, 
 \widehat \xi(0) = 1 \,, \widehat{\tilde \theta}_{il}(0) = \widehat{\tilde \theta}_{il}(0) = 0 
\end{align*}

{\bfseries 3. Mother frames:} $l = 0, \ldots, L-1$
\begin{align*}
 \phi_\text{int}(\Bx) = \left[ c \left(-\Delta_{L}\right) + 1 \right]^{-1} \delta(\Bx)
 &~\stackrel{\cF}{\longleftrightarrow}~
 \widehat{\phi}_\text{int}(\Bome) = \frac{1}{\displaystyle 1 + c \sum_{l'=0}^{L-1} \left[ \cos\left(\frac{\pi l'}{L}\right) \omega_2 + \sin\left(\frac{\pi l'}{L}\right) \omega_1 \right]^2} \,,
 \\
 \check{\tilde \psi}_l(\Bx) = -c \partial_l \phi_\text{int}(\Bx)
 &~\stackrel{\cF}{\longleftrightarrow}~
 \widehat{\tilde \psi_l^*}(\Bome) = \frac{\displaystyle -c \Big[ \cos \left(\frac{\pi l}{L}\right) j \omega_2 + \sin\left(\frac{\pi l}{L}\right) j \omega_1 \Big] }
                           {\displaystyle 1 + c \sum_{l'=0}^{L-1} \Big[ \cos\left(\frac{\pi l'}{L}\right) \omega_2 + \sin\left(\frac{\pi l'}{L}\right) \omega_1 \Big]^2} \,,
 \\
 \psi_l(\Bx) = \partial_l \delta(\Bx)
 &~\stackrel{\cF}{\longleftrightarrow}~
 \widehat{\psi}_l(\Bome) = \cos \left(\frac{\pi l}{L} \right) j\omega_2 + \sin \left(\frac{\pi l}{L} \right) j\omega_1 \,.
\end{align*}
}

\end{algorithmic}
\end{algorithm*}


\begin{algorithm*} 
\label{alg:DMCDD:5}
\caption{Continuous multiscale projection from the DMCD model with $H(\Bz) = 1$ (II)}
\begin{algorithmic} 
\small

\STATE{
\begin{align*} 
 \xi_l(\Bx) = \left[ 1 - c \partial_l^2 \right]^{-1} \delta(\Bx) 
 &~\stackrel{\cF}{\longleftrightarrow}~  
 \widehat{\xi_l}(\Bome) = \frac{ 1 }{ 1 + c \left[ \cos \left( \frac{\pi l}{L} \right) \omega_2 + \sin \left( \frac{\pi l}{L} \right) \omega_1 \right]^2 }  \,,
 \\ 
 \theta_l(\Bx) = -c \partial_l \xi_l(\Bx) 
 &~\stackrel{\cF}{\longleftrightarrow}~  
 \widehat{\theta_l}(\Bome) = 
 \frac{ -c \Big[ \cos(\frac{\pi l}{L}) j \omega_2 + \sin(\frac{\pi l}{L}) j \omega_1 \Big] }
        { 1 + c \left[ \cos \left( \frac{\pi l}{L} \right) \omega_2 + \sin \left( \frac{\pi l}{L} \right) \omega_1 \right]^2 } \,,
 \\ 
 \check{\tilde \theta}_l(\Bx) = \partial_l \delta(\Bx)
 &~\stackrel{\cF}{\longleftrightarrow}~
 \widehat{\tilde \theta_l^*}(\Bome) = \cos \left( \frac{\pi l}{L} \right) j \omega_2 + \sin \left( \frac{\pi l}{L} \right) j \omega_1 \,.
\end{align*}
}

\end{algorithmic}
\end{algorithm*}



\begin{thebibliography}{10}

\bibitem{MumfordShah1989}
D.~Mumford and J.~Shah.
\newblock Optimal approximations by piecewise smooth functions and associated
  variational problems.
\newblock {\em Communications on Pure and Applied Mathematics}, 42(5):577--685,
  July 1989.

\bibitem{Potts1952}
R.B. Potts.
\newblock Some generalized order-disorder transformations.
\newblock {\em In Proceedings of the Cambridge Philosophical Society},
  48:106--109, 1952.

\bibitem{RudinOsherFatemi1992}
L.~Rudin, S.~Osher, and E.~Fatemi.
\newblock Nonlinear total variation based noise removal algorithms.
\newblock {\em Physica D}, 60(1-4):259--268, November 1992.

\bibitem{AubertVese1997}
G.~Aubert and L.~Vese.
\newblock A variational method in image recovery.
\newblock {\em SIAM J. Numer. Anal}, 34(5):1948--1979, October 1997.

\bibitem{ChanMarquinaMullet2000}
T.~Chan, A.~Marquina, and P.~Mulet.
\newblock High-order total variation-based image restoration.
\newblock {\em SIAM J. Sci. Comput.}, 22(2):503--516, July 2000.

\bibitem{ZhuChan2012}
W.Zhu and T.~Chan.
\newblock Image denoising using mean curvature of image surface.
\newblock {\em SIAM Journal on Imaging Sciences}, 5(1):1--32, January 2012.

\bibitem{PapafitsorosSchonlieb2014}
K.~Papafitsoros and C.B. Sch{\"o}nlieb.
\newblock A combined first and second order variational approach for image
  reconstruction.
\newblock {\em Journal of mathematical imaging and vision}, 48(2):308--338,
  February 2014.

\bibitem{GoldsteinOsher2009}
T.~Goldstein and S.~Osher.
\newblock The split {Bregman} method for $\text{L}_1$-regularized problems.
\newblock {\em SIAM Journal on Imaging Sciences}, 2(2):323--343, April 2009.

\bibitem{CandesDonoho2004}
E.~Cand{\`e}s and D.~Donoho.
\newblock New tight frames of curvelets and optimal representations of objects
  with piecewise singularities.
\newblock {\em Communications on Pure and Applied Mathematics}, 57(2):219--266,
  February 2004.

\bibitem{CandesDemanetDonohoYing2006}
E.~Cand\`{e}s, L.~Demanet, D.~Donoho, and L.~Ying.
\newblock Fast discrete curvelet transforms.
\newblock {\em Multiscale Model. Simul.}, 5(3):861--899, September 2006.

\bibitem{StarckDonohoCandes2003}
Starck, D.L. Donoho, and E.J. Cand{\`e}s.
\newblock Astronomical image representation by the curvelet transform.
\newblock {\em Astron. Astrophys.}, 398(2):785--800, February 2003.

\bibitem{MaPlonka2010}
J.~Ma and G.~Plonka.
\newblock The curvelet transform.
\newblock {\em IEEE Signal Processing Magazin}, 27(2):118--133, March 2010.

\bibitem{KutyniokLabate2012}
G.~Kutyniok and D.~Labate, editors.
\newblock {\em Shearlets. Multiscale Analysis for Multivariate Data}.
\newblock Birkh{\"a}user, Boston, MA, USA, 2012.

\bibitem{DoVetterli2005}
M.N. Do and M.~Vetterli.
\newblock The contourlet transform: An efficient directional multiresolution
  image representation.
\newblock {\em {IEEE} Transactions on Image Processing}, 14(12):2091--2106,
  December 2005.

\bibitem{UnserVandeville2010}
M.~Unser and D.~Van~De Ville.
\newblock Wavelet steerability and the higher-order {R}iesz transform.
\newblock {\em {IEEE} Transactions on Image Processing}, 19(3):636--652, March
  2010.

\bibitem{GillesTranOsher2014}
J.~Gilles, G.~Tran, and S.~Osher.
\newblock 2{D} {E}mpirical transforms. {W}avelets, ridgelets, and curvelet
  revisited.
\newblock {\em SIAM J. Imaging Sci.}, 7(1):157--186, January 2014.

\bibitem{ChanVese2001}
T.F. Chan and L.A. Vese.
\newblock Active contours without edges.
\newblock {\em {IEEE} Transactions on Image Processing}, 10(2):266--277,
  February 2001.

\bibitem{BressonEsedogluVandergheynstThiranOsher2007}
X.~Bresson, S.~Esedoglu, P.~Vandergheynst, J.P. Thiran, and S.~Osher.
\newblock Fast global minimization of the active contour/snake model.
\newblock {\em Journal of Mathematical Imaging and Vision}, 28(2):151--167,
  June 2007.

\bibitem{ChanEsedogluNikolova2012}
T.F. Chan, S.~Esedoglu, and M.~Nikolova.
\newblock Algorithms for finding global minimizers of image segmentation and
  denoising models.
\newblock {\em SIAM J. Appl. Math.}, 66(5):1632--1648, February 2012.

\bibitem{LieLysakerTai2006}
J.~Lie, M.~Lysaker, and X.C. Tai.
\newblock A binary level set model and some applications to {M}umford-{S}hah
  image segmentation.
\newblock {\em {IEEE} Transactions on Image Processing}, 15(5):1171--1181, May
  2006.

\bibitem{KassWitkinTerzopoulos1988}
M.~Kass, A.~Witkin, and D.~Terzopoulos.
\newblock Snakes: Active contour models.
\newblock {\em International Journal of Computer Vision}, 1(4):321--331,
  January 1988.

\bibitem{ZhuTaiChan2013}
W.~Zhu, X.C. Tai, and T.~Chan.
\newblock Image segmentation using {E}uler's elastica as the regularization.
\newblock {\em Journal of Scientific Computing}, 57(2):414--438, April 2013.

\bibitem{BrownChanBresson2010}
E.S. Brown, T.F. Chan, and X.~Bresson.
\newblock A convex relaxation method for a class of vector-valued minimization
  problems with applications to {M}umford-{S}hah segmentation.
\newblock {\em UCLA cam report}, 2010.

\bibitem{BrownChanBresson2012}
E.S. Brown, T.F. Chan, and X.~Bresson.
\newblock Completely convex formulation of the {C}han-{V}ese image segmentation
  model.
\newblock {\em International Journal of Computer Vision}, 98(1):103--121, May
  2012.

\bibitem{GuWangTai2012}
L.L.~Wang Y.~Gu and X.C. Tai.
\newblock A direct approach toward global minimization for multiphase labeling
  and segmentation problems.
\newblock {\em {IEEE} Transactions on Image Processing}, 21(5):2399--2411, May
  2012.

\bibitem{BrownChanBresson2009}
E.S. Brown, T.F. Chan, and X.~Bresson.
\newblock Convex formulation and exact global solutions for multi-phase
  piecewise constant {M}umford-{S}hah image segmentation.
\newblock {\em UCLA cam report}, 2009.

\bibitem{ChanSandbergVese2000}
T.F. Chan, B.Y. Sandberg, and L.A. Vese.
\newblock Active contours without edges for vector-valued images.
\newblock {\em Journal of Visual Communication and Image Representation},
  11(2):130--141, June 2000.

\bibitem{MaltoniMaioJainPrabhakar2009}
D.~Maltoni, D.~Maio, A.~K. Jain, and S.~Prabhakar.
\newblock {\em Handbook of Fingerprint Recognition}.
\newblock Springer, London, U.K., 2009.

\bibitem{ThaiGottschlich2016DG3PD}
D.H. Thai and C.~Gottschlich.
\newblock Directional global three-part image decomposition.
\newblock {\em EURASIP Journal on Image and Video Processing}, 2016(12):1--20,
  March 2016.

\bibitem{LingStrohmer2015}
S.~Ling and T.~Strohmer.
\newblock Blind deconvolution meets blind demixing: Algorithms and performance
  bounds.
\newblock {\em http://arxiv.org/abs/1512.07730}, (submitted).

\bibitem{LysakerLundervoldTai2003}
M.~Lysaker, A.~Lundervold, and X.C. Tai.
\newblock Noise removal using fourth-order partial differential equation with
  applications to medical magnetic resonance images in space and time.
\newblock {\em {IEEE} Transactions on Image Processing}, 12(12):1579--1590,
  December 2003.

\bibitem{RahmanTaiOsher2007}
T.~Rahman, X.C. Tai, and S.~Osher.
\newblock A {TV}-{S}tokes denoising algorithm.
\newblock {\em Lecture Notes in Computer Science}, 4485:473--483, June 2007.

\bibitem{HahnWuTai2012}
J.~Hahn, C.~Wu, and X.C. Tai.
\newblock Augmented {L}agrangian method for generalized {TV}-{S}tokes model.
\newblock {\em Journal of Scientific Computing}, 50(2):235--264, February 2012.

\bibitem{TaiHahnChung2011}
X.C. Tai, J.~Hahn, and G.J. Chung.
\newblock A fast algorithm for {E}uler's elastica model using augmented
  {L}agrangian method.
\newblock {\em SIAM Journal on Imaging Sciences}, 4(1):313--344, February 2011.

\bibitem{CalatroniDuringSchonlieb2014}
L.~Calatroni, B.~D{\"u}ring, and C.B. Sch{\"o}nlieb.
\newblock Adi splitting schemes for a fourth-order nonlinear partial
  differential equation from image processing.
\newblock {\em DCDS Series A, Special Issue for Arieh Iserles 65th birthday},
  34(3):931--957, March 2014.

\bibitem{DaubechiesDefriseMol2004}
I.~Daubechies, M.~Defrise, and C.~D. Mol.
\newblock An iterative thresholding algorithm for linear inverse problems with
  a sparsity constraint.
\newblock {\em Communications on Pure and Applied Mathematics},
  57(11):1413--1457, August 2004.

\bibitem{BeckTeboulle2009}
A.~Beck and M.~Teboulle.
\newblock A fast iterative shrinkage-thresholding algorithm for linear inverse
  problems.
\newblock {\em SIAM Journal on Imaging Sciences}, 2(1):183--202, January 2009.

\bibitem{DiasFigueiredo2007}
J.B. Dias and M.~Figueiredo.
\newblock A new twist: Two-step iterative shrinkage/thresholding algorithms for
  image restoration.
\newblock {\em {IEEE} Transactions on Image Processing}, 16(12):2992--3004,
  December 2007.

\bibitem{WuTai2010}
C.~Wu and X.~C. Tai.
\newblock Augmented {L}agrangian method, dual methods, and split {B}regman
  iteration for {ROF}, vectorial {TV}, and higher order methods.
\newblock {\em SIAM Journal on Imaging Sciences}, 3(3):300--339, July 2010.

\bibitem{Chambolle2004}
A.~Chambolle.
\newblock An algorithm for total variation minimization and applications.
\newblock {\em Journal of Mathematical Imaging and Vision}, 20(1-2):89--97,
  January 2004.

\bibitem{Setzer2011}
S.~Setzer.
\newblock Operator splittings, bregman methods and frame shrinkage in image
  processing.
\newblock {\em Int J Comput Vis}, 92(3):265--280, July 2011.

\bibitem{ThaiMentch2016}
D.H. Thai and L.~Mentch.
\newblock Multiphase segmentation for simultaneously homogeneous and textural
  images.
\newblock {\em https://arxiv.org/pdf/1606.09281.pdf}, submitted.

\bibitem{ThaiGottschlich2016inpainting}
D.H. Thai and C.~Gottschlich.
\newblock Simultaneous inpainting and denoising by directional global
  three-part decomposition: Connecting variational and {F}ourier domain based
  image processing.
\newblock {\em arxiv}, submitted.

\bibitem{ThaiGottschlich2016G3PD}
D.H. Thai and C.~Gottschlich.
\newblock Global variational method for fingerprint segmentation by three-part
  decomposition.
\newblock {\em IET Biometrics}, 5(2):120--130, June 2016.

\bibitem{Meyer2001}
Y.~Meyer.
\newblock {\em Oscillating Patterns in Image Processing and Nonlinear Evolution
  Equations: The Fifteenth Dean Jacqueline B. Lewis Memorial Lectures}.
\newblock American Mathematical Society, Boston, MA, USA, 2001.

\bibitem{AujolChambolle2005}
J.-F. Aujol and A.~Chambolle.
\newblock Dual norms and image decomposition models.
\newblock {\em International Journal of Computer Vision}, 63(1):85--104, June
  2005.

\bibitem{VeseOsher2003}
L.A. Vese and S.~Osher.
\newblock Modeling textures with total variation minimization and oscillatory
  patterns in image processing.
\newblock {\em Journal of Scientific Computing}, 19(1-3):553--572, December
  2003.

\bibitem{CandesTao2007}
E.~Cand\`{e}s and T.~Tao.
\newblock The {D}antzig selector: Statistical estimation when $p$ is much
  larger than $n$.
\newblock {\em The Annals of Statistics}, 35(6):2313--2351, December 2007.

\bibitem{HaltmeierMunk2014}
M.~Haltmeier and A.~Munk.
\newblock Extreme value analysis of empirical frame coefficients and
  implications for denoising by soft-thresholding.
\newblock {\em Applied and Computational Harmonic Analysis}, 36(3):434--460,
  May 2014.

\bibitem{Smith1990}
R.L. Smith.
\newblock Max-stable processes and spatial extremes.
\newblock {\em Unpublished manuscript}, 1990.

\bibitem{Schlather2002}
M.~Schlather.
\newblock Models for stationary max-stable random fields.
\newblock {\em Extremes}, 5(1):33--44, August 2002.

\bibitem{Thai2015PhD}
D.H. Thai.
\newblock {\em Fourier and Variational Based Approaches for Fingerprint
  Segmentation}.
\newblock PhD thesis, University of Goettingen, Goettingen, Germany, January
  2015.

\bibitem{YiLabateEasleyKrim2009}
S.~Yi, D.~Labate, G.R. Easley, and H.~Krim.
\newblock A shearlet approach to edge analysis and detection.
\newblock {\em {IEEE} Transactions on Image Processing}, 18(5):929--941, May
  2009.

\bibitem{UnserChenouardVandeville2011}
M.~Unser, N.~Chenouard, and D.~Van~De Ville.
\newblock Steerable pyramids and tight wavelet frames in ${L}_2(\mathbb{R}^d)$.
\newblock {\em {IEEE} Transactions on Image Processing}, 20(10):2705--2721,
  October 2011.

\bibitem{UnserSageVandeville2009}
M.~Unser, D.~Sage, and D.~Van~De Ville.
\newblock Multiresolution monogenic signal analysis using the {R}iesz-{L}aplace
  wavelet transform.
\newblock {\em {IEEE} Transactions on Image Processing}, 18(11):2402--2418,
  November 2009.

\bibitem{CunhaZhouDo2006}
A.L.D. Cunha, J.~Zhou, and M.N. Do.
\newblock The nonsubsampled contourlet transform: theory, design, and
  applications.
\newblock {\em {IEEE} Transactions on Image Processing}, 15(10):3089--3101,
  October 2006.

\bibitem{SelesnickBaraniukKingsbury2005}
I.W. Selesnick, R.G. Baraniuk, and N.C. Kingsbury.
\newblock The dual-tree complex wavelet transform.
\newblock {\em {IEEE} Transactions on Image Processing}, 22(6):123--151,
  November 2005.

\bibitem{ChanMarquinaMulet2000}
T.~Chan, A.~Marquina, and P.~Mulet.
\newblock High-order total variation-based image restoration.
\newblock {\em SIAM Journal on Scientific Computing}, 22(2):503--516, July
  2000.

\bibitem{ZhuTaiChan2013ALM}
W.~Zhu, X.C. Tai, and T.~Chan.
\newblock Augmented lagrangian method for a mean curvature based image
  denoising model.
\newblock {\em Inverse Problems and Imaging}, 7(4):1409--1432, November 2013.

\bibitem{Gilles2012}
J.~Gilles.
\newblock Multiscale texture separation.
\newblock {\em Multiscale Model. Simul.}, 10(4):1409--1427, December 2012.

\bibitem{GarnettJonesLeVese2011}
J.B. Garnett, P.W. Jones, T.M. Le, and L.A. Vese.
\newblock Modeling oscillatory components with the homogeneous spaces
  $\dot{BMO}^{-\alpha}$ and $\dot{W}^{-\alpha,p}$.
\newblock {\em Pure and Applied Mathematics Quarterly}, 7(2):275--318, 2011.

\bibitem{Unser2000}
M.~Unser.
\newblock Sampling - 50 years after {S}hannon.
\newblock {\em Proceedings of the {IEEE}}, 88(4):569--587, April 2000.

\bibitem{KhalidovUnser2006}
I.~Khalidov and M.~Unser.
\newblock From differential equations to the contruction of new wavelet-like
  bases.
\newblock {\em {IEEE} Transactions on Signal Processing}, 54(4):1256--1267,
  April 2006.

\bibitem{Getreuer2012}
P.~Getreuer.
\newblock Total variation deconvolution using split bregman.
\newblock {\em Image Processing On Line}, (2):158--174, 7 2012.

\bibitem{WangYinZhang2007}
Y.~Wang, W.~Yin, and Y.~Zhang.
\newblock A fast algorithm for image deblurring with total variation
  regularization.
\newblock {\em CAAM Technical Reports}, 2007.

\bibitem{EkelandTeman1999}
I.~Ekeland and R.~Temam.
\newblock {\em Convex analysis and variational problems}.
\newblock Society for Industrial and Applied Mathematics, Philadelphia, PA,
  USA, 1999.

\end{thebibliography}
\end{document}